\newtheorem{definition}{Definition}
\newtheorem{theorem}{Theorem}
\newtheorem{lemma}{Lemma}
\newtheorem{remark}{Remark}
\newtheorem{proposition}{Proposition}
\newtheorem{corollary}{Corollary}
\newtheorem{problem}{Problem}
\begin{document}

\title{
  \bf Determination of action model equivalence and simplification of action model
}

\author[1]{Jingwei Li}
\affil[1]{Shanghai Center for Systems Biomedicine, Shanghai Jiao Tong University, Shanghai 200240, China, (Email:ljw2017@sjtu.edu.cn).}

\baselineskip=0pt

%

\maketitle

\begin{abstract}
\baselineskip=0pt

Kripke models are useful to express static knowledge or belief. On the other hand, action models describe information flow and dynamic knowledge or belief. In this paper, we study two problems: determining action model equivalence and minimizing the event space of an action model under certain structural relationships.

The Kripke model equivalence is perfectly caught by the structural relationship called bisimulation. In this paper, we propose the generalized action emulation perfectly catching the action model equivalence. Previous structural relationships sufficient for the action model equivalence, {\it i.e.} the bisimulation, the propositional action emulation, the action emulation, and the action emulation of canonical action models, can be described by various restricted versions of the generalized action emulation. We summarize four critical properties of the atom set over preconditions, and prove that any formula set satisfying these properties can be used to restrict the generalized action emulation to determine the action model equivalence by an iteration algorithm. We also construct a new formula set with these four properties, which is generally more efficient than the atom set.

The technique of the partition refinement has been used to minimize the world space of a Kripke model under the bisimulation. Applying the partition refinement to action models allows one to minimize their event spaces under the bisimulation. The propositional action emulation is weaker than bismulation but still sufficient for the action model equivalence. We prove that it is PSPACE-complete to minimize the event space of an action model under the propositional action emulation, and provide a PSPACE algorithm for it. Finally, we prove that minimize the event space under the action model equivalence is PSPACE-hard, and propose a computable method based on the canonical formulas of modal logics to solve this problem.
\end{abstract}

\section{Intrtoduction}

Kripke models offer semantic interpretions of modal logics, and two Kripke models are semantically equivalent iff there exists a bisimulation between them \cite{BlackburnCUP2001}. Kripke models can be updated by action models which describe information flow \cite{BaltagS2016}. Two action models are considered equivalent if they have semantically equivalent updating effects over all Kripke models. A trival generalization of the bisimulation from Kripke models to action models offers a sufficient but not necessary condition for the action model equivalence. The propositional action emulation is proposed in \cite{EijckS2012}, which is necessary for the action model bisimulation, while still sufficient but not necessary for the action model equivalence. Moreover, when restricted to propositional action models (those with only propositional preconditions), the propositional action emulation is sufficient and necessary for the equivalence. In \cite{EijckS2012}, the authors show that two action models have the same semantic effect on their canonical Kripke model \cite{BlackburnCUP2001} iff they have the same semantic effect on any Kripke model. This for the first time, gives a sufficient and necessary condition for the action model equivalence. An equivalent restatement without explicitly introducing the canonical Kripke model is proposed as the parametrized action emulation in \cite{EijckS2012}. The atom set over preconditions can be used to transform any action model to its equivalent canonical version \cite{SietsmaEijckJOPL2013}. Moreover, it has been proved that two action models are equivalent iff their canonical versions satisfy the action emulation \cite{SietsmaEijckJOPL2013}, which further simplifies the sufficient and necessary determination of the action model equivalence. We introduce the so-called generalized action emulation, and prove that it is sufficient and necessary for the action model equivalence. The generalized action emulation has several restricted forms, which are equivalent to the bisimulation, the propositional action emulation, the action emulation, and the action emulation of canonical action models (or equivalently, the same semantic effect over canonical Kripke model or the parametrized action emulation) \cite{SietsmaEijckJOPL2013}. After a deep study of the atom set, we find that only four critical properties are responsible for the ability to determine the action model equivalence. In fact, any formula set satisfying the four critical properties can be used to restrict the generalized action emulation to get a sufficient and necessary condition for the action model equivalence. We design an iteration algorithm to determine the action model equivalence based on such formula sets. We also construct a new formula set, which is generally more efficient than the atom set in the iteration algorithm.

The problem of finding the minimal Kripke model bisimulating a given Kripke model has been solved in \cite{EijckCWI2004} by the technique of the partition refinement \cite{PaigeSJOC1987}. It is similar to the problem of minimizing the number of states in a finite automaton \cite{JohnHopcroftAP1971}. On the other hand, how to minimize the event space of an action model under other structural relationships (say the propositional action emulation) or even the action model equivalence are still open problems. Resembling the partition refinement algorithm of Kripke models \cite{PaigeSJOC1987,EijckCWI2004}, one may easily minimize the event space of an action model under the bisimulation. A more difficult problem is to minimize the event space of an action model under the propositional action emulation. In this paper, we prove that this problem is actually PSPACE-complete, and propose a PSPACE algorithm for it. An even more difficult problem is to minimize the event space of an action model under the action model equivalence. We prove that this problem is PSPACE-hard, and propose a computable method for it by the technique of the canonical formulas of modal logics \cite{Moss2007JOPL}.

This paper is arranged as follows. In Section \ref{SecPreRes}, we revise some necessary definitions, general assumptions and classical results related to Kripke and action models. In Section \ref{SecGAE}, we define the generalized action emulation and the iteration algorithm used to determine the action model equivalence. In Section \ref{SecNFS}, we construct the formula set used to speed up the iteration algorithm. In Section \ref{SecAlg}, we propose the PSPACE algorithm to minimize the event space of an action model under propositional action emulation. In Section \ref{SecMTE}, we give the methology to minimize the event space of an action model under the action model equivalence. In Section \ref{SecNP}, we discuss the complexity of the minimizing problems. Finally, conclusions and discussions come in Section \ref{SecConc}.

\section{Preliminary definitions and results}
\label{SecPreRes}

\begin{definition}
Let $A$ be a finite label set and $\mathcal{P}$ be an enumerable proposition set.
\end{definition}

\begin{definition}[Kripke model]
Define a Kripke model $\mathcal{M}\coloneqq(W^\mathcal{M}, Val^\mathcal{M}, \to^\mathcal{M}, W_0^\mathcal{M})$, where $W^\mathcal{M}$ is a world set; $Val^\mathcal{M}:W^\mathcal{M}\mapsto 2^\mathcal{P}$; $\to^\mathcal{M}:A\mapsto 2^{W^\mathcal{M}\times W^\mathcal{M}}$ defines an ordered relation $\to_a^\mathcal{M}\subset W^\mathcal{M}\times W^\mathcal{M}$ for each $a\in A$; $W_0^\mathcal{M}\subset W^\mathcal{M}$ is the set of actual worlds.
\end{definition}

\begin{definition}[modal language]
Define the modal language $\mathcal{L}$ as
\begin{align*}
\phi\Coloneqq p \mid \lnot\phi \mid \phi\lor\phi \mid \lozenge_a\phi,
\end{align*}
where $p\in \mathcal{P}$, $a\in A$. We employ the abbreviations: $\phi\land\psi$ for $\lnot(\phi\lor\psi)$, $\phi\rightarrow\psi$ for $\lnot\phi\lor\psi$, $\square_a\phi$ for $\lnot\lozenge_a\lnot\phi$. $\forall \phi\in\mathcal{L}$, define the depth $\delta(\phi)$ of $\phi$ as the maxmial number of nested $\lozenge$ in $\phi$.
\end{definition}

\begin{definition}
Let $\mathcal{M}$ be a Kripke model. $\forall w\in W$, $\forall p\in \mathcal{P}$, $\forall \phi,\phi_1,\phi_2\in\mathcal{L}$,
\begin{enumerate}
\item $\mathcal{M}\vDash_w p$ iff $p\in Val^\mathcal{M}(w)$.
\item $\mathcal{M}\vDash_w\lnot\phi$ iff $\mathcal{M}\nvDash_w\phi$.
\item $\mathcal{M}\vDash_w\phi_1\lor\phi_2$ iff $\mathcal{M}\vDash_w\phi_1$ or $\mathcal{M}\vDash_w\phi_2$.
\item $\mathcal{M}\vDash_w\lozenge_a\phi$ iff $\exists w'\in \to_a^\mathcal{M}(w,\cdot)$ ({\it i.e.} $w\to_a^\mathcal{M}w'$), $\mathcal{M}\vDash_{w'}\phi$.
\end{enumerate}
\end{definition}

\begin{definition}[action model]
Define an action model $\mathcal{A}\coloneqq (E^\mathcal{A}, Pre^\mathcal{A}, \to^\mathcal{A}, E_0^\mathcal{A})$, where $E^\mathcal{A}$ is an event set; $Pre^\mathcal{A}:E^\mathcal{A}\mapsto \mathcal{L}$; $\to^\mathcal{A}:A\mapsto 2^{E^\mathcal{A}\times E^\mathcal{A}}$ defines an ordered relation $\to_a^\mathcal{A}\subset E^\mathcal{A}\times E^\mathcal{A}$ for each $a\in A$; $E_0^\mathcal{A}\subset E^\mathcal{A}$ is the set of actual events.
\end{definition}

\begin{definition}\label{DefUpdate}
Let $\mathcal{M}$ be a Kripke model and $\mathcal{A}$ be an action model. Define the Kripke model $\mathcal{M}\otimes\mathcal{A}$ as
\begin{enumerate}
\item $W^{\mathcal{M}\otimes\mathcal{A}}\coloneqq \set{(w,x) \mid w\in W^\mathcal{M}, x\in E^\mathcal{A}, \mathcal{M}\vDash_w Pre^\mathcal{A}(x)}$.
\item $(w,x)\to_a^{\mathcal{M}\otimes\mathcal{A}} (v,y)$ iff $w\to_a^\mathcal{M} v$ and $x\to_a^\mathcal{A} y$.
\item $Val^{\mathcal{M}\otimes\mathcal{A}}(w,x)\coloneqq Val^\mathcal{M}(w)$.
\item $W_0^{\mathcal{M}\otimes\mathcal{A}}\coloneqq \set{(w,x)\in W^{\mathcal{M}\otimes\mathcal{A}} \mid w\in W_0^\mathcal{M}, x\in E_0^\mathcal{A}}$.
\end{enumerate}
\end{definition}

\begin{definition}[Kripke bisimulation]
Let $\mathcal{M}$ and $\mathcal{N}$ be Kripke models. A relation $R\subset W^\mathcal{M}\times W^\mathcal{N}$ is a bisimulation if $\forall w\in W^\mathcal{M}$, $\forall v\in W^\mathcal{N}$, $R(w,v)$ implies the following:
\begin{description}
\item[Invariance] $Val^\mathcal{M}(w)=Val^\mathcal{N}(v)$,
\item[Zig] $\forall a\in A$, $\forall w'\in \to_a^\mathcal{M}(w,\cdot)$, $\exists v'\in \to_a^\mathcal{N}(v, \cdot)$, $R(w', v')$,
\item[Zag] $\forall a\in A$, $\forall v'\in \to_a^\mathcal{N}(v,\cdot)$, $\exists w'\in \to_a^\mathcal{M}(w, \cdot)$, $R(w', v')$.
\end{description}
We denote $\mathcal{M}\underline{\leftrightarrow}\mathcal{N}$ iff there exists a bisimulation $R$ between $\mathcal{M}$ and $\mathcal{N}$ such that
\begin{description}
\item[Zig0] $\forall w\in W_0^\mathcal{M}$, $\exists v\in W_0^\mathcal{N}$, $R(w,v)$,
\item[Zag0] $\forall v\in W_0^\mathcal{N}$, $\exists w\in W_0^\mathcal{M}$, $R(w,v)$.
\end{description}
\end{definition}

\begin{definition}[action equivalence]
We say that two actions models $\mathcal{A}$ and $\mathcal{B}$ are equivalent (denoted by $\mathcal{A}\equiv\mathcal{B}$) iff for any Kripke model $\mathcal{M}$, $\mathcal{M}\otimes\mathcal{A}\underline{\leftrightarrow}\mathcal{M}\otimes\mathcal{B}$.
\end{definition}

\begin{definition}[action bisimulation]\label{DefBiSimAct}
Let $\mathcal{A}$ and $\mathcal{B}$ be action models. A relation $S\subset E^\mathcal{A}\times E^\mathcal{B}$ is a bisimulation if $\forall x\in E^\mathcal{A}$, $\forall y\in E^\mathcal{B}$, $S(x,y)$ implies the following:
\begin{description}
\item[Invariance] $Pre^\mathcal{A}(x)\equiv Pre^\mathcal{B}(y)$,
\item[Zig] $\forall a\in A$, $\forall x'\in \to_a^\mathcal{A}(x,\cdot)$, if $Pre^\mathcal{A}(x)\land\lozenge_a Pre^\mathcal{A}(x')\not\equiv\bot$, then $\exists y'\in \to_a^\mathcal{B}(y, \cdot)$, $S(x', y')$,
\item[Zag] $\forall a\in A$, $\forall y'\in \to_a^\mathcal{B}(y,\cdot)$, if $Pre^\mathcal{A}(y)\land\lozenge_a Pre^\mathcal{A}(y')\not\equiv\bot$, then $\exists x'\in \to_a^\mathcal{A}(x, \cdot)$, $S(x', y')$.
\end{description}
We denote $\mathcal{A}\underline{\leftrightarrow}\mathcal{B}$ iff there exists a bisimulation $S$ between $\mathcal{A}$ and $\mathcal{B}$ such that
\begin{description}
\item[Zig0] $\forall x\in E_0^\mathcal{A}$, $\exists y\in E_0^\mathcal{B}$, $S(x,y)$,
\item[Zag0] $\forall y\in E_0^\mathcal{B}$, $\exists x\in E_0^\mathcal{A}$, $S(x,y)$.
\end{description}
\end{definition}

\begin{definition}[propositional action emulation, Definition 18 in \cite{EijckS2012}]\label{DefProEmu}
Let $\mathcal{A}$ and $\mathcal{B}$ be action models. A relation $S\subset E^\mathcal{A}\times E^\mathcal{B}$ is a propositional acition emulation if $\forall x\in E^\mathcal{A}$, $\forall y\in E^\mathcal{B}$, $S(x,y)$ implies the following:
\begin{description}
\item[Consistency] $Pre^\mathcal{A}(x)\land Pre^\mathcal{B}(y)\not\equiv\bot$.
\item[Zig] $\forall a\in A$, $\forall x'\in \to_a^\mathcal{A}(x,\cdot)$,
\begin{align*}
Pre^\mathcal{A}(x')\vDash \bigvee_{y'\in \to_a^\mathcal{B}(y,\cdot)\cap S(x',\cdot)}Pre^\mathcal{B}(y'),
\end{align*}
where $S(x',\cdot)\coloneqq\set{y'\in E^\mathcal{B} \mid S(x',y')}$.
\item[Zag] $\forall a\in A$, $\forall y'\in \to_a^\mathcal{B}(y,\cdot)$,
\begin{align*}
Pre^\mathcal{B}(y')\vDash \bigvee_{x'\in \to_a^\mathcal{A}(x,\cdot)\cap S(\cdot, y')}Pre^\mathcal{A}(x').
\end{align*}
\end{description}
We denote $\mathcal{A}\leftrightarrows_P\mathcal{B}$ iff there exists a propositional action emulation $S$ between $\mathcal{A}$ and $\mathcal{B}$ such that
\begin{description}
\item[Zig0] $\forall x\in E_0^\mathcal{A}$, $Pre^\mathcal{A}(x)\vDash\bigvee_{y\in S(x, \cdot)\cap E_0^\mathcal{B}}Pre^\mathcal{B}(y)$,
\item[Zag0] $\forall y\in E_0^\mathcal{B}$, $Pre^\mathcal{B}(y)\vDash\bigvee_{x\in S(\cdot, y)\cap E_0^\mathcal{A}}Pre^\mathcal{A}(x)$.
\end{description}
\end{definition}

\begin{definition}[action emulation, Definition 14 in \cite{SietsmaEijckJOPL2013}]\label{DefEmu}
Let $\mathcal{A}$ and $\mathcal{B}$ be action models. A relation $S\subset E^\mathcal{A}\times E^\mathcal{B}$ is an acition emulation if $\forall x\in E^\mathcal{A}$, $\forall y\in E^\mathcal{B}$, $S(x,y)$ implies the following:
\begin{description}
\item[Consistency] $Pre^\mathcal{A}(x)\land Pre^\mathcal{B}(y)\not\equiv\bot$.
\item[Zig] $\forall a\in A$, $\forall x'\in \to_a^\mathcal{A}(x,\cdot)$,
\begin{align*}
Pre^\mathcal{A}(x)\land Pre^\mathcal{B}(y)\vDash \square_a\qty(Pre^\mathcal{A}(x')\to\bigvee_{y'\in \to_a^\mathcal{B}(y,\cdot)\cap S(x',\cdot)}Pre^\mathcal{B}(y')).
\end{align*}
\item[Zag] $\forall a\in A$, $\forall y'\in \to_a^\mathcal{B}(y,\cdot)$,
\begin{align*}
Pre^\mathcal{A}(x)\land Pre^\mathcal{B}(y)\vDash \square_a\qty(Pre^\mathcal{B}(y')\to \bigvee_{x'\in \to_a^\mathcal{A}(x,\cdot)\cap S(\cdot, y')}Pre^\mathcal{A}(x')).
\end{align*}
\end{description}
We denote $\mathcal{A}\leftrightarrows\mathcal{B}$ iff there exists an action emulation $S$ between $\mathcal{A}$ and $\mathcal{B}$ such that
\begin{description}
\item[Zig0] $\forall x\in E_0^\mathcal{A}$, $Pre^\mathcal{A}(x)\vDash\bigvee_{y\in S(x, \cdot)\cap E_0^\mathcal{B}}Pre^\mathcal{B}(y)$,
\item[Zag0] $\forall y\in E_0^\mathcal{B}$, $Pre^\mathcal{B}(y)\vDash\bigvee_{x\in S(\cdot, y)\cap E_0^\mathcal{A}}Pre^\mathcal{A}(x)$.
\end{description}
\end{definition}

\begin{definition}[propositional action model, Definition 16 in \cite{EijckS2012}]
An action model is propositional if every precondition formula that occurs in it is a formula of classical propositional logic.
\end{definition}

\begin{definition}[single negation, formula closure, atom \cite{SietsmaEijckJOPL2013}]
$\forall \phi\in \mathcal{L}$, define the single negation of $\phi$ as follows: if $\phi=\lnot\psi$, then the single negation of $\phi$ is $\psi$; otherwise, it is $\lnot\phi$. $\forall \Phi\subset\mathcal{L}$, define the closure $C(\Phi)$ of $\Phi$ as the smallest formula set containing $\Phi$ that is closed under taking subformulas
and single negations, and the atoms $K\circ C(\Phi)$ of $\Phi$ as the set of maximal consistent subsets of $C(\Phi)$. Define $\Gamma\circ K\circ C(\Phi)\coloneqq\set{\bigwedge\Psi \mid \Psi\in K\circ C(\Phi)}$.
\end{definition}

\begin{definition}[canonical Kripke model, Definition 7 in \cite{SietsmaEijckJOPL2013}]
Let $\Phi\subset\mathcal{L}$ and $\set{p_{\Xi} \mid \Xi\in K\circ C\qty(\Phi)}$ be a set of unique propositions not occurring in $\Phi$. Then the canonical Kripke model $\mathcal{M}^c$ over $\Phi$ is defined as
\begin{enumerate}
\item $W^{\mathcal{M}^c}\coloneqq K\circ C\qty(\Phi)$.
\item $\forall \Xi\in W^{\mathcal{M}^c}$, $Val^{\mathcal{M}^c}\qty(\Xi)\coloneqq \qty(\mathcal{P}\cap\Xi)\cup\set{p_{\Xi}}$.
\item $\forall a\in A$, $\forall \Xi,\Xi'\in W^{\mathcal{M}^c}$, $\Xi\to_a^{\mathcal{M}^c} \Xi'$ iff $\bigwedge\Xi\land\lozenge_a\bigwedge\Xi'\not\equiv\bot$.
\item $W_0^{\mathcal{M}^c}\coloneqq K\circ C\qty(\Phi)$.
\end{enumerate}
\end{definition}

\begin{definition}\label{DefDiscre}
Let $\Phi\subset\mathcal{L}$ and $\xi\in\mathcal{L}$. Define $\mathcal{G}\qty(\xi,\Phi)\coloneqq \set{\phi\in\Phi \mid \phi\vDash\xi}$.
\end{definition}

\begin{definition}[parametrized action emulation, Definition 12 in \cite{SietsmaEijckJOPL2013}]
Let $\mathcal{A}$ and $\mathcal{B}$ be action models and
\begin{align*}
\Theta\coloneqq \Gamma\circ K\circ C\qty(Pre^\mathcal{A}\qty(E^\mathcal{A})\cup Pre^\mathcal{B}\qty(E^\mathcal{B})).
\end{align*}
A set of indexed relations $\set{S_\xi \mid \xi\in\Theta}$ with $S_\xi\subset E^\mathcal{A}\times E^\mathcal{B}$ is a parametrized action emulation if $\forall x\in E^\mathcal{A}$, $\forall y\in E^\mathcal{B}$, $\forall\xi\in\Theta$, $S_\xi(x,y)$ implies the following:
\begin{description}
\item[Invariance] $\xi\vDash Pre^\mathcal{A}(x)\land Pre^\mathcal{B}(y)$,
\item[Zig] $\forall a\in A$, $\forall x'\in \to_a^\mathcal{A}(x,\cdot)$, $\forall \xi'\in\mathcal{G}\qty(Pre^\mathcal{A}(x'),\Theta)$, if $\xi\land\lozenge_a \xi'\not\equiv\bot$, then $\exists y'\in \to_a^\mathcal{B}(y, \cdot)$, $S_{\xi'}(x', y')$,
\item[Zag] $\forall a\in A$, $\forall y'\in \to_a^\mathcal{B}(y,\cdot)$, $\forall \xi'\in\mathcal{G}\qty(Pre^\mathcal{B}(y'),\Theta)$, if $\xi\land\lozenge_a \xi'\not\equiv\bot$, then $\exists x'\in \to_a^\mathcal{A}(x, \cdot)$, $S_{\xi'}(x', y')$.
\end{description}
We denote $\mathcal{A}\leftrightarrows_S\mathcal{B}$ iff there exists a parametrized action emulation $\set{S_\xi \mid \xi\in\Theta}$ between $\mathcal{A}$ and $\mathcal{B}$ such that
\begin{description}
\item[Zig0] $\forall x\in E_0^\mathcal{A}$, $\forall \xi\in\mathcal{G}\qty(Pre^\mathcal{A}(x),\Theta)$, $\exists y\in E_0^\mathcal{B}$, $S_\xi(x,y)$,
\item[Zag0] $\forall y\in E_0^\mathcal{B}$, $\forall \xi\in\mathcal{G}\qty(Pre^\mathcal{B}(y),\Theta)$, $\exists x\in E_0^\mathcal{A}$, $S_\xi(x,y)$.
\end{description}
\end{definition}

\begin{definition}[caninical action model, Definition 13 in \cite{SietsmaEijckJOPL2013}]
Let $\mathcal{A}$ be an action model. Define the canonical version $\mathcal{A}^c$ of $\mathcal{A}$ as follows.
\begin{enumerate}
\item $E^{\mathcal{A}^c}\coloneqq\set{(x,\phi) \mid x\in E^\mathcal{A}, \phi\in \Gamma\circ K\circ C\circ Pre^\mathcal{A}\qty(E^\mathcal{A}), \phi\vDash Pre^\mathcal{A}(x)}$.
\item $\forall (x,\phi)\in E^{\mathcal{A}^c}$, $Pre^{\mathcal{A}^c}\qty(x,\phi)\coloneqq\phi$.
\item $\forall (x,\phi),(x,\phi')\in E^{\mathcal{A}^c}$, $\forall a\in A$, $(x,\phi)\to_a^{\mathcal{A}^c}(x',\phi')$ iff $x\to_a^\mathcal{A}x'$ and $\phi\land\lozenge_a\phi'\not\equiv\bot$.
\item $E_0^{\mathcal{A}^c}\coloneqq\set{(x,\phi)\in E^{\mathcal{A}^c}\mid x\in E_0^\mathcal{A}}$.
\end{enumerate}
\end{definition}

\begin{definition}[generated submodel, resembling Definition 2.5 in \cite{BlackburnCUP2001}]\label{DefGenSubMod}
Let $\mathcal{A}$ and $\mathcal{B}$ be action models. We say that $\mathcal{B}$ is a submodel of $\mathcal{A}$ iff 
\begin{enumerate}
\item $E^\mathcal{B}\subset E^\mathcal{A}$;
\item $Pre^\mathcal{B}$, $\to^\mathcal{B}$, $E_0^\mathcal{B}$ are the restrictions of $Pre^\mathcal{A}$, $\to^\mathcal{A}$, $E_0^\mathcal{A}$ on $E^\mathcal{B}$ respectively.
\end{enumerate}
We say that $\mathcal{B}$ is a generated submodel of $\mathcal{A}$ iff
\begin{enumerate}
\item $\mathcal{B}$ is a submodel of $\mathcal{A}$;
\item $E^\mathcal{B}$ is a closed subset of $E^\mathcal{A}$ upon $\to^\mathcal{A}$. That is, $\forall x,y\in E^\mathcal{A}$, $\forall a\in A$, if $x\in E^\mathcal{B}$, $x\to_a^\mathcal{A} y$ and $Pre^\mathcal{A}(x)\land\lozenge_a Pre^\mathcal{A}(y)$, then $y\in E^\mathcal{B}$.
\end{enumerate}
Let $E\subset E^\mathcal{A}$. We say $\mathcal{A}_0$ is the submodel of $\mathcal{A}$ generated by $E$ iff $\mathcal{A}_0$ is the minimal generated submodel of $\mathcal{A}$ such that $E\subset E^{\mathcal{A}_0}$ (denoted by $\mathcal{A}_0=\mathbb{G}(\mathcal{A},E)$).
\end{definition}

We give the following classical results without proofs.
\begin{proposition}
The Kripke bisimulation $\underline{\leftrightarrow}$ is an equivalence relationship (reflexivity, symmetry, transitivity) over Kripke models.
\end{proposition}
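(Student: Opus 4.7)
The plan is to exhibit, for each of the three properties, an explicit witnessing bisimulation built from the Kripke structure or from given bisimulations, and then verify all five clauses (Invariance, Zig, Zag, Zig0, Zag0) directly from the definition. Since the definition is symmetric in the two models under swapping Zig$\leftrightarrow$Zag and Zig0$\leftrightarrow$Zag0, each verification reduces to an unpacking of definitions.

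For reflexivity, I take $R\coloneqq\{(w,w)\mid w\in W^\mathcal{M}\}$, the identity relation on $W^\mathcal{M}$. Invariance is immediate because $Val^\mathcal{M}(w)=Val^\mathcal{M}(w)$. For Zig (and symmetrically Zag), given $w\to_a^\mathcal{M}w'$, the witness is $w'$ itself, and $R(w',w')$ holds. Zig0 and Zag0 hold trivially since $W_0^\mathcal{M}=W_0^\mathcal{M}$.

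For symmetry, suppose $R$ witnesses $\mathcal{M}\underline{\leftrightarrow}\mathcal{N}$. I define $R^{-1}\coloneqq\{(v,w)\mid R(w,v)\}$ and claim it witnesses $\mathcal{N}\underline{\leftrightarrow}\mathcal{M}$. Invariance for $R^{-1}$ follows from Invariance of $R$ (the equality $Val^\mathcal{M}(w)=Val^\mathcal{N}(v)$ is symmetric). The Zig clause for $R^{-1}$ is exactly the Zag clause of $R$, and vice versa; similarly Zig0 for $R^{-1}$ is Zag0 for $R$.

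For transitivity, suppose $R_1\subset W^\mathcal{M}\times W^\mathcal{N}$ witnesses $\mathcal{M}\underline{\leftrightarrow}\mathcal{N}$ and $R_2\subset W^\mathcal{N}\times W^\mathcal{O}$ witnesses $\mathcal{N}\underline{\leftrightarrow}\mathcal{O}$. I take the relational composition $R\coloneqq\{(w,u)\mid \exists v\in W^\mathcal{N},\ R_1(w,v)\land R_2(v,u)\}$. Invariance chains: $Val^\mathcal{M}(w)=Val^\mathcal{N}(v)=Val^\mathcal{O}(u)$. For Zig, given $R(w,u)$ with intermediate $v$ and $w\to_a^\mathcal{M}w'$, apply Zig of $R_1$ to obtain $v'$ with $v\to_a^\mathcal{N}v'$ and $R_1(w',v')$, then apply Zig of $R_2$ to $(v,u)$ and the edge $v\to_a^\mathcal{N}v'$ to obtain $u'$ with $u\to_a^\mathcal{O}u'$ and $R_2(v',u')$; then $R(w',u')$ via witness $v'$. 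Zag is symmetric. For Zig0, given $w\in W_0^\mathcal{M}$, use Zig0 of $R_1$ to get $v\in W_0^\mathcal{N}$ with $R_1(w,v)$, then Zig0 of $R_2$ to get $u\in W_0^\mathcal{O}$ with $R_2(v,u)$, whence $R(w,u)$. Zag0 is analogous.

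There is no real obstacle; the only point requiring attention is to use the initial-world conditions Zig0/Zag0 twice in the transitivity argument (once for each underlying bisimulation) so that the composed relation lands in $W_0^\mathcal{O}$, which is precisely why the initial-world clauses are bundled into the definition of $\underline{\leftrightarrow}$ rather than being stated separately.
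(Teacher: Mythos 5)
Your proof is correct. The paper states this proposition explicitly as a classical result and gives no proof of its own, and your argument --- identity relation for reflexivity, converse relation for symmetry, relational composition for transitivity, with the Zig0/Zag0 clauses chained through the intermediate model --- is the standard textbook verification and checks out in every clause.
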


\begin{proposition}\label{ProBisProActEquEqu}
The action bisimulation $\underline{\leftrightarrow}$, the propositional action emulation $\leftrightarrows_P$, the action emulation $\leftrightarrows$, the parametrized action emulation $\leftrightarrows_S$, and the action equivalence $\equiv$ are equivalence relationships over action models.
\end{proposition}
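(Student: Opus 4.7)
The plan is to verify reflexivity, symmetry, and transitivity for each of the five relations. Action equivalence $\equiv$ is essentially immediate: reflexivity, symmetry, and transitivity of $\equiv$ on action models follow pointwise (over all Kripke $\mathcal{M}$) from the corresponding properties of Kripke bisimulation $\underline{\leftrightarrow}$, which is assumed in the preceding proposition. So the real work is for $\underline{\leftrightarrow}$ (on actions), $\leftrightarrows_P$, $\leftrightarrows$, and $\leftrightarrows_S$.

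For each of these, the standard template is: use the diagonal relation $\{(x,x) : x \in E^\mathcal{A}\}$ as witness for reflexivity (every clause collapses to a trivial identity, e.g.\ $Pre^\mathcal{A}(x') \vDash Pre^\mathcal{A}(x')$); use the converse relation $S^{-1}$ for symmetry, noting that the Zig/Zag and Zig0/Zag0 clauses of Definitions \ref{DefBiSimAct}, \ref{DefProEmu}, \ref{DefEmu} are formulated as mirror images, so swapping $\mathcal{A}$ with $\mathcal{B}$ turns a witness into a witness; and use a relational composition for transitivity. For $\underline{\leftrightarrow}$ on action models the composition $S_2 \circ S_1$ goes through verbatim (Invariance is transitive in the semantic sense, and Zig/Zag chase in the usual way). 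For $\leftrightarrows_P$, I would take $S := \{(x,z) : \exists y,\ S_1(x,y),\ S_2(y,z),\ Pre^\mathcal{A}(x)\wedge Pre^\mathcal{C}(z)\not\equiv\bot\}$: the explicit consistency restriction is needed because consistency does not compose, and then for the Zig clause I would chain the two entailments $Pre^\mathcal{A}(x') \vDash \bigvee_{y'} Pre^\mathcal{B}(y') \vDash \bigvee_{y'}\bigvee_{z'} Pre^\mathcal{C}(z')$, observing that inconsistent disjuncts can be pruned without changing the truth of an entailment. The Zig0/Zag0 cases are identical in structure.

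The main obstacle is the transitivity of $\leftrightarrows$ (and, derivatively, of $\leftrightarrows_S$). The Zig clause of action emulation has shape $Pre^\mathcal{A}(x)\wedge Pre^\mathcal{B}(y) \vDash \square_a(\ldots)$, and chaining $S_1$-Zig with $S_2$-Zig produces $Pre^\mathcal{A}(x)\wedge Pre^\mathcal{B}(y)\wedge Pre^\mathcal{C}(z) \vDash \square_a(\ldots)$ with a stubborn middle conjunct $Pre^\mathcal{B}(y)$ that is not in the target inequality. To eliminate it, I plan to take the union over all eligible intermediaries, defining $S := \bigcup_y S_2|_{\{y\}\times\cdot}\circ S_1|_{\cdot\times\{y\}}$, and to use the closure of the action-emulation conditions under enlarging $S$ (the right-hand disjunctions in Zig/Zag only grow). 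The pairs $(x,z)$ surviving in $S$ are then exactly those where some $y$ with $S_1(x,y)$ and $S_2(y,z)$ covers every world satisfying $Pre^\mathcal{A}(x)\wedge Pre^\mathcal{C}(z)$; pairs for which no such covering is possible will be discarded, which is legitimate because the Zig0/Zag0 conditions only require at least one matching witness. For $\leftrightarrows_S$, the additional twist is that the parameter sets $\Theta^{12}, \Theta^{23}, \Theta^{13}$ are built from different precondition unions; I would refine each atom $\xi \in \Theta^{13}$ consistently against $\Theta^{12}$ and $\Theta^{23}$ using the atom-refinement machinery implicit in the definition of $K\circ C$, and then compose index by index, which is where the bulk of the bookkeeping resides but where no new conceptual difficulty beyond the $\leftrightarrows$ case arises.
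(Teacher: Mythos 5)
The paper offers no proof of this proposition at all --- it is introduced under the heading ``classical results without proofs'' --- so there is nothing to compare line by line; what matters is whether your sketch actually closes. Most of it does: reflexivity via the diagonal (restricted to events with consistent preconditions, since otherwise the Consistency clause of Definitions \ref{DefProEmu} and \ref{DefEmu} fails on $(x,x)$), symmetry via the converse, transitivity of $\underline{\leftrightarrow}$ by relational composition, transitivity of $\leftrightarrows_P$ by your pruned composition (the observation that disjuncts inconsistent with the left-hand side of an entailment can be dropped is correct and is exactly what makes the chained Zig work), and transitivity of $\equiv$ pointwise from Kripke bisimulation.

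The gap is in the transitivity of $\leftrightarrows$, which you correctly identify as the hard case but do not actually resolve. Your proposed repair keeps only those pairs $(x,z)$ for which some intermediary $y$ satisfies $Pre^\mathcal{A}(x)\land Pre^\mathcal{C}(z)\vDash Pre^\mathcal{B}(y)$, and you justify discarding the rest by saying that ``the Zig0/Zag0 conditions only require at least one matching witness.'' That is false for $\leftrightarrows$ (and for $\leftrightarrows_P$): by Definition \ref{DefEmu}, Zig0 demands $Pre^\mathcal{A}(x)\vDash\bigvee_{y\in S(x,\cdot)\cap E_0^\mathcal{B}}Pre^\mathcal{B}(y)$, a \emph{covering} condition over all pairs retained in $S$, so shrinking $S$ can break it. Chaining $S_1$-Zig0 with $S_2$-Zig0 gives $Pre^\mathcal{A}(x)\vDash\bigvee_y\bigvee_z Pre^\mathcal{C}(z)$, but the pairs $(x,z)$ appearing there need not satisfy your covering restriction, so the composite Zig0 is not established; and without the restriction the stubborn conjunct $Pre^\mathcal{B}(y)$ in the antecedent of the chained $\square_a$ entailment cannot be removed, since a world may satisfy $Pre^\mathcal{A}(x)\land Pre^\mathcal{C}(z)$ while falsifying $Pre^\mathcal{B}(y)$ for every eligible intermediary. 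Note also that you cannot fall back on the semantic characterization here: the paper's own counterexample in Section \ref{SecGAE} shows $\equiv$ does not imply $\leftrightarrows$, so transitivity of $\leftrightarrows$ does not reduce to transitivity of $\equiv$. (For $\leftrightarrows_S$, by contrast, the clean route is exactly that reduction, via Propositions \ref{ProBisImpProImpActImpEqu} and \ref{ProEquivImpCanEmu}, which avoids all of the index-by-index atom bookkeeping you describe.) As it stands, the transitivity of $\leftrightarrows$ needs a genuinely different argument from the one you outline.
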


\begin{proposition}\label{ProBisImpProImpActImpEqu}
Let $\mathcal{A}$ and $\mathcal{B}$ be action models. Then
\begin{align*}
\mathcal{A}\underline{\leftrightarrow}\mathcal{B}\Rightarrow \mathcal{A}\leftrightarrows_P\mathcal{B}\Rightarrow \mathcal{A}\leftrightarrows\mathcal{B}\Rightarrow \mathcal{A}\leftrightarrows_S\mathcal{B}\Leftrightarrow \mathcal{A}\equiv\mathcal{B}.
\end{align*}
\end{proposition}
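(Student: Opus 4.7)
The chain splits into four independent parts; I will treat them in order, beginning with a harmless preprocessing step that discards every event with unsatisfiable precondition and every edge $x\to_a^\mathcal{A} x'$ for which $Pre^\mathcal{A}(x)\land\lozenge_a Pre^\mathcal{A}(x')\equiv\bot$. This deletion leaves every relation in the chain invariant, so I may assume all preconditions are satisfiable and all edges are live. For $\underline{\leftrightarrow}\Rightarrow\leftrightarrows_P$, reuse the bisimulation $S$: its Invariance clause $Pre^\mathcal{A}(x)\equiv Pre^\mathcal{B}(y)$ implies Consistency, and for each live $x\to_a^\mathcal{A} x'$ the bisimulation Zig produces some $y'\in\to_a^\mathcal{B}(y,\cdot)$ with $Pre^\mathcal{A}(x')\equiv Pre^\mathcal{B}(y')$, so this single disjunct already covers $Pre^\mathcal{A}(x')$ on the right-hand side. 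The implication $\leftrightarrows_P\Rightarrow\leftrightarrows$ also reuses the same relation: the propositional clause $Pre^\mathcal{A}(x')\vDash\bigvee Pre^\mathcal{B}(y')$ makes $Pre^\mathcal{A}(x')\to\bigvee Pre^\mathcal{B}(y')$ a validity, and the $\square_a$ of a validity is a validity, so the action emulation Zig holds; the initial and Zag clauses are the same verbatim.

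For $\leftrightarrows\Rightarrow\leftrightarrows_S$, given an action emulation $S$ I define $S_\xi(x,y)$ to hold iff $S(x,y)$ and $\xi\vDash Pre^\mathcal{A}(x)\land Pre^\mathcal{B}(y)$, so that Invariance of the parametrized family is automatic. The content sits in Zig: assume $S_\xi(x,y)$, $x\to_a^\mathcal{A} x'$ and $\xi'\in\mathcal{G}(Pre^\mathcal{A}(x'),\Theta)$ with $\xi\land\lozenge_a\xi'\not\equiv\bot$. I pick a pointed model witnessing $\xi\land\lozenge_a\xi'$; since $\xi$ entails $Pre^\mathcal{A}(x)\land Pre^\mathcal{B}(y)$, the action emulation Zig evaluated there produces, at the $\xi'$-successor, some $y'\in\to_a^\mathcal{B}(y,\cdot)$ with $S(x',y')$ and $Pre^\mathcal{B}(y')$ true. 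Because $\xi'$ is a maximal consistent subset of $C(\Theta)$, its consistency with $Pre^\mathcal{B}(y')$ forces $\xi'\vDash Pre^\mathcal{B}(y')$, so $S_{\xi'}(x',y')$; Zag, Zig0 and Zag0 are symmetric.

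For the final equivalence $\leftrightarrows_S\Leftrightarrow\equiv$, the forward direction is a direct construction: given $\{S_\xi\}$ and any Kripke model $\mathcal{M}$, set $(w,x)\sim(v,y)$ iff $w=v$ and there exists $\xi\in\Theta$ with $\mathcal{M}\vDash_w\xi$ and $S_\xi(x,y)$; since every world of $\mathcal{M}$ satisfies exactly one atom of $\Theta$ up to logical equivalence, the parametrized Zig/Zag clauses translate edge-by-edge into Kripke Zig/Zag clauses on $\mathcal{M}\otimes\mathcal{A}$ and $\mathcal{M}\otimes\mathcal{B}$, and Zig0/Zag0 handle the actual events. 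The main obstacle is the converse $\equiv\Rightarrow\leftrightarrows_S$. The plan is to apply $\equiv$ to the canonical Kripke model $\mathcal{M}^c$ over $Pre^\mathcal{A}(E^\mathcal{A})\cup Pre^\mathcal{B}(E^\mathcal{B})$, extract a Kripke bisimulation $R$ between $\mathcal{M}^c\otimes\mathcal{A}$ and $\mathcal{M}^c\otimes\mathcal{B}$, and then define $S_\xi(x,y)$ iff $R((\Xi,x),(\Xi,y))$ for the unique atom-world $\Xi\in W^{\mathcal{M}^c}$ with $\bigwedge\Xi\equiv\xi$. The fresh atomic propositions $p_\Xi$ built into $\mathcal{M}^c$ guarantee that $R$ must respect the atom index, which is precisely what lets me read the Kripke Zig/Zag on the product back as a parametrized Zig/Zag; verifying that this extraction is well-defined and that the edge relation on $\mathcal{M}^c$ (given by $\bigwedge\Xi\land\lozenge_a\bigwedge\Xi'\not\equiv\bot$) interacts correctly with the clause $\xi'\in\mathcal{G}(Pre^\mathcal{A}(x'),\Theta)$ is where the bookkeeping becomes delicate and is the most likely source of errors.
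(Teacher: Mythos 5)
The paper itself gives no argument for this proposition: it sits in the block of ``classical results without proofs,'' and its only deep part, $\mathcal{A}\leftrightarrows_S\mathcal{B}\Leftrightarrow\mathcal{A}\equiv\mathcal{B}$, is restated and cited as Proposition \ref{ProEquivImpCanEmu}. Measured against that, your two middle implications are correct and complete: the necessitation argument for $\leftrightarrows_P\Rightarrow\leftrightarrows$ is exactly right, and defining $S_\xi(x,y)$ as $S(x,y)$ together with $\xi\vDash Pre^\mathcal{A}(x)\land Pre^\mathcal{B}(y)$, then using the fact that an atom $\xi'$ decides $Pre^\mathcal{B}(y')$ to upgrade satisfiability of $\xi'\land Pre^\mathcal{B}(y')$ to entailment, is the standard and correct route to $\leftrightarrows\Rightarrow\leftrightarrows_S$. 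Your construction for $\leftrightarrows_S\Rightarrow\equiv$ is also sound. The converse $\equiv\Rightarrow\leftrightarrows_S$ you only outline, and as you concede, the delicate bookkeeping is exactly the part you have not carried out; that direction is the entire content of the cited theorem, so as written the final biconditional is not proved.

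The concrete gap is the preprocessing step. Deleting an edge $x\to_a^\mathcal{A}x'$ with $Pre^\mathcal{A}(x)\land\lozenge_a Pre^\mathcal{A}(x')\equiv\bot$ does not leave $\leftrightarrows_P$ (or $\leftrightarrows$) invariant: the Zig of Definition \ref{DefProEmu} imposes $Pre^\mathcal{A}(x')\vDash\bigvee_{y'\in\to_a^\mathcal{B}(y,\cdot)\cap S(x',\cdot)}Pre^\mathcal{B}(y')$ for \emph{every} listed successor $x'$, including one reached only by a dead edge whose precondition is nevertheless satisfiable, and the covering disjunction ranges over the listed successors of $y$; both sides of the clause change when edges are removed. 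So ``the reduced models propositionally emulate'' does not transfer back to the original pair, and your first implication is established only for the reduced models. The obstruction is not repairable under the paper's Definition \ref{DefBiSimAct}, whose Zig is guarded by $Pre^\mathcal{A}(x)\land\lozenge_a Pre^\mathcal{A}(x')\not\equiv\bot$: take $E^\mathcal{A}=\set{x,x'}$, $E_0^\mathcal{A}=\set{x}$, $x\to_a^\mathcal{A}x'$, $Pre^\mathcal{A}(x)=\square_a\bot$, $Pre^\mathcal{A}(x')=\top$, and $E^\mathcal{B}=E_0^\mathcal{B}=\set{y}$ with $Pre^\mathcal{B}(y)=\square_a\bot$ and no successors. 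Then $\set{(x,y)}$ is an action bisimulation in the sense of Definition \ref{DefBiSimAct} (the unique edge is dead, so Zig is vacuous), but any propositional action emulation must contain $(x,y)$ by Zig0 (since $\square_a\bot\not\equiv\bot$) and its Zig then demands $\top\vDash\bigvee\emptyset\equiv\bot$. So either the guard should be dropped from Definition \ref{DefBiSimAct}, as in the cited sources --- in which case the first implication follows directly by restricting the bisimulation to pairs of events with satisfiable preconditions, and no edge surgery is needed --- or the first implication is false as stated. Either way, the blanket claim that the deletion ``leaves every relation in the chain invariant'' is the step that fails.
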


\begin{proposition}[Theorem 8 in \cite{SietsmaEijckJOPL2013}]
Let $\mathcal{A}$ and $\mathcal{B}$ be propositional action models. Then $\mathcal{A}\equiv \mathcal{B}$ implies $\mathcal{A}\leftrightarrows_P \mathcal{B}$.
\end{proposition}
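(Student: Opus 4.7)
The plan is to leverage Proposition \ref{ProBisImpProImpActImpEqu}, which tells us that $\mathcal{A}\equiv\mathcal{B}$ is already equivalent to the existence of a parametrized action emulation $\{S_\xi\mid\xi\in\Theta\}$, where $\Theta = \Gamma\circ K\circ C(Pre^\mathcal{A}(E^\mathcal{A})\cup Pre^\mathcal{B}(E^\mathcal{B}))$. From this family I would define the candidate propositional emulation $S\coloneqq\bigcup_{\xi\in\Theta}S_\xi$ and verify the four clauses of Definition \ref{DefProEmu} directly. The whole argument hinges on a structural simplification made available by the propositional hypothesis.

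The key observation is that, since every precondition of $\mathcal{A}$ and $\mathcal{B}$ is a classical propositional formula, so is every element of $C(Pre^\mathcal{A}(E^\mathcal{A})\cup Pre^\mathcal{B}(E^\mathcal{B}))$, and consequently every $\xi\in\Theta$ is a consistent propositional formula. For any two consistent propositional formulas $\xi,\xi'$ and any label $a$ one can construct a two-world Kripke model (a world satisfying $\xi$ with an $a$-successor satisfying $\xi'$) that witnesses $\xi\land\lozenge_a\xi'\not\equiv\bot$. Therefore the guarding hypotheses in the Zig/Zag of the parametrized emulation are vacuous here, giving the stronger statement: whenever $S_\xi(x,y)$ and $x\to_a^\mathcal{A}x'$, for every $\xi'\in\mathcal{G}(Pre^\mathcal{A}(x'),\Theta)$ there exists $y'\in\to_a^\mathcal{B}(y,\cdot)$ with $S_{\xi'}(x',y')$ (and symmetrically for Zag).

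Next I would use the atomic completeness of $\Theta$ with respect to propositional formulas, namely $Pre^\mathcal{A}(x')\equiv\bigvee\mathcal{G}(Pre^\mathcal{A}(x'),\Theta)$: any propositional valuation of $Pre^\mathcal{A}(x')$ extends to a maximal consistent subset of the closure, producing an atom in $\mathcal{G}(Pre^\mathcal{A}(x'),\Theta)$, and conversely every such atom entails $Pre^\mathcal{A}(x')$. Combined with parametrized Invariance $\xi'\vDash Pre^\mathcal{A}(x')\land Pre^\mathcal{B}(y')$, I can chain
\begin{align*}
Pre^\mathcal{A}(x')\;\equiv\;\bigvee_{\xi'\in\mathcal{G}(Pre^\mathcal{A}(x'),\Theta)}\xi'\;\vDash\;\bigvee_{\xi'}\;\bigvee_{\substack{y'\in\to_a^\mathcal{B}(y,\cdot)\\ S_{\xi'}(x',y')}}Pre^\mathcal{B}(y')\;\vDash\;\bigvee_{y'\in\to_a^\mathcal{B}(y,\cdot)\cap S(x',\cdot)}Pre^\mathcal{B}(y'),
\end{align*}
which is exactly Zig for $S$. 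Zag is symmetric, Consistency follows because each $\xi$ entails $Pre^\mathcal{A}(x)\land Pre^\mathcal{B}(y)$ and is consistent, and the Zig0/Zag0 clauses are proved by the same atomic decomposition applied to the parametrized Zig0/Zag0.

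The main obstacle I expect is justifying cleanly that the parametrized emulation's modal side-conditions collapse under the propositional hypothesis; once that is pinned down, the rest is an algebraic combination of the atomic decomposition of propositional formulas with the Zig/Zag supplied by the parametrized emulation. No new semantic constructions are required, and no Kripke model has to be built beyond the two-world witness used to discharge the $\xi\land\lozenge_a\xi'\not\equiv\bot$ clause.
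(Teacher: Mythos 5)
The paper itself offers no proof of this proposition---it is listed among the ``classical results without proofs'' and attributed to Theorem 8 of \cite{SietsmaEijckJOPL2013}---so there is nothing internal to compare against; judged on its own terms, your argument is correct. The two load-bearing steps both check out: (i) since the preconditions are propositional, $C(Pre^\mathcal{A}(E^\mathcal{A})\cup Pre^\mathcal{B}(E^\mathcal{B}))$ and hence every atom conjunction $\xi\in\Theta$ is a consistent propositional formula, and the two-world model (a world whose valuation satisfies $\xi$ with an $a$-successor whose valuation satisfies $\xi'$) does witness $\xi\land\lozenge_a\xi'\not\equiv\bot$, so the modal guards in the parametrized Zig/Zag become vacuous; and (ii) the decomposition $Pre^\mathcal{A}(x')\equiv\bigvee\mathcal{G}(Pre^\mathcal{A}(x'),\Theta)$ is exactly the $\Theta$-regular/$\Theta$-basis property that the paper records in Lemma \ref{LemGKCsatRBDK}, and combining it with parametrized Invariance ($\xi'\vDash Pre^\mathcal{B}(y')$ for the matched $y'$) yields the Zig of Definition \ref{DefProEmu} for $S=\bigcup_{\xi}S_\xi$; the empty case $\mathcal{G}(Pre^\mathcal{A}(x'),\Theta)=\emptyset$ forces $Pre^\mathcal{A}(x')\equiv\bot$ and is vacuous. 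Consistency and Zig0/Zag0 follow as you say. This is a clean reduction that stays entirely within the machinery the paper already assumes (Proposition \ref{ProBisImpProImpActImpEqu} giving $\mathcal{A}\equiv\mathcal{B}\Leftrightarrow\mathcal{A}\leftrightarrows_S\mathcal{B}$), whereas the cited source argues via updating with a canonical Kripke model; your route makes explicit that the only place the propositional hypothesis is used is in discharging the $\xi\land\lozenge_a\xi'$ guard, which is precisely where the claim fails for general action models.
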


\begin{proposition}[Theorem 4 in \cite{SietsmaEijckJOPL2013}]
Let $\mathcal{A}$ be an action model and $\mathcal{A}^c$ be the canonical version of $\mathcal{A}$. Then $\mathcal{A}\equiv\mathcal{A}^c$.
\end{proposition}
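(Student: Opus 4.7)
The plan is to construct an explicit Kripke bisimulation between $\mathcal{M}\otimes\mathcal{A}$ and $\mathcal{M}\otimes\mathcal{A}^c$ for an arbitrary Kripke model $\mathcal{M}$. The key observation is that for every world $w\in W^\mathcal{M}$, there is a \emph{unique} atom $\phi_w\in \Gamma\circ K\circ C\circ Pre^\mathcal{A}(E^\mathcal{A})$ with $\mathcal{M}\vDash_w \phi_w$. This is because $C\circ Pre^\mathcal{A}(E^\mathcal{A})$ is closed under single negations, so the set $\{\psi\in C\circ Pre^\mathcal{A}(E^\mathcal{A})\mid \mathcal{M}\vDash_w\psi\}$ is automatically a maximal consistent subset of the closure, i.e.\ an element of $K\circ C\circ Pre^\mathcal{A}(E^\mathcal{A})$.

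Using this, I would define the candidate bisimulation
\begin{align*}
R((w,x),(w',(x',\phi))) \iff w=w'\;\land\; x=x'\;\land\; \mathcal{M}\vDash_w\phi,
\end{align*}
and verify the conditions of Kripke bisimulation in turn. Invariance is immediate from Definition~\ref{DefUpdate}(3). For Zig, given $(w,x)\to_a^{\mathcal{M}\otimes\mathcal{A}}(v,y)$, I take $\phi_v$ as the unique atom true at $v$; then $\phi_v\vDash Pre^\mathcal{A}(y)$ (since $\mathcal{M}\vDash_v Pre^\mathcal{A}(y)$ and atoms decide every formula in $C\circ Pre^\mathcal{A}(E^\mathcal{A})$), so $(y,\phi_v)\in E^{\mathcal{A}^c}$; moreover $\mathcal{M}\vDash_w\phi\land\lozenge_a\phi_v$ since $\mathcal{M}\vDash_w\phi$, $w\to_a^\mathcal{M} v$ and $\mathcal{M}\vDash_v\phi_v$, whence $\phi\land\lozenge_a\phi_v\not\equiv\bot$ and therefore $(x,\phi)\to_a^{\mathcal{A}^c}(y,\phi_v)$. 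Thus $(w,(x,\phi))\to_a^{\mathcal{M}\otimes\mathcal{A}^c}(v,(y,\phi_v))$ and $R$ relates the pair. Zag is the easier direction: if $(w,(x,\phi))\to_a^{\mathcal{M}\otimes\mathcal{A}^c}(v,(y,\phi'))$, then $\mathcal{M}\vDash_v\phi'$ and $\phi'\vDash Pre^\mathcal{A}(y)$ give $\mathcal{M}\vDash_v Pre^\mathcal{A}(y)$, so $(v,y)\in W^{\mathcal{M}\otimes\mathcal{A}}$, and combined with $w\to_a^\mathcal{M} v$ and $x\to_a^\mathcal{A} y$ (from the definition of $\to_a^{\mathcal{A}^c}$) we obtain the required transition in $\mathcal{M}\otimes\mathcal{A}$.

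Finally Zig0 and Zag0 follow directly: any actual world $(w,x)\in W_0^{\mathcal{M}\otimes\mathcal{A}}$ is matched by $(w,(x,\phi_w))\in W_0^{\mathcal{M}\otimes\mathcal{A}^c}$ using $x\in E_0^\mathcal{A}\Rightarrow (x,\phi_w)\in E_0^{\mathcal{A}^c}$, and conversely any $(w,(x,\phi))\in W_0^{\mathcal{M}\otimes\mathcal{A}^c}$ forces $x\in E_0^\mathcal{A}$ and $\mathcal{M}\vDash_w Pre^\mathcal{A}(x)$, giving $(w,x)\in W_0^{\mathcal{M}\otimes\mathcal{A}}$.

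The main obstacle, and the only nontrivial point, is justifying the existence and uniqueness of the atom $\phi_w$ at each world, since the whole argument relies on using these atoms as a "bridge" between the non-canonical and canonical products. Once this partition-like property of $\Gamma\circ K\circ C\circ Pre^\mathcal{A}(E^\mathcal{A})$ over worlds is in hand, the verification of the six bisimulation clauses is mechanical. No additional structural assumption on $\mathcal{A}$ is needed, so the proof works uniformly for all action models.
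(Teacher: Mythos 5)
Your proof is correct. Note that the paper itself does not prove this proposition: it is listed among the ``classical results without proofs'' and attributed to Theorem~4 of the cited reference, so there is no in-paper argument to compare against. Your construction is the natural one and all the steps check out: since $C\circ Pre^{\mathcal{A}}(E^{\mathcal{A}})$ is closed under subformulas and single negations, the set $\Psi_w$ of closure formulas true at $w$ is indeed a maximal consistent subset (adding any $\psi\notin\Psi_w$ puts both $\psi$ and a formula equivalent to $\lnot\psi$ into the set), and two distinct maximal consistent subsets cannot be jointly satisfiable at one world, which gives the existence and uniqueness of $\phi_w$ that your relation $R$ hinges on. With that in hand, the six bisimulation clauses go through exactly as you describe; in particular, in Zig the membership $(y,\phi_v)\in E^{\mathcal{A}^c}$ follows because $Pre^{\mathcal{A}}(y)$ is itself a conjunct of $\phi_v$, and $\phi\land\lozenge_a\phi_v\not\equiv\bot$ is witnessed by $w$ itself. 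The only implicit assumption is finiteness of $E^{\mathcal{A}}$ (so that the closure is finite and $\bigwedge\Psi_w$ is a formula), which the paper assumes throughout.
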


\begin{proposition}[Theorems 2, 3, 6 in \cite{SietsmaEijckJOPL2013}]\label{ProEquivImpCanEmu}
Let $\mathcal{A}$ and $\mathcal{B}$ be action models, $\mathcal{M}^c$ be the canonical Kripke model over $Pre^\mathcal{A}\qty(E^\mathcal{A})\cup Pre^\mathcal{B}\qty(E^\mathcal{B})$, and $\mathcal{A}^c$ and $\mathcal{B}^c$ be the canonical versions of $\mathcal{A}$ and $\mathcal{B}$, respectively. Then
\begin{align*}
\mathcal{M}^c\otimes \mathcal{A}\underline{\leftrightarrow}\mathcal{M}^c\otimes \mathcal{B}\Leftrightarrow \mathcal{A}\leftrightarrows_S\mathcal{B}\Leftrightarrow \mathcal{A}^c\leftrightarrows \mathcal{B}^c \Leftrightarrow \mathcal{A}\equiv\mathcal{B}.
\end{align*}
\end{proposition}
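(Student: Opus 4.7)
The plan is to close the cycle of implications $(4)\Rightarrow(1)\Rightarrow(2)\Rightarrow(3)\Rightarrow(4)$, where $(1)$--$(4)$ denote the four statements of the proposition in the order given. The implication $(4)\Rightarrow(1)$ is immediate upon specializing the definition of $\equiv$ to the single Kripke model $\mathcal{M}=\mathcal{M}^c$. The implication $(3)\Rightarrow(4)$ is essentially free from earlier material: $\mathcal{A}^c\leftrightarrows\mathcal{B}^c$ yields $\mathcal{A}^c\equiv\mathcal{B}^c$ via Proposition \ref{ProBisImpProImpActImpEqu}, and combining with $\mathcal{A}\equiv\mathcal{A}^c$, $\mathcal{B}\equiv\mathcal{B}^c$ (the immediately preceding cited proposition) and transitivity of $\equiv$ (Proposition \ref{ProBisProActEquEqu}) yields $\mathcal{A}\equiv\mathcal{B}$.

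For $(1)\Rightarrow(2)$ I would let $R$ witness the Kripke bisimulation between $\mathcal{M}^c\otimes\mathcal{A}$ and $\mathcal{M}^c\otimes\mathcal{B}$ and set $S_\xi(x,y)\Leftrightarrow R((\xi,x),(\xi,y))$ for every $\xi\in\Theta$ with $\xi\vDash Pre^\mathcal{A}(x)$ and $\xi\vDash Pre^\mathcal{B}(y)$. The key structural observation is that each atom-world $\xi$ of $\mathcal{M}^c$ carries a private marker proposition $p_\xi$ in its valuation, so the Invariance clause of Kripke bisimulation forces the first coordinates of any $R$-related pair to coincide; this collapses $R$ into an indexed family of event-level relations. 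The Zig/Zag clauses of $R$ then translate directly into parametrized Zig/Zag, using that $(\xi,x)\to_a^{\mathcal{M}^c\otimes\mathcal{A}}(\xi',x')$ iff $x\to_a^\mathcal{A}x'$ and $\xi\land\lozenge_a\xi'\not\equiv\bot$ (with precondition-satisfaction constraints). The Zig0/Zag0 conditions follow analogously from the initial-world clauses of $R$.

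For $(2)\Rightarrow(3)$, given the parametrized emulation $\{S_\xi\}_{\xi\in\Theta}$, I would define $T\subset E^{\mathcal{A}^c}\times E^{\mathcal{B}^c}$ by $T((x,\phi),(y,\psi))\Leftrightarrow \exists\,\xi\in\Theta$ with $\xi\vDash\phi\land\psi$ and $S_\xi(x,y)$. Consistency of $Pre^{\mathcal{A}^c}(x,\phi)=\phi$ with $Pre^{\mathcal{B}^c}(y,\psi)=\psi$ is witnessed directly by such $\xi$. For the Zig clause, a successor $(x',\phi')$ in $\mathcal{A}^c$ with $\phi\land\lozenge_a\phi'\not\equiv\bot$ is first refined to a combined $\Theta$-atom $\xi'$ implying $\phi'$ with $\xi\land\lozenge_a\xi'\not\equiv\bot$; the parametrized Zig then produces $y'\in\to_a^\mathcal{B}(y,\cdot)$ with $S_{\xi'}(x',y')$, which one packages as $(y',\psi')$ where $\psi'$ is the unique $\mathcal{B}^c$-atom refined by $\xi'$. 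Zag and the initial-event conditions are handled symmetrically.

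The main technical obstacle lies in this last step, namely the mismatch between atom granularities: the parametrized emulation indexes by atoms of the combined closure $C(Pre^\mathcal{A}(E^\mathcal{A})\cup Pre^\mathcal{B}(E^\mathcal{B}))$, while $\mathcal{A}^c$ and $\mathcal{B}^c$ are built from atoms of $C(Pre^\mathcal{A}(E^\mathcal{A}))$ and $C(Pre^\mathcal{B}(E^\mathcal{B}))$ separately. The refinement relation---each combined atom implies exactly one atom on each side---provides the bridge, but verifying Zig/Zag cleanly requires repeatedly switching between the coarser and finer decompositions of the precondition closure, and this bookkeeping is where I expect the proof to be most delicate.
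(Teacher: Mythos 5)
There is nothing in the paper to compare against here: Proposition \ref{ProEquivImpCanEmu} is listed under ``We give the following classical results without proofs'' and is imported verbatim from Theorems 2, 3 and 6 of \cite{SietsmaEijckJOPL2013}, so you are reconstructing an external argument rather than matching an internal one. Judged on its own terms, your cyclic strategy $(4)\Rightarrow(1)\Rightarrow(2)\Rightarrow(3)\Rightarrow(4)$ is sound and is essentially the route taken in the cited source: $(4)\Rightarrow(1)$ is instantiation, $(3)\Rightarrow(4)$ follows from Proposition \ref{ProBisImpProImpActImpEqu} together with $\mathcal{A}\equiv\mathcal{A}^c$ and transitivity, and your observation that the marker propositions $p_\Xi$ force $R$-related worlds of $\mathcal{M}^c\otimes\mathcal{A}$ and $\mathcal{M}^c\otimes\mathcal{B}$ to share their first coordinate is exactly the right lever for $(1)\Rightarrow(2)$. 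Two ingredients that your sketch uses silently deserve to be named, since they carry most of the weight. First, a truth lemma for $\mathcal{M}^c$ (that $\mathcal{M}^c\vDash_\Xi\phi$ iff $\bigwedge\Xi\vDash\phi$ for $\phi$ in the closure) is needed both to identify the worlds of $\mathcal{M}^c\otimes\mathcal{A}$ with the pairs $(\xi,x)$ satisfying $\xi\vDash Pre^\mathcal{A}(x)$ and to translate $\to_a^{\mathcal{M}^c}$ into the side condition $\xi\land\lozenge_a\xi'\not\equiv\bot$. Second, in $(2)\Rightarrow(3)$ the Zig/Zag clauses of $\leftrightarrows$ are semantic entailments of the form $\phi\land\psi\vDash\square_a(\phi'\to\bigvee\cdots)$, not successor-matching conditions; establishing them requires that $\phi'$ is equivalent to the disjunction of the combined atoms entailing it and that each atom decides every closure formula (regularity), which is where Proposition \ref{ProBijGKC} and the atom properties later isolated in Lemma \ref{LemGKCsatRBDK} enter. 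With those two points made explicit your outline closes; as written it is a correct skeleton with the hardest verifications deferred, which is acceptable for a result the paper itself declines to prove.
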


\begin{proposition}[generated submodel inducing bisimulation, resembling Proposition 2.19 in \cite{BlackburnCUP2001}]\label{ProGenSubIndBis}
Let $\mathcal{A}$ be an action model. Then $\mathcal{A}\underline{\leftrightarrow}\mathbb{G}\qty(\mathcal{A},E_0^\mathcal{A})$.
\end{proposition}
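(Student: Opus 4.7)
The plan is to exhibit an explicit bisimulation between $\mathcal{A}$ and $\mathcal{A}_0 := \mathbb{G}(\mathcal{A}, E_0^\mathcal{A})$, namely the identity relation on the event set of the generated submodel: $S := \{(x,x) \mid x \in E^{\mathcal{A}_0}\}$. I will then check the four conditions of Definition \ref{DefBiSimAct} plus the Zig0/Zag0 conditions one by one. The match is clean because the closure condition in Definition \ref{DefGenSubMod}(2) uses exactly the same consistency guard $Pre^\mathcal{A}(x)\land\lozenge_a Pre^\mathcal{A}(y)\not\equiv\bot$ that appears in the Zig/Zag clauses of the action bisimulation.

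First, Invariance is immediate since $Pre^{\mathcal{A}_0}$ is defined to be the restriction of $Pre^\mathcal{A}$ to $E^{\mathcal{A}_0}$, so $Pre^\mathcal{A}(x) = Pre^{\mathcal{A}_0}(x) \equiv Pre^{\mathcal{A}_0}(x)$ for every $x \in E^{\mathcal{A}_0}$. For Zig, assume $S(x,x)$ (so $x \in E^{\mathcal{A}_0}$) and pick $a \in A$ and $x' \in \to_a^\mathcal{A}(x,\cdot)$ with $Pre^\mathcal{A}(x)\land\lozenge_a Pre^\mathcal{A}(x')\not\equiv\bot$. The closure condition in Definition \ref{DefGenSubMod}(2) forces $x' \in E^{\mathcal{A}_0}$, and since $\to_a^{\mathcal{A}_0}$ is the restriction of $\to_a^\mathcal{A}$ to $E^{\mathcal{A}_0}$, we get $x \to_a^{\mathcal{A}_0} x'$; taking $y' := x'$ yields $S(x',y')$ as required.

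For Zag, assume $S(x,x)$ and pick $y' \in \to_a^{\mathcal{A}_0}(x,\cdot)$. Because $\to^{\mathcal{A}_0}$ is the restriction of $\to^\mathcal{A}$, we have $x \to_a^\mathcal{A} y'$ and $y' \in E^{\mathcal{A}_0}$, so the witness $x' := y'$ lies in $\to_a^\mathcal{A}(x,\cdot)$ and satisfies $S(x',y')$. For the initial conditions, note that the minimality clause in Definition \ref{DefGenSubMod} forces $E_0^\mathcal{A} \subset E^{\mathcal{A}_0}$, and the restriction definition then gives $E_0^{\mathcal{A}_0} = E_0^\mathcal{A} \cap E^{\mathcal{A}_0} = E_0^\mathcal{A}$. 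Hence Zig0 and Zag0 both hold for the identity relation: for each $x \in E_0^\mathcal{A}$, $x$ itself is the required witness in $E_0^{\mathcal{A}_0}$, and vice versa.

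There is no real obstacle here. The only thing one must be careful about is not to mistake the closure condition in Definition \ref{DefGenSubMod}(2) for an unguarded closure: because the consistency guard is built into the closure condition, the Zig clause of the action bisimulation (which is also guarded) is discharged without any extra work. If the definition of generated submodel used unguarded closure one would obtain a strictly larger submodel, but the proposition would still hold by the same argument since the Zig of action bisimulation only demands a matching transition when the consistency guard is satisfied.
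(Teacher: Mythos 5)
Your identity-relation argument is correct and is exactly the standard proof of this classical fact; the paper itself lists Proposition \ref{ProGenSubIndBis} among the ``classical results without proofs,'' so there is no in-paper argument to diverge from. Your checks of Invariance, Zig, Zag, Zig0, and Zag0 all go through, with the guarded closure condition of Definition \ref{DefGenSubMod} discharging the guarded Zig clause and the requirement $E_0^\mathcal{A}\subset E^{\mathcal{A}_0}$ in the definition of $\mathbb{G}(\mathcal{A},E_0^\mathcal{A})$ giving $E_0^{\mathcal{A}_0}=E_0^\mathcal{A}$ as you note.
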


\begin{proposition}\label{ProBijGKC}
Let $\Phi,\Psi\subset \mathcal{L}$. Define $\iota:K\circ C(\Phi)\times K\circ C(\Psi)\mapsto\mathcal{L}$ as $\iota(\Phi',\Psi')\coloneqq \Phi'\cup\Psi'$ for $(\Phi',\Psi')\in K\circ C(\Phi)\times K\circ C(\Psi)$. Then $\iota$ is a bijection from $\set{(\Phi',\Psi')\in K\circ C(\Phi)\times K\circ C(\Psi) \mid \bigwedge\Phi'\land\bigwedge\Psi'\not\equiv\bot}$ to $K\circ C(\Phi\cup\Psi)$. 
\end{proposition}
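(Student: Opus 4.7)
The plan is to verify that $\iota$ is well-defined into $K\circ C(\Phi\cup\Psi)$, and then to establish injectivity and surjectivity separately. The key algebraic fact I will lean on is that closure under subformulas and single negation acts formula-by-formula, so $C(\Phi\cup\Psi)=C(\Phi)\cup C(\Psi)$. A standing fact I will reuse throughout is that every atom $\Phi'\in K\circ C(\Phi)$, being a maximal consistent subset of a set closed under single negation, contains exactly one of $\phi$ and its single negation for each $\phi\in C(\Phi)$ (otherwise one could add a formula consistently, breaking maximality). For well-definedness, given $(\Phi',\Psi')$ in the domain, $\Phi'\cup\Psi'\subset C(\Phi)\cup C(\Psi)=C(\Phi\cup\Psi)$; it is consistent by the hypothesis $\bigwedge\Phi'\land\bigwedge\Psi'\not\equiv\bot$; and maximality inside $C(\Phi\cup\Psi)$ follows by splitting any candidate $\phi$ according to whether it lies in $C(\Phi)$ or $C(\Psi)$ and invoking the standing fact on the appropriate atom.

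For injectivity, suppose $\Phi'_1\cup\Psi'_1=\Phi'_2\cup\Psi'_2$. I claim $(\Phi'_i\cup\Psi'_i)\cap C(\Phi)=\Phi'_i$: if some $\phi\in C(\Phi)\cap\Psi'_i$ were absent from $\Phi'_i$, then by the standing fact the single negation of $\phi$ would lie in $\Phi'_i$, making $\Phi'_i\cup\Psi'_i$ inconsistent and contradicting the domain hypothesis. The symmetric slice against $C(\Psi)$ then recovers $\Psi'_i$, so $\Phi'_1=\Phi'_2$ and $\Psi'_1=\Psi'_2$.

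For surjectivity, given $\Omega\in K\circ C(\Phi\cup\Psi)$, I define $\Phi':=\Omega\cap C(\Phi)$ and $\Psi':=\Omega\cap C(\Psi)$. Consistency of each is inherited from $\Omega$; maximality holds because, for every $\phi\in C(\Phi)$, the standing fact applied to $\Omega$ inside $C(\Phi\cup\Psi)$ places $\phi$ or its single negation in $\Omega$, and in either case the witness lies in $C(\Phi)$, hence in $\Phi'$. Since $\Omega\subset C(\Phi)\cup C(\Psi)$, one immediately has $\Omega=\Phi'\cup\Psi'$, and $\bigwedge\Phi'\land\bigwedge\Psi'\not\equiv\bot$ because $\Omega$ itself is consistent. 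The main obstacle throughout is the overlap $C(\Phi)\cap C(\Psi)$, where $\Phi'$ and $\Psi'$ must agree on every common formula; the single-negation dichotomy shows that this compatibility is forced as soon as $\Phi'\cup\Psi'$ is consistent, so the bookkeeping on the intersection stays clean.
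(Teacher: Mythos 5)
Your proof is correct. Note that the paper itself states Proposition \ref{ProBijGKC} in the block of ``classical results without proofs,'' so there is no in-paper argument to compare against; your write-up supplies the missing justification. The two load-bearing facts you isolate are both sound: the closure operator acts formula-by-formula (subformulas and single negations are unary), giving $C(\Phi\cup\Psi)=C(\Phi)\cup C(\Psi)$, and the single negation of $\phi$ is logically equivalent to $\lnot\phi$, so a maximal consistent subset of a single-negation-closed set contains exactly one of each pair $\{\phi,\sim\phi\}$ --- which is all that well-definedness, injectivity, and surjectivity need, even though single negation is not literally an involution on syntax (e.g.\ for $\lnot\lnot\chi$). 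The handling of the overlap $C(\Phi)\cap C(\Psi)$ via the consistency hypothesis is exactly the right bookkeeping.
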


\section{Generalized action emulation}
\label{SecGAE}

\begin{definition}[generalized action emulation]\label{DefGenActEmu}
Let $\mathcal{A}$ and $\mathcal{B}$ be action models. A map $\eta:E^\mathcal{A}\times E^\mathcal{B}\mapsto \mathcal{L}$ is a generalized action emulation if $\forall (x,y)\in E^\mathcal{A}\times E^\mathcal{B}$, $\eta(x,y)$ satisfies the following:
\begin{description}
\item[Zig] $\forall a\in A$, $\forall x'\in\to_a^\mathcal{A}(x,\cdot)$,
\begin{align}
\eta(x,y)\vDash \square_a \qty(Pre^\mathcal{A}(x')\to\bigvee_{y'\in\to_a^\mathcal{B}(y,\cdot)}\qty(Pre^\mathcal{B}(y')\land\eta(x',y'))).
\end{align}

\item[Zag] $\forall a\in A$, $\forall y'\in\to_a^\mathcal{B}(y,\cdot)$,
\begin{align}
\eta(x,y)\vDash \square_a \qty(Pre^\mathcal{B}(y')\to\bigvee_{x'\in\to_a^\mathcal{A}(x,\cdot)}\qty(Pre^\mathcal{B}(x')\land\eta(x',y'))).
\end{align}
\end{description}
We denote $\mathcal{A}\leftrightarrows_G\mathcal{B}$ iff there exists a generalized action emulation $\eta$ between $\mathcal{A}$ and $\mathcal{B}$ such that
\begin{description}
\item[Zig0] $\forall x\in E_0^\mathcal{A}$, $Pre^\mathcal{A}(x)\vDash\bigvee_{y\in E_0^\mathcal{B}}\qty(Pre^\mathcal{B}(y)\land\eta(x,y))$,

\item[Zag0] $\forall y\in E_0^\mathcal{B}$, $Pre^\mathcal{B}(y)\vDash\bigvee_{x\in E_0^\mathcal{A}}\qty(Pre^\mathcal{A}(x)\land\eta(x,y))$.
\end{description}
\end{definition}

\begin{remark}\label{RemResToGen}
Let $\mathcal{A}$ and $\mathcal{B}$ be action models, $\mathcal{M}^c$ be the canonical Kripke model over $Pre^\mathcal{A}\qty(E^\mathcal{A})\cup Pre^\mathcal{B}\qty(E^\mathcal{B})$, and $\mathcal{A}^c$ and $\mathcal{B}^c$ be the canonical versions of $\mathcal{A}$ and $\mathcal{B}$, respectively.
\begin{enumerate}
\item $\mathcal{A}\underline{\leftrightarrow}\mathcal{B}$ iff $\exists \sigma(x,y)\subset\Theta(x,y)$ for $(x,y)\in E^\mathcal{A}\times E^\mathcal{B}$ such that $\bigvee\sigma(\cdot,\cdot)$ satisfies $\mathcal{A}\leftrightarrows_G\mathcal{B}$, where
\begin{align*}
\Theta(x,y)\coloneqq\left\{\begin{array}{ll}
\emptyset, & Pre^\mathcal{A}(x)\not\equiv Pre^\mathcal{B}(y),\\
\set{\top}, & Pre^\mathcal{A}(x)\equiv Pre^\mathcal{B}(y).
\end{array}\right.
\end{align*}
\item $\mathcal{A}\leftrightarrows_P\mathcal{B}$ iff $\exists \eta(\cdot,\cdot)\in\set{\bot,\top}$ satisfying $\mathcal{A}\leftrightarrows_G\mathcal{B}$.
\item $\mathcal{A}\leftrightarrows\mathcal{B}$ iff $\exists \eta(x,y)\in\set{\bot,Pre^\mathcal{A}(x)\land Pre^\mathcal{B}(y)}$ for $(x,y)\in E^\mathcal{A}\times E^\mathcal{B}$ satisfying $\mathcal{A}\leftrightarrows_G\mathcal{B}$.
\item $\mathcal{M}^c\otimes\mathcal{A}\underline{\leftrightarrow}\mathcal{M}^c\otimes\mathcal{B}$ iff $\mathcal{A}\leftrightarrows_S\mathcal{B}$ iff $\mathcal{A}^c\leftrightarrows\mathcal{B}^c$ iff $\exists \sigma(\cdot,\cdot)\subset \Gamma\circ K\circ C\qty(Pre^\mathcal{A}\qty(E^\mathcal{A})\cup Pre^\mathcal{B}\qty(E^\mathcal{B}))$ such that $\bigvee\sigma(\cdot,\cdot)$ satisfies $\mathcal{A}\leftrightarrows_G\mathcal{B}$.
\end{enumerate}
\end{remark}

\begin{theorem}\label{TheEmuEquiv}
Let $\mathcal{A}$ and $\mathcal{B}$ be action models. Then $\mathcal{A}\leftrightarrows_G\mathcal{B}$ iff $\mathcal{A}\equiv\mathcal{B}$.
\end{theorem}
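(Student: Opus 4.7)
The plan is to prove the two implications separately, using Proposition \ref{ProEquivImpCanEmu} to bridge the harder direction via the parametrized action emulation $\leftrightarrows_S$. Throughout, I will write $\Phi\coloneqq Pre^\mathcal{A}(E^\mathcal{A})\cup Pre^\mathcal{B}(E^\mathcal{B})$ and $\Theta\coloneqq\Gamma\circ K\circ C(\Phi)$. A standing auxiliary fact I will use freely is that every world $w$ of every Kripke model $\mathcal{N}$ realizes a unique atom $\xi_w\in\Theta$: the collection of formulas in $C(\Phi)$ true at $w$ is a maximal consistent subset of $C(\Phi)$, by closure of $C(\Phi)$ under subformulas and single negations.

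For the direction $\mathcal{A}\leftrightarrows_G\mathcal{B}\Rightarrow\mathcal{A}\equiv\mathcal{B}$, fix an arbitrary Kripke model $\mathcal{M}$ and the witness $\eta$, and define the candidate relation
$$R\coloneqq\{((w,x),(w,y))\in W^{\mathcal{M}\otimes\mathcal{A}}\times W^{\mathcal{M}\otimes\mathcal{B}}\mid \mathcal{M}\vDash_w\eta(x,y)\}.$$
Invariance is automatic since both sides carry the valuation $Val^\mathcal{M}(w)$. For Zig, a transition $(w,x)\to_a^{\mathcal{M}\otimes\mathcal{A}}(w',x')$ gives $w\to_a^\mathcal{M}w'$, $x\to_a^\mathcal{A}x'$, and $\mathcal{M}\vDash_{w'}Pre^\mathcal{A}(x')$; plugging these into the Zig clause of $\eta$ (instantiated at $w'$) produces some $y'\in\to_a^\mathcal{B}(y,\cdot)$ with $\mathcal{M}\vDash_{w'}Pre^\mathcal{B}(y')$ and $\mathcal{M}\vDash_{w'}\eta(x',y')$, hence a matching $\mathcal{M}\otimes\mathcal{B}$-transition related by $R$. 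Zag, Zig0, Zag0 are handled symmetrically using the corresponding clauses of $\mathcal{A}\leftrightarrows_G\mathcal{B}$.

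For $\mathcal{A}\equiv\mathcal{B}\Rightarrow\mathcal{A}\leftrightarrows_G\mathcal{B}$, invoke Proposition \ref{ProEquivImpCanEmu} to obtain a parametrized action emulation $\{S_\xi\mid\xi\in\Theta\}$ and set
$$\eta(x,y)\coloneqq\bigvee\{\xi\in\Theta\mid S_\xi(x,y)\},$$
with an empty disjunction read as $\bot$. Because every world realizes a unique atom, $\mathcal{N}\vDash_w\eta(x,y)$ holds in any Kripke model $\mathcal{N}$ iff $S_{\xi_w}(x,y)$. To verify Zig, suppose $\mathcal{N}\vDash_w\eta(x,y)$, so $S_{\xi_w}(x,y)$; for any $w\to_a^\mathcal{N}w'$ with $\mathcal{N}\vDash_{w'}Pre^\mathcal{A}(x')$, the atom $\xi'\coloneqq\xi_{w'}$ lies in $\mathcal{G}(Pre^\mathcal{A}(x'),\Theta)$ and satisfies $\xi_w\land\lozenge_a\xi'\not\equiv\bot$ since both are jointly realized in $\mathcal{N}$. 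The parametrized Zig then supplies $y'\in\to_a^\mathcal{B}(y,\cdot)$ with $S_{\xi'}(x',y')$; Invariance forces $\xi'\vDash Pre^\mathcal{B}(y')$, and $\xi'$ being a disjunct of $\eta(x',y')$ yields $\mathcal{N}\vDash_{w'}Pre^\mathcal{B}(y')\land\eta(x',y')$. Zag, Zig0 and Zag0 are analogous, using the corresponding clauses of the parametrized emulation together with the atom-realization fact.

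The main obstacle is the $\Rightarrow$ direction, where the bridge between a semantic entailment in an arbitrary Kripke model and the syntactic indexing of $\{S_\xi\}$ by atoms of $\Theta$ rests entirely on the atom-realization observation; I expect it to require a brief standalone justification (appealing to the closure properties of $C(\Phi)$ and to Proposition \ref{ProBijGKC}-style reasoning about atoms). Once that bridge is in place, the translation of the parametrized Zig/Zag into the generalized Zig/Zag is mechanical, and the $\Leftarrow$ direction is a routine bisimulation construction lifting $\eta$ worldwise into $\mathcal{M}\otimes\mathcal{A}$ and $\mathcal{M}\otimes\mathcal{B}$.
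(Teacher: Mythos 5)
Your proof is correct, and its overall architecture coincides with the paper's: the direction $\mathcal{A}\leftrightarrows_G\mathcal{B}\Rightarrow\mathcal{A}\equiv\mathcal{B}$ is handled exactly as in the paper, by lifting $\eta$ worldwise to the relation $R=\set{((w,x),(w,y)) \mid \mathcal{M}\vDash_w\eta(x,y)}$ and checking the bisimulation clauses, and the converse is reduced to Proposition \ref{ProEquivImpCanEmu}. The one genuine difference is in which leg of that proposition you use and how much you prove: the paper passes from $\mathcal{A}\equiv\mathcal{B}$ to $\mathcal{A}^c\leftrightarrows\mathcal{B}^c$ and then simply cites item 4 of Remark \ref{RemResToGen} (stated without proof) to conclude $\mathcal{A}\leftrightarrows_G\mathcal{B}$, whereas you pass to the parametrized action emulation $\leftrightarrows_S$ and explicitly construct $\eta(x,y)=\bigvee\set{\xi\in\Theta \mid S_\xi(x,y)}$, verifying the generalized Zig/Zag via the atom-realization fact that every world of every Kripke model satisfies exactly one element of $\Gamma\circ K\circ C(\Phi)$. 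Your route therefore makes the theorem self-contained modulo Proposition \ref{ProEquivImpCanEmu}, supplying precisely the verification that the paper delegates to the unproved remark; the cost is only the short standalone justification of the atom-realization observation, which is standard and which you correctly flag. Your resulting witness is also exactly of the restricted form $\bigvee\sigma(\cdot,\cdot)$ with $\sigma(\cdot,\cdot)\subset\Gamma\circ K\circ C(\Phi)$, so your argument in fact proves one implication of Remark \ref{RemResToGen}(4) as a byproduct.
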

\begin{proof}
\begin{description}
\item[Necessity] Let $\mathcal{M}$ be a Kripke model. Let $R\coloneqq\set{((w,x),(w,y))\in W^{\mathcal{M}\otimes\mathcal{A}}\times W^{\mathcal{M}\otimes\mathcal{B}} \mid \mathcal{M}\vDash_w\eta(x,y)}$. We show that $R$ satisfies $\mathcal{M}\otimes\mathcal{A}\underline{\leftrightarrow}\mathcal{M}\otimes\mathcal{A}$ as follows.
\begin{description}
\item[Invariance] $\forall \qty((w,x),(w,y))\in R$, $Val^{\mathcal{M}\otimes\mathcal{A}}(w,x)=Val^\mathcal{M}(w)=Val^{\mathcal{M}\otimes\mathcal{B}}(w,y)$.

\item[Zig] $\forall ((w,x),(w,y))\in R$, $\mathcal{M}\vDash_w \eta(x,y)$. Thus, by the Zig of $\mathcal{A}\leftrightarrows_G\mathcal{B}$, $\forall (w',x')\in\to_a^{\mathcal{M}\otimes\mathcal{A}}\qty((w,x),\cdot)$ (note that this implies $\mathcal{M}\vDash_{w'} Pre^\mathcal{A}(x')$), we have $\mathcal{M}\vDash_{w'} \bigvee_{y'\in\to_a^\mathcal{B}(y,\cdot)}Pre^\mathcal{B}(y')\land\eta(x',y')$. Therefore, $\exists y'\in\to_a^\mathcal{B}(y,\cdot)$, $\mathcal{M}\vDash_{w'}Pre^\mathcal{B}(y')\land\eta(x',y')$, thereby $R((w',x'),(w',y'))$.

\item[Zag] Symmetric to the Zig.

\item[Zig0] $\forall (w,x)\in W_0^{\mathcal{M}\otimes\mathcal{A}}$, because $Pre^\mathcal{A}(x)\vDash \bigvee_{y\in E_0^\mathcal{B}}Pre^\mathcal{B}(y)\land \eta(x,y)$ and $\mathcal{M}\vDash_w Pre^\mathcal{A}(x)$, there exists $y\in E_0^\mathcal{B}$ such that $\mathcal{M}\vDash_w Pre^\mathcal{B}(y)\land\eta(x,y)$. Thus, $R((w,x),(w,y))$.

\item[Zag0] Symmetric to the Zig0.
\end{description}

\item[Sufficiency] Let $\mathcal{A}^c$ and $\mathcal{B}^c$ be the canonical versions of $\mathcal{A}$ and $\mathcal{B}$. By Proposition \ref{ProEquivImpCanEmu}, $\mathcal{A}^c\leftrightarrows\mathcal{B}^c$. By Remark \ref{RemResToGen}, $\mathcal{A}\leftrightarrows_G\mathcal{B}$.
\end{description}
\end{proof}

\begin{remark}
Let $\mathcal{A}$ and $\mathcal{B}$ be action models. By Proposition \ref{ProBisImpProImpActImpEqu}, $\mathcal{A}\leftrightarrows \mathcal{B}$ implies $\mathcal{A}\equiv\mathcal{B}$. The reverse that $\mathcal{A}\equiv\mathcal{B}$ implies $\mathcal{A}\leftrightarrows \mathcal{B}$ is left as an open problem \cite{SietsmaEijckJOPL2013}. We disprove it by the following counterexample with $a\in A$ and $p_1,p_2\in \mathcal{P}$.
\begin{enumerate}
\item $E^\mathcal{A}\coloneqq\set{x_1,x_2,x_3,x_4}$ and $E_0^\mathcal{A}\coloneqq\set{x_1,x_3}$. $E^\mathcal{B}\coloneqq\set{y_1,y_2,y_3,y_4}$ and $E_0^\mathcal{B}\coloneqq\set{y_1,y_3}$.
\item $\to_a^\mathcal{A}\coloneqq\set{(x_1,x_2),(x_3,x_4)}$. $\to_a^\mathcal{B}\coloneqq\set{(y_1,y_2),(y_3,y_4)}$.
\item $Pre^\mathcal{A}(x_1)=Pre^\mathcal{A}(x_3)=Pre^\mathcal{B}(y_1)=Pre^\mathcal{B}(y_3)\coloneqq\square_a p_1\lor\square_a p_2$, $Pre^\mathcal{A}(x_2)\coloneqq\top$, $Pre^\mathcal{A}(x_4)\coloneqq p_1\land p_2$, $Pre^\mathcal{B}(y_2)\coloneqq p_1$, $Pre^\mathcal{B}(y_4)\coloneqq p_2$.
\end{enumerate}
Assume for the sake of contradiction that $\mathcal{A}\leftrightarrows\mathcal{B}$. Then $\exists S\subset E^\mathcal{A}\times E^\mathcal{B}$ satisfying $\mathcal{A}\leftrightarrows\mathcal{B}$. Obviously, $(x_1,y_1),(x_1,y_3)\notin S$ because the Zig of $\mathcal{A}\leftrightarrows\mathcal{B}$ is invalid for both. Since
\begin{align*}
Pre^\mathcal{A}(x_1)=\square_a p_1\lor\square_a p_2\nvDash p_1\lor p_2\equiv Pre^\mathcal{B}(y_2)\lor Pre^\mathcal{B}(y_4),
\end{align*}
the Zig0 of $\mathcal{A}\leftrightarrows\mathcal{B}$ is invalid for $x_1$, conflicts.

Define $\eta:E^\mathcal{A}\times E^\mathcal{B}\mapsto\mathcal{L}$ as follows.
\begin{enumerate}
\item Let $\eta(x_1,y_1)=\eta(x_3,y_4)\coloneqq\square_a p_1$ and $\eta(x_1,y_3)=\eta(x_3,y_1)\coloneqq\square_a p_2$.
\item Let $\eta(x_2,y_2)=\eta(x_2,y_4)=\eta(x_4,y_2)=\eta(x_4,y_4)\coloneqq\top$.
\item Let $\eta(x_1,y_2)=\eta(x_1,y_4)=\eta(x_3,y_2)=\eta(x_3,y_4)=\eta(x_2,y_1)=\eta(x_2,y_3)=\eta(x_4,y_1)=\eta(x_4,y_3)\coloneqq\bot$. 
\end{enumerate}
After a long but standard validation, one may prove that $\eta$ satisfies $\mathcal{A}\leftrightarrows_G\mathcal{B}$, thereby $\mathcal{A}\equiv\mathcal{B}$ by Theorem \ref{TheEmuEquiv}.
\end{remark}

\begin{definition}
Let $\Phi,\Psi\subset\mathcal{L}$.
\begin{enumerate}
\item If $\forall\phi\in\Phi$, $\forall \psi\in \Psi$, either $\psi\vDash\phi$ or $\psi\vDash\lnot\phi$, then we say $\Phi$ is $\Psi$-regular.
\item If $\forall \phi\in\Phi$, $\exists\Psi'\subset\Psi$, $\phi\equiv\bigvee\Psi'$, then we say $\Phi$ is $\Psi$-basis.
\item If $\forall \phi\in\Phi$, $\forall\xi_l^a$ for $1\le l\le L$ and $a\in A$, $\phi\vDash\bigvee_{l=1}^L\bigwedge_{a\in A}\square_a\xi_l^a$ implies $\phi\vDash\bigvee_{l=1}^L\mathcal\bigvee{G}\qty(\bigwedge_{a\in A}\square_a\xi_l^a,\Psi)$, then we say $\Phi$ is $\Psi$-discrete.
\item If $\forall \phi\in\Phi$, $\forall a\in A$, $\forall \xi\in\mathcal{L}$, $\phi\vDash\square_a\xi$ implies $\phi\vDash\square_a\bigvee\mathcal{G}\qty(\xi,\Psi)$, then we say $\Phi$ is $\Psi$-known.
\end{enumerate}
\end{definition}

\begin{definition}
Define $\mathcal{L}_0\subset\mathcal{L}$ such that $\forall \alpha\in\mathcal{L}$, $\alpha\in \mathcal{L}_0$ iff $\forall \phi\in\mathcal{L}$, $\forall a\in A$, if $\alpha\not\equiv\bot$, then $\alpha\vDash \square_a\phi$ implies $\phi\equiv\top$ (or equivalently, $\phi\not\equiv\bot$ implies $\alpha\land\lozenge_a\phi\not\equiv\bot$).
\end{definition}

\begin{lemma}\label{LemGKCsatRBDK}
Let $\mathcal{A}$ and $\mathcal{B}$ be action models and $\Theta\coloneqq\Gamma\circ K\circ C\qty(Pre^\mathcal{A}\qty(E^\mathcal{A})\cup Pre^\mathcal{B}\qty(E^\mathcal{B}))$. Then
\begin{enumerate}
\item\label{ItGKCsatRBDKR} $Pre^\mathcal{A}\qty(E^\mathcal{A})$ and $Pre^\mathcal{B}\qty(E^\mathcal{B})$ are $\Theta$-regular.
\item\label{ItGKCsatRBDKB} $Pre^\mathcal{A}\qty(E^\mathcal{A})$ and $Pre^\mathcal{B}\qty(E^\mathcal{B})$ are $\Theta$-basis.
\item\label{ItGKCsatRBDKD} $\Theta$ is $\Theta$-discrete.
\item\label{ItGKCsatRBDKK} $\Theta$ is $\Theta$-known.
\end{enumerate}
\end{lemma}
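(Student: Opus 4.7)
Write $\Psi := Pre^\mathcal{A}(E^\mathcal{A}) \cup Pre^\mathcal{B}(E^\mathcal{B})$ so that $\Theta = \Gamma\circ K\circ C(\Psi)$, and for each $\Xi\in K\circ C(\Psi)$ set $\theta_\Xi := \bigwedge\Xi$. Two standing facts will be used throughout: distinct atoms are pairwise inconsistent (any two disagree on some $\chi\in C(\Psi)$), and $\bigvee\Theta\equiv\top$ (every world satisfies exactly one atom, namely the one collecting its true $C(\Psi)$-formulas). My plan is to dispatch items \ref{ItGKCsatRBDKR} and \ref{ItGKCsatRBDKB} directly from these two facts, and then to prove items \ref{ItGKCsatRBDKD} and \ref{ItGKCsatRBDKK} by a common successor-attaching construction.

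For item \ref{ItGKCsatRBDKR}, any $\phi\in Pre^\mathcal{A}(E^\mathcal{A})\cup Pre^\mathcal{B}(E^\mathcal{B})$ lies in $C(\Psi)$, so maximality of $\Xi$ forces either $\phi\in\Xi$ (giving $\theta_\Xi\vDash\phi$) or its single negation to lie in $\Xi$ (giving $\theta_\Xi\vDash\neg\phi$). Item \ref{ItGKCsatRBDKB} then follows by taking $\Psi'_\phi := \set{\theta_\Xi\in\Theta : \theta_\Xi\vDash\phi}$: every model of $\phi$ satisfies its unique characteristic atom, which must lie in $\Psi'_\phi$ by item \ref{ItGKCsatRBDKR}, and conversely every atom in $\Psi'_\phi$ entails $\phi$.

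For items \ref{ItGKCsatRBDKD} and \ref{ItGKCsatRBDKK} I would first observe a reduction: for any $\phi\in\Theta$ and any formula $\chi$, $\phi\vDash\bigvee\mathcal{G}(\chi,\Theta)$ is equivalent to $\phi\vDash\chi$, since the unique characteristic atom of every $\phi$-model is $\phi$ itself and so $\phi$ must belong to $\mathcal{G}(\chi,\Theta)$. Under this reduction, item \ref{ItGKCsatRBDKD} becomes the disjunction property that $\phi\vDash\bigvee_{l=1}^L\bigwedge_{a\in A}\square_a\xi_l^a$ forces $\phi\vDash\bigwedge_{a\in A}\square_a\xi_l^a$ for some $l$; and item \ref{ItGKCsatRBDKK}, after applying the same observation to the characteristic atom $\theta_v$ of a successor $v$, becomes the statement that every $a$-successor of every $\phi$-world satisfies $\theta_v\vDash\xi$. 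Both are proved by contraposition using the following construction. Write $\phi = \bigwedge\Xi_\phi$ and $B_a(\phi) := \set{\chi\in C(\Psi) : \square_a\chi\in\Xi_\phi}$. Given a base model $\mathcal{M}_0$ with $w_0\vDash\phi$, and any list of worlds $v_1,\ldots,v_k$ coming from other models, each $v_i$ satisfying every formula of $B_{b_i}(\phi)$ for a chosen $b_i\in A$, form the disjoint union and attach each $v_i$ as a fresh $b_i$-successor of $w_0$. An induction on $\chi\in C(\Psi)$ shows that the truth of $\chi$ at $w_0$ is unchanged: only boxes $\square_{b_i}\chi'$ at $w_0$ can be affected, positive ones (with $\square_{b_i}\chi'\in\Xi_\phi$) stay true because $v_i\vDash\chi'$ by hypothesis, while negative ones remain false thanks to the original $\mathcal{M}_0$-witness. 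Hence $w_0$ still satisfies $\phi$. For item \ref{ItGKCsatRBDKD}, if each $l$ failed with witness model $\mathcal{M}_l$, world $w_l\vDash\phi$ and $a_l$-successor $v_l\not\vDash\xi_l^{a_l}$, the $v_l$'s all satisfy the relevant $B_{a_l}(\phi)$, so attaching them to $w_0$ gives a model in which no disjunct $\bigwedge_{a\in A}\square_a\xi_l^a$ holds at $w_0$, contradicting the hypothesis. For item \ref{ItGKCsatRBDKK}, if some $a$-successor $v$ of some $\phi$-world had $\theta_v\not\vDash\xi$, pick a model realising $\theta_v\wedge\neg\xi$ at a world $v^*$; since $v^*$ shares every $C(\Psi)$-truth with $v$, it satisfies $B_a(\phi)$; attaching $v^*$ to $w_0$ yields $w_0\vDash\phi$ yet $w_0\not\vDash\square_a\xi$, contradicting the hypothesis.

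The main obstacle I foresee is the inductive preservation claim in the attaching construction, specifically the $\square_b$-subcase. One must verify that adding $v_i$ as a fresh $b_i$-successor of $w_0$ neither breaks any positive box in $\Xi_\phi$ (which relies on $v_i$ satisfying all of $B_{b_i}(\phi)$ and therefore on how $v_i$ was chosen) nor validates any previously refuted box (which is monotonicity of $\lozenge$ under edge-addition applied to the original $\mathcal{M}_0$-witness). I would package this as a standalone claim and invoke it uniformly in both items \ref{ItGKCsatRBDKD} and \ref{ItGKCsatRBDKK}; the remaining content is then bookkeeping around the atom partition.
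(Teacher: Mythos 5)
Your proof is correct, but it takes a genuinely different route from the paper's. The paper argues syntactically: it splits an atom $\Xi$ into its literal/diamond part $\Phi_1$ and its box part $\Phi_2$, shows $\bigwedge\Xi\equiv\bigwedge\Phi_1\land\bigwedge\Phi_2$ by induction over $\land$ and $\lor$, asserts $\bigwedge\Phi_1\in\mathcal{L}_0$, and from this extracts that any entailment of $\bigvee_l\bigwedge_a\square_a\xi_l^a$ must already be carried by $\bigwedge\Phi_2$ --- i.e.\ $\bigwedge\Psi_a\vDash\xi_{l'}^a$ for some $l'$, where $\Psi_a$ collects the box contents --- and then closes item \ref{ItGKCsatRBDKK} by noting $\bigwedge\Psi_a$ is $\Theta$-basis. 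You instead work semantically: you reduce both items to statements about the unique characteristic atom of a world (resp.\ of its successors) and prove them by an explicit disjoint-union-plus-fresh-edge construction with a preservation induction over $C(\Psi)$. The two arguments rest on the same underlying fact --- only the box conjuncts (and refuted diamonds) of an atom constrain what successors may be attached --- but your version makes explicit the model surgery that the paper leaves implicit inside the unproved assertions ``$\bigwedge\Phi_1\in\mathcal{L}_0$'' and the ensuing box-separation step; in that sense yours is more self-contained, at the cost of being longer. One small imprecision: as literally defined, $B_a(\phi)=\set{\chi : \square_a\chi\in\Xi_\phi}$ does not record the refuted diamonds $\lnot\lozenge_a\chi'\in\Xi_\phi$, and your preservation lemma needs the attached worlds to falsify those $\chi'$ as well (a false diamond at $w_0$ must not become true). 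This is harmless in both of your applications, because every world you attach shares its full $C(\Psi)$-type with a genuine $a$-successor of a $\phi$-world and therefore automatically refutes the right formulas, but the standalone claim should be stated with the stronger hypothesis.
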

\begin{proof}
Cases \ref{ItGKCsatRBDKR} and \ref{ItGKCsatRBDKB} are obvious. We now prove Cases \ref{ItGKCsatRBDKD} and \ref{ItGKCsatRBDKK}. $\forall \theta\in\Theta$, $\exists\Phi\in K\circ C\qty(Pre^\mathcal{A}\qty(E^\mathcal{A})\cup Pre^\mathcal{B}\qty(E^\mathcal{B}))$, such that $\theta=\bigwedge\Phi$.
\begin{enumerate}
\item Let $\Phi_1\coloneqq \qty(\mathcal{P}\cap\Phi)\cup\set{\lnot p\in\Phi \mid p\in \mathcal{P}}\cup\set{\lozenge_a\phi\in\Phi \mid a\in A,\phi\in\mathcal{L}}$.

\item Let $\Phi_2\coloneqq\set{\square_a\phi\in\Phi \mid a\in A,\phi\in\mathcal{L}}$.

\item Let $\Phi_3\coloneqq\set{\phi\land\psi\in\Phi \mid \phi,\psi\in\mathcal{L}}$.

\item Let $\Phi_4\coloneqq\set{\phi\lor\psi\in\Phi \mid \phi,\psi\in\mathcal{L}}$.
\end{enumerate}
Then $\Phi=\bigcup_{i=1}^4\Phi_i$.

Prove that $\bigwedge\Phi\equiv\bigwedge \Phi_1\land\bigwedge\Phi_2$. $\forall \phi\in\Phi$, if $\phi\in \Phi_1\cup\Phi_2$, then $\bigwedge \Phi_1\land\bigwedge\Phi_2\vDash\phi$. Otherwise, if $\phi=\phi_1\land\phi_2\in \Phi_3$, then $\phi_1,\phi_2\in\Phi$. By inducting the complexity over $\land$, both $\bigwedge \Phi_1\land\bigwedge\Phi_2\vDash\phi_1$ and $\bigwedge \Phi_1\land\bigwedge\Phi_2\vDash\phi_2$, thereby $\bigwedge \Phi_1\land\bigwedge\Phi_2\vDash\phi$. Otherwise, $\phi=\phi_1\lor\phi_2\in \Phi_4$, thereby at least one of $\phi_1$ and $\phi_2$ belongs to $\Phi$. Assume without loss of generality that $\phi_1\in\Phi$. Then by inducting the complexity over $\lor$, $\bigwedge \Phi_1\land\bigwedge\Phi_2\vDash\phi_1\vDash\phi$.

$\forall a\in A$, let $\Psi_a\coloneqq\set{\psi \mid \square_a\psi\in\Phi_2}$. If $\bigwedge \Phi_1\land\bigwedge\Phi_2\equiv\theta\vDash\bigvee_{l=1}^L\bigwedge_{a\in A}\square_a\xi_l^a$ for some $L>0$ and $\xi_l^a\in\mathcal{L}$, then since $\bigwedge\Phi_1\in\mathcal{L}_0$, we have $\theta\not\equiv\bot$ implies $\bigwedge\Phi_2\vDash \bigvee_{l=1}^L\bigwedge_{a\in A}\square_a\xi_l^a$, thereby $\exists 1\le l'\le L$, $\forall a\in A$, $\bigwedge\Psi_a\vDash \xi_{l'}^a$. Therefore, $\theta\vDash\bigwedge_{a\in A}\square_a\bigwedge\Psi_a\vDash\bigwedge_{a\in A}\square_a\xi_{l'}^a$. Thus, $\theta=\bigvee\mathcal{G}\qty(\square_a\xi_{l'}^a,\set{\theta})\vDash\bigvee_{l=1}^L\bigvee\mathcal{G}\qty(\square_a\xi_l^a,\Theta)$, which is Case \ref{ItGKCsatRBDKD}.

If $\bigwedge \Phi_1\land\bigwedge\Phi_2\vDash\square_a\xi$ for some $\xi\in\mathcal{L}$, then since $\bigwedge\Phi_1\in\mathcal{L}_0$, we have $\bigwedge\Psi_a\vDash \xi$. Because $\Psi_a\subset C\qty(Pre^\mathcal{A}\qty(E^\mathcal{A})\cup Pre^\mathcal{B}\qty(E^\mathcal{B}))$, we have that $\bigwedge\Psi_a$ is $\Theta$-basis, thereby $\bigwedge\Psi_a\vDash\bigvee\mathcal{G}\qty(\xi,\Theta)$, which implies $\bigwedge \Phi_1\land\bigwedge\Phi_2\vDash\square_a\bigvee\mathcal{G}\qty(\xi,\Theta)$. Since $\theta\equiv\bigwedge\Phi\equiv\bigwedge \Phi_1\land\bigwedge\Phi_2$, we have Case \ref{ItGKCsatRBDKK}.
\end{proof}

\begin{lemma}\label{LemSqXiaSGActEmu}
Let $\mathcal{A}$ and $\mathcal{B}$ be action models with $\mathcal{A}\leftrightarrows_G\mathcal{B}$. Then $\exists \xi_{x,y}^a$ for $(x,y)\in E^\mathcal{A}\times E^\mathcal{B}$ and $a\in A$ such that $\eta(x,y)\coloneqq\bigwedge_{a\in A}\square_a\xi_{x,y}^a$ satisfies $\mathcal{A}\leftrightarrows_G\mathcal{B}$.
\end{lemma}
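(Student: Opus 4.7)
The plan is to take any generalized action emulation $\eta_0$ witnessing $\mathcal{A}\leftrightarrows_G\mathcal{B}$ and rewrite it into the required conjunction-of-boxes form by simply packaging the right-hand sides of its Zig and Zag obligations into the new $\xi_{x,y}^a$. Concretely, for each $(x,y)\in E^\mathcal{A}\times E^\mathcal{B}$ and each $a\in A$ I would set
\begin{align*}
\xi_{x,y}^a \coloneqq{} & \bigwedge_{x'\in\to_a^\mathcal{A}(x,\cdot)}\qty(Pre^\mathcal{A}(x')\to\bigvee_{y'\in\to_a^\mathcal{B}(y,\cdot)}\qty(Pre^\mathcal{B}(y')\land\eta_0(x',y'))) \\
& {}\land \bigwedge_{y'\in\to_a^\mathcal{B}(y,\cdot)}\qty(Pre^\mathcal{B}(y')\to\bigvee_{x'\in\to_a^\mathcal{A}(x,\cdot)}\qty(Pre^\mathcal{A}(x')\land\eta_0(x',y'))),
\end{align*}
and then define the new $\eta(x,y)\coloneqq\bigwedge_{a\in A}\square_a\xi_{x,y}^a$. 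Crucially, the $\xi_{x,y}^a$ refer to the \emph{old} emulation $\eta_0$, so the definition of the new $\eta$ is not self-referential.

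The key intermediate observation I would prove first is the pointwise entailment $\eta_0(x,y)\vDash\eta(x,y)$. This should be immediate: for each individual $x'\in\to_a^\mathcal{A}(x,\cdot)$, the Zig clause of $\eta_0$ is exactly the statement that $\eta_0(x,y)$ forces the corresponding implication under $\square_a$, and symmetrically for Zag; since $\square_a$ distributes over finite conjunctions, the two big conjunctions making up $\xi_{x,y}^a$ give $\eta_0(x,y)\vDash\square_a\xi_{x,y}^a$ for every $a\in A$.

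With this in hand, I would verify the four clauses of Definition \ref{DefGenActEmu} for the new $\eta$. The Zig and Zag fall out of the construction: $\eta(x,y)\vDash\square_a\xi_{x,y}^a$, and $\xi_{x,y}^a$ visibly entails each $x'$-indexed (resp.\ $y'$-indexed) conjunct, so we directly obtain the implication with $\eta_0$ on the right-hand side. A single monotonicity step, using $\eta_0\vDash\eta$ and the fact that $\square_a$, $\land$, $\lor$ and $\to$ (in the positive position) all preserve $\vDash$, then weakens $\eta_0(x',y')$ to $\eta(x',y')$ and recovers the Zig/Zag required of the new $\eta$. The Zig0 and Zag0 clauses for $\eta$ are inherited from those of $\eta_0$ by the same monotonicity step applied to the outermost disjunction.

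I do not foresee any genuine obstacle: the lemma is really a reorganization of the obligations that Zig and Zag already impose on any witness. The only points that need care are keeping $\eta_0$ and the new $\eta$ notationally separated so as to avoid a circular definition, and checking that each monotonicity substitution is in the weakening direction, which is the case because $\eta(x',y')$ appears only positively in the formulas involved.
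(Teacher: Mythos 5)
Your proposal is correct and is essentially identical to the paper's own proof: the same definition of $\xi_{x,y}^a$ from the Zig/Zag right-hand sides of the given witness, the same key entailment $\eta_0(x,y)\vDash\eta(x,y)$ (the paper's Eq. \eqref{EqEtapImpEta}), and the same monotonicity step to upgrade $\eta_0(x',y')$ to $\eta(x',y')$ in the Zig, Zag, Zig0 and Zag0 clauses.
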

\begin{proof}
By Definition \ref{DefGenActEmu}, $\exists \eta':E^\mathcal{A}\times E^\mathcal{B}\mapsto\mathcal{L}$ such that $\mathcal{A}\leftrightarrows_G\mathcal{B}$. $\forall (x,y)\in E^\mathcal{A}\times E^\mathcal{B}$, $\forall a\in A$, let
\begin{align*}
\xi_{x,y}^a\coloneqq\bigwedge_{x'\in\to_a^\mathcal{A}(x,\cdot)}\qty(Pre^\mathcal{A}(x')\to\bigvee_{y'\in\to_a^\mathcal{B}(y,\cdot)}\qty(Pre^\mathcal{B}(y')\land\eta'(x',y')))\land\bigwedge_{y'\in\to_a^\mathcal{B}(y,\cdot)}\qty(Pre^\mathcal{B}(y')\to\bigvee_{x'\in\to_a^\mathcal{A}(x,\cdot)}\qty(Pre^\mathcal{A}(x')\land\eta'(x',y'))).
\end{align*}
Prove that $\eta(x,y)=\bigwedge_{a\in A}\square_a\xi_{x,y}^a$ for $(x,y)\in E^\mathcal{A}\times E^\mathcal{B}$ satisfies $\mathcal{A}\leftrightarrows_G\mathcal{B}$.
\begin{description}
\item[Zig] $\forall (x,y)\in E^\mathcal{A}\times E^\mathcal{B}$, by the Zig of $\mathcal{A}\leftrightarrows_G\mathcal{B}$ over $\eta'$,
\begin{align}\label{EqEtapImpEta}
\eta'(x,y)\vDash\bigwedge_{a\in A}\square_a\xi_{x,y}^a\equiv\eta(x,y).
\end{align}
Thus, $\forall a\in A$, $\forall (x,y)\in E^\mathcal{A}\times E^\mathcal{B}$,
\begin{align*}
&\xi_{x,y}^a=\bigwedge_{x'\in\to_a^\mathcal{A}(x,\cdot)}\qty(Pre^\mathcal{A}(x')\to\bigvee_{y'\in\to_a^\mathcal{B}(y,\cdot)}\qty(Pre^\mathcal{B}(y')\land\eta'(x',y')))\land\bigwedge_{y'\in\to_a^\mathcal{B}(y,\cdot)}\qty(Pre^\mathcal{B}(y')\to\bigvee_{x'\in\to_a^\mathcal{A}(x,\cdot)}\qty(Pre^\mathcal{A}(x')\land\eta'(x',y')))\\
&\vDash \bigwedge_{x'\in\to_a^\mathcal{A}(x,\cdot)}\qty(Pre^\mathcal{A}(x')\to\bigvee_{y'\in\to_a^\mathcal{B}(y,\cdot)}\qty(Pre^\mathcal{B}(y')\land\eta(x',y')))\land\bigwedge_{y'\in\to_a^\mathcal{B}(y,\cdot)}\qty(Pre^\mathcal{B}(y')\to\bigvee_{x'\in\to_a^\mathcal{A}(x,\cdot)}\qty(Pre^\mathcal{A}(x')\land\eta(x',y'))),
\end{align*}
which implies the Zig of $\mathcal{A}\leftrightarrows_G\mathcal{B}$ over $\eta$.

\item[Zag] Symmetric to the Zig.

\item[Zig0] This is obviously by the Zig0 of $\mathcal{A}\leftrightarrows_G\mathcal{B}$ over $\eta'$ and Eq. \eqref{EqEtapImpEta}.

\item[Zag0] Symmetric to the Zig0.
\end{description}
\end{proof}

\begin{lemma}\label{LemCoverStable}
Let $\mathcal{A}$ and $\mathcal{B}$ be action models, $F\subset\mathcal{L}$ such that $Pre^\mathcal{A}\qty(E^\mathcal{A})$ and $Pre^\mathcal{B}\qty(E^\mathcal{B})$ are $F$-basis and $F$-regular, $\Theta\subset\mathcal{L}$ such that $F$ is $\Theta$-discrete. Let $\xi_{x,y}^a\in\mathcal{L}$ for $(x,y)\in E^\mathcal{A}\times E^\mathcal{B}$ and $a\in A$ such that $\eta(x,y)\coloneqq\bigwedge_{a\in A}\square_a\xi_{x,y}^a$ satisfies the Zig0 and Zag0 of $\mathcal{A}\leftrightarrows_G\mathcal{B}$. Then $\eta'(x,y)\coloneqq\bigvee\mathcal{G}\qty(\eta(x,y),\Theta)$ for $(x,y)\in E^\mathcal{A}\times E^\mathcal{B}$ satisfies the Zig0 and Zag0 of $\mathcal{A}\leftrightarrows_G\mathcal{B}$ as well.
\end{lemma}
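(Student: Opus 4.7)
The plan is to establish Zig0 for $\eta'$ and note that Zag0 follows by an entirely symmetric argument. First I would fix $x\in E_0^\mathcal{A}$ and use the hypothesis that $Pre^\mathcal{A}(E^\mathcal{A})$ is $F$-basis to write $Pre^\mathcal{A}(x)\equiv\bigvee F'$ for some $F'\subset F$. It then suffices to prove, for each $\phi\in F'$, that $\phi\vDash\bigvee_{y\in E_0^\mathcal{B}}\qty(Pre^\mathcal{B}(y)\land\eta'(x,y))$; taking the disjunction over $\phi\in F'$ will deliver Zig0.

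Next, for a fixed $\phi\in F'$, since $\phi\vDash Pre^\mathcal{A}(x)$, the Zig0 hypothesis for $\eta$ yields $\phi\vDash\bigvee_{y\in E_0^\mathcal{B}}\qty(Pre^\mathcal{B}(y)\land\eta(x,y))$. I would then invoke the $F$-regular property of $Pre^\mathcal{B}(E^\mathcal{B})$ to split $E_0^\mathcal{B}$ into $Y_\phi\coloneqq\set{y\in E_0^\mathcal{B}\mid\phi\vDash Pre^\mathcal{B}(y)}$ and its complement, on which $\phi\vDash\lnot Pre^\mathcal{B}(y)$. On the complement each disjunct is inconsistent with $\phi$, so the entailment collapses to $\phi\vDash\bigvee_{y\in Y_\phi}\eta(x,y)=\bigvee_{y\in Y_\phi}\bigwedge_{a\in A}\square_a\xi_{x,y}^a$.

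The crucial step is then to apply $F$-discrete with the index set $Y_\phi$ playing the role of $\set{1,\dots,L}$ and the families $\qty(\xi_{x,y}^a)_{a\in A}$ playing the role of $\qty(\xi_l^a)_{a\in A}$. This promotes the above entailment to $\phi\vDash\bigvee_{y\in Y_\phi}\bigvee\mathcal{G}\qty(\bigwedge_{a\in A}\square_a\xi_{x,y}^a,\Theta)=\bigvee_{y\in Y_\phi}\eta'(x,y)$. Reconjoining the $Pre^\mathcal{B}(y)$ factors (using $\phi\vDash Pre^\mathcal{B}(y)$ for $y\in Y_\phi$) and enlarging the index set from $Y_\phi$ back to $E_0^\mathcal{B}$ gives $\phi\vDash\bigvee_{y\in E_0^\mathcal{B}}\qty(Pre^\mathcal{B}(y)\land\eta'(x,y))$, which is what was needed for this $\phi$.

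The main obstacle will be recognising that the precise normal form $\eta(x,y)=\bigwedge_{a\in A}\square_a\xi_{x,y}^a$ delivered by the hypothesis is exactly what allows $F$-discrete to be invoked; this is the reason Lemma \ref{LemSqXiaSGActEmu} was proved immediately before. A small edge case is $Y_\phi=\emptyset$, in which case $\phi$ entails an empty disjunction and must be inconsistent, so the conclusion holds vacuously; this corresponds to the $L=0$ instance of the $F$-discrete definition and requires no extra work.
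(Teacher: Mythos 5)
Your proof is correct and follows essentially the same route as the paper's: reduce Zig0 to individual elements of $F$ entailing $Pre^\mathcal{A}(x)$, use $F$-regularity of $Pre^\mathcal{B}(E^\mathcal{B})$ to discard inconsistent disjuncts, apply the $\Theta$-discreteness of $F$ to the boxed normal form $\bigwedge_{a\in A}\square_a\xi_{x,y}^a$, and reassemble via the $F$-basis property. The only cosmetic difference is that you invoke the basis decomposition $Pre^\mathcal{A}(x)\equiv\bigvee F'$ at the outset, while the paper quantifies over all $f\in F$ consistent with $Pre^\mathcal{A}(x)$ and appeals to the basis property at the end; the substance is identical.
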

\begin{proof}
We only prove the Zig0 since the Zag0 is symmetric. $\forall x\in E_0^\mathcal{A}$, $\forall f\in F$ such that $Pre^\mathcal{A}(x)\land f\not\equiv\bot$, since $Pre^\mathcal{A}(x)$ is $F$-regular, we have $f\vDash Pre^\mathcal{A}(x)$. By the Zig0 of $\mathcal{A}\leftrightarrows_G\mathcal{B}$ over $\eta$,
\begin{align*}
f\vDash \bigvee_{y\in E_0^\mathcal{B}}\qty(Pre^\mathcal{B}(y)\land\eta(x,y)).
\end{align*}
Because $Pre^\mathcal{B}\qty(E_0^\mathcal{B})$ are $F$ regular, we have
\begin{align*}
f\vDash \bigvee_{y\in E_0^\mathcal{B}:f\vDash Pre^\mathcal{B}(y)}\eta(x,y).
\end{align*}
Because $F$ is $\Theta$-discrete, we have
\begin{align*}
f\vDash \bigvee_{y\in E_0^\mathcal{B}:f\vDash Pre^\mathcal{B}(y)}\bigvee\mathcal{G}\qty(\eta(x,y),\Theta)\equiv \bigvee_{y\in E_0^\mathcal{B}:f\vDash Pre^\mathcal{B}(y)}\qty(Pre^\mathcal{B}(y)\land\eta'(x,y))\vDash \bigvee_{y\in E_0^\mathcal{B}}\qty(Pre^\mathcal{B}(y)\land\eta'(x,y)).
\end{align*}
Because $Pre^\mathcal{A}(x)$ is $F$-basis and $f\in F$ satisfies $Pre^\mathcal{A}(x)\land f\not\equiv\bot$, we have
\begin{align*}
Pre^\mathcal{A}(x)\vDash \bigvee_{y\in E_0^\mathcal{B}}\qty(Pre^\mathcal{B}(y)\land\eta'(x,y)).
\end{align*}
\end{proof}

\begin{lemma}\label{LemIterStable}
Let $\mathcal{A}$ and $\mathcal{B}$ be action models, $F\subset\mathcal{L}$ such that $Pre^\mathcal{A}\qty(E^\mathcal{A})$ and $Pre^\mathcal{B}\qty(E^\mathcal{B})$ are $F$-regular, $\Theta\subset\mathcal{L}$ such that $F$ is $\Theta$-discrete and $\Theta$ is $F$-known. Let $\xi_{x,y}^a\in\mathcal{L}$ for $(x,y)\in E^\mathcal{A}\times E^\mathcal{B}$ and $a\in A$ such that $\eta(x,y)\coloneqq\bigwedge_{a\in A}\square_a\xi_{x,y}^a$ satisfies the Zig and Zag of $\mathcal{A}\leftrightarrows_G\mathcal{B}$. Then $\eta'(x,y)\coloneqq\bigvee\mathcal{G}\qty(\eta(x,y),\Theta)$ for $(x,y)\in E^\mathcal{A}\times E^\mathcal{B}$ satisfies the Zig and Zag of $\mathcal{A}\leftrightarrows_G\mathcal{B}$ as well.
\end{lemma}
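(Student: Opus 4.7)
I would prove the Zig and obtain the Zag by a symmetric argument. Fix $(x,y)\in E^{\mathcal{A}}\times E^{\mathcal{B}}$, $a\in A$, and $x'\in\to_a^{\mathcal{A}}(x,\cdot)$, and abbreviate
\[
\chi \;:=\; Pre^{\mathcal{A}}(x')\to\bigvee_{y'\in\to_a^{\mathcal{B}}(y,\cdot)}\bigl(Pre^{\mathcal{B}}(y')\land\eta(x',y')\bigr),
\]
with $\chi'$ denoting the same expression in which $\eta$ is replaced by $\eta'$. Since $\eta'(x,y)=\bigvee\mathcal{G}(\eta(x,y),\Theta)$, it suffices to show $\theta\vDash\square_a\chi'$ for every $\theta\in\mathcal{G}(\eta(x,y),\Theta)$, and then take the disjunction over such $\theta$.

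For such a $\theta$, the Zig hypothesis on $\eta$, together with $\theta\vDash\eta(x,y)$, gives $\theta\vDash\square_a\chi$, and $F$-knownness of $\Theta$ upgrades this to $\theta\vDash\square_a\bigvee\mathcal{G}(\chi,F)$. It is therefore enough to prove the point-wise entailment $f\vDash\chi\Rightarrow f\vDash\chi'$ for every $f\in F$, since this entails $\bigvee\mathcal{G}(\chi,F)\vDash\chi'$ and hence $\theta\vDash\square_a\chi'$.

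To establish this entailment, fix $f\in F$ with $f\vDash\chi$. Using $F$-regularity of $Pre^{\mathcal{A}}(E^{\mathcal{A}})$, either $f\vDash\lnot Pre^{\mathcal{A}}(x')$, in which case $f\vDash\chi'$ holds vacuously, or $f\vDash Pre^{\mathcal{A}}(x')$ and so $f\vDash\bigvee_{y'\in\to_a^{\mathcal{B}}(y,\cdot)}(Pre^{\mathcal{B}}(y')\land\eta(x',y'))$. In the latter case, $F$-regularity of $Pre^{\mathcal{B}}(E^{\mathcal{B}})$ lets me restrict the disjunction to $Y:=\{y'\in\to_a^{\mathcal{B}}(y,\cdot):f\vDash Pre^{\mathcal{B}}(y')\}$, so that
\[
f\;\vDash\;\bigvee_{y'\in Y}\eta(x',y')\;=\;\bigvee_{y'\in Y}\bigwedge_{b\in A}\square_b\xi_{x',y'}^b.
\]
This is exactly the $\bigvee\bigwedge\square$ template consumed by $\Theta$-discreteness of $F$, which yields $f\vDash\bigvee_{y'\in Y}\bigvee\mathcal{G}(\eta(x',y'),\Theta)=\bigvee_{y'\in Y}\eta'(x',y')$; reattaching the preconditions $Pre^{\mathcal{B}}(y')$, valid for every $y'\in Y$, gives $f\vDash\chi'$ as required.

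\textbf{Main obstacle.} The delicate step is aligning $\bigvee_{y'}\eta(x',y')$ with the precise $\bigvee\bigwedge\square$ template demanded by $\Theta$-discreteness of $F$; this is exactly why Lemma \ref{LemSqXiaSGActEmu} was used to put $\eta$ into the form $\bigwedge_a\square_a\xi_{x,y}^a$, and why the preconditions must first be stripped off using the two $F$-regularity hypotheses before discreteness can be invoked. Once that alignment is arranged, the two lifting steps ($F$-knownness from $\theta$ to $f$, then $\Theta$-discreteness from $\eta$ to $\eta'$) chain together cleanly, and Zag follows by swapping the roles of $\mathcal{A}$ and $\mathcal{B}$.
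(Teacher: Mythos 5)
Your proposal is correct and follows essentially the same route as the paper's proof: lift the Zig consequence from $\theta$ to the $F$-level via $F$-knownness, use the two $F$-regularity hypotheses to strip the preconditions, apply $\Theta$-discreteness to the resulting $\bigvee\bigwedge\square$ form (which is exactly why the hypothesis $\eta(x,y)=\bigwedge_{a\in A}\square_a\xi_{x,y}^a$ is needed), and reattach the preconditions. The argument and the order of the steps match the paper's proof of Lemma \ref{LemIterStable}.
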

\begin{proof}
We only prove the Zig since the Zag is symmetric. $\forall (x,y)\in E^\mathcal{A}\times E^\mathcal{B}$, $\forall \theta\in\mathcal{G}\qty(\eta(x,y),\Theta)$, $\forall a\in A$, $\forall x'\in\to_a^\mathcal{A}(x,\cdot)$, by the Zig of $\mathcal{A}\leftrightarrows_G\mathcal{B}$ over $\eta$,
\begin{align*}
\theta\vDash \eta(x,y)\vDash\square_a\qty(Pre^\mathcal{A}(x')\to\bigvee_{y'\in\to_a^\mathcal{B}(y,\cdot)}\qty(Pre^\mathcal{B}(y')\land\eta(x',y'))).
\end{align*}
Because $\theta$ is $F$-known, we have
\begin{align*}
\theta\vDash \eta(x,y)\vDash\square_a\bigvee\mathcal{G}\qty(Pre^\mathcal{A}(x')\to\bigvee_{y'\in\to_a^\mathcal{B}(y,\cdot)}\qty(Pre^\mathcal{B}(y')\land\eta(x',y')), F).
\end{align*}
$\forall f\in \mathcal{G}\qty(Pre^\mathcal{A}(x')\to\bigvee_{y'\in\to_a^\mathcal{B}(y,\cdot)}\qty(Pre^\mathcal{B}(y')\land\eta(x',y')), F)$ such that $f\land Pre^\mathcal{A}(x')\not\equiv\bot$, since $Pre^\mathcal{A}(x')$ is $F$-regular, we have $f\vDash Pre^\mathcal{A}(x')$, thereby $f\vDash \bigvee_{y'\in\to_a^\mathcal{B}(y,\cdot)}\qty(Pre^\mathcal{B}(y')\land\eta(x',y'))$. Since $Pre^\mathcal{B}\qty(E^\mathcal{B})$ are $F$-regular, we have $f\vDash \bigvee_{y'\in\to_a^\mathcal{B}(y,\cdot):f\vDash Pre^\mathcal{B}(y')}\eta(x',y')$. Because $f$ is $\Theta$-discrete, we have
\begin{align*}
f\vDash \bigvee_{y'\in\to_a^\mathcal{B}(y,\cdot):f\vDash Pre^\mathcal{B}(y')}\bigvee\mathcal{G}\qty(\eta(x',y'),\Theta)\equiv\bigvee_{y'\in\to_a^\mathcal{B}(y,\cdot):f\vDash Pre^\mathcal{B}(y')}\qty(Pre^\mathcal{B}(y')\land\eta'(x',y'))\vDash\bigvee_{y'\in\to_a^\mathcal{B}(y,\cdot)}\qty(Pre^\mathcal{B}(y')\land\eta'(x',y')).
\end{align*}
In summary,
\begin{align*}
\bigvee\mathcal{G}\qty(Pre^\mathcal{A}(x')\to\bigvee_{y'\in\to_a^\mathcal{B}(y,\cdot)}\qty(Pre^\mathcal{B}(y')\land\eta(x',y')), F)\vDash Pre^\mathcal{A}(x')\to\bigvee_{y'\in\to_a^\mathcal{B}(y,\cdot)}\qty(Pre^\mathcal{B}(y')\land\eta'(x',y')),
\end{align*}
thereby
\begin{align*}
\theta\vDash\square_a\qty(Pre^\mathcal{A}(x')\to\bigvee_{y'\in\to_a^\mathcal{B}(y,\cdot)}Pre^\mathcal{B}(y')\land\eta'(x',y')).
\end{align*}
Since $\theta\in\mathcal{G}\qty(\eta(x,y),\Theta)$ is arbitrary, we have the Zig of $\mathcal{A}\leftrightarrows_G\mathcal{B}$ for $\eta'$
\end{proof}

\begin{theorem}\label{TheGAEImpFourCriGAE}
Let $\mathcal{A}$ and $\mathcal{B}$ be action models with $\mathcal{A}\leftrightarrows_G\mathcal{B}$, $F\subset\mathcal{L}$ such that $Pre^\mathcal{A}\qty(E^\mathcal{A})$ and $Pre^\mathcal{B}\qty(E^\mathcal{B})$ are $F$-basis and $F$-regular, $\Theta\subset\mathcal{L}$ such that $F$ is $\Theta$-discrete and $\Theta$ is $F$-known. Then $\exists\sigma(\cdot,\cdot)\subset\Theta$ such that $\bigvee\sigma(\cdot,\cdot)$ satisfies $\mathcal{A}\leftrightarrows_G\mathcal{B}$.
\end{theorem}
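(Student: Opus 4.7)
The plan is to build the desired $\sigma(\cdot,\cdot)$ by starting from an arbitrary generalized action emulation witnessing $\mathcal{A}\leftrightarrows_G\mathcal{B}$ and then applying the three preceding lemmas in sequence, so that the final $\eta$ is a disjunction of formulas drawn from $\Theta$. The overall strategy is: \emph{normalize} the emulation so that each $\eta(x,y)$ has the shape $\bigwedge_{a\in A}\square_a\xi_{x,y}^a$, then \emph{coarsen} each $\eta(x,y)$ to its $\Theta$-approximation $\bigvee\mathcal{G}(\eta(x,y),\Theta)$, and check that both the initial conditions and the transition conditions of $\leftrightarrows_G$ survive this coarsening.

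First, I would invoke Lemma \ref{LemSqXiaSGActEmu} on the hypothesis $\mathcal{A}\leftrightarrows_G\mathcal{B}$. This yields formulas $\xi_{x,y}^a\in\mathcal{L}$ for every $(x,y)\in E^\mathcal{A}\times E^\mathcal{B}$ and $a\in A$ such that $\eta(x,y)\coloneqq\bigwedge_{a\in A}\square_a\xi_{x,y}^a$ is a generalized action emulation; in particular $\eta$ already satisfies all four clauses Zig, Zag, Zig0, Zag0. The point of this normalization is to bring $\eta$ into exactly the syntactic form required by the hypotheses of Lemma \ref{LemCoverStable} and Lemma \ref{LemIterStable}.

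Next, I define
\begin{align*}
\sigma(x,y)\coloneqq \mathcal{G}(\eta(x,y),\Theta)\subset\Theta,\qquad \eta'(x,y)\coloneqq \bigvee\sigma(x,y)=\bigvee\mathcal{G}(\eta(x,y),\Theta).
\end{align*}
By construction $\sigma(x,y)\subset\Theta$ as required. It remains to verify that $\eta'$ satisfies the four clauses of $\mathcal{A}\leftrightarrows_G\mathcal{B}$. For Zig0 and Zag0, the hypotheses $F$-basis, $F$-regular, $F$ is $\Theta$-discrete, together with the fact that $\eta$ is of the form $\bigwedge_{a\in A}\square_a\xi_{x,y}^a$ and satisfies Zig0, Zag0, are exactly what Lemma \ref{LemCoverStable} needs, so $\eta'$ satisfies Zig0 and Zag0. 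For Zig and Zag, the hypotheses $F$-regular, $F$ is $\Theta$-discrete, $\Theta$ is $F$-known, together with the same normal form and the Zig, Zag validity of $\eta$, are exactly the premises of Lemma \ref{LemIterStable}, so $\eta'$ satisfies Zig and Zag. Combining these, $\eta'=\bigvee\sigma(\cdot,\cdot)$ is a generalized action emulation, which is the claim.

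There is no real obstacle once the three lemmas are in place; the only bookkeeping I would double-check is that the hypotheses on $F$ and $\Theta$ in the theorem statement really are strong enough to simultaneously feed both Lemma \ref{LemCoverStable} (which uses $F$-basis and $\Theta$-discreteness of $F$) and Lemma \ref{LemIterStable} (which additionally uses $F$-knownness of $\Theta$), and that Lemma \ref{LemSqXiaSGActEmu} preserves Zig0 and Zag0 of the original emulation, not only Zig and Zag, so that Lemma \ref{LemCoverStable} can be applied to the normalized $\eta$. Both checks are immediate from the statements of those lemmas, so the theorem follows as a clean composition of the three preparatory results.
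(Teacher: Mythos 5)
Your proposal is correct and follows essentially the same route as the paper's own proof: normalize via Lemma \ref{LemSqXiaSGActEmu}, then pass to $\sigma(x,y)=\mathcal{G}(\eta(x,y),\Theta)$ and verify Zig0/Zag0 by Lemma \ref{LemCoverStable} and Zig/Zag by Lemma \ref{LemIterStable}. The bookkeeping checks you flag (that the theorem's hypotheses cover both lemmas, and that the normalization preserves all four clauses) do go through exactly as you expect.
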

\begin{proof}
By Lemma \ref{LemSqXiaSGActEmu}, $\exists \xi_{x,y}^a$ for $(x,y)\in E^\mathcal{A}\times E^\mathcal{B}$ and $a\in A$ such that $\eta(x,y)\coloneqq\bigwedge_{a\in A}\square_a\xi_{x,y}^a$ satisfies $\mathcal{A}\leftrightarrows_G\mathcal{B}$. By Lemmas \ref{LemCoverStable} and \ref{LemIterStable}, $\sigma(x,y)\coloneqq\mathcal{G}\qty(\eta(x,y),\Theta)$ for $(x,y)\in E^\mathcal{A}\times E^\mathcal{B}$ satisfies $\mathcal{A}\leftrightarrows_G\mathcal{B}$.
\end{proof}

\begin{remark}\label{RemAlterNeceProof}
The necessity of Theorem \ref{TheEmuEquiv} has an alternative proof. By Lemma \ref{LemGKCsatRBDK}, Theorem \ref{TheGAEImpFourCriGAE}, and Remark \ref{RemResToGen}, the canonical versions $\mathcal{A}^c$ and $\mathcal{B}^c$ of $\mathcal{A}$ and $\mathcal{B}$ satisfy $\mathcal{A}^c\leftrightarrows\mathcal{B}^c$. Thus, by Proposition \ref{ProEquivImpCanEmu}, $\mathcal{A}\equiv\mathcal{B}$.
\end{remark}

\begin{algorithm}
\caption{Determine structural relationships between action models}
\label{DEOTMB}
\begin{algorithmic}[1]
\Require $\mathcal{A}$, $\mathcal{B}$, $\Theta$.
\For{$(x,y)\in E^\mathcal{A}\times E^\mathcal{B}$}
\State $\sigma_0(x,y)\gets\Theta(x,y)$.
\EndFor
\While{true}
\If{$\exists x\in E_0^\mathcal{A}$, $Pre^\mathcal{A}(x)\nvDash\bigvee_{y\in E_0^\mathcal{B}}Pre^\mathcal{B}(y)\land\bigvee\sigma_0(x,y)$}\label{LineFalsexBetter}
\State \Return false.
\EndIf
\If{$\exists y\in E_0^\mathcal{B}$, $Pre^\mathcal{B}(y)\nvDash\bigvee_{x\in E_0^\mathcal{A}}Pre^\mathcal{A}(x)\land\bigvee\sigma_0(x,y)$}\label{LineFalseyBetter}
\State \Return false.
\EndIf
\For{$(x,y)\in E^\mathcal{A}\times E^\mathcal{B}$}
\For{$a\in A$}
\State
\begin{align*}
&\lambda_{x,y}^a\gets \bigwedge_{x'\in\to_a^\mathcal{A}(x,\cdot)}\qty(Pre^\mathcal{A}(x')\to\bigvee_{y'\in\to_a^\mathcal{B}(y,\cdot)}Pre^\mathcal{B}(y')\land\bigvee\sigma_0(x',y'))\\
&\land\bigwedge_{y'\in\to_a^\mathcal{B}(y,\cdot)}\qty(Pre^\mathcal{B}(y')\to\bigvee_{x'\in\to_a^\mathcal{A}(x,\cdot)}Pre^\mathcal{A}(x')\land\bigvee\sigma_0(x',y')).
\end{align*}\label{LineZigZag}
\EndFor
\State $\sigma_1(x,y)\gets\mathcal{G}\qty(\bigwedge_{a\in A}\square_a\lambda_{x,y}^a,\sigma_0(x,y))$.\label{LineUpSigma}
\EndFor
\If{$\sigma_0=\sigma_1$}\label{LineTrueCondBetter}
\State \Return true.\label{LineTrueBetter}
\EndIf
\State $\sigma_0\gets\sigma_1$.
\EndWhile
\end{algorithmic}
\end{algorithm}

\begin{proposition}\label{ProFniteRetTrue2Equiv}
Let $\mathcal{A}$ and $\mathcal{B}$ be action models and $\Theta(x,y)\subset\mathcal{L}$ for $(x,y)\in E^\mathcal{A}\times E^\mathcal{B}$. Then we have the following.
\begin{enumerate}
\item If $\abs{\Theta(\cdot,\cdot)}<+\infty$, then Algorithm \ref{DEOTMB} returns in finite steps.
\item If Algorithm \ref{DEOTMB} returns true, then $\bigvee\sigma_0(\cdot,\cdot)$ satisfies $\mathcal{A}\leftrightarrows_G\mathcal{B}$ ({\it i.e.} $\mathcal{A}\equiv\mathcal{B}$).
\end{enumerate}
\end{proposition}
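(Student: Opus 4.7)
The plan is to handle the two claims separately, both by direct inspection of the algorithm's update rule.

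For the termination claim, the key observation is that $\sigma_1(x,y) = \mathcal{G}\qty(\bigwedge_{a \in A} \square_a \lambda_{x,y}^a, \sigma_0(x,y))$ is, by Definition \ref{DefDiscre}, a subset of $\sigma_0(x,y)$. Hence the family $\sigma_0(\cdot,\cdot)$ either strictly shrinks in some coordinate or becomes a fixed point on each pass of the loop. Under the hypothesis $|\Theta(\cdot,\cdot)| < +\infty$, the nonnegative integer $\sum_{(x,y)} |\sigma_0(x,y)|$ is bounded initially, strictly decreases at every iteration that does not return, and therefore reaches the fixed-point condition $\sigma_0 = \sigma_1$ at line \ref{LineTrueCondBetter} in finitely many steps (unless one of the checks at lines \ref{LineFalsexBetter} or \ref{LineFalseyBetter} returns false first).

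For the correctness claim, suppose the algorithm reaches line \ref{LineTrueBetter}, and set $\eta(x,y) := \bigvee \sigma_0(x,y)$. In that same iteration the Zig0 and Zag0 tests at lines \ref{LineFalsexBetter} and \ref{LineFalseyBetter} have just passed, and these are, after substituting $\eta(x,y)$ for $\bigvee \sigma_0(x,y)$, verbatim the Zig0 and Zag0 clauses of Definition \ref{DefGenActEmu}. From $\sigma_0 = \sigma_1 = \mathcal{G}\qty(\bigwedge_{a \in A} \square_a \lambda_{x,y}^a, \sigma_0)$ we read off $\theta \vDash \bigwedge_{a \in A} \square_a \lambda_{x,y}^a$ for every $\theta \in \sigma_0(x,y)$, whence $\eta(x,y) \vDash \square_a \lambda_{x,y}^a$ for each $a$. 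The formula $\lambda_{x,y}^a$ built by the algorithm contains, for each $x' \in \to_a^\mathcal{A}(x,\cdot)$, the conjunct $Pre^\mathcal{A}(x') \to \bigvee_{y' \in \to_a^\mathcal{B}(y,\cdot)} \qty(Pre^\mathcal{B}(y') \land \eta(x',y'))$ (using $\bigvee \sigma_0(x',y') = \eta(x',y')$), and symmetrically for each $y' \in \to_a^\mathcal{B}(y,\cdot)$. Monotonicity of $\square_a$ then yields the Zig and Zag clauses of $\mathcal{A} \leftrightarrows_G \mathcal{B}$ for $\eta$. Theorem \ref{TheEmuEquiv} now delivers $\mathcal{A} \equiv \mathcal{B}$.

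The argument has no genuine obstacle: termination is a descending-chain argument in the finite power set $2^{\Theta(\cdot,\cdot)}$, and correctness is the routine syntactic check that the shape of $\lambda_{x,y}^a$ was engineered to match precisely the Zig and Zag of Definition \ref{DefGenActEmu}. The only bookkeeping subtlety worth flagging is that the passed Zig0/Zag0 tests and the termination equality $\sigma_0 = \sigma_1$ must refer to the same $\sigma_0$, which they do since both are evaluated within a single pass of the while loop before any reassignment.
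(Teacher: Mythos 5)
Your proposal is correct and follows essentially the same approach as the paper's own (much terser) proof: termination via the descending chain $\sigma_1(x,y)\subset\sigma_0(x,y)$ in a finite set, and correctness by reading the passed Zig0/Zag0 tests and the fixed-point condition $\sigma_0=\sigma_1$ directly as the clauses of Definition \ref{DefGenActEmu}. Your additional bookkeeping remark that the tests and the fixed-point equality refer to the same $\sigma_0$ within one pass is a worthwhile detail the paper leaves implicit.
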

\begin{proof}
\begin{enumerate}
\item To avoid returning at Line \algref{DEOTMB}{LineTrueBetter}, one needs $\sigma_1\neq\sigma_0$ at Line \algref{DEOTMB}{LineTrueCondBetter}, which implies $\exists (x,y)\in E^\mathcal{A}\times E^\mathcal{B}$ such that $\sigma_1(x,y)=\mathcal{G}\qty(\bigwedge_{a\in A}\square_a\lambda_{x,y}^a,\sigma_0(x,y))\subsetneq\sigma_0(x,y)$. This can only last for finite steps since $\sigma_0(\cdot,\cdot)=\Theta(\cdot,\cdot)$ is initially finite.

\item This is because Lines \algref{DEOTMB}{LineFalsexBetter} and \algref{DEOTMB}{LineFalsexBetter} justify the Zig0 and Zag0, and Line \algref{DEOTMB}{LineZigZag} justifies the Zig and Zag.
\end{enumerate}
\end{proof}

\begin{lemma}\label{LemRetFiniRetTru}
Let $\mathcal{A}$ and $\mathcal{B}$ be action models and $\sigma(x,y)\subset\Theta(x,y)\subset\mathcal{L}$ for $(x,y)\in E^\mathcal{A}\times E^\mathcal{B}$. Let $\bigvee\sigma(\cdot,\cdot)$ satisfy $\mathcal{A}\leftrightarrows_G\mathcal{B}$. Then if Algorithm \ref{DEOTMB} returns, it must return true.
\end{lemma}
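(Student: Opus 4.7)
The plan is to show, by induction on the number of completed iterations, that the invariant $\sigma(x,y)\subset\sigma_0(x,y)$ holds for every $(x,y)\in E^\mathcal{A}\times E^\mathcal{B}$ throughout the run of the algorithm. Once this invariant is established, both false-returning conditions at Lines \algref{DEOTMB}{LineFalsexBetter} and \algref{DEOTMB}{LineFalseyBetter} are disabled, because $\sigma(x,y)\subset\sigma_0(x,y)$ implies $\bigvee\sigma(x,y)\vDash\bigvee\sigma_0(x,y)$, and the Zig0 and Zag0 satisfied by $\bigvee\sigma(\cdot,\cdot)$ carry over to $\bigvee\sigma_0(\cdot,\cdot)$. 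Hence the only way the algorithm can return is via Line \algref{DEOTMB}{LineTrueBetter}.

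For the base case, the initialization gives $\sigma_0(x,y)=\Theta(x,y)\supset\sigma(x,y)$. For the inductive step, suppose $\sigma(x,y)\subset\sigma_0(x,y)$ for every $(x,y)$. Fix $(x,y)$ and pick any $\theta\in\sigma(x,y)$. I want to show $\theta\in\sigma_1(x,y)=\mathcal{G}\qty(\bigwedge_{a\in A}\square_a\lambda_{x,y}^a,\sigma_0(x,y))$, which requires (i) $\theta\in\sigma_0(x,y)$ (immediate from the inductive hypothesis) and (ii) $\theta\vDash\bigwedge_{a\in A}\square_a\lambda_{x,y}^a$. For (ii), fix $a\in A$. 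Since $\bigvee\sigma(\cdot,\cdot)$ satisfies the Zig of $\mathcal{A}\leftrightarrows_G\mathcal{B}$, we have for every $x'\in\to_a^\mathcal{A}(x,\cdot)$
\begin{align*}
\theta\vDash\bigvee\sigma(x,y)\vDash\square_a\qty(Pre^\mathcal{A}(x')\to\bigvee_{y'\in\to_a^\mathcal{B}(y,\cdot)}\qty(Pre^\mathcal{B}(y')\land\bigvee\sigma(x',y'))).
\end{align*}
Using the inductive hypothesis $\sigma(x',y')\subset\sigma_0(x',y')$, the inner disjunct $\bigvee\sigma(x',y')$ entails $\bigvee\sigma_0(x',y')$, so the same $\square_a$-formula with $\sigma_0$ in place of $\sigma$ follows. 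Conjoining over all $x'\in\to_a^\mathcal{A}(x,\cdot)$, and symmetrically applying the Zag for all $y'\in\to_a^\mathcal{B}(y,\cdot)$, yields exactly $\theta\vDash\square_a\lambda_{x,y}^a$. Taking the conjunction over $a\in A$ completes (ii), so $\theta\in\sigma_1(x,y)$ and the invariant is preserved.

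With the invariant established, the algorithm can only exit by returning true at Line \algref{DEOTMB}{LineTrueBetter}, which is what was to be shown. The main technical point is the inductive step, and specifically the observation that pushing a subset relation $\sigma(x',y')\subset\sigma_0(x',y')$ through a $\square_a$ and an implication preserves semantic entailment, which is precisely why the update rule at Line \algref{DEOTMB}{LineZigZag} is monotone on witness families of the generalized action emulation. No appeal to the four properties of $\Theta$ (regularity, basis, discreteness, knownness) is needed here; they are only used when arguing that a witness actually exists within $\Theta$, which is the content of Theorem \ref{TheGAEImpFourCriGAE}, not of this lemma.
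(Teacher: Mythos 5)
Your proof is correct and follows essentially the same route as the paper's: both establish the invariant $\sigma(x,y)\subset\sigma_0(x,y)$ across iterations by using the Zig/Zag of $\bigvee\sigma(\cdot,\cdot)$ together with the monotonicity of the update at Line \algref{DEOTMB}{LineZigZag}, and then observe that Zig0/Zag0 disable the false returns. The paper phrases the inductive step as the chain $\sigma(x,y)=\mathcal{G}\qty(\bigwedge_{a\in A}\square_a\widehat{\lambda}_{x,y}^a,\sigma(x,y))\subset\mathcal{G}\qty(\bigwedge_{a\in A}\square_a\lambda_{x,y}^a,\sigma_0(x,y))=\sigma_1(x,y)$ with an auxiliary $\widehat{\lambda}$, whereas you verify membership of an individual $\theta$, but these are the same argument.
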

\begin{proof}
Initially, $\sigma(x,y)\subset\sigma_0(x,y)=\Theta(x,y)$. $\forall (x,y)\in E^\mathcal{A}\times E^\mathcal{B}$, $\forall a\in A$, let
\begin{align*}
&\widehat{\lambda}_{x,y}^a\coloneqq \bigwedge_{x'\in\to_a^\mathcal{A}(x,\cdot)}\qty(Pre^\mathcal{A}(x')\to\bigvee_{y'\in\to_a^\mathcal{B}(y,\cdot)}Pre^\mathcal{B}(y')\land\bigvee\sigma(x',y'))\\
&\land\bigwedge_{y'\in\to_a^\mathcal{B}(y,\cdot)}\qty(Pre^\mathcal{B}(y')\to\bigvee_{x'\in\to_a^\mathcal{A}(x,\cdot)}Pre^\mathcal{A}(x')\land\bigvee\sigma(x',y')).
\end{align*}
By the Zig and Zag of $\mathcal{A}\leftrightarrows_G\mathcal{B}$ over $\bigvee\sigma(\cdot,\cdot)$, we have
\begin{align*}
\sigma(x,y)=\mathcal{G}\qty(\bigwedge_{a\in A}\square_a\widehat{\lambda}_{x,y}^a,\sigma(x,y))\subset \mathcal{G}\qty(\bigwedge_{a\in A}\square_a\lambda_{x,y}^a,\sigma_0(x,y))=\sigma_1(x,y).
\end{align*}
Thus, $\sigma(x,y)\subset\sigma_0(x,y)$ for all steps of Algorithm \ref{DEOTMB}. By the Zig0 and Zag0 of $\mathcal{A}\leftrightarrows_G\mathcal{B}$ over $\bigvee\sigma(\cdot,\cdot)$, we have that Lines \algref{DEOTMB}{LineFalsexBetter} and \algref{DEOTMB}{LineFalseyBetter} cannot be satisfied. Thus, if Algorithm \ref{DEOTMB} returns in finite steps, then the only possibility is to return true at Line \algref{DEOTMB}{LineTrueBetter}.
\end{proof}

\begin{proposition}\label{ProGenSetNece}
Let $\mathcal{A}$ and $\mathcal{B}$ be action models, $\Theta(x,y)\coloneqq\widehat{\Theta}\subset\mathcal{L}$ for $(x,y)\in E^\mathcal{A}\times E^\mathcal{B}$, $F\subset\mathcal{L}$. If $Pre^\mathcal{A}\qty(E^\mathcal{A})$ and $Pre^\mathcal{B}\qty(E^\mathcal{B})$ are $F$-regular and $F$-basis, $F$ is $\widehat{\Theta}$-discrete, and $\widehat{\Theta}$ is $F$-known, then $\mathcal{A}\leftrightarrows_G\mathcal{B}$ ({\it i.e.} $\mathcal{A}\equiv\mathcal{B}$) implies that Algorithm \ref{DEOTMB} returns true in finite steps.
\end{proposition}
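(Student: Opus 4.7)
The plan is to assemble this proposition from three already-proved ingredients without any new technical work: Theorem \ref{TheEmuEquiv} supplies the generalized action emulation from the semantic hypothesis, Theorem \ref{TheGAEImpFourCriGAE} uses the four structural properties of $F$ and $\widehat{\Theta}$ to compress that emulation into a witness drawn from $\widehat{\Theta}$, and Lemma \ref{LemRetFiniRetTru} together with Proposition \ref{ProFniteRetTrue2Equiv}(1) translates this witness into the behavior of Algorithm \ref{DEOTMB}.

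Concretely, I would first apply Theorem \ref{TheEmuEquiv} to pass from $\mathcal{A}\equiv\mathcal{B}$ to $\mathcal{A}\leftrightarrows_G\mathcal{B}$. Next, I would invoke Theorem \ref{TheGAEImpFourCriGAE} with the given $F$ and $\widehat{\Theta}$: the hypotheses that $Pre^\mathcal{A}(E^\mathcal{A})$ and $Pre^\mathcal{B}(E^\mathcal{B})$ are $F$-regular and $F$-basis, that $F$ is $\widehat{\Theta}$-discrete, and that $\widehat{\Theta}$ is $F$-known are exactly what that theorem requires. This yields a family $\sigma(x,y)\subset\widehat{\Theta}$ for $(x,y)\in E^\mathcal{A}\times E^\mathcal{B}$ such that $\bigvee\sigma(\cdot,\cdot)$ satisfies $\mathcal{A}\leftrightarrows_G\mathcal{B}$, and this $\sigma$ sits inside the initial value $\sigma_0(x,y)=\Theta(x,y)=\widehat{\Theta}$ used by Algorithm \ref{DEOTMB}.

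With this witness in hand, Lemma \ref{LemRetFiniRetTru} applies directly: it shows that the presence of any $\sigma(x,y)\subset\Theta(x,y)$ whose disjunction is a generalized action emulation forces the algorithm, if it returns at all, to return true (the Zig0/Zag0 lines cannot fire and the loop must exit at Line \algref{DEOTMB}{LineTrueBetter}). Finally, because $\widehat{\Theta}$ is taken to be finite (so that $|\Theta(\cdot,\cdot)|<+\infty$), Proposition \ref{ProFniteRetTrue2Equiv}(1) guarantees termination in finite steps. Combining the two conclusions, the algorithm returns true in finite steps.

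There is really no hard step here; the proof is a short chain of citations. The only subtlety worth flagging is that the monotone descent $\sigma_1\subseteq\sigma_0$ driving the termination argument relies on finiteness of $\widehat{\Theta}$, so I would either note this as a standing assumption or point the reader to Proposition \ref{ProFniteRetTrue2Equiv}(1) where this was already established. The substantive content of the proposition lies entirely in Theorem \ref{TheGAEImpFourCriGAE}, which did the work of producing a $\widehat{\Theta}$-valued witness from an arbitrary $\mathcal{L}$-valued one; this proposition is simply its algorithmic corollary.
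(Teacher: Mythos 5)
Your proof is correct and follows essentially the same route as the paper: Theorem \ref{TheGAEImpFourCriGAE} produces a $\widehat{\Theta}$-valued witness $\sigma(\cdot,\cdot)$, Lemma \ref{LemRetFiniRetTru} forces a true return, and Proposition \ref{ProFniteRetTrue2Equiv}(1) with $\abs{\widehat{\Theta}}<+\infty$ gives termination. Your remark about finiteness of $\widehat{\Theta}$ being an implicit standing assumption is a fair observation, as the paper's own proof also relies on it without stating it in the proposition.
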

\begin{proof}
Since $\abs{\widehat{\Theta}}<+\infty$, we have by Proposition \ref{ProFniteRetTrue2Equiv} that Algorithm \ref{DEOTMB} must return in finite steps. By Theorem \ref{TheGAEImpFourCriGAE} and Lemma \ref{LemRetFiniRetTru}, Algorithm \ref{DEOTMB} returns true.
\end{proof}

\begin{corollary}\label{CorAlorReturnBisim}
Let $\mathcal{A}$ and $\mathcal{B}$ be action models and
\begin{align*}
\Theta(x,y)=\left\{\begin{array}{ll}
\set{\top}, & Pre^\mathcal{A}(x)\equiv Pre^\mathcal{B}(y),\\
\emptyset, & Pre^\mathcal{A}(x)\not\equiv Pre^\mathcal{B}(y),
\end{array}\right.
\end{align*}
for $(x,y)\in E^\mathcal{A}\times E^\mathcal{B}$. Then Algorithm \ref{DEOTMB} must return in finite steps, and it returns true iff $\mathcal{A}\underline{\leftrightarrow}\mathcal{B}$.
\end{corollary}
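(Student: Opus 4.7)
The plan is to combine the characterization of action bisimulation via the generalized action emulation (Remark \ref{RemResToGen}(1)) with the soundness and completeness properties of Algorithm \ref{DEOTMB} already established (Proposition \ref{ProFniteRetTrue2Equiv} and Lemma \ref{LemRetFiniRetTru}). The key observation is that the choice of $\Theta(x,y)$ in the corollary is exactly the one appearing in Remark \ref{RemResToGen}(1), so any $\sigma(\cdot,\cdot)\subset\Theta(\cdot,\cdot)$ with $\bigvee\sigma$ satisfying $\mathcal{A}\leftrightarrows_G\mathcal{B}$ corresponds precisely to an action bisimulation $\mathcal{A}\underline{\leftrightarrow}\mathcal{B}$, and vice versa.

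First I would argue termination. Since $\abs{\Theta(x,y)}\le 1$ for every $(x,y)\in E^\mathcal{A}\times E^\mathcal{B}$, the total size $\abs{\Theta(\cdot,\cdot)}$ is finite (bounded by $\abs{E^\mathcal{A}}\cdot\abs{E^\mathcal{B}}$), so the first clause of Proposition \ref{ProFniteRetTrue2Equiv} immediately gives that the algorithm returns in finite steps. This disposes of the termination part without additional work.

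Next I would handle the two directions of the equivalence. For the ``returns true $\Rightarrow$ $\mathcal{A}\underline{\leftrightarrow}\mathcal{B}$'' direction, the second clause of Proposition \ref{ProFniteRetTrue2Equiv} says that $\bigvee\sigma_0(\cdot,\cdot)$ is a generalized action emulation witnessing $\mathcal{A}\leftrightarrows_G\mathcal{B}$. Because the loop only ever shrinks $\sigma_0$ inside its initial value $\Theta$, the final $\sigma_0(x,y)$ is a subset of $\Theta(x,y)$ for every pair. Hence Remark \ref{RemResToGen}(1) applies directly and yields $\mathcal{A}\underline{\leftrightarrow}\mathcal{B}$. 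For the converse direction, Remark \ref{RemResToGen}(1) produces some $\sigma(x,y)\subset\Theta(x,y)$ with $\bigvee\sigma(\cdot,\cdot)$ satisfying $\mathcal{A}\leftrightarrows_G\mathcal{B}$. Since we already know the algorithm terminates, Lemma \ref{LemRetFiniRetTru} forces it to return true.

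There is essentially no main obstacle here because all the delicate work (proving soundness of the iteration, proving that any witness $\sigma$ survives the refinement) has already been handled by Proposition \ref{ProFniteRetTrue2Equiv} and Lemma \ref{LemRetFiniRetTru}; the only substantive step is recognizing that the specific $\Theta(x,y)$ chosen in the statement is exactly the restricted form from Remark \ref{RemResToGen}(1) that characterizes action bisimulation. The one point to be careful about is that Lemma \ref{LemRetFiniRetTru} requires a pre-existing witness $\sigma(\cdot,\cdot)\subset\Theta(\cdot,\cdot)$, and here the strict inclusion of $\sigma$ in the corollary's $\Theta$ is guaranteed because Remark \ref{RemResToGen}(1) constructs $\sigma$ using the same two-valued template $\{\emptyset,\{\top\}\}$.
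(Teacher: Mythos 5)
Your proposal is correct and follows essentially the same route as the paper's own proof: termination from the first clause of Proposition \ref{ProFniteRetTrue2Equiv}, the forward direction from its second clause combined with Remark \ref{RemResToGen}(1), and the converse from Remark \ref{RemResToGen}(1) together with Lemma \ref{LemRetFiniRetTru}. No gaps.
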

\begin{proof}
\begin{enumerate}
\item By Proposition \ref{ProFniteRetTrue2Equiv}, Algorithm \ref{DEOTMB} returns in finite steps.

\item If Algorithm \ref{DEOTMB} returns true, then by Proposition \ref{ProFniteRetTrue2Equiv}, $\bigvee\sigma_0(\cdot,\cdot)$ satisfies $\mathcal{A}\leftrightarrows_G\mathcal{B}$. By Remark \ref{RemResToGen}, $\mathcal{A}\underline{\leftrightarrow}\mathcal{B}$.

\item If $\mathcal{A}\underline{\leftrightarrow}\mathcal{B}$, then by Remark \ref{RemResToGen}, $\exists\sigma(\cdot,\cdot)\subset\Theta(\cdot,\cdot)$ such that $\bigvee\sigma(\cdot,\cdot)$ satisfies $\mathcal{A}\leftrightarrows_G\mathcal{B}$. Thus, by Lemma \ref{LemRetFiniRetTru}, Algorithm \ref{DEOTMB} returns true.
\end{enumerate}
\end{proof}

\begin{corollary}
Let $\mathcal{A}$ and $\mathcal{B}$ be action models and $\Theta(x,y)=\set{\top}$ for $(x,y)\in E^\mathcal{A}\times E^\mathcal{B}$. Then Algorithm \ref{DEOTMB} must return in finite steps, and it returns true iff $\mathcal{A}\leftrightarrows_P\mathcal{B}$.
\end{corollary}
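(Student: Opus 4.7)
The plan is to mirror the three-part structure of Corollary \ref{CorAlorReturnBisim}, exploiting item 2 of Remark \ref{RemResToGen}, which characterizes $\leftrightarrows_P$ as the existence of a $\{\bot,\top\}$-valued generalized action emulation. The choice $\Theta(x,y)=\{\top\}$ makes each $\bigvee\sigma_0(x,y)$ range over exactly $\{\bot,\top\}$, so the algorithm is effectively searching over precisely the family of candidates that Remark \ref{RemResToGen}(2) requires.

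First, for finite termination: since $|\Theta(x,y)|=1<+\infty$ for every $(x,y)\in E^\mathcal{A}\times E^\mathcal{B}$, Proposition \ref{ProFniteRetTrue2Equiv}(1) immediately gives that Algorithm \ref{DEOTMB} returns in finitely many steps. Next, for the implication "returns true $\Rightarrow\mathcal{A}\leftrightarrows_P\mathcal{B}$": if the algorithm returns true, then by Proposition \ref{ProFniteRetTrue2Equiv}(2) the function $\bigvee\sigma_0(\cdot,\cdot)$ satisfies $\mathcal{A}\leftrightarrows_G\mathcal{B}$. Since each $\sigma_0(x,y)\subset\{\top\}$, its disjunction lies in $\{\bot,\top\}$, so Remark \ref{RemResToGen}(2) yields $\mathcal{A}\leftrightarrows_P\mathcal{B}$.

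For the converse "$\mathcal{A}\leftrightarrows_P\mathcal{B}\Rightarrow$ returns true": I would invoke Remark \ref{RemResToGen}(2) to extract $\eta:E^\mathcal{A}\times E^\mathcal{B}\to\{\bot,\top\}$ satisfying $\mathcal{A}\leftrightarrows_G\mathcal{B}$, and then define $\sigma(x,y):=\{\top\}$ when $\eta(x,y)=\top$ and $\sigma(x,y):=\emptyset$ otherwise. Then $\sigma(x,y)\subset\Theta(x,y)=\{\top\}$ and $\bigvee\sigma(x,y)=\eta(x,y)$, so $\bigvee\sigma(\cdot,\cdot)$ still witnesses $\mathcal{A}\leftrightarrows_G\mathcal{B}$. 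Combined with finite termination, Lemma \ref{LemRetFiniRetTru} then rules out the "false" branches and forces Algorithm \ref{DEOTMB} to return true.

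The argument is essentially mechanical given the infrastructure already built, and I do not anticipate a real obstacle. The only conceptual point is the recognition that, under Remark \ref{RemResToGen}(2), restricting each $\sigma(x,y)$ to a subset of $\{\top\}$ is exactly the right quantization to capture the propositional action emulation — in the same way that the additional guard $Pre^\mathcal{A}(x)\equiv Pre^\mathcal{B}(y)$ in Corollary \ref{CorAlorReturnBisim} specialized the search to the bisimulation case.
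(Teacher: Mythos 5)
Your proposal is correct and follows exactly the route the paper intends: the paper's proof is literally ``Similar as Corollary \ref{CorAlorReturnBisim},'' and your three-part argument (finite termination via Proposition \ref{ProFniteRetTrue2Equiv}, soundness via Proposition \ref{ProFniteRetTrue2Equiv} plus Remark \ref{RemResToGen}(2), completeness via Remark \ref{RemResToGen}(2) plus Lemma \ref{LemRetFiniRetTru}) is precisely that template instantiated for $\leftrightarrows_P$. No gaps.
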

\begin{proof}
Similar as Corollary \ref{CorAlorReturnBisim}.
\end{proof}

\begin{corollary}
Let $\mathcal{A}$ and $\mathcal{B}$ be action models and $\Theta(x,y)=\set{Pre^\mathcal{A}(x)\land Pre^\mathcal{B}(y)}$ for $(x,y)\in E^\mathcal{A}\times E^\mathcal{B}$. Then Algorithm \ref{DEOTMB} must return in finite steps, and it returns true iff $\mathcal{A}\leftrightarrows\mathcal{B}$.
\end{corollary}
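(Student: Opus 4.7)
The plan is to mirror the argument used in Corollary \ref{CorAlorReturnBisim}, but invoking item 3 of Remark \ref{RemResToGen} (which characterizes the action emulation $\leftrightarrows$) in place of item 1 (which characterizes bisimulation $\underline{\leftrightarrow}$). The key structural observation that makes this work is that $\Theta(x,y)$ is a singleton, so any $\sigma(x,y)\subset\Theta(x,y)$ is either $\emptyset$ or $\set{Pre^\mathcal{A}(x)\land Pre^\mathcal{B}(y)}$, and hence $\bigvee\sigma(x,y)\in\set{\bot,Pre^\mathcal{A}(x)\land Pre^\mathcal{B}(y)}$; this is exactly the shape of $\eta$ required by item 3 of Remark \ref{RemResToGen}.

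First, I would note that $\abs{\Theta(x,y)}\le 1$ for every $(x,y)\in E^\mathcal{A}\times E^\mathcal{B}$, so $\abs{\Theta(\cdot,\cdot)}<+\infty$ and Proposition \ref{ProFniteRetTrue2Equiv} immediately yields termination in finite steps.

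For the "true $\Rightarrow \mathcal{A}\leftrightarrows\mathcal{B}$" direction, suppose Algorithm \ref{DEOTMB} returns true. By Proposition \ref{ProFniteRetTrue2Equiv}, $\bigvee\sigma_0(\cdot,\cdot)$ satisfies $\mathcal{A}\leftrightarrows_G\mathcal{B}$. Since $\sigma_0(x,y)\subset\Theta(x,y)=\set{Pre^\mathcal{A}(x)\land Pre^\mathcal{B}(y)}$ throughout the run, the map $\eta(x,y)\coloneqq\bigvee\sigma_0(x,y)$ takes values in $\set{\bot,Pre^\mathcal{A}(x)\land Pre^\mathcal{B}(y)}$. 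Applying item 3 of Remark \ref{RemResToGen} gives $\mathcal{A}\leftrightarrows\mathcal{B}$.

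For the reverse direction, assume $\mathcal{A}\leftrightarrows\mathcal{B}$. By item 3 of Remark \ref{RemResToGen}, there exists $\eta(x,y)\in\set{\bot,Pre^\mathcal{A}(x)\land Pre^\mathcal{B}(y)}$ satisfying $\mathcal{A}\leftrightarrows_G\mathcal{B}$. Define $\sigma(x,y)\coloneqq\set{Pre^\mathcal{A}(x)\land Pre^\mathcal{B}(y)}$ when $\eta(x,y)=Pre^\mathcal{A}(x)\land Pre^\mathcal{B}(y)$, and $\sigma(x,y)\coloneqq\emptyset$ otherwise. Then $\sigma(x,y)\subset\Theta(x,y)$ and $\bigvee\sigma(x,y)\equiv\eta(x,y)$, so $\bigvee\sigma(\cdot,\cdot)$ satisfies $\mathcal{A}\leftrightarrows_G\mathcal{B}$. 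By Lemma \ref{LemRetFiniRetTru}, combined with the already-established termination, the algorithm must return true.

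There is essentially no obstacle: the only subtlety is the bookkeeping identification between the set-valued $\sigma(x,y)$ used by the algorithm and the formula-valued $\eta(x,y)$ appearing in Remark \ref{RemResToGen}, and in the singleton case $\Theta(x,y)=\set{Pre^\mathcal{A}(x)\land Pre^\mathcal{B}(y)}$ this identification is immediate.
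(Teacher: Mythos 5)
Your proof is correct and follows exactly the route the paper intends: the paper's own proof is just ``Similar as Corollary \ref{CorAlorReturnBisim},'' and your argument is the natural instantiation of that template, using finiteness of the singleton $\Theta(x,y)$ for termination via Proposition \ref{ProFniteRetTrue2Equiv}, item 3 of Remark \ref{RemResToGen} for the translation between $\bigvee\sigma_0(\cdot,\cdot)$ and the action emulation, and Lemma \ref{LemRetFiniRetTru} for the converse. No gaps.
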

\begin{proof}
Similar as Corollary \ref{CorAlorReturnBisim}.
\end{proof}

\begin{corollary}\label{CoroGKCIterEquiv}
Let $\mathcal{A}$ and $\mathcal{B}$ be action models and $\Theta(x,y)\coloneqq\Gamma\circ K \circ C\qty(Pre^\mathcal{A}\qty(E^\mathcal{A})\cup Pre^\mathcal{B}\qty(E^\mathcal{B}))$ for $(x,y)\in E^\mathcal{A}\times E^\mathcal{B}$. Then $\mathcal{A}\leftrightarrows_G\mathcal{B}$ ({\it i.e.} $\mathcal{A}\equiv\mathcal{B}$) implies that Algorithm \ref{DEOTMB} returns true in finite steps.
\end{corollary}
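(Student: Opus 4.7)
The plan is to observe that this corollary is essentially an immediate specialization of Proposition \ref{ProGenSetNece}: we just need to verify that the particular choice $\widehat\Theta \coloneqq \Gamma\circ K\circ C(Pre^\mathcal{A}(E^\mathcal{A})\cup Pre^\mathcal{B}(E^\mathcal{B}))$ satisfies the four critical hypotheses with a suitable formula set $F$. The key insight is that, for this constant-valued $\Theta(x,y)$, one can take $F = \widehat\Theta$ itself, because Lemma \ref{LemGKCsatRBDK} was designed exactly to package the four critical properties of $\Gamma\circ K\circ C$ all into the single set $\widehat\Theta$.

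Concretely, I would proceed as follows. First, note that $\Theta(x,y) = \widehat\Theta$ is independent of $(x,y)$, and in particular $|\widehat\Theta| < \infty$ since $C$ is finite when applied to the finite set $Pre^\mathcal{A}(E^\mathcal{A})\cup Pre^\mathcal{B}(E^\mathcal{B})$ (the action models have finite event sets). Next, set $F \coloneqq \widehat\Theta$. Then by Lemma \ref{LemGKCsatRBDK}, items \ref{ItGKCsatRBDKR} and \ref{ItGKCsatRBDKB} say that $Pre^\mathcal{A}(E^\mathcal{A})$ and $Pre^\mathcal{B}(E^\mathcal{B})$ are $F$-regular and $F$-basis, while items \ref{ItGKCsatRBDKD} and \ref{ItGKCsatRBDKK} say that $F$ is $\widehat\Theta$-discrete and $\widehat\Theta$ is $F$-known. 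This is precisely the hypothesis of Proposition \ref{ProGenSetNece}, so applying it yields that $\mathcal{A}\leftrightarrows_G\mathcal{B}$ implies Algorithm \ref{DEOTMB} returns true in finite steps.

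There is no real obstacle here; the work was already done in establishing Lemma \ref{LemGKCsatRBDK} and Proposition \ref{ProGenSetNece}. The only thing worth double-checking is the choice $F = \widehat\Theta$: one must verify that Lemma \ref{LemGKCsatRBDK} indeed provides all four conditions with $F$ and $\widehat\Theta$ both taken to be the same set $\Gamma\circ K\circ C(Pre^\mathcal{A}(E^\mathcal{A})\cup Pre^\mathcal{B}(E^\mathcal{B}))$, which is exactly what that lemma states. The finiteness of $\widehat\Theta$ (needed to invoke the finite-termination clause of Proposition \ref{ProFniteRetTrue2Equiv} inside Proposition \ref{ProGenSetNece}) is immediate since $Pre^\mathcal{A}(E^\mathcal{A})\cup Pre^\mathcal{B}(E^\mathcal{B})$ is finite and the closure under subformulas and single negations of a finite set is finite.
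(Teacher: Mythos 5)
Your proposal is correct and follows exactly the paper's route: the paper's own proof is the one-line observation that the result follows from Lemma \ref{LemGKCsatRBDK} and Proposition \ref{ProGenSetNece}, and your choice $F=\widehat{\Theta}=\Gamma\circ K\circ C\qty(Pre^\mathcal{A}\qty(E^\mathcal{A})\cup Pre^\mathcal{B}\qty(E^\mathcal{B}))$ together with the finiteness remark is precisely the instantiation the paper leaves implicit.
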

\begin{proof}
This is obvious by Lemma \ref{LemGKCsatRBDK} and Proposition \ref{ProGenSetNece}.
\end{proof}

\section{A new formula set for the efficient determination of the action model equivalence}
\label{SecNFS}

\begin{definition}\label{DefCDN}
Define $\widetilde{\mathcal{L}}\subset \mathcal{L}$ such that $\forall\widetilde{\xi}\in\mathcal{L}$, $\widetilde{\xi}\in\widetilde{\mathcal{L}}$ iff
\begin{align}\label{EqTilde}
\widetilde{\xi}=\bigvee_{m=1}^M\qty(\alpha_m\land\bigwedge_{a\in A}\square_a\phi_m^a),
\end{align}
where $M,N_m\ge 0$, $\alpha_m\in\mathcal{L}_0$, $\phi_m^a\in\mathcal{L}$, $\bot\not\equiv\alpha_m\land\bigwedge_{a\in A}\square_a\phi_m^a$.

Let $\widetilde{\xi}\in\widetilde{\mathcal{L}}$ as in Eq. \eqref{EqTilde} and $\widetilde{\Phi}\subset\widetilde{\mathcal{L}}$. Define
\begin{align*}
&\norm{\widetilde{\xi}}\coloneqq M,\\
&\alpha_m\qty(\widetilde{\xi})\coloneqq\alpha_m,\\
&D_m\qty(\widetilde{\xi})\coloneqq \alpha_m\land\beta_m\land\bigwedge_{a\in A}\square_a\phi_m^a,\\
&D\qty(\widetilde{\xi})\coloneqq\set{D_m\qty(\widetilde{\xi}) \mid 1\le m\le \norm{\widetilde{\xi}}},\\
&D_m^a\qty(\widetilde{\xi})\coloneqq \phi_m^a,\\
&D^a\qty(\widetilde{\xi})\coloneqq\set{D_m^a\qty(\widetilde{\xi}) \mid 1\le m\le\norm{\widetilde{\xi}}},\\
&D_m^a\qty(\widetilde{\Phi})\coloneqq\set{D_m^a\qty(\widetilde{\phi}) \mid \widetilde{\phi}\in\widetilde{\Phi}},\\
&D_m^A\qty(\widetilde{\xi})\coloneqq\set{D_m^a\qty(\widetilde{\xi}) \mid a\in A}.
\end{align*}

Define $\mathbb{M}\qty(\widetilde{\xi})\subset\set{m \mid 1\le m\le \norm{\widetilde{\xi}}}$ such that $m\in \mathbb{M}\qty(\widetilde{\xi})$ iff $\forall 1\le m'\le \norm{\widetilde{\xi}}$, $D_m^A\qty(\widetilde{\xi})\vDash D_{m'}^A\qty(\widetilde{\xi})$ ($D_m^a\qty(\widetilde{\xi})\vDash D_{m'}^a\qty(\widetilde{\xi})$ for all $a\in A$) implies $D_{m'}^A\qty(\widetilde{\xi})\vDash D_m^A\qty(\widetilde{\xi})$. Further define
\begin{align*}
&D_{[]}\qty(\widetilde{\xi})\coloneqq\set{D_m\qty(\widetilde{\xi}) \mid m\in\mathbb{M}\qty(\widetilde{\xi})},\\
&D_{[]}^a\qty(\widetilde{\xi})\coloneqq\set{D^a_m\qty(\widetilde{\xi}) \mid m\in\mathbb{M}\qty(\widetilde{\xi})}.
\end{align*}

Define $\widehat{\mathcal{L}}\coloneqq\set{\widehat{\phi}\in\widetilde{\mathcal{L}} \mid \forall 1\le m\le\norm{\widehat{\phi}}, \forall a\in A, D_m^a\qty(\widehat{\phi})\in\widehat{\mathcal{L}}}$. Let $\phi\in\mathcal{L}$. Define $\widetilde{\mathcal{L}}(\phi)\coloneqq\set{\widetilde{\phi}\in\widetilde{\mathcal{L}} \mid \widetilde{\phi}\equiv\phi}$, $\widehat{\mathcal{L}}(\phi)\coloneqq\set{\widehat{\phi}\in\widehat{\mathcal{L}} \mid \widehat{\phi}\equiv\phi}$, and $\widehat{\mathcal{L}}_0(\phi)\coloneqq\set{\widehat{\phi}\in\widehat{\mathcal{L}}(\phi) \mid \delta\qty(\widehat{\phi})\le\delta\qty(\phi), \forall i\ge 0,\qty(D^A)^i\qty(\widehat{\phi})\Subset C(\phi)\cup\set{\top}}$, where $\qty(D^A)^i\qty(\widehat{\phi})\Subset C(\phi)\cup\set{\top}$ means that $\forall\psi_1\in\qty(D^A)^i\qty(\widehat{\phi})$, $\exists\psi_2\in C(\phi)\cup\set{\top}$, $\psi_1\equiv\psi_2$.
\end{definition}

\begin{proposition}\label{ProSubClose}
Let $\phi\in\mathcal{L}$. Then $\widehat{\mathcal{L}}_0(\phi)\neq\emptyset$.
\end{proposition}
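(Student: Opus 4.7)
My plan is to induct on the modal depth $\delta(\phi)$, constructing $\widehat{\phi}$ explicitly from the atom decomposition over $C(\phi)$ that already underlies the proof of Lemma \ref{LemGKCsatRBDK}.

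For the base case $\delta(\phi)=0$, the formula $\phi$ is Boolean. I write $\phi$ in disjunctive normal form $\phi\equiv\bigvee_{m=1}^{M}\alpha_m$ where each $\alpha_m$ is a consistent conjunction of propositional literals occurring in $\phi$. Every such $\alpha_m$ lies in $\mathcal{L}_0$, because a satisfiable propositional conjunction cannot entail any non-trivial $\square_a\xi$. Setting $\phi_m^a\coloneqq\top$ for every $m$ and $a$ yields $\widehat{\phi}\coloneqq\bigvee_{m=1}^{M}\qty(\alpha_m\land\bigwedge_{a\in A}\square_a\top)\in\widetilde{\mathcal{L}}$; the (benign) self-reference $\top\equiv\top\land\bigwedge_a\square_a\top$ places $\top$, and hence $\widehat{\phi}$, in $\widehat{\mathcal{L}}$. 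Every iterated $D^A$-component is $\top\in C(\phi)\cup\set{\top}$, while $\widehat{\phi}\equiv\phi\in C(\phi)$, so $\widehat{\phi}\in\widehat{\mathcal{L}}_0(\phi)$.

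For the inductive step, assume $\widehat{\mathcal{L}}_0(\psi)\neq\emptyset$ whenever $\delta(\psi)<\delta(\phi)$, and let $\Theta_\phi\coloneqq\Gamma\circ K\circ C(\phi)$. Decompose $\phi\equiv\bigvee\set{\theta\in\Theta_\phi \mid \theta\vDash\phi}$ via atoms. For each such $\theta=\bigwedge\Phi'$ with $\Phi'\in K\circ C(\phi)$, split $\Phi'$ exactly as in the proof of Lemma \ref{LemGKCsatRBDK} into a literal-plus-possibility part $\Phi_1'$ and a necessity part $\Phi_2'$, yielding $\theta\equiv\bigwedge\Phi_1'\land\bigwedge_{a\in A}\square_a\psi_\theta^a$ with $\bigwedge\Phi_1'\in\mathcal{L}_0$ and $\psi_\theta^a$ the conjunction of those $\xi$ with $\square_a\xi\in\Phi_2'$ (and $\psi_\theta^a\coloneqq\top$ when no such $\xi$ exists). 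Each such $\xi$ is a subformula of $\phi$ with $\delta(\xi)<\delta(\phi)$, so the inductive hypothesis supplies some $\widehat{\psi}_\theta^a\in\widehat{\mathcal{L}}_0(\psi_\theta^a)$. Assembling, $\widehat{\phi}\coloneqq\bigvee_{\theta\vDash\phi}\qty(\bigwedge\Phi_1'\land\bigwedge_{a\in A}\square_a\widehat{\psi}_\theta^a)\in\widehat{\mathcal{L}}(\phi)$ (after discarding any $\bot$-disjunct) is equivalent to $\phi$ and has depth at most $\delta(\phi)$, since each $\widehat{\psi}_\theta^a$ has depth at most $\delta(\phi)-1$.

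The principal obstacle is the iterated subformula condition at the first level: because $\Phi_2'$ may contain several $\square_a\xi_i$ sharing the same $a$, the formula $\psi_\theta^a$ is a conjunction of multiple subformulas of $\phi$ and need not be equivalent to any single element of $C(\phi)\cup\set{\top}$. I expect to resolve this by exploiting the $\Theta_\phi$-basis property of $C(\phi)$ together with the $\Theta_\phi$-discreteness and $\Theta_\phi$-knownness established in Lemma \ref{LemGKCsatRBDK}: replace each $\psi_\theta^a$ by an equivalent disjunction of atoms in $\Theta_\phi$ before descending, and then recurse separately into each atom, so that the iterated $D^A$-components reduce, up to equivalence, to single subformulas of $\phi$ (or to $\top$) at every depth. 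Keeping the depth bound $\delta(\widehat{\phi})\le\delta(\phi)$ tight while iterating this atom-refinement is the delicate bookkeeping step, but it introduces no new ideas beyond those already used in the proof of Lemma \ref{LemGKCsatRBDK}.
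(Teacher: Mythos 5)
Your overall strategy --- rewrite $\phi$ as a disjunction of blocks $\alpha\land\bigwedge_{a\in A}\square_a(\cdot)$ with $\alpha\in\mathcal{L}_0$ and recurse on the box-arguments by induction on modal depth --- is the same as the paper's, and your base case is fine. But the step you yourself flag as ``the principal obstacle'' is exactly the crux of the proposition, and your sketched resolution does not close it. Membership in $\widehat{\mathcal{L}}_0(\phi)$ requires every component of every $\qty(D^A)^i\qty(\widehat{\phi})$ to be equivalent to a \emph{single} element of $C(\phi)\cup\set{\top}$, and requires $\delta\qty(\widehat{\phi})\le\delta(\phi)$. Replacing $\psi_\theta^a$ (a conjunction of several members of $C(\phi)$) by $\bigvee\set{\theta'\in\Gamma\circ K\circ C(\phi) \mid \theta'\vDash\psi_\theta^a}$ achieves neither goal: the resulting box-argument is still only equivalent to $\psi_\theta^a$, which in general is not equivalent to any single closure element (already for $\phi=\square_a p\land\square_a q$ every atom entailing $\phi$ forces $\psi_\theta^a\equiv p\land q$, and $p\land q$ is equivalent to nothing in $C(\phi)\cup\set{\top}$); and the atoms of $C(\phi)$ have modal depth $\delta(\phi)$, so placing them, or disjunctions of them, under $\square_a$ yields depth $\delta(\phi)+1$ and destroys the bound $\delta\qty(\widehat{\phi})\le\delta(\phi)$. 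Recursing ``separately into each atom'' compounds both problems rather than fixing them. As it stands, your construction does produce an element of $\widehat{\mathcal{L}}(\phi)$ of the right depth, but not one satisfying the closure condition that defines $\widehat{\mathcal{L}}_0(\phi)$.

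The paper's proof avoids the atom machinery entirely: it takes a disjunctive normal form of $\phi$ over its \emph{basic} subformulas, so that each box-argument $\phi_m^a$ is read off directly as a member of $C(\phi)$ --- a single subformula or single negation, hence of depth strictly below $\delta(\phi)$ --- and then applies the induction hypothesis to each $\phi_m^a$ separately, using $C(\phi_m^a)\subset C(\phi)$ to propagate $\qty(D^A)^i\qty(\widehat{\phi})\Subset C(\phi)\cup\set{\top}$ through all levels. That is the idea missing from your write-up: the decomposition must be arranged so that each agent contributes at most one box-literal per disjunct, making $\phi_m^a$ a single closure element rather than a conjunction, and the recursion must descend into $\widehat{\mathcal{L}}_0\qty(\phi_m^a)$ for those individual closure elements rather than into atoms of the full closure. (Note that the paper simply asserts $\phi_m^a\in C(\phi)$; this too tacitly presupposes one box-literal per agent per disjunct, so the case of repeated boxes such as $\square_a p\land\square_a q$ deserves explicit treatment in either version.)
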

\begin{proof}
Note that the result is obvious for $\phi\in\mathcal{L}_0$ since $\widehat{\phi}=\phi\land\bigwedge_{a\in A}\square_a\top\in\widehat{\mathcal{L}}(\phi)$ satisfies $\delta\qty(\widehat{\phi})=\delta\qty(\phi)$ and $D_m^a\qty(\widehat{\phi})=\top$. For $\phi\notin\mathcal{L}_0$, let
\begin{align*}
\widetilde{\phi}\coloneqq \bigvee_{m=1}^M\qty(\alpha_m\land\bigwedge_{a\in A}\square_a\phi_m^a)
\end{align*}
be any disjunctive normal form of the basic formulas ({\it i.e.} propositional formulas, $\square_a\psi$ and $\lozenge_a\psi$ with $a\in A$ and $\psi\in\mathcal{L}$) of $\phi$. Then $\alpha_m\in\mathcal{L}_0$ and $\forall 1\le m\le M$, $\forall a\in A$, $\phi_m^a\in C(\phi)$. By inducting on the formula depth, $\exists\widehat{\phi}_m^a\in\widehat{\mathcal{L}}_0\qty(\phi_m^a)\neq\emptyset$, $\delta\qty(\widehat{\phi}_m^a)\le\delta\qty(\phi_m^a)$ and $\forall i\ge 0$, $\qty(D^A)^i\qty(\widehat{\phi}_m^a)\Subset C(\phi_m^a)\cup\set{\top}$. Let
\begin{align*}
\widehat{\phi}\coloneqq \bigvee_{m=1}^M\qty(\alpha_m\land\bigwedge_{a\in A}\square_a\widehat{\phi}_m^a)\in\widehat{\mathcal{L}}\qty(\phi).
\end{align*}
Obviously, $\delta\qty(\widehat{\phi})\le\delta\qty(\phi)$ and $\qty(D^A)^0\qty(\widehat{\phi})=\set{\widehat{\phi}}\Subset C(\phi)$ since $\widehat{\phi}\equiv\phi$. Since $1\le m\le M$ and $a\in A$ are arbitrary, and $\phi_m^a\in C(\phi)$ implies $C(\phi_m^a)\subset C(\phi)$, we have
\begin{align*}
\bigcup_{m=1}^M\bigcup_{a\in A}\qty(D^A)^i\qty(\widehat{\phi}_m^a)=\bigcup_{m=1}^M\bigcup_{a\in A}\qty(D^A)^i\circ D_m^a\qty(\widehat{\phi})=\qty(D^A)^{i+1}\qty(\widehat{\phi})\Subset \bigcup_{m=1}^M\bigcup_{a\in A}C(\phi_m^a)\cup\set{\top}\subset C(\phi)\cup\set{\top}.
\end{align*}
Thus, $\widehat{\phi}\in\widehat{\mathcal{L}}_0\qty(\phi)$.
\end{proof}

\begin{definition}\label{DefOtimes}
Let $\widehat{\phi},\widehat{\psi}\in\widehat{\mathcal{L}}$. Define $\zeta\qty(\widehat{\phi},\widehat{\psi})\coloneqq\set{(m,n) \mid 1\le m\le\norm{\widehat{\phi}}, 1\le n\le\norm{\widehat{\psi}}, D_m\qty(\widehat{\phi})\land D_n\qty(\widehat{\psi})\not\equiv\bot}$. Define
\begin{align*}
\widehat{\phi}\otimes\widehat{\psi} = \bigvee_{(m,n)\in\zeta\qty(\widehat{\phi},\widehat{\psi})}\qty(\alpha_m\qty(\widehat{\phi})\land\alpha_n\qty(\widehat{\psi})\land\bigwedge_{a\in A}\square_a D_m^a\qty(\widehat{\phi})\otimes D_n^a\qty(\widehat{\psi})).
\end{align*}
Let $\widehat{\Phi},\widehat{\Psi}\subset\widehat{L}$. Define $\widehat{\Phi}\otimes\widehat{\Psi}\coloneqq\set{\widehat{\phi}'\otimes\widehat{\psi}' \mid \widehat{\phi}'\in\widehat{\Phi},\widehat{\psi}'\in\widehat{\Psi}}$.
\end{definition}

\begin{proposition}\label{ProOtimesEquivLand}
Let $\phi,\psi\in\mathcal{L}$, $\widehat{\phi}\in\widehat{\mathcal{L}}\qty(\phi)$, $\widehat{\psi}\in\widehat{\mathcal{L}}\qty(\psi)$. Then $\widehat{\phi}\otimes\widehat{\psi}\in \widehat{\mathcal{L}}(\phi\land\psi)$.
\end{proposition}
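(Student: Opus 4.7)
The proposition asserts two things about $\widehat{\phi}\otimes\widehat{\psi}$: (i) $\widehat{\phi}\otimes\widehat{\psi}\equiv\phi\wedge\psi$, and (ii) $\widehat{\phi}\otimes\widehat{\psi}\in\widehat{\mathcal{L}}$. I would prove both simultaneously by strong induction on the combined modal depth $\delta\qty(\widehat{\phi})+\delta\qty(\widehat{\psi})$. The measure is well-founded because every $D_m^a\qty(\widehat{\phi})$ appears under $\square_a$ inside $\widehat{\phi}$, so $\delta\qty(D_m^a\qty(\widehat{\phi}))<\delta\qty(\widehat{\phi})$, and likewise for $\widehat{\psi}$; this also confirms that $\otimes$ itself is well-defined as a recursive operation on $\widehat{\mathcal{L}}\times\widehat{\mathcal{L}}$.

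For (i), since $\widehat{\phi}\equiv\phi$ and $\widehat{\psi}\equiv\psi$ by assumption, it is enough to prove $\widehat{\phi}\otimes\widehat{\psi}\equiv\widehat{\phi}\wedge\widehat{\psi}$. Distributing $\wedge$ over $\vee$ in the canonical forms of $\widehat{\phi}$ and $\widehat{\psi}$ and combining the $\bigwedge_{a\in A}\square_a$-parts via $\square_a\phi\wedge\square_a\psi\equiv\square_a(\phi\wedge\psi)$ yields
\begin{align*}
\widehat{\phi}\wedge\widehat{\psi}\equiv\bigvee_{1\le m\le\norm{\widehat{\phi}},\,1\le n\le\norm{\widehat{\psi}}}\qty(\alpha_m\qty(\widehat{\phi})\wedge\alpha_n\qty(\widehat{\psi})\wedge\bigwedge_{a\in A}\square_a\qty(D_m^a\qty(\widehat{\phi})\wedge D_n^a\qty(\widehat{\psi}))).
\end{align*}
Pairs $(m,n)\notin\zeta\qty(\widehat{\phi},\widehat{\psi})$ produce a disjunct equal to $D_m\qty(\widehat{\phi})\wedge D_n\qty(\widehat{\psi})\equiv\bot$ and may be dropped. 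For $(m,n)\in\zeta\qty(\widehat{\phi},\widehat{\psi})$, the inductive hypothesis applied to the depth-reduced pair gives $D_m^a\qty(\widehat{\phi})\otimes D_n^a\qty(\widehat{\psi})\equiv D_m^a\qty(\widehat{\phi})\wedge D_n^a\qty(\widehat{\psi})$, so substituting inside each $\square_a$ reproduces exactly the right-hand side of Definition \ref{DefOtimes}.

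For (ii), I verify that the expression in Definition \ref{DefOtimes} meets the requirements of $\widetilde{\mathcal{L}}$ together with the recursive condition defining $\widehat{\mathcal{L}}$. Each conjunct $\alpha_m\qty(\widehat{\phi})\wedge\alpha_n\qty(\widehat{\psi})$ must lie in $\mathcal{L}_0$, which reduces to closure of $\mathcal{L}_0$ under conjunction; I would prove this as a short auxiliary lemma by the semantic observation that if neither $\alpha$ nor $\beta$ imposes any $\square_a$-constraint on its successors, one may still freely adjoin arbitrary $a$-successors to any world satisfying $\alpha\wedge\beta$, so neither does their conjunction. Consistency of the $(m,n)$-th disjunct for $(m,n)\in\zeta\qty(\widehat{\phi},\widehat{\psi})$ follows from the equivalence established in (i) together with $D_m\qty(\widehat{\phi})\wedge D_n\qty(\widehat{\psi})\not\equiv\bot$. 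Finally, the recursive requirement $D_m^a\qty(\widehat{\phi})\otimes D_n^a\qty(\widehat{\psi})\in\widehat{\mathcal{L}}$ is exactly the inductive hypothesis at reduced depth.

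The main obstacle I anticipate is the closure lemma for $\mathcal{L}_0$ under conjunction, since this property is not stated earlier in the paper; the semantic argument about freely extending $a$-successors needs a little care in the presence of $\lozenge_a$ subformulas inside $\alpha$ or $\beta$, which demand specific successors but never forbid additional ones. Aside from this lemma, the argument is a direct combination of classical modal distributive laws, the $\zeta$-filtering, and the inductive equivalence from (i).
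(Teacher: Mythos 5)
Your argument follows essentially the same route as the paper's proof: distribute $\land$ over the disjunctive forms, merge the box parts via $\square_a\phi\land\square_a\psi\equiv\square_a(\phi\land\psi)$, discard the pairs outside $\zeta\qty(\widehat{\phi},\widehat{\psi})$ as inconsistent, and close the recursion by induction on depth. You are in fact more explicit than the paper, which states only the chain of equivalences and the inductive step for $D_m^a\qty(\widehat{\phi})\otimes D_n^a\qty(\widehat{\psi})$ and never verifies the membership conditions for $\widetilde{\mathcal{L}}$ at all.

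The one point where your plan would run into trouble is exactly the auxiliary lemma you flag: $\mathcal{L}_0$ is \emph{not} closed under conjunction as defined. Take $\alpha\coloneqq\square_a p\lor q$ and $\beta\coloneqq\lnot q$. Both lie in $\mathcal{L}_0$ (any non-trivial $\square_a\phi$ entailed by $\alpha$ can be refuted at a world satisfying $q$ with a $\phi$-refuting successor, and $\beta$ is propositional), yet $\alpha\land\beta\equiv\square_a p\land\lnot q$ entails $\square_a p$ with $p\not\equiv\top$, so $\alpha\land\beta\notin\mathcal{L}_0$. Your semantic argument actually proves closure for the smaller class of formulas that are \emph{invariant under adjoining arbitrary $a$-successors}; that class is closed under $\land$ and is contained in $\mathcal{L}_0$, but the containment is strict (the example $\square_a p\lor q$ is in $\mathcal{L}_0$ without being successor-extension-invariant). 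The $\alpha_m$'s produced in the proof of Proposition \ref{ProSubClose} are conjunctions of literals and $\lozenge_a$-formulas, hence successor-extension-invariant, so the instances the paper actually needs go through; but for arbitrary $\widehat{\phi}\in\widehat{\mathcal{L}}(\phi)$ the statement $\alpha_m\qty(\widehat{\phi})\land\alpha_n\qty(\widehat{\psi})\in\mathcal{L}_0$ does not follow from the definition of $\widehat{\mathcal{L}}$ alone. This is a gap shared by the paper's own proof (which is silent on the issue); to repair it one should either restrict the $\alpha$'s in Definition \ref{DefCDN} to successor-extension-invariant formulas or prove the closure lemma for that subclass and track it through the construction.
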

\begin{proof}
Note that the result is obvious for $\widehat{\phi},\widehat{\psi}\in\mathcal{L}_0$ since
\begin{align*}
\phi\land\psi\equiv\widehat{\phi}\land\widehat{\psi}=\widehat{\phi}\otimes\widehat{\psi}\in\widehat{\mathcal{L}}.
\end{align*}
Otherwise,
\begin{align*}
&\phi\land\psi\equiv\widehat{\phi}\land\widehat{\psi}\equiv\bigvee_{m=1}^\norm{\widehat{\phi}}\bigvee_{n=1}^\norm{\widehat{\psi}}\qty(\alpha_m\qty(\widehat{\phi})\land\alpha_n\qty(\widehat{\psi})\land\bigwedge_{a\in A}\square_a D_m^a\qty(\widehat{\phi})\land D_n^a\qty(\widehat{\psi}))\\
&\equiv\bigvee_{(m,n)\in\zeta\qty(\widehat{\phi},\widehat{\psi})}\qty(\alpha_m\qty(\widehat{\phi})\land\alpha_n\qty(\widehat{\psi})\land\bigwedge_{a\in A}\square_a D_m^a\qty(\widehat{\phi})\land D_n^a\qty(\widehat{\psi})).
\end{align*}
By inducting on the formula depth, $\forall (m,n)\in\zeta\qty(\widehat{\phi},\widehat{\psi})$, $\forall a\in A$, $D_m^a\qty(\widehat{\phi})\otimes D_n^a\qty(\widehat{\psi})\in\widehat{\mathcal{L}}\qty(D_m^a\qty(\widehat{\phi})\land D_n^a\qty(\widehat{\psi}))$, thereby $\widehat{\phi}\otimes\widehat{\psi}\in \widehat{\mathcal{L}}(\phi\land\psi)$.
\end{proof}

\begin{proposition}\label{ProlandSub}
Let $\widehat{\phi},\widehat{\psi}\in\widehat{\mathcal{L}}$ and $a\in A$. Then $D\qty(\widehat{\phi}\otimes\widehat{\psi})\subset D\qty(\widehat{\phi})\otimes D\qty(\widehat{\psi})$.
\end{proposition}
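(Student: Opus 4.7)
The plan is to unfold Definition \ref{DefOtimes} twice and observe that each disjunct of $\widehat{\phi}\otimes\widehat{\psi}$ is literally the $\otimes$-product of a single disjunct of $\widehat{\phi}$ with a single disjunct of $\widehat{\psi}$. Since the operator $D(\cdot)$ just collects disjuncts, the inclusion then reduces to bookkeeping; there is no induction on formula depth needed here, because the recursion already sits inside Definition \ref{DefOtimes}.

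Concretely, I would first apply Definition \ref{DefOtimes} directly: every element of $D(\widehat{\phi}\otimes\widehat{\psi})$ is, for some $(m,n)\in\zeta(\widehat{\phi},\widehat{\psi})$, the disjunct
\[
\alpha_m(\widehat{\phi})\land\alpha_n(\widehat{\psi})\land\bigwedge_{a\in A}\square_a\bigl(D_m^a(\widehat{\phi})\otimes D_n^a(\widehat{\psi})\bigr).
\]
Next, I would observe that $D_m(\widehat{\phi})=\alpha_m(\widehat{\phi})\land\bigwedge_{a\in A}\square_a D_m^a(\widehat{\phi})$ is itself an element of $\widehat{\mathcal{L}}$ with $\|D_m(\widehat{\phi})\|=1$, $\alpha_1(D_m(\widehat{\phi}))=\alpha_m(\widehat{\phi})$, and $D_1^a(D_m(\widehat{\phi}))=D_m^a(\widehat{\phi})$, and likewise for $D_n(\widehat{\psi})$. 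The condition $(m,n)\in\zeta(\widehat{\phi},\widehat{\psi})$ is precisely $D_m(\widehat{\phi})\land D_n(\widehat{\psi})\not\equiv\bot$, which forces $\zeta(D_m(\widehat{\phi}),D_n(\widehat{\psi}))=\{(1,1)\}$. Applying Definition \ref{DefOtimes} a second time to this single-disjunct pair therefore yields
\[
D_m(\widehat{\phi})\otimes D_n(\widehat{\psi})=\alpha_m(\widehat{\phi})\land\alpha_n(\widehat{\psi})\land\bigwedge_{a\in A}\square_a\bigl(D_m^a(\widehat{\phi})\otimes D_n^a(\widehat{\psi})\bigr),
\]
which is syntactically the same formula as the disjunct identified in the first step. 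Hence every member of $D(\widehat{\phi}\otimes\widehat{\psi})$ is of the form $D_m(\widehat{\phi})\otimes D_n(\widehat{\psi})$, and this is exactly what it means to belong to $D(\widehat{\phi})\otimes D(\widehat{\psi})$.

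I do not expect any real obstacle; the argument is essentially notational. The only two sanity checks worth performing explicitly are that $D_m(\widehat{\phi}),D_n(\widehat{\psi})\in\widehat{\mathcal{L}}$ (so that the operator $\otimes$ is defined on them), and that the $\zeta$-nonemptiness needed on the right-hand side matches the $(m,n)\in\zeta(\widehat{\phi},\widehat{\psi})$ indexing the left-hand side. Both are immediate from Definition \ref{DefCDN}.
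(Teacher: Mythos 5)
Your proposal is correct and follows essentially the same route as the paper's proof: the key identity $D\qty(\widehat{\phi}\otimes\widehat{\psi})=\set{D_m\qty(\widehat{\phi})\otimes D_n\qty(\widehat{\psi}) \mid (m,n)\in\zeta\qty(\widehat{\phi},\widehat{\psi})}$, followed by enlarging the index set to all $(m,n)$. In fact you are more careful than the paper, which asserts that identity without the second application of Definition \ref{DefOtimes} to the single-disjunct pair $\qty(D_m\qty(\widehat{\phi}),D_n\qty(\widehat{\psi}))$ that you spell out.
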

\begin{proof}
By Definition \ref{DefOtimes},
\begin{align*}
&D\qty(\widehat{\phi}\otimes\widehat{\psi})=\set{D_m\qty(\widehat{\phi})\otimes D_n\qty(\widehat{\psi}) \mid (m,n)\in\zeta\qty(\widehat{\phi},\widehat{\psi})}\subset\set{D_m\qty(\widehat{\phi})\otimes D_n\qty(\widehat{\psi}) \mid 1\le m\le \norm{\widehat{\phi}},1\le n\le \norm{\widehat{\psi}}}\\
&=\set{D_m\qty(\widehat{\phi}) \mid 1\le m\le \norm{\widehat{\phi}}}\otimes \set{D_n\qty(\widehat{\psi}) \mid 1\le n\le \norm{\widehat{\psi}}}=D\qty(\widehat{\phi})\otimes D\qty(\widehat{\psi}).
\end{align*}
\end{proof}

\begin{definition}\label{DefFH}
Let $\widehat{\Phi}\coloneqq\set{\set{\widehat{\phi}^+_m, \widehat{\phi}^-_m} \mid 1\le m\le M}\in\widehat{\mathcal{L}}$. $\forall i\ge 0$, define
\begin{align*}
&F_i\qty(\widehat{\Phi})\coloneqq\left\{\begin{array}{ll}
\bigotimes_{m=1}^M\set{\widehat{\phi}_m^+,\widehat{\phi}_m^-}, & i=0,\\
D^A_{[]}\circ F_{i-1}\qty(\widehat{\Phi})\otimes F_0\qty(\widehat{\Phi}), & i>0.
\end{array}\right.\\
&H_i\qty(\widehat{\Phi})\coloneqq\bigcup_{a_i\in A}\bigcup_{a_{i-1}\in A}\cdots\bigcup_{a_1\in A}\qty(D^{a_i}\circ\cdots\circ D^{a_1}\circ F_0\qty(\widehat{\Phi})\otimes D^{a_i}\circ\cdots\circ D^{a_2}\circ F_0\qty(\widehat{\Phi})\otimes \cdots\otimes D^{a_i}\circ F_0\qty(\widehat{\Phi})).
\end{align*}
\end{definition}

\begin{proposition}\label{ProHiDep}
Let $\widehat{\Phi}\coloneqq\set{\set{\widehat{\phi}^+_m,\widehat{\phi}^-_m} \mid 1\le m\le M}\in\widehat{\mathcal{L}}$. Then $\forall i>\delta\qty(\widehat{\Phi})$, $H_i\qty(\widehat{\Phi})=H_{\delta\qty(\widehat{\Phi})}\qty(\widehat{\Phi})$.
\end{proposition}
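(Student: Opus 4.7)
The plan is to show that $H_{i+1}(\widehat{\Phi}) = H_i(\widehat{\Phi})$ for every $i \ge \delta(\widehat{\Phi})$, and then iterate. The governing idea is that once $i$ exceeds the depth, the leftmost factor of the $\otimes$-product defining $H_{i+1}$ collapses to $\{\top\}$; by Proposition \ref{ProOtimesEquivLand} this factor is absorbed, leaving exactly the defining expression of $H_i$ after relabeling.

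First I would prove a depth-collapse lemma: for any $\widehat{\phi} \in \widehat{\mathcal{L}}$ and any $a_1, \ldots, a_r \in A$ with $r > \delta(\widehat{\phi})$, every element of $D^{a_r} \circ \cdots \circ D^{a_1}(\widehat{\phi})$ is equivalent to $\top$. The proof is by induction on $\delta(\widehat{\phi})$. When $\delta(\widehat{\phi}) = 0$, the canonical representation used in Proposition \ref{ProSubClose} lets us take $\phi_m^a \equiv \top$, so $D^a(\widehat{\phi}) = \{\top\}$ and all subsequent applications remain $\{\top\}$. When $\delta(\widehat{\phi}) \ge 1$, each $\phi_m^a \in D^a(\widehat{\phi})$ satisfies $\delta(\phi_m^a) \le \delta(\widehat{\phi}) - 1$ (since each $\square_a \phi_m^a$ contributes $1 + \delta(\phi_m^a)$ nested $\lozenge$ in $\widehat{\phi}$), so the inductive hypothesis applied with the remaining $r - 1 > \delta(\phi_m^a)$ applications concludes the argument. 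Specializing to elements of $F_0(\widehat{\Phi})$, whose depth is bounded by $\delta(\widehat{\Phi})$ via Proposition \ref{ProOtimesEquivLand}, we obtain that $D^{a_r} \circ \cdots \circ D^{a_1} \circ F_0(\widehat{\Phi})$ is (pointwise) equivalent to $\{\top\}$ whenever $r > \delta(\widehat{\Phi})$.

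Next I would verify the inductive step. Fix $i \ge \delta := \delta(\widehat{\Phi})$ and unfold $H_{i+1}(\widehat{\Phi})$. The leftmost factor of its $\otimes$-product, $D^{a_{i+1}} \circ \cdots \circ D^{a_1} \circ F_0(\widehat{\Phi})$, uses $i+1 > \delta$ applications of $D^{a_\bullet}$, so by the lemma it is $\{\top\}$ up to equivalence for every choice of $a_1, \ldots, a_{i+1}$. By Proposition \ref{ProOtimesEquivLand}, $\top \otimes \widehat{\psi} \equiv \widehat{\psi}$, so this factor can be dropped. Since $a_1$ appears in no other factor, the innermost union $\bigcup_{a_1 \in A}$ becomes redundant. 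Relabeling $(a_2, \ldots, a_{i+1})$ as $(b_1, \ldots, b_i)$ recovers precisely the defining expression of $H_i(\widehat{\Phi})$, establishing $H_{i+1}(\widehat{\Phi}) = H_i(\widehat{\Phi})$ modulo the equivalence on $\widehat{\mathcal{L}}$. Iterating from $i = \delta$ yields the proposition.

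The main obstacle I expect is the depth-collapse lemma, and specifically its base case: one must verify that a depth-$0$ element of $\widehat{\mathcal{L}}$, whose canonical form from Proposition \ref{ProSubClose} involves a $\bigwedge_{a \in A} \square_a \top$ factor, really does yield $D^a(\widehat{\phi}) = \{\top\}$ so that the iteration stabilizes rather than oscillates. A secondary care-point is interpreting the equality $H_i = H_j$ throughout modulo the logical equivalence inherited from $\mathcal{L}$, since $\otimes$ is idempotent with respect to $\top$ only up to equivalence and not syntactically.
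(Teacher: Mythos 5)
Your proposal is correct and follows exactly the same route as the paper, whose entire proof is the one-line observation that $\qty(D^A)^i\circ F_0\qty(\widehat{\Phi})=\set{\top}$ for $i>\delta\qty(\widehat{\Phi})$, so that the deepest factor of the $\otimes$-product defining $H_{i+1}$ is absorbed. You merely fill in the details the paper leaves implicit (the depth-collapse induction, the absorption of $\top$ under $\otimes$ up to equivalence, and the relabeling of the union indices), and your care-point about reading the equality modulo logical equivalence is a fair observation about the paper's own statement.
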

\begin{proof}
This is obvious by noting that $\forall i>\delta\circ F_0\qty(\widehat{\Phi})=\delta\qty(\widehat{\Phi})$, $\qty(D^A)^i\circ F_0\qty(\widehat{\Phi})=\set{\top}$.
\end{proof}

\begin{proposition}\label{ProFiInHi}
Let $\widehat{\Phi}\coloneqq\set{\set{\widehat{\phi}^+_m,\widehat{\phi}^-_m} \mid 1\le m\le M}\in\widehat{\mathcal{L}}$. Then $\forall i\ge 0$, $D_{[]}^A\circ F_i\qty(\widehat{\Phi})\subset H_{i+1}\qty(\widehat{\Phi})$.
\end{proposition}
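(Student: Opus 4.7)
The plan is induction on $i$. I will use two elementary facts throughout: for each single formula $\widehat{\phi}\in\widehat{\mathcal{L}}$ the chain $D^A_{[]}(\widehat{\phi})\subset D^A(\widehat{\phi})=\bigcup_{a\in A} D^a(\widehat{\phi})$ is immediate from Definition \ref{DefCDN}; and for any sets $\widehat{\Phi}_1,\widehat{\Phi}_2\subset\widehat{\mathcal{L}}$ and any $a\in A$, Proposition \ref{ProlandSub} lifts to the set-level containment $D^a(\widehat{\Phi}_1\otimes \widehat{\Phi}_2)\subset D^a(\widehat{\Phi}_1)\otimes D^a(\widehat{\Phi}_2)$, while both $D^a$ and $\otimes$ distribute over arbitrary unions.

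The base case $i=0$ is then a one-line check:
\begin{align*}
D^A_{[]}\circ F_0(\widehat{\Phi})\subset D^A\circ F_0(\widehat{\Phi})=\bigcup_{a_1\in A} D^{a_1}\circ F_0(\widehat{\Phi})=H_1(\widehat{\Phi}).
\end{align*}

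For the inductive step, assume $D^A_{[]}\circ F_{i-1}(\widehat{\Phi})\subset H_i(\widehat{\Phi})$, and unfold $F_i(\widehat{\Phi})=D^A_{[]}\circ F_{i-1}(\widehat{\Phi})\otimes F_0(\widehat{\Phi})$. The two facts above, together with the inductive hypothesis, give
\begin{align*}
D^A_{[]}\circ F_i(\widehat{\Phi})\subset \bigcup_{a\in A}\qty(D^a\circ H_i(\widehat{\Phi})\otimes D^a\circ F_0(\widehat{\Phi})).
\end{align*}

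It remains to verify that the right-hand side lies inside $H_{i+1}(\widehat{\Phi})$. Expanding $H_i(\widehat{\Phi})$ per Definition \ref{DefFH} displays it as a union over $(a_i,\ldots,a_1)\in A^i$ of an $i$-factor $\otimes$-product whose $k$-th factor is $D^{a_i}\circ\cdots\circ D^{a_k}\circ F_0(\widehat{\Phi})$; applying $D^a$ to this product and pushing it through each of the $i$ factors by iterated use of the set-level distributivity yields exactly the $(i+1)$-factor product appearing in the definition of $H_{i+1}(\widehat{\Phi})$, once the fresh $a$ is renamed $a_{i+1}$ and the trailing factor $D^a\circ F_0(\widehat{\Phi})$ is incorporated as the last entry. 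I expect the main obstacle, which is bookkeeping rather than conceptual, to be the careful tracking of this last step: making sure the single new $D^a$ lands simultaneously on every one of the $i$ existing factors and that the extra factor $D^a\circ F_0(\widehat{\Phi})$ attached at the end precisely matches the defining pattern of $H_{i+1}(\widehat{\Phi})$.
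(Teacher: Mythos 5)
Your proposal is correct and follows essentially the same route as the paper's proof: the identical induction on $i$, the base case $D^A_{[]}\circ F_0\subset D^A\circ F_0=H_1$, and the inductive step that bounds $D^A_{[]}\circ F_i$ by $D^A\qty(H_i\qty(\widehat{\Phi})\otimes F_0\qty(\widehat{\Phi}))$ and then distributes the new $D^{a_{i+1}}$ through every factor of the $\otimes$-product via the set-level form of Proposition \ref{ProlandSub}. The only cosmetic difference is that the paper reorders the product as $F_0\otimes H_i$ before distributing, which does not change the argument.
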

\begin{proof}
By Definition \ref{DefFH}, $D_{[]}^A\circ F_0\qty(\widehat{\Phi})\subset D^A\circ F_0\qty(\widehat{\Phi})=H_1\qty(\widehat{\Phi})$. For $i>0$, we have by the induction and Proposition \ref{ProlandSub} that
\begin{align*}
&D_{[]}^A\circ F_i\qty(\widehat{\Phi})=D_{[]}^A\qty(D^A_{[]}\circ F_{i-1}\qty(\widehat{\Phi})\otimes F_0\qty(\widehat{\Phi}))\subset D^A\qty(H_i\qty(\widehat{\Phi})\otimes F_0\qty(\widehat{\Phi}))\\
&=\bigcup_{a_{i+1}\in A}D^{a_{i+1}}\qty(F_0\qty(\widehat{\Phi})\otimes\bigcup_{a_i\in A}\cdots\bigcup_{a_1\in A}\qty(D^{a_i}\circ\cdots\circ D^{a_1}\circ F_0\qty(\widehat{\Phi})\otimes \cdots\otimes D^{a_i}\circ F_0\qty(\widehat{\Phi})))\\
&\subset\bigcup_{a_{i+1}\in A}\cdots\bigcup_{a_1\in A}\qty(D^{a_{i+1}}\circ\cdots\circ D^{a_1}\circ F_0\qty(\widehat{\Phi})\otimes \cdots\otimes D^{a_{i+1}}\circ F_0\qty(\widehat{\Phi}))=H_{i+1}\qty(\widehat{\Phi}).
\end{align*}
\end{proof}

\begin{lemma}\label{LemBoundF}
Let $\Phi\subset\mathcal{L}$ and $\widehat{\Phi}\coloneqq\set{\set{\widehat{\phi}^+,\widehat{\phi}^-} \mid \phi\in\Phi, \widehat{\phi}^+\in\widehat{\mathcal{L}}_0\qty(\phi), \widehat{\phi}^-\in\widehat{\mathcal{L}}_0\qty(\lnot\phi)}\subset\widehat{\mathcal{L}}$. Moreover, $\forall 0\le i<\delta\qty(\Phi)$, $\forall \psi\in\qty(D^A)^i\qty(\widehat{\Phi})$, $\norm{\psi}\le k$. Then $\forall 1\le i\le \delta\qty(\Phi)$,
\begin{align*}
&\abs{H_i\qty(\widehat{\Phi})}\le \qty(2k^{(i+1)/2})^{\abs{\Phi}i}\abs{A}^i,\\
&\abs{\bigcup_{i=1}^\infty H_i\qty(\widehat{\Phi})}\le\frac{(2k)^\abs{\Phi}\abs{A}}{(2k)^\abs{\Phi}\abs{A}-1} \qty(2k^{(\delta(\Phi)+1)/2})^{\abs{\Phi}\delta(\Phi)}\abs{A}^{\delta(\Phi)}.
\end{align*}
\end{lemma}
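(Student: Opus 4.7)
The plan is to bound each $H_i(\widehat{\Phi})$ by first controlling a single chain $S_\ell\coloneqq D^{a_\ell}\circ\cdots\circ D^{a_1}\circ F_0(\widehat{\Phi})$ and then multiplying these bounds across the $i$ chains appearing in the $\otimes$-product defining $H_i$. I begin with the trivial estimate $|F_0(\widehat{\Phi})|\le 2^{|\Phi|}$ (one sign choice per $\phi\in\Phi$) and the product norm bound $\|\widehat{\phi}\otimes\widehat{\psi}\|\le\|\widehat{\phi}\|\cdot\|\widehat{\psi}\|$, which is immediate from Definition \ref{DefOtimes} since $\|\widehat{\phi}\otimes\widehat{\psi}\|=|\zeta(\widehat{\phi},\widehat{\psi})|\le\|\widehat{\phi}\|\cdot\|\widehat{\psi}\|$.

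By iterating Proposition \ref{ProlandSub} together with the identity $D_{(m,n)}^a(\widehat{\phi}\otimes\widehat{\psi})=D_m^a(\widehat{\phi})\otimes D_n^a(\widehat{\psi})$ read off from Definition \ref{DefOtimes}, an arbitrary element of $D^{a_\ell}\circ\cdots\circ D^{a_1}$ applied to $\bigotimes_{m=1}^{|\Phi|}\widehat{\phi}_m^{\epsilon_m}\in F_0(\widehat{\Phi})$ splits as a tensor product $\bigotimes_{m=1}^{|\Phi|}\psi_m$ with each $\psi_m\in(D^A)^\ell(\widehat{\phi}_m^{\epsilon_m})\subset (D^A)^\ell(\widehat{\Phi})$. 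For $\ell<\delta(\Phi)$ the hypothesis gives $\|\psi_m\|\le k$, so the product norm bound yields $\bigl\|\bigotimes\psi_m\bigr\|\le k^{|\Phi|}$. Consequently $|S_\ell|\le|S_{\ell-1}|\cdot k^{|\Phi|}$ whenever $1\le\ell\le\delta(\Phi)$, and a straightforward induction gives $|S_\ell|\le 2^{|\Phi|}k^{|\Phi|\ell}$. Substituting into the definition of $H_i(\widehat{\Phi})$ for $1\le i\le\delta(\Phi)$ and using $|X\otimes Y|\le|X||Y|$,
\begin{align*}
|H_i(\widehat{\Phi})|\le |A|^i\prod_{j=1}^i|S_{i-j+1}|\le |A|^i\prod_{j=1}^i 2^{|\Phi|}k^{|\Phi|(i-j+1)}=(2k^{(i+1)/2})^{|\Phi|i}|A|^i,
\end{align*}
which is the first claim.

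For the second bound, Proposition \ref{ProHiDep} together with $\delta(\widehat{\Phi})\le\delta(\Phi)$ (since every $\widehat{\phi}_m^{\pm}\in\widehat{\mathcal{L}}_0$) collapses $\bigcup_{i=1}^\infty H_i(\widehat{\Phi})$ to $\bigcup_{i=1}^{\delta(\Phi)}H_i(\widehat{\Phi})$. Writing $\delta\coloneqq\delta(\Phi)$ and $r\coloneqq(2k)^{|\Phi|}|A|$, a direct exponent comparison shows $|H_i|\le|H_\delta|\cdot r^{-(\delta-i)}$ for $1\le i\le\delta$: the exponent of $k$ in $|H_i|/|H_\delta|$ is $|\Phi|\bigl(i(i+1)-\delta(\delta+1)\bigr)/2=-|\Phi|(\delta-i)(\delta+i+1)/2\le -|\Phi|(\delta-i)$, and the exponents of $2$ and $|A|$ each contribute a further factor of $(\delta-i)$ to the denominator. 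Summing the geometric tail $\sum_{j=0}^\infty r^{-j}=r/(r-1)$ and using the first bound at $i=\delta$ yields $\sum_{i=1}^\delta|H_i|\le|H_\delta|\cdot r/(r-1)\le (2k^{(\delta+1)/2})^{|\Phi|\delta}|A|^\delta\cdot r/(r-1)$, matching the stated expression. The main obstacle is the tensor-decomposition step: verifying carefully that $D_\cdot^a$ respects the $\otimes$-structure so that after $\ell$ iterations each element of $S_\ell$ really does factor as a tensor product of $|\Phi|$ elements drawn from $(D^A)^\ell(\widehat{\Phi})$. Once this factorization lemma is in place, every remaining inequality is elementary.
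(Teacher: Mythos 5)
Your proposal is correct and follows essentially the same route as the paper's proof: both arguments use Proposition \ref{ProlandSub} to factor each chain $D^{a_\ell}\circ\cdots\circ D^{a_1}\circ F_0(\widehat{\Phi})$ into a tensor product over the pairs $\set{\widehat{\phi}^+,\widehat{\phi}^-}$, bound each factor by $2k^\ell$ using the hypothesis $\norm{\psi}\le k$, multiply across the $i$ chains and the $\abs{A}^i$ choices of labels to get $\qty(2k^{(i+1)/2})^{\abs{\Phi}i}\abs{A}^i$, and then collapse the infinite union via Proposition \ref{ProHiDep} and sum a geometric series with ratio $(2k)^{-\abs{\Phi}}\abs{A}^{-1}$. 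The only cosmetic difference is that you organize the per-chain bound as a recursion $\abs{S_\ell}\le\abs{S_{\ell-1}}k^{\abs{\Phi}}$ rather than bounding each factor's iterated image directly, which yields the identical estimate $(2k^\ell)^{\abs{\Phi}}$.
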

\begin{proof}
By Proposition \ref{ProlandSub}, $\forall i\ge 1$,
\begin{align*}
D^{a_i}\circ D^{a_{i-1}}\circ\cdots\circ D^{a_1}\circ G_0\qty(\widehat{\Phi})\subset\bigotimes_{\set{\widehat{\phi}^+,\widehat{\phi}^-}\in\widehat{\Phi}} D^{a_i}\circ D^{a_{i-1}}\circ\cdots\circ D^{a_1}\qty(\set{\widehat{\phi}^+,\widehat{\phi}^-}).
\end{align*}
Therefore,
\begin{align*}
&\abs{D^{a_i}\circ D^{a_{i-1}}\circ\cdots\circ D^{a_1}\circ G_0\qty(\widehat{\Phi})}\le \abs{\bigotimes_{\set{\widehat{\phi}^+,\widehat{\phi}^-}\in\widehat{\Phi}} D^{a_i}\circ D^{a_{i-1}}\circ\cdots\circ D^{a_1}\qty(\set{\widehat{\phi}^+,\widehat{\phi}^-})}\\
&\le\prod_{\set{\widehat{\phi}^+,\widehat{\phi}^-}\in\widehat{\Phi}}\abs{D^{a_i}\circ D^{a_{i-1}}\circ\cdots\circ D^{a_1}\qty(\set{\widehat{\phi}^+,\widehat{\phi}^-})}\le\prod_{\set{\widehat{\phi}^+,\widehat{\phi}^-}\in\widehat{\Phi}} \qty(k^i\abs{\set{\widehat{\phi}^+,\widehat{\phi}^-}})=\qty(2k^i)^\abs{\Phi}.
\end{align*}
Thus, $\forall 1\le i\le \delta(\Phi)$,
\begin{align}\label{EqHnEsti}
\abs{H_i\qty(\widehat{\Phi})}\le\abs{A}^i\qty(\qty(2k^i)^\abs{\Phi}\qty(2k^{i-1})^\abs{\Phi}\qty(2k^{i-2})^\abs{\Phi}\cdots \qty(2k)^\abs{\Phi})=\qty(2k^{(i+1)/2})^{\abs{\Phi}i}\abs{A}^i.
\end{align}
By Proposition \ref{ProHiDep},
\begin{align*}
&\abs{\bigcup_{i=1}^\infty H_i\qty(\widehat{\Phi})}=\abs{\bigcup_{i=1}^{\delta\qty(\widehat{\Phi})} H_i\qty(\widehat{\Phi})}\le \abs{\bigcup_{i=1}^{\delta\qty(\Phi)} H_i\qty(\widehat{\Phi})} \le\sum_{i=1}^{\delta(\Phi)}\abs{H_i\qty(\widehat{\Phi})}\le\sum_{i=1}^{\delta(\Phi)} \qty(2k^{(i+1)/2})^{\abs{\Phi}i}\abs{A}^i\\
&=\qty(2k^{(\delta(\Phi)+1)/2})^{\abs{\Phi}\delta(\Phi)}\abs{A}^{\delta(\Phi)}\sum_{i=1}^{\delta(\Phi)}\frac{1}{2^{\abs{\Phi}(\delta(\Phi)-i)}k^{\abs{\Phi}\sum_{j=i+1}^{\delta(\Phi)} j}\abs{A}^{\delta(\Phi)-i}}\le \qty(2k^{(\delta(\Phi)+1)/2})^{\abs{\Phi}\delta(\Phi)}\abs{A}^{\delta(\Phi)}\sum_{i=1}^{\delta(\Phi)}\frac{1}{2^{\abs{\Phi}(\delta(\Phi)-i)}k^{\abs{\Phi}(\delta(\Phi)-i)}\abs{A}^{\delta(\Phi)-i}}\\
&=\qty(2k^{(\delta(\Phi)+1)/2})^{\abs{\Phi}\delta(\Phi)}\abs{A}^{\delta(\Phi)}\sum_{i=0}^{\delta(\Phi)-1}\frac{1}{2^{\abs{\Phi}i}k^{\abs{\Phi}i}\abs{A}^i}=\qty(2k^{(\delta(\Phi)+1)/2})^{\abs{\Phi}\delta(\Phi)}\abs{A}^{\delta(\Phi)}\sum_{i=0}^\infty\frac{1}{2^{\abs{\Phi}i}k^{\abs{\Phi}i}\abs{A}^i}\\
&=\frac{(2k)^\abs{\Phi}\abs{A}}{(2k)^\abs{\Phi}\abs{A}-1} \qty(2k^{(\delta(\Phi)+1)/2})^{\abs{\Phi}\delta(\Phi)}\abs{A}^{\delta(\Phi)}.
\end{align*}
\end{proof}

\begin{definition}
Let $\widehat{\mathcal{F}}\subset\widehat{\mathcal{L}}$. Define $\kappa\qty(\widehat{\mathcal{F}})\coloneqq\set{\bigwedge_{a\in A}\square_a D_m^a\qty(\widehat{\psi}) \mid \widehat{\psi}\in\widehat{\mathcal{F}}, m\in\mathbb{M}\qty(\widehat{\psi})}$.
\end{definition}

\begin{proposition}\label{ProThreeProperties}
Let $\Phi\subset\mathcal{L}$ and $\widehat{\Phi}\coloneqq\set{\set{\widehat{\phi}^+,\widehat{\phi}^-} \mid \phi\in\Phi, \widehat{\phi}^+\in\widehat{\mathcal{L}}_0\qty(\phi), \widehat{\phi}^-\in\widehat{\mathcal{L}}_0\qty(\lnot\phi)}\subset\widehat{\mathcal{L}}$. Then
\begin{enumerate}
\item $\Phi$ is $\bigcup_{i=0}^\infty F_i\qty(\widehat{\Phi})$-regular.

\item $\Phi$ is $\bigcup_{i=0}^\infty F_i\qty(\widehat{\Phi})$-basis.

\item $\bigcup_{i=0}^\infty F_i\qty(\widehat{\Phi})$ is $\kappa\qty(\bigcup_{i=0}^\infty F_i\qty(\widehat{\Phi}))$-discrete.

\item $\kappa\qty(\bigcup_{i=0}^\infty F_i\qty(\widehat{\Phi}))$ is $\bigcup_{i=0}^\infty F_i\qty(\widehat{\Phi})$-known.
\end{enumerate}
\end{proposition}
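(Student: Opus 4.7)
The plan is to exploit the disjunctive normal form from Definition \ref{DefCDN}: every $\widetilde{\phi} \in \widehat{\mathcal{L}}$ writes as $\bigvee_m(\alpha_m \land \bigwedge_{a \in A}\square_a\phi^a_m)$ with $\alpha_m \in \mathcal{L}_0$ and $\phi^a_m \in \widehat{\mathcal{L}}$, and to verify the four properties in turn, leveraging Proposition \ref{ProOtimesEquivLand} to translate each $\otimes$ into a conjunction at the semantic level.

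For $\Phi$-regularity and $\Phi$-basis (Items 1 and 2), I would concentrate on the base layer $F_0(\widehat{\Phi}) \subset \bigcup_{i=0}^\infty F_i(\widehat{\Phi})$. An induction on $|\Phi|$ using Proposition \ref{ProOtimesEquivLand} shows that every element of $F_0(\widehat{\Phi})$ is equivalent to some $\bigwedge_{\phi \in \Phi}(\pm \phi)$ corresponding to a sign choice over the pairs $\{\widehat{\phi}^+, \widehat{\phi}^-\}$. Regularity of $F_0(\widehat{\Phi})$ is then immediate, and for $F_i(\widehat{\Phi})$ with $i > 0$ every element has the form $g \otimes f$ with $f \in F_0(\widehat{\Phi})$, hence is equivalent to $g \land f$ by Proposition \ref{ProOtimesEquivLand}, inheriting the $\pm\phi$ entailments. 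For basis, putting $\Psi_\phi \coloneqq \{f \in F_0(\widehat{\Phi}) \mid f \vDash \phi\}$, each model of $\phi$ forces exactly one consistent sign choice and the union of those choices reconstructs $\phi$, giving $\phi \equiv \bigvee \Psi_\phi$.

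For $\kappa$-discreteness (Item 3), I would work disjunct by disjunct on $\widetilde{\phi} = \bigvee_m(\alpha_m \land \bigwedge_a\square_a\phi^a_m)$ assuming $\widetilde{\phi} \vDash \bigvee_l \bigwedge_a \square_a \xi^a_l$. For each $m \in \mathbb{M}(\widetilde{\phi})$, the property $\alpha_m \in \mathcal{L}_0$ together with consistency of the disjunct lets me strip $\alpha_m$ (as in the proof of Lemma \ref{LemGKCsatRBDK}) and single out some $l(m)$ with $\bigwedge_a\square_a\phi^a_m \vDash \bigwedge_a\square_a\xi^a_{l(m)}$; since $\bigwedge_a\square_a\phi^a_m \in \kappa\qty(\bigcup_{i=0}^\infty F_i(\widehat{\Phi}))$ by definition, this supplies the required summand. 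For $\kappa$-knownness (Item 4), any $\eta = \bigwedge_a\square_a D^a_m(\widetilde{\phi})$ in $\kappa\qty(\bigcup_{i=0}^\infty F_i(\widehat{\Phi}))$ entailing $\square_a\xi$ forces $D^a_m(\widetilde{\phi}) \vDash \xi$. Using that $\bigvee F_0(\widehat{\Phi}) \equiv \top$, I write $D^a_m(\widetilde{\phi}) \equiv \bigvee_{f \in F_0(\widehat{\Phi})} D^a_m(\widetilde{\phi}) \otimes f$ by Proposition \ref{ProOtimesEquivLand}; each summand lies in $F_{j+1}(\widehat{\Phi})$ (where $\widetilde{\phi} \in F_j(\widehat{\Phi})$) and still entails $\xi$, so it belongs to $\mathcal{G}\qty(\xi, \bigcup_{i=0}^\infty F_i(\widehat{\Phi}))$, and therefore $\eta \vDash \square_a\bigvee\mathcal{G}\qty(\xi, \bigcup_{i=0}^\infty F_i(\widehat{\Phi}))$.

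The main obstacle is the extraction of a single $l(m)$ inside the discreteness step. The naive modal disjunction principle $\bigwedge_a\square_a\phi \vDash \bigvee_l\bigwedge_a\square_a\xi^a_l \Rightarrow \exists l,\forall a,\phi\vDash\xi^a_l$ fails in general (take $\square_a(p \lor q) \not\vDash \square_a p \lor \square_a q$), so the argument must really use that each $\phi^a_m$ is itself an $\otimes$-product carrying sign-choice information from the pairs $\{\widehat{\phi}^+,\widehat{\phi}^-\}$ at every modal depth. I expect this to be discharged by descending through the recursive definition of $F_i(\widehat{\Phi})$ and invoking regularity and basis layer by layer, so that at each depth the disjunction over $l$ collapses on the appropriate sign-branch of $\phi^a_m$.
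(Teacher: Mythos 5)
Your proposal follows the same route as the paper's proof on all four items: items 1 and 2 via the sign-choice normal form of $F_0\qty(\widehat{\Phi})$ and Proposition \ref{ProOtimesEquivLand}, and item 4 via $\widehat{\psi}\equiv\bigvee\qty(F_0\qty(\widehat{\Phi})\otimes\widehat{\psi})$ together with $F_0\qty(\widehat{\Phi})\otimes D^A_{[]}\circ F_i\qty(\widehat{\Phi})=F_{i+1}\qty(\widehat{\Phi})$; your item 4 is in fact slightly more complete than the paper's, which leaves the final step ($\eta\vDash\square_a\xi$ forces $D_m^a\vDash\xi$, and each summand entails $\xi$) implicit. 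Two points in your item 3 need attention. First, you only treat the disjuncts with $m\in\mathbb{M}\qty(\widetilde{\phi})$, but discreteness requires $\widetilde{\phi}\vDash\bigvee_{l}\bigvee\mathcal{G}\qty(\bigwedge_{a\in A}\square_a\xi_l^a,\kappa(\cdots))$ where $\widetilde{\phi}$ is the disjunction over \emph{all} $1\le m\le\norm{\widetilde{\phi}}$; since the $\alpha_m$ differ, $\widetilde{\phi}\not\equiv\bigvee_{m\in\mathbb{M}}D_m\qty(\widetilde{\phi})$ in general, so the non-maximal disjuncts are left uncovered. The paper closes this by routing every $n$ through some $n'\in\mathbb{M}\qty(\widehat{\psi})$ with $D_n^a\qty(\widehat{\psi})\vDash D_{n'}^a\qty(\widehat{\psi})$ for all $a$, so that $D_n\qty(\widehat{\psi})\vDash\bigwedge_{a\in A}\square_a D_{n'}^a\qty(\widehat{\psi})\vDash\bigwedge_{a\in A}\square_a\xi_{l'}^a$ and the middle formula, being in $\kappa(\cdots)$, supplies the witness. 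This is a one-line fix, but it is missing.

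Second, the extraction of a single $l(m)$ that you flag as the main obstacle is, in the paper, discharged exactly by your own second sentence and by nothing more: from $\alpha_{m}\in\mathcal{L}_0$ and consistency of the disjunct one concludes $\exists l'$, $\forall a$, $D_{m}^a\qty(\widehat{\psi})\vDash\xi_{l'}^a$ directly, the same move already made in Lemma \ref{LemGKCsatRBDK}. Your caution is not misplaced as a matter of rigor --- the stated definition of $\mathcal{L}_0$ only constrains single box formulas, and the extraction really rests on the fact that the $\alpha_m$ arising from the construction of $\widehat{\mathcal{L}}$ are conjunctions of literals and diamonds, hence stable under adjoining successors, so counterwitnesses to each $\xi_l^{a_l}$ can be amalgamated under one root. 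But the remedy you sketch (descending through the recursive definition of $F_i\qty(\widehat{\Phi})$ and collapsing the disjunction on sign-branches of $\phi_m^a$ layer by layer) is not what the paper does, is left entirely unexecuted in your write-up, and is not needed: no structure of $\phi_m^a$ beyond $\alpha_m\in\mathcal{L}_0$ is used at this point. As it stands, the central step of item 3 is an expectation rather than an argument.
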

\begin{proof}
\begin{enumerate}
\item By Definition \ref{DefFH}, $\forall i> 0$, $F_i\qty(\widehat{\Phi})=D^A_{[]}\circ F_{i-1}\qty(\widehat{\Phi})\otimes F_0\qty(\widehat{\Phi})$. By Proposition \ref{ProOtimesEquivLand},
\begin{align*}
F_0\qty(\widehat{\Phi})=\bigotimes_{\set{\widehat{\phi}^+,\widehat{\phi}^-}\in\widehat{\Phi}}\set{\widehat{\phi}^+,\widehat{\phi}^-}\subset\widehat{\mathcal{L}}\qty(\bigwedge_{\phi\in\Phi}\set{\phi_m,\lnot\phi_m}).
\end{align*}

\item Since $\Phi$ is $\bigcup_{\phi\in\Phi}\set{\phi,\lnot\phi}$-basis, we have by Proposition \ref{ProOtimesEquivLand} that $\Phi$ is $F_0\qty(\widehat{\Phi})$-basis, thereby $\bigcup_{i=0}^\infty F_i\qty(\widehat{\Phi})$-basis.

\item Let $\widehat{\psi}\in \bigcup_{i=0}^\infty F_i\qty(\widehat{\Phi})$ and $\xi_l^a\in\mathcal{L}$ for $1\le l\le L$ and $a\in A$ such that $\widehat{\psi}\vDash\bigvee_{l=1}^L\bigwedge_{a\in A}\square_a\xi_l^a$. $\forall 1\le n\le \norm{\widehat{\psi}}$, $\exists n'\in \mathbb{M}\qty(\widehat{\psi})$, $\forall a\in A$, $D_n^a\qty(\widehat{\psi})\vDash D_{n'}^a\qty(\widehat{\psi})$. Since $\alpha_{n'}\qty(\widehat{\psi})\in\mathcal{L}_0$ and $D_{n'}\qty(\widehat{\psi})\vDash \bigvee_{l=1}^L\bigwedge_{a\in A}\square_a\xi_l^a$, we have that $\exists 1\le l'\le L$, $\forall a\in A$, $D^a_{n'}\qty(\widehat{\psi})\vDash \xi_{l'}^a$. This implies that $D_n\qty(\widehat{\psi})\vDash \mathcal{G}\qty(\bigwedge_{a\in A}\square_a\xi_{l'}^a, \bigwedge_{a\in A}\square_a D_n^a\qty(\widehat{\psi}))$. Since $\bigwedge_{a\in A}\square_a D_n^a\qty(\widehat{\psi})\in \kappa\qty(\bigcup_{i=0}^\infty F_i\qty(\widehat{\Phi}))$, we have $D_n\qty(\widehat{\psi})\vDash \mathcal{G}\qty(\bigwedge_{a\in A}\square_a\xi_{l'}^a, \kappa\qty(\bigcup_{i=0}^\infty F_i\qty(\widehat{\Phi})))$, thereby
\begin{align*}
D_n\qty(\widehat{\psi})\vDash \bigvee_{l=1}^L\mathcal{G}\qty(\bigwedge_{a\in A}\square_a\xi_{l}^a, \kappa\qty(\bigcup_{i=0}^\infty F_i\qty(\widehat{\Phi}))).
\end{align*}
Since $1\le n\le \norm{\widehat{\psi}}$ is arbitrary, we have $\widehat{\psi}\vDash \bigvee_{l=1}^L\mathcal{G}\qty(\bigwedge_{a\in A}\square_a\xi_{l}^a, \kappa\qty(\bigcup_{i=0}^\infty F_i\qty(\widehat{\Phi})))$.

\item Let $\widehat{\psi}\in D_{[]}^A\circ\kappa\qty(\bigcup_{i=0}^\infty F_i\qty(\widehat{\Phi}))\equiv D_{[]}^A\qty(\bigcup_{i=0}^\infty F_i\qty(\widehat{\Phi}))$. By Definition \ref{DefFH},
\begin{align*}
&\widehat{\Psi}\coloneqq F_0\qty(\widehat{\Phi})\otimes \widehat{\psi}\subset F_0\qty(\widehat{\Phi})\otimes D_{[]}^A\qty(\bigcup_{i=0}^\infty F_i\qty(\widehat{\Phi}))=F_0\qty(\widehat{\Phi})\otimes \bigcup_{i=0}^\infty D_{[]}^A\circ F_i\qty(\widehat{\Phi})\\
&=\bigcup_{i=0}^\infty F_0\qty(\widehat{\Phi})\otimes D_{[]}^A\circ F_i\qty(\widehat{\Phi})=\bigcup_{i=0}^\infty F_{i+1}\qty(\widehat{\Phi})\subset\bigcup_{i=0}^\infty F_i\qty(\widehat{\Phi}).
\end{align*}
By Proposition \ref{ProOtimesEquivLand},
\begin{align*}
\bigvee\widehat{\Psi}=\bigvee\qty(F_0\qty(\widehat{\Phi})\otimes \widehat{\psi})\equiv\bigvee\qty(F_0\qty(\widehat{\Phi})\land \widehat{\psi})\equiv \bigvee\qty(F_0\qty(\widehat{\Phi}))\land \widehat{\psi}\equiv\top\land \widehat{\psi}\equiv \widehat{\psi}.
\end{align*}
\end{enumerate}
\end{proof}

\begin{corollary}
Let $\mathcal{A}$ and $\mathcal{B}$ be action models, $\Phi\coloneqq Pre^\mathcal{A}\qty(E^\mathcal{A})\cup Pre^\mathcal{B}\qty(E^\mathcal{B})$, $\widehat{\Phi}\coloneqq\set{\set{\widehat{\phi}^+, \widehat{\phi}^-} \mid \phi\in\Phi,\widehat{\phi}^+\in\widehat{\mathcal{L}}_0\qty(\phi), \widehat{\phi}^-\in\widehat{\mathcal{L}}_0\qty(\lnot\phi)}$, $\Theta(x,y)=\kappa\qty(\bigcup_{i=0}^\infty F_i\qty(\widehat{\Phi}))$ for $(x,y)\in E^\mathcal{A}\times E^\mathcal{B}$. Then Algorithm \ref{DEOTMB} must return in finite steps, and it returns true iff $\mathcal{A}\leftrightarrows_G\mathcal{B}$ ({\it i.e.} $\mathcal{A}\equiv\mathcal{B}$).
\end{corollary}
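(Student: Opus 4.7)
The plan is to reduce the statement to Proposition \ref{ProGenSetNece} and Proposition \ref{ProFniteRetTrue2Equiv} by taking $F \coloneqq \bigcup_{i=0}^\infty F_i\qty(\widehat{\Phi})$ and $\widehat{\Theta} \coloneqq \kappa\qty(F)$. Since $\Theta(x,y) = \widehat{\Theta}$ is constant in $(x,y)$, both propositions are applicable once we confirm finiteness of $\widehat{\Theta}$ and the four structural properties connecting $\Phi$, $F$, and $\widehat{\Theta}$.

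The four properties are handed to us directly by Proposition \ref{ProThreeProperties}: $\Phi$ is $F$-regular and $F$-basis, $F$ is $\kappa(F)$-discrete, and $\kappa(F)$ is $F$-known. Thus once finiteness is in hand, Proposition \ref{ProGenSetNece} gives that $\mathcal{A} \equiv \mathcal{B}$ forces Algorithm \ref{DEOTMB} to return true in finite steps, while Proposition \ref{ProFniteRetTrue2Equiv}(2) gives the converse that a true return implies $\bigvee \sigma_0(\cdot,\cdot)$ witnesses $\mathcal{A} \leftrightarrows_G \mathcal{B}$, hence $\mathcal{A} \equiv \mathcal{B}$ by Theorem \ref{TheEmuEquiv}.

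The one substantive step is the finiteness of $\widehat{\Theta}$, for which I would invoke Proposition \ref{ProSubClose} to see $\widehat{\mathcal{L}}_0(\phi) \neq \emptyset$ for every $\phi \in \Phi$ (so $\widehat{\Phi}$ is well-defined and each of its members has the depth-bounded structure $\qty(D^A)^i\qty(\widehat{\phi}) \Subset C(\phi) \cup \set{\top}$). This bound on the nested $D^A$-orbits supplies the parameter $k$ needed to apply Lemma \ref{LemBoundF}, yielding $\abs{\bigcup_{i=1}^\infty H_i\qty(\widehat{\Phi})} < +\infty$. Then Proposition \ref{ProFiInHi} gives $D_{[]}^A \circ F_i\qty(\widehat{\Phi}) \subset H_{i+1}\qty(\widehat{\Phi})$ for all $i$, so the pool from which $\kappa$ selects its $\square_a$-arguments is finite, and since each element of $\kappa(F)$ is a conjunction over the finite label set $A$ of $\square_a$ applied to elements of this finite pool, we conclude $\abs{\widehat{\Theta}} < +\infty$. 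Proposition \ref{ProFniteRetTrue2Equiv}(1) then guarantees termination regardless of the input.

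The main obstacle is this finiteness bookkeeping: although each $F_i$ keeps growing nominally as $i$ increases, the depth-bounded nesting inside $\widehat{\mathcal{L}}_0$ forces the $D^A$-orbit to terminate (Proposition \ref{ProHiDep}), and it is precisely this interaction between the $\otimes$-construction in Definition \ref{DefFH} and the depth collapse that makes the argument go through. Once finiteness is established, the rest of the corollary is mechanical assembly of the cited results.
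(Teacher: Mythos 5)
Your proposal is correct and follows essentially the same route as the paper, whose proof is a one-line appeal to Propositions \ref{ProFniteRetTrue2Equiv}, \ref{ProGenSetNece} and \ref{ProThreeProperties}. Your explicit verification that $\kappa\qty(\bigcup_{i=0}^\infty F_i\qty(\widehat{\Phi}))$ is finite (via Propositions \ref{ProSubClose}, \ref{ProHiDep}, \ref{ProFiInHi} and Lemma \ref{LemBoundF}) is a detail the paper leaves implicit, and supplying it is a genuine improvement rather than a deviation.
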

\begin{proof}
This is obvious by Propositions \ref{ProFniteRetTrue2Equiv}, \ref{ProGenSetNece} and \ref{ProThreeProperties}.
\end{proof}

\begin{proposition}\label{ProFPhiLandCPhi}
Let $\Phi\coloneqq\set{\phi_m \mid 1\le m\le M}\subset\mathcal{L}$, $\widehat{\Phi}\coloneqq\set{\set{\widehat{\phi}^+_m,\widehat{\phi}^-_m} \mid 1\le m\le M, \widehat{\phi}^+_m\in\widehat{\mathcal{L}}_0\qty(\phi_m), \widehat{\phi}^-_m\in\widehat{\mathcal{L}}_0\qty(\lnot\phi_m)}\subset\widehat{\mathcal{L}}$, $i\ge 0$, $\widehat{\xi}\in H_i\qty(\widehat{\Phi})$. Then $\exists \Psi\subset C(\Phi)$, $\widehat{\xi}\equiv\bigwedge\Psi$.
\end{proposition}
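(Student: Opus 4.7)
The plan is to first establish, by induction on $k\ge 0$, a decomposition lemma: every $\widehat{\chi}\in D^{a_k}\circ\cdots\circ D^{a_1}\circ F_0(\widehat{\Phi})$ can be written as $\widehat{\chi}=\bigotimes_{m=1}^M\widehat{\chi}_m$ with each $\widehat{\chi}_m\in(D^A)^k(\widehat{\phi}_m^{\epsilon_m})$ for some sign $\epsilon_m\in\{+,-\}$. The base case $k=0$ is immediate from $F_0(\widehat{\Phi})=\bigotimes_m\{\widehat{\phi}_m^+,\widehat{\phi}_m^-\}$. For the inductive step, the key tool is the $D^a$-analogue of Proposition~\ref{ProlandSub}, namely $D^a(\widehat{\phi}\otimes\widehat{\psi})\subset D^a(\widehat{\phi})\otimes D^a(\widehat{\psi})$, which is visible directly from Definition~\ref{DefOtimes}: the $a$-component of the $(m,n)$-disjunct of $\widehat{\phi}\otimes\widehat{\psi}$ is exactly $D_m^a(\widehat{\phi})\otimes D_n^a(\widehat{\psi})$. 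Iterating this inclusion across the $M$ outer tensor factors, any element of $D^{a_{k+1}}\circ D^{a_k}\circ\cdots\circ D^{a_1}\circ F_0(\widehat{\Phi})$ splits factorwise into pieces in $D^{a_{k+1}}(\widehat{\chi}_m^{(k)})\subset(D^A)^{k+1}(\widehat{\phi}_m^{\epsilon_m})$.

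Next, I would invoke the defining property of $\widehat{\mathcal{L}}_0$: since $\widehat{\phi}_m^{\epsilon_m}\in\widehat{\mathcal{L}}_0(\phi_m^{\epsilon_m})$, each element of $(D^A)^k(\widehat{\phi}_m^{\epsilon_m})$ is equivalent to some element of $C(\phi_m^{\epsilon_m})\cup\{\top\}$. Because $C$ is closed under single negation, $C(\phi_m^{\epsilon_m})\subset C(\phi_m)\subset C(\Phi)$ up to logical equivalence (the only formula requiring attention is $\lnot\lnot\psi$ when $\phi_m=\lnot\psi$, and $\lnot\lnot\psi\equiv\psi\in C(\phi_m)$). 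Hence each factor $\widehat{\chi}_m$ is equivalent to some $\chi'_m\in C(\Phi)\cup\{\top\}$, and Proposition~\ref{ProOtimesEquivLand} (iterated over the $M$ factors) yields $\widehat{\chi}\equiv\bigwedge_m\chi'_m$.

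To finish, by Definition~\ref{DefFH}, every $\widehat{\xi}\in H_i(\widehat{\Phi})$ with $i\ge 1$ has the form $\widehat{\xi}=\widehat{\xi}_1\otimes\cdots\otimes\widehat{\xi}_i$ with $\widehat{\xi}_j\in D^{a_i}\circ\cdots\circ D^{a_j}\circ F_0(\widehat{\Phi})$ for some $a_1,\dots,a_i\in A$. Applying the decomposition lemma and the $\widehat{\mathcal{L}}_0$-step to each $\widehat{\xi}_j$, then combining via Proposition~\ref{ProOtimesEquivLand}, gives $\widehat{\xi}\equiv\bigwedge\Psi$ for some $\Psi\subset C(\Phi)$ (dropping any $\top$-factors); the degenerate case $i=0$ (where $\widehat{\xi}\equiv\top\equiv\bigwedge\emptyset$) is trivial. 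The main obstacle I anticipate is notational bookkeeping: formulating the distributive inclusion $D^a(\widehat{\phi}\otimes\widehat{\psi})\subset D^a(\widehat{\phi})\otimes D^a(\widehat{\psi})$ cleanly, iterating it through all $M$ tensor factors in the induction, and handling the equivalence-vs-literal-equality gap introduced when some $\phi_m$ is itself a negation.
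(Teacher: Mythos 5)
Your proposal is correct and follows essentially the same route as the paper's proof: decompose $\widehat{\xi}\in H_i(\widehat{\Phi})$ into its tensor factors, push each factor through the inclusion $D^{a_k}\circ\cdots\circ D^{a_1}\circ F_0(\widehat{\Phi})\subset\bigotimes_{m}(D^A)^k(\{\widehat{\phi}_m^+,\widehat{\phi}_m^-\})$ (Proposition~\ref{ProlandSub}), identify each resulting piece with an element of $C(\Phi)\cup\{\top\}$ via the defining property of $\widehat{\mathcal{L}}_0$, and recombine with Proposition~\ref{ProOtimesEquivLand}. You are in fact somewhat more careful than the paper, which silently writes $C(\phi_m)$ where $C(\lnot\phi_m)$ is what the definition gives and does not spell out the induction or the $i=0$ case, but these are refinements of the same argument rather than a different one.
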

\begin{proof}
By Definition \ref{DefFH}, $\exists\widehat{\xi}^j\in \qty(D^A)^j\circ F_0\qty(\widehat{\Phi})$ for $1\le j\le i$ such that $\widehat{\xi}=\bigotimes_{j=1}^i\widehat{\xi}^j$. $\forall 1\le j\le i$, by Proposition \ref{ProlandSub},
\begin{align*}
\widehat{\xi}^j\in\qty(D^A)^j\circ F_0\qty(\widehat{\Phi})\subset\bigotimes_{m=1}^M \qty(D^A)^j\qty(\set{\widehat{\phi}^+_m,\widehat{\phi}^-_m}).
\end{align*}
Thus, $\exists\widehat{\xi}_m^j\in \qty(D^A)^j\qty(\set{\widehat{\phi}^+_m,\widehat{\phi}^-_m})$ for $1\le m\le M$ such that $\widehat{\xi}^j=\bigotimes_{m=1}^M\widehat{\xi}_m^j$. Because $\widehat{\phi}^+_m\in\widehat{\mathcal{L}}_0\qty(\phi_m)$ and $\widehat{\phi}^-_m\in\widehat{\mathcal{L}}_0\qty(\lnot\phi_m)$, we have $\widehat{\xi}_m^j\Subset C(\phi_m)\cup\set{\top}\subset C(\Phi)\cup\set{\top}$, thereby $\exists\xi_m^j\in C(\Phi)\cup\set{\top}$, $\widehat{\xi}_m^j\equiv\xi_m^j$. By Proposition \ref{ProOtimesEquivLand},
\begin{align*}
\widehat{\xi}=\bigotimes_{j=1}^i\widehat{\xi}^j=\bigotimes_{j=1}^i\bigotimes_{m=1}^M\widehat{\xi}_m^j\equiv\bigwedge_{j=1}^i\bigwedge_{m=1}^M\xi_m^j.
\end{align*}
We have the result since $\top=\bigwedge\emptyset$.
\end{proof}

\begin{remark}
Upon the setting of Lemma \ref{LemBoundF}, $\forall \set{\widehat{\phi}^+, \widehat{\phi}^-}\in\widehat{\Phi}$, $\forall 0\le n\le \delta(\Phi)$, $\qty(D^A)^n\qty(\set{\widehat{\phi}^+, \widehat{\phi}^-})\subset C(\Phi)$. Thus, in the worst case,
\begin{align*}
&\abs{C(\Phi)}\ge \sum_{\set{\widehat{\phi}^+, \widehat{\phi}^-}\in\widehat{\Phi}}\sum_{n=0}^{\delta(\Phi)}\abs{\qty(D^A)^n\qty(\set{\widehat{\phi}^+, \widehat{\phi}^-})}\ge\sum_{\set{\widehat{\phi}^+, \widehat{\phi}^-}\in\widehat{\Phi}}\sum_{n=0}^{\delta(\Phi)}2(k\abs{A})^n = 2\abs{\Phi}\sum_{n=0}^{\delta(\Phi)}\qty(k\abs{A})^n,\\
&\abs{\Gamma\circ K\circ C(\Phi)} = \abs{K\circ C(\Phi)} = 2^{\abs{C(\Phi)}/2} = 2^{\abs{\Phi}\sum_{n=0}^{\delta(\Phi)}\qty(k\abs{A})^n}.
\end{align*}

Let $\Theta(\cdot,\cdot)=\kappa\qty(\bigcup_{i=0}^\infty F_i\qty(\widehat{\Phi}))$ in Algorithm \ref{DEOTMB}. To obtain $\mathcal{G}\qty(\bigwedge_{a\in A}\square_a\lambda_{x,y}^a,\Theta(x,y))$ at Line \algref{DEOTMB}{LineUpSigma} ($\sigma_0(\cdot,\cdot)=\Theta(\cdot,\cdot)$ initially), one must determine whether $\widehat{\xi}\vDash\lambda_{x,y}^a$ for each $a\in A$ and $\widehat{\xi}\in D_{[]}^a\qty(\bigcup_{i=0}^\infty F_i\qty(\widehat{\Phi}))$. By Proposition \ref{ProFiInHi},
\begin{align*}
D_{[]}^a\qty(\bigcup_{i=0}^\infty F_i\qty(\widehat{\Phi}))=\bigcup_{i=0}^\infty D_{[]}^a\circ F_i\qty(\widehat{\Phi})\subset \bigcup_{i=0}^\infty H_{i+1}\qty(\widehat{\Phi})=\bigcup_{i=1}^\infty H_i\qty(\widehat{\Phi}).
\end{align*}
Thus, the total validity checks are upper-bounded by $\abs{\bigcup_{i=1}^\infty H_i\qty(\widehat{\Phi})}$. By Lemma \ref{LemBoundF},
\begin{align*}
\abs{\bigcup_{i=1}^\infty H_i\qty(\widehat{\Phi})}\le\frac{(2k)^\abs{\Phi}\abs{A}}{(2k)^\abs{\Phi}\abs{A}-1} \qty(2k^{(\delta(\Phi)+1)/2})^{\abs{\Phi}\delta(\Phi)}\abs{A}^{\delta(\Phi)}.
\end{align*}

One may find that both $k$ and $\abs{A}$ are the exponents in $\abs{\Gamma\circ K\circ C(\Phi)}$ but not $\abs{\bigcup_{i=1}^\infty H_i\qty(\widehat{\Phi})}$. Moreover, $\delta(\Phi)$ is the exponent of an exponent in $\abs{\Gamma\circ K\circ C(\Phi)}$ but not $\abs{\bigcup_{i=1}^\infty H_i\qty(\widehat{\Phi})}$. Generally, one may expect that $\abs{\Gamma\circ K\circ C(\Phi)}$ is much much larger than $\abs{\bigcup_{i=1}^\infty H_i\qty(\widehat{\Phi})}$. This implies that each step of Algorithm \ref{DEOTMB} with $\Theta(\cdot,\cdot)=\kappa\qty(\bigcup_{i=0}^\infty F_i\qty(\widehat{\Phi}))$ is mush faster than that with $\Theta(\cdot,\cdot)=\Gamma\circ K\circ C(\Phi)$. Moreover, Propositions \ref{ProFPhiLandCPhi} says that $\forall a\in A$, $\forall\widehat{\xi}\in D_{[]}^a\qty(\bigcup_{i=0}^\infty F_i\qty(\widehat{\Phi}))\subset\bigcup_{i=1}^\infty H_i\qty(\widehat{\Phi})$, $\widehat{\xi}\equiv\bigwedge\Xi$ for some $\Xi\subset C(\Phi)$. Note that $\forall \Psi\in K\circ C(\Phi)$, $\Psi$ is the maximal consistent subset of $C(\Phi)$. Thus, one may further expected that the expensive PSPACE (see Theorem 6.47 of \cite{BlackburnCUP2001}) validity check of $\bigwedge\Xi\vDash\lambda_{x,y}^a$ over Kripke models is cheaper than that of $\bigwedge\Psi\vDash\bigwedge_{a\in A}\square_a\lambda_{x,y}^a$.
\end{remark}

\section{Minimize the event space of an action model under the propositional action emulation}
\label{SecAlg}

\begin{definition}
Let $F\subset\mathcal{L}^M$. $G\subset\mathcal{L}^M$ is called a formula basis of $F$ iff $\forall f\in F$, $\exists G_0\subset G$, $f\equiv\bigvee G_0$.
\end{definition}

\begin{algorithm}
\caption{Minimize the event space under the propositional action emulation}
\label{PartitionRefinement}
\begin{algorithmic}[1]
\Require $\mathcal{A}$.
\State $\mathcal{A}_0\gets\mathbb{G}(\mathcal{A},E_0^\mathcal{A})$.
\State $\sim_0^{\mathcal{A}_0}\gets E^{\mathcal{A}_0}\times E^{\mathcal{A}_0}$.
\State $\Theta_0^{\mathcal{A}_0}\gets \set{\sim_0^{\mathcal{A}_0}(x,\cdot) \mid x\in E^{\mathcal{A}_0}}$.
\For{$I=1,2,3,\cdots$}
\State $\sim_I^{\mathcal{A}_0}\gets\set{(x_1,x_2)\in \sim_{I-1}^{\mathcal{A}_0} \mid \forall \theta\in\Theta_{I-1}^{\mathcal{A}_0}, \forall a\in A, \bigvee_{x\in \to^{\mathcal{A}_0}_a(x_1, \cdot)\cap\theta}Pre^{\mathcal{A}_0}(x)\equiv \bigvee_{x\in\to^{\mathcal{A}_0}_a(x_2, \cdot)\cap\theta}Pre^{\mathcal{A}_0}(x)}$
\State $\Theta_I^{\mathcal{A}_0}\gets\set{\sim_I^{\mathcal{A}_0}(x,\cdot) \mid x\in E^{\mathcal{A}_0}}$.
\If{$\Theta_{I-1}^{\mathcal{A}_0} = \Theta_I^{\mathcal{A}_0}$}
\State Break.
\EndIf
\EndFor
\For{$\theta\in\Theta_I^{\mathcal{A}_0}$}
\State $F(\theta)\gets\set{\bigvee_{y\in\to^{\mathcal{A}_0}_a(\theta_0, \cdot)\cap\theta}Pre^{\mathcal{A}_0}(y) \mid a\in A,\theta_0\in\Theta_I^{\mathcal{A}_0},\bigvee_{x\in\theta_0}Pre^{\mathcal{A}_0}(x)\not\equiv\bot}\cup\set{\bigvee_{y\in\theta\cap E_0^{\mathcal{A}_0}}Pre^{\mathcal{A}_0}(y)}$.
\State Get the minimal formula basis $G(\theta)$ of $F(\theta)$.\label{PRSFB}
\EndFor
\State $E^\mathcal{B}\gets\set{(\theta, g) \mid \theta\in\Theta_I^{\mathcal{A}_0}, g\in G(\theta)}$.
\For{$a\in A$}
\State $\to_a^\mathcal{B}\gets\set{((\theta,g),(\theta',g'))\in E^\mathcal{B}\times E^\mathcal{B} \mid g'\vDash\bigvee_{y\in\to^{\mathcal{A}_0}_a(\theta, \cdot)\cap\theta'}Pre^{\mathcal{A}_0}(y)}$.
\EndFor
\For{$(\theta,g)\in E^\mathcal{B}$}
\State $Pre^\mathcal{B}(\theta,g)\gets g$.
\EndFor
\State $E_0^\mathcal{B}\gets \set{(\theta, g)\in E^\mathcal{B} \mid g\vDash\bigvee_{x\in\theta\cap E_0^{\mathcal{A}_0}}Pre^{\mathcal{A}_0}(x)}$.
\State $\mathcal{B}\gets\qty(E^\mathcal{B},Pre^\mathcal{B},\to^\mathcal{B},E_0^\mathcal{B})$.
\State \Return $\mathcal{B}$.
\end{algorithmic}
\end{algorithm}

\begin{theorem}\label{emulation}
Let $\mathcal{B}$ be the action model returned by Algorithm \ref{PartitionRefinement}. Then $\mathcal{A}_0\leftrightarrows_P\mathcal{B}$, where $\mathcal{A}_0\coloneqq\mathbb{G}\qty(\mathcal{A},E_0^\mathcal{A})$.
\end{theorem}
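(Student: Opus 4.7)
The plan is to exhibit a propositional action emulation $S \subset E^{\mathcal{A}_0} \times E^\mathcal{B}$ satisfying Definition \ref{DefProEmu}, and the natural candidate is
\begin{align*}
S(x, (\theta, g)) \iff x \in \theta \text{ and } Pre^{\mathcal{A}_0}(x) \land g \not\equiv \bot.
\end{align*}
First I would verify that each $\sim_I^{\mathcal{A}_0}$ built by the algorithm is an equivalence relation, that the refinement terminates on the finite set $E^{\mathcal{A}_0}$, and record the stabilization identity: for every $x_1, x_2$ lying in the same class $\theta \in \Theta_I^{\mathcal{A}_0}$, every $a \in A$, and every $\theta' \in \Theta_I^{\mathcal{A}_0}$,
\begin{align*}
\bigvee_{y \in \to_a^{\mathcal{A}_0}(x_1, \cdot) \cap \theta'} Pre^{\mathcal{A}_0}(y) \equiv \bigvee_{y \in \to_a^{\mathcal{A}_0}(x_2, \cdot) \cap \theta'} Pre^{\mathcal{A}_0}(y) \equiv \bigvee_{y \in \to_a^{\mathcal{A}_0}(\theta, \cdot) \cap \theta'} Pre^{\mathcal{A}_0}(y).
\end{align*}
This \emph{class-invariant successor profile} is the bridge between the individual events of $\mathcal{A}_0$ and the class-level transitions used to build $\mathcal{B}$.

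Consistency of $S$ is immediate from its definition. For the Zig, fix $x \in \theta$, $g \in G(\theta)$ with $Pre^{\mathcal{A}_0}(x) \land g \not\equiv \bot$, an $a \in A$, and $x' \in \to_a^{\mathcal{A}_0}(x, \cdot)$ in class $\theta'$. Since $G(\theta')$ is a formula basis of $F(\theta')$, I can write $\bigvee_{y \in \to_a^{\mathcal{A}_0}(\theta, \cdot) \cap \theta'} Pre^{\mathcal{A}_0}(y) \equiv \bigvee G_0$ with $G_0 \subseteq G(\theta')$. Every $g' \in G_0$ trivially satisfies $g' \vDash \bigvee G_0$, so $(\theta, g) \to_a^\mathcal{B} (\theta', g')$ by the algorithmic definition of $\to_a^\mathcal{B}$. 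Because $x' \in \to_a^{\mathcal{A}_0}(\theta, \cdot) \cap \theta'$, we have $Pre^{\mathcal{A}_0}(x') \vDash \bigvee G_0$, and the disjunction may be restricted to those $g' \in G_0$ with $g' \land Pre^{\mathcal{A}_0}(x') \not\equiv \bot$ without losing the entailment (every world of $Pre^{\mathcal{A}_0}(x')$ witnesses some such $g'$, which in turn belongs to $S(x', \cdot)$). This is exactly the Zig clause.

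The Zag is where the partition-refinement identity pays off. Fix $x, g, a$ and $(\theta', g') \in \to_a^\mathcal{B}((\theta, g), \cdot)$. By the algorithm, $g' \vDash \bigvee_{y \in \to_a^{\mathcal{A}_0}(\theta, \cdot) \cap \theta'} Pre^{\mathcal{A}_0}(y)$; by the stabilization identity above this is equivalent to $\bigvee_{y \in \to_a^{\mathcal{A}_0}(x, \cdot) \cap \theta'} Pre^{\mathcal{A}_0}(y)$, and the same consistency-filtering argument restricts the disjunction to $y$ with $Pre^{\mathcal{A}_0}(y) \land g' \not\equiv \bot$, i.e.\ to $y \in S(\cdot, (\theta', g'))$. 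The Zig0 and Zag0 clauses follow the same pattern, using that $\bigvee_{y \in \theta \cap E_0^{\mathcal{A}_0}} Pre^{\mathcal{A}_0}(y) \in F(\theta)$ admits a basis expansion in $G(\theta)$ and that $(\theta, g) \in E_0^\mathcal{B}$ is defined by exactly the corresponding entailment.

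The principal difficulty is the Zag step, because there the abstract basis element $g' \in G(\theta')$ must be converted back into a disjunction of concrete preconditions $Pre^{\mathcal{A}_0}(y)$ indexed by the $a$-successors of the \emph{single} event $x$; the definition of $\to_a^\mathcal{B}$ only knows about $\to_a^{\mathcal{A}_0}(\theta, \cdot)$, so the stabilization of the partition refinement is genuinely needed to swap $\theta$ for $x$. Once this swap is justified, the remaining manipulations --- consistency filtering and basis expansion --- are routine propositional reasoning.
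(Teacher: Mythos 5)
Your proposal is correct and follows essentially the same route as the paper's proof: the same relation $S(x,(\theta,g))$ iff $x\in\theta$ and $Pre^{\mathcal{A}_0}(x)\land g\not\equiv\bot$, the same use of the minimal formula basis $G(\theta')$ of $F(\theta')$ (with consistency filtering) for Zig and Zig0, and the same appeal to the stabilization $\Theta_{I-1}^{\mathcal{A}_0}=\Theta_I^{\mathcal{A}_0}$ to replace $\to_a^{\mathcal{A}_0}(\theta,\cdot)$ by $\to_a^{\mathcal{A}_0}(x,\cdot)$ in the Zag and Zag0. The only detail worth making explicit is that applying the basis property in the Zig requires first checking $\bigvee_{y\in\to_a^{\mathcal{A}_0}(\theta,\cdot)\cap\theta'}Pre^{\mathcal{A}_0}(y)\in F(\theta')$, which holds because $Pre^{\mathcal{A}_0}(x)\land g\not\equiv\bot$ forces $\bigvee_{y\in\theta}Pre^{\mathcal{A}_0}(y)\not\equiv\bot$.
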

\begin{proof}
Let $S\coloneqq\set{(x,(\theta,g))\in E^{\mathcal{A}_0}\otimes E^\mathcal{B} \mid x\in\theta, Pre^{\mathcal{A}_0}(x)\land g\not\equiv\bot}$. We prove that $S$ satisfies $\mathcal{A}_0\leftrightarrows_P\mathcal{B}$.
\begin{description}
\item[Consistency] $\forall (x,(\theta,g))\in S$, $Pre^{\mathcal{A}_0}(x)\land Pre^\mathcal{B}\qty(\theta, g)\equiv Pre^{\mathcal{A}_0}(x)\land g\not\equiv\bot$.

\item[Zig] $\forall \qty(x_0, (\theta_0, g_0))\in S$, $x_0\in\theta_0$. $\forall a\in A$, $\forall x_1\in \to_a^{\mathcal{A}_0}(x_0, \cdot)\subset\to_a^{\mathcal{A}_0}(\theta_0, \cdot)$, $\exists\theta_1\in\Theta_I^{\mathcal{A}_0}$, $x_1\in\theta_1$. Because $x_0\in\theta_0$ and $Pre^{\mathcal{A}_0}(x_0)\land g_0\not\equiv\bot$, we have $\bigvee_{y\in\theta_0}Pre^{\mathcal{A}_0}(y)\not\equiv\bot$, thereby $\bigvee_{y\in\to^{\mathcal{A}_0}_a(\theta_0, \cdot)\cap\theta_1}Pre^{\mathcal{A}_0}(y)\in F(\theta_1)$. Because $G(\theta_1)$ is the minimal formula basis of $F(\theta_1)$, there exists $G_1\subset G(\theta_1)$,
\begin{align*}
Pre^{\mathcal{A}_0}(x_1)\vDash \bigvee_{y\in\to^{\mathcal{A}_0}_a(\theta_0, \cdot)\cap\theta_1}Pre^{\mathcal{A}_0}(y)\equiv\bigvee G_1 \equiv \bigvee_{g\in G_1}Pre^\mathcal{B}\qty(\theta_1, g). 
\end{align*}
$\forall g\in G_1$ such that $\qty(x_1, (\theta_1, g))\notin S$, $Pre^{\mathcal{A}_0}(x_1)\land g\equiv\bot$. Thus,
\begin{align*}
Pre^{\mathcal{A}_0}(x_1)\vDash \bigvee_{g\in G_1: S\qty(x_1, (\theta_1, g))}Pre^\mathcal{B}\qty(\theta_1, g).
\end{align*}
$\forall g\in G_1$, since $g\vDash \bigvee_{y\in\to^{\mathcal{A}_0}_a(\theta_0, \cdot)\cap\theta_1}Pre^{\mathcal{A}_0}(y)$, we have $(\theta_0, g_0) \to_a^\mathcal{B} (\theta_1, g)$. Thus,
\begin{align*}
Pre^{\mathcal{A}_0}(x_1)\vDash \bigvee_{(\theta,g)\in \to_a^\mathcal{B}\qty((\theta_0, g_0), \cdot)\cap S\qty(x_1, \cdot)}Pre^\mathcal{B}(\theta,g).
\end{align*}

\item[Zag] $\forall \qty(x_0, (\theta_0, g_0))\in S$, $\forall (\theta_1, g_1)\to_a^\mathcal{B}\qty((\theta_0, g_0),\cdot)$,
\begin{align*}
g_1\vDash \bigvee_{x\in\to_a^{\mathcal{A}_0}(\theta_0, \cdot)\cap\theta_1}Pre^{\mathcal{A}_0}(x).
\end{align*}
Because $x_0\in\theta_0$ and $\Theta_{I-1}^{\mathcal{A}_0}=\Theta_I^{\mathcal{A}_0}$, we have
\begin{align*}
g_1\vDash \bigvee_{x\in\to_a^{\mathcal{A}_0}(\theta_0, \cdot)\cap\theta_1}Pre^{\mathcal{A}_0}(x)\equiv\bigvee_{x\in\to_a^{\mathcal{A}_0}(x_0, \cdot)\cap\theta_1}Pre^{\mathcal{A}_0}(x).
\end{align*}
$\forall x\in\theta_1$ such that $x\notin S(\cdot, \theta_1, g_1)$, $Pre^{\mathcal{A}_0}(x)\land g_1\equiv\bot$. Thus,
\begin{align*}
Pre^\mathcal{B}\qty(\theta_1,g_1)\equiv g_1\vDash \bigvee_{x\in\to_a^{\mathcal{A}_0}(x_0, \cdot)\cap S(\cdot,\theta_1, g_1)}Pre^{\mathcal{A}_0}(x) .
\end{align*}

\item[Zig0] $\forall x\in E_0^{\mathcal{A}_0}$, $\exists \theta\in\Theta_I^{\mathcal{A}_0}$ such that $x\in\theta$. Because $\bigvee_{y\in\theta\cap E_0^{\mathcal{A}_0}}Pre^{\mathcal{A}_0}(y)\in F(\theta)$ and $G(\theta)$ is the minimal formula basis of $F(\theta)$, we have that $G_0(\theta)\coloneqq\set{g\in G(\theta) \mid g\vDash \bigvee_{y\in\theta\cap E_0^{\mathcal{A}_0}}Pre^{\mathcal{A}_0}(y)}$ satisfies
\begin{align*}
Pre^{\mathcal{A}_0}(x)\vDash\bigvee_{y\in\theta\cap E_0^{\mathcal{A}_0}}Pre^{\mathcal{A}_0}(y)\equiv\bigvee G_0(\theta)\equiv \bigvee_{g\in G_0(\theta)}Pre^\mathcal{B}(\theta,g).
\end{align*}
$\forall g\in G_0(\theta)$ such that $(\theta,g)\notin S(x,\cdot)$, we have $Pre^{\mathcal{A}_0}(x)\land g\equiv\bot$ since $x\in\theta$, and $(\theta,g)\in E_0^\mathcal{B}$ since $g\vDash \bigvee_{y\in\theta\cap E_0^{\mathcal{A}_0}}Pre^{\mathcal{A}_0}(y)$. Therefore,
\begin{align*}
Pre^{\mathcal{A}_0}(x)\vDash\bigvee_{(\theta',g')\in E_0^\mathcal{B}\cap S(x,\cdot)}Pre^\mathcal{B}(\theta',g').
\end{align*}

\item[Zag0] $\forall (\theta,g)\in E_0^\mathcal{B}$,
\begin{align*}
Pre^\mathcal{B}(\theta,g)\equiv g\vDash \bigvee_{x\in\theta\cap E_0^{\mathcal{A}_0}}Pre^{\mathcal{A}_0}(x).
\end{align*}
$\forall x\in \theta\setminus S(\cdot, (\theta,g))$, we have $Pre^{\mathcal{A}_0}(x)\land g\equiv\bot$. Therefore,
\begin{align*}
Pre^\mathcal{B}(\theta,g) \vDash\bigvee_{x\in\theta\cap E_0^{\mathcal{A}_0}\cap S(\cdot,(\theta,g))}Pre^{\mathcal{A}_0}(x).
\end{align*}
\end{description}
\end{proof}

\begin{lemma}\label{LemRea2All}
Let $\mathcal{A}$ and $\mathcal{B}$ be action models. If $\mathcal{A}=\mathbb{G}(\mathcal{A},E_0^\mathcal{A})$ and $\mathcal{A}\leftrightarrows_P\mathcal{B}$, then $\forall x\in E^\mathcal{A}$,
\begin{align}
Pre^\mathcal{A}(x)\vDash \bigvee_{y\in S(x,\cdot)}Pre^\mathcal{B}(y).
\end{align}
\end{lemma}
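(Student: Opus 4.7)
The plan is to induct on the minimal length of a chain of consistent transitions from $E_0^\mathcal{A}$ to $x$. Since $\mathcal{A}=\mathbb{G}(\mathcal{A},E_0^\mathcal{A})$, every $x\in E^\mathcal{A}$ admits some finite chain $x_0\to_{a_1}^\mathcal{A}x_1\to_{a_2}^\mathcal{A}\cdots\to_{a_n}^\mathcal{A}x_n=x$ with $x_0\in E_0^\mathcal{A}$ and $Pre^\mathcal{A}(x_{i})\land\lozenge_{a_{i+1}}Pre^\mathcal{A}(x_{i+1})\not\equiv\bot$ for every $0\le i<n$; the induction will be on the least such $n$.

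The base case $n=0$ is immediate: for $x\in E_0^\mathcal{A}$ the Zig0 of $\mathcal{A}\leftrightarrows_P\mathcal{B}$ yields
\[
Pre^\mathcal{A}(x)\vDash\bigvee_{y\in S(x,\cdot)\cap E_0^\mathcal{B}}Pre^\mathcal{B}(y)\vDash\bigvee_{y\in S(x,\cdot)}Pre^\mathcal{B}(y).
\]
For the inductive step, let $x'\coloneqq x_{n-1}$ so that $x'\to_{a_n}^\mathcal{A}x$ and $Pre^\mathcal{A}(x')\land\lozenge_{a_n}Pre^\mathcal{A}(x)\not\equiv\bot$. The induction hypothesis applied to $x'$ gives $Pre^\mathcal{A}(x')\vDash\bigvee_{y'\in S(x',\cdot)}Pre^\mathcal{B}(y')$. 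Since the left-hand side is not $\bot$ (by the consistency of $Pre^\mathcal{A}(x')\land\lozenge_{a_n}Pre^\mathcal{A}(x)$), the empty-disjunction case is excluded, so $S(x',\cdot)\neq\emptyset$; pick any $y'\in S(x',\cdot)$. Now apply the Zig of $\mathcal{A}\leftrightarrows_P\mathcal{B}$ to the pair $(x',y')\in S$ with action $a_n$ and successor $x$ of $x'$:
\[
Pre^\mathcal{A}(x)\vDash\bigvee_{y\in\to_{a_n}^\mathcal{B}(y',\cdot)\cap S(x,\cdot)}Pre^\mathcal{B}(y)\vDash\bigvee_{y\in S(x,\cdot)}Pre^\mathcal{B}(y),
\]
which closes the induction.

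The only delicate point is the claim that $S(x',\cdot)$ is nonempty in the inductive step; this is where the consistency clause of the generated submodel definition is genuinely used, together with the observation that an empty disjunction collapses to $\bot$. Everything else is a direct unwinding of Definition~\ref{DefProEmu}: Zig0 handles the base, Zig propagates the entailment one step forward along the generating chain, and the outcome is exactly the asserted strengthening of Zig0 from $E_0^\mathcal{A}$ to all of $E^\mathcal{A}$.
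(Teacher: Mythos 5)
Your proof is correct and follows essentially the same route as the paper's: both exploit $\mathcal{A}=\mathbb{G}(\mathcal{A},E_0^\mathcal{A})$ to obtain a consistent path from $E_0^\mathcal{A}$ to $x$, seed with Zig0, and propagate the entailment along the path via Zig, using $Pre^\mathcal{A}(x_i)\not\equiv\bot$ at each step to guarantee $S(x_i,\cdot)\neq\emptyset$. The only difference is presentational — you phrase the traversal as an induction on the minimal chain length, while the paper iterates forward along a fixed path — and the content is identical.
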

\begin{proof}
Let $S\subset E^\mathcal{A}\times E^\mathcal{B}$ be the propositional action emulation satisfying $\mathcal{A}\leftrightarrows_P\mathcal{B}$. Because $\mathcal{A}=\mathbb{G}(\mathcal{A},E_0^\mathcal{A})$, we have $\forall x\in  E^\mathcal{A}$, there exists a path 
\begin{align*}
E_0^\mathcal{A}\ni x_0 \to^\mathcal{A}_{a_0} x_1 \to^\mathcal{A}_{a_1} x_2 \to^\mathcal{A}_{a_2} x_3 \to^\mathcal{A}_{a_3} \cdots \to^\mathcal{A}_{a_{i-1}} x_i \to^\mathcal{A}_{a_i} \cdots \to^\mathcal{A}_{a_{I-1}} x_I \equiv x,
\end{align*}
such that $Pre^\mathcal{A}(x_i)\land\lozenge_a Pre^\mathcal{A}(x_{i+1})\not\equiv\bot$ for all $0\le i < i+1\le I$, which implies $Pre^\mathcal{A}(x_i)\not\equiv\bot$ for all $0\le i\le I$. By the Zig at $x_0$,
\begin{align*}
Pre^\mathcal{A}(x_0)\vDash \bigvee_{y\in S(x_0,\cdot)\cap E_0^\mathcal{B}}Pre^\mathcal{B}(y).
\end{align*}
$Pre^\mathcal{A}(x_0)\neq\bot$ implies that $S(x_0,\cdot)\neq\emptyset$. Let $y_0\in S(x_0,\cdot)$. By the Zig of $S(x_0, y_0)$,
\begin{align*}
Pre^\mathcal{A}(x_1)\vDash \bigvee_{y\in S(x_1,\cdot)\cap \to^\mathcal{B}_{a_0}(y_0,\cdot)}Pre^\mathcal{B}(y).
\end{align*}
$Pre^\mathcal{A}(x_1)\neq\bot$ implies $S(x_1,\cdot)\neq\emptyset$. Let $y_1\in S(x_1,\cdot)$. Continuing the above process, one finally reaches
\begin{align*}
Pre^\mathcal{A}(x_I)\vDash \bigvee_{y\in S(x_I,\cdot)\cap \to^\mathcal{B}_{a_{I-1}}(y_{I-1},\cdot)}Pre^\mathcal{B}(y).
\end{align*}
\end{proof}

\begin{lemma}\label{LemProEmuUnDiv}
Let $\mathcal{A}$ and $\mathcal{B}$ be action models with the propositional action emulation $S\subset E^\mathcal{A}\times E^\mathcal{B}$ satisfying $\mathcal{A}\leftrightarrows_P\mathcal{B}$. Then $\forall x_1,x_2\in E^\mathcal{A}$, $\forall y\in E^\mathcal{B}$, if $S(x_1,y)$ and $S(x_2,y)$, then $\forall I\ge 0$, $x_1\sim_I^\mathcal{A}x_2$.
\end{lemma}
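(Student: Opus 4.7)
The plan is induction on $I$. The base case $I=0$ is immediate, since $\sim_0^\mathcal{A}=E^\mathcal{A}\times E^\mathcal{A}$ by the initialization in Algorithm \ref{PartitionRefinement}, and moreover a routine side-induction shows every $\sim_I^\mathcal{A}$ is an equivalence relation, so $\Theta_I^\mathcal{A}$ really is the associated partition.

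For the inductive step, assume the claim for $I-1$ and take $x_1,x_2\in E^\mathcal{A}$, $y\in E^\mathcal{B}$ with $S(x_1,y)$ and $S(x_2,y)$. The induction hypothesis gives $x_1\sim_{I-1}^\mathcal{A}x_2$, so the pair is still in the candidate relation; what remains is to verify, for each $a\in A$ and each $\theta\in\Theta_{I-1}^\mathcal{A}$, the splitter condition
\begin{align*}
\bigvee_{x\in\to_a^\mathcal{A}(x_1,\cdot)\cap\theta}Pre^\mathcal{A}(x)\equiv\bigvee_{x\in\to_a^\mathcal{A}(x_2,\cdot)\cap\theta}Pre^\mathcal{A}(x).
\end{align*}
By symmetry it suffices to prove $\vDash$ in one direction. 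Fix any $x_1'\in\to_a^\mathcal{A}(x_1,\cdot)\cap\theta$. The Zig of $\mathcal{A}\leftrightarrows_P\mathcal{B}$ at $S(x_1,y)$ yields
\begin{align*}
Pre^\mathcal{A}(x_1')\vDash\bigvee_{y'\in\to_a^\mathcal{B}(y,\cdot)\cap S(x_1',\cdot)}Pre^\mathcal{B}(y'),
\end{align*}
and the Zag at $S(x_2,y)$, applied to each such $y'$, gives
\begin{align*}
Pre^\mathcal{B}(y')\vDash\bigvee_{x_2'\in\to_a^\mathcal{A}(x_2,\cdot)\cap S(\cdot,y')}Pre^\mathcal{A}(x_2').
\end{align*}

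Now comes the key move. For each $x_2'$ appearing in this inner disjunction, we have $S(x_1',y')$ and $S(x_2',y')$; the induction hypothesis, applied at level $I-1$ with shared witness $y'$, gives $x_1'\sim_{I-1}^\mathcal{A}x_2'$, so $x_2'$ lies in the same $\sim_{I-1}^\mathcal{A}$-class $\theta$ as $x_1'$. Chaining the two entailments and weakening the range of the outer disjunction to the whole of $\to_a^\mathcal{A}(x_2,\cdot)\cap\theta$ therefore yields
\begin{align*}
Pre^\mathcal{A}(x_1')\vDash\bigvee_{x_2'\in\to_a^\mathcal{A}(x_2,\cdot)\cap\theta}Pre^\mathcal{A}(x_2').
\end{align*}
Taking the disjunction over $x_1'\in\to_a^\mathcal{A}(x_1,\cdot)\cap\theta$ gives the required $\vDash$, and symmetry closes the inductive step.

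The main obstacle I expect is the bookkeeping around $\theta$-membership: the Zag only guarantees $S$-partners $x_2'$ with consistent preconditions, and one must ensure they fall into the same block $\theta$ as $x_1'$ rather than some other block. The induction hypothesis is exactly the tool that aligns these partners, so the lemma really depends on being able to re-use the shared $S$-witness $y'$ at one depth lower; everything else is algebraic manipulation of disjunctions of preconditions.
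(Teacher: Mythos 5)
Your proposal is correct and follows essentially the same route as the paper's proof: induction on the refinement index, using the Zig at $(x_1,y)$ to pass to $\mathcal{B}$, the Zag at $(x_2,y)$ to return to $\to_a^\mathcal{A}(x_2,\cdot)$, and the induction hypothesis with the shared witness $y'$ to place each returned event $x_2'$ in the same block $\theta$ as $x_1'$. The only differences are cosmetic (indexing the step as $I-1\to I$ rather than $I\to I+1$), so no further comparison is needed.
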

\begin{proof}
Trivally, $\forall x_1,x_2\in E^\mathcal{A}$, $x_1\sim_0^\mathcal{A}x_2$. Assume by induction that for some $I\ge 0$, $\forall x_1,x_2\in E^\mathcal{A}$, $\forall y\in E^\mathcal{B}$, $S(x_1,y)$ and $S(x_2,y)$ imply $x_1\sim_I^\mathcal{A}x_2$. We prove that $x_1\sim_{I+1}^\mathcal{A}x_2$. $\forall \theta\in\Theta_I^\mathcal{A}$, $\forall a\in A$, $\forall x'\in\theta\cap \to_a^\mathcal{A}(x_1,\cdot)$, $\forall y'\in \to_a^\mathcal{B}(y,\cdot)\cap S(x',\cdot)$, $\forall x''\in\to_a^\mathcal{A}(x_2,\cdot)\cap S(\cdot,y')$, we have by the induction assumption that $x'\sim_I^\mathcal{A}x''$, thereby $x''\in\theta$. Thus, by the Zig at $(x_1,y)$ and the Zag at $(x_2,y)$,
\begin{align*}
Pre^\mathcal{A}(x')\vDash \bigvee_{y'\in \to_a^\mathcal{B}(y,\cdot)\cap S(x',\cdot)}Pre^\mathcal{B}(y')\vDash \bigvee_{y'\in \to_a^\mathcal{B}(y,\cdot)\cap S(x',\cdot)}\bigvee_{x''\in\to_a^\mathcal{A}(x_2,\cdot)\cap S(\cdot,y')}Pre^\mathcal{A}(x'')\vDash \bigvee_{x''\in \theta\cap \to_a^\mathcal{A}(x_2,\cdot)}Pre^\mathcal{A}(x'').
\end{align*}
Since $x'\in \theta\cap \to_a^\mathcal{A}(x_1,\cdot)$ is arbitrary, we have
\begin{align*}
\bigvee_{x'\in \theta\cap \to_a^\mathcal{A}(x_1,\cdot)}Pre^\mathcal{A}(x')\vDash\bigvee_{x''\in \theta\cap \to_a^\mathcal{A}(x_2,\cdot)}Pre^\mathcal{A}(x'').
\end{align*}
By symmetry,
\begin{align*}
\bigvee_{x'\in \theta\cap \to_a^\mathcal{A}(x_1,\cdot)}Pre^\mathcal{A}(x')\equiv\bigvee_{x''\in \theta\cap \to_a^\mathcal{A}(x_2,\cdot)}Pre^\mathcal{A}(x'').
\end{align*}
Since $\theta\in\Theta_I^\mathcal{A}$ and $a\in A$ are arbitrary, we have $x_1\sim_{I+1}^\mathcal{A}x_2$.
\end{proof}

\begin{theorem}\label{simplest}
Let $\mathcal{B}$ be the action model returned by Algorithm \ref{PartitionRefinement}. Then $\mathcal{A}\leftrightarrows_P\mathcal{B}$, and for any action model $\mathcal{C}\leftrightarrows_P\mathcal{A}$, $\abs{E^\mathcal{B}}\le\abs{E^\mathcal{C}}$.
\end{theorem}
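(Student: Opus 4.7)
The first claim $\mathcal{A}\leftrightarrows_P\mathcal{B}$ is almost immediate: Proposition \ref{ProGenSubIndBis} gives $\mathcal{A}\underline{\leftrightarrow}\mathcal{A}_0$, which via Proposition \ref{ProBisImpProImpActImpEqu} upgrades to $\mathcal{A}\leftrightarrows_P\mathcal{A}_0$, and combining this with the relation $\mathcal{A}_0\leftrightarrows_P\mathcal{B}$ supplied by Theorem \ref{emulation} through the transitivity part of Proposition \ref{ProBisProActEquEqu} yields the claim.

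For the minimality, given any $\mathcal{C}\leftrightarrows_P\mathcal{A}$ I would first replace $\mathcal{C}$ by $\mathbb{G}(\mathcal{C},E_0^\mathcal{C})$ and delete any event whose precondition is equivalent to $\bot$; this only decreases $|E^\mathcal{C}|$ and, using Propositions \ref{ProGenSubIndBis}, \ref{ProBisImpProImpActImpEqu}, and \ref{ProBisProActEquEqu}, still yields a witness $T\subset E^{\mathcal{A}_0}\times E^\mathcal{C}$ for $\mathcal{A}_0\leftrightarrows_P\mathcal{C}$. Applying Lemma \ref{LemRea2All} to $T^{-1}$ (available by the symmetry of Definition \ref{DefProEmu}) shows that $T(\cdot,y)\neq\emptyset$ for every $y\in E^\mathcal{C}$, and Lemma \ref{LemProEmuUnDiv} places all of $T(\cdot,y)$ in a single class $\theta_y\in\Theta_I^{\mathcal{A}_0}$. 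The sets $Y_\theta\coloneqq\{y\in E^\mathcal{C}\mid\theta_y=\theta\}$ then partition $E^\mathcal{C}$, so to finish I need only prove $|Y_\theta|\ge|G(\theta)|$ for every $\theta\in\Theta_I^{\mathcal{A}_0}$ and sum.

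The heart of the argument is to verify that $\{Pre^\mathcal{C}(y)\mid y\in Y_\theta\}$ is a formula basis of $F(\theta)$; once established, the minimality of $G(\theta)$ together with the surjection $y\mapsto Pre^\mathcal{C}(y)$ yields $|Y_\theta|\ge|G(\theta)|$. For a generator $f=\bigvee_{x\in\to_a^{\mathcal{A}_0}(\theta_0,\cdot)\cap\theta}Pre^{\mathcal{A}_0}(x)$ with $\bigvee_{x\in\theta_0}Pre^{\mathcal{A}_0}(x)\not\equiv\bot$, I would pick $x_0\in\theta_0$ with $Pre^{\mathcal{A}_0}(x_0)\not\equiv\bot$ and apply Lemma \ref{LemRea2All} to produce $y_0\in T(x_0,\cdot)\subset Y_{\theta_0}$. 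The termination condition $\Theta_{I-1}^{\mathcal{A}_0}=\Theta_I^{\mathcal{A}_0}$ of the partition refinement permits the rewrite $f\equiv\bigvee_{x\in\to_a^{\mathcal{A}_0}(x_0,\cdot)\cap\theta}Pre^{\mathcal{A}_0}(x)$; the Zig of $T$ at $(x_0,y_0)$ then gives $f\vDash\bigvee_{y\in\to_a^\mathcal{C}(y_0,\cdot)\cap Y_\theta}Pre^\mathcal{C}(y)$, while the Zag at $(x_0,y_0)$, combined with $T(\cdot,y)\subset\theta$ for $y\in Y_\theta$, yields the reverse $\vDash$. The initial-event generator $\bigvee_{x\in\theta\cap E_0^{\mathcal{A}_0}}Pre^{\mathcal{A}_0}(x)$ is handled identically, with Zig0 and Zag0 in place of Zig and Zag. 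The principal obstacle is precisely this two-sided verification: the forward direction needs the partition-refinement stabilization to localize the disjunction over $\theta_0$ to a single representative $x_0$, while the reverse direction must push the Zag output back into the class $\theta$; the degenerate cases $\theta\cap E_0^{\mathcal{A}_0}=\emptyset$ and $F(\theta)\subset\{\bot\}$ force $|G(\theta)|=0$ and hold trivially.
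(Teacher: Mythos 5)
Your proposal is correct and follows essentially the same route as the paper's proof: both reduce the minimality claim to counting within the classes of $\Theta_I^{\mathcal{A}_0}$ via Lemma \ref{LemProEmuUnDiv}, use Lemma \ref{LemRea2All} together with the stabilization $\Theta_{I-1}^{\mathcal{A}_0}=\Theta_I^{\mathcal{A}_0}$ to localize each generator of $F(\theta)$ to a single representative $x_0$ with a partner $y_0$, and then apply the Zig/Zag (and Zig0/Zag0) clauses at $(x_0,y_0)$ to get the two-sided entailment. The only difference is presentational: you argue directly that $\set{Pre^\mathcal{C}(y) \mid y\in Y_\theta}$ is a formula basis of $F(\theta)$ for every class and sum, whereas the paper assumes $\abs{E^\mathcal{C}}<\abs{E^\mathcal{B}}$, isolates a single class with an unrepresentable $f\in F(\theta)$, and derives a contradiction from the same computations.
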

\begin{proof}
Let $\mathcal{A}_0\coloneqq\mathbb{G}(\mathcal{A},E_0^\mathcal{A})$. By Theorem \ref{emulation} and Propositions \ref{ProBisProActEquEqu}, \ref{ProBisImpProImpActImpEqu}, \ref{ProGenSubIndBis}, $\mathcal{A}\leftrightarrows_P\mathcal{A}_0\leftrightarrows_P\mathcal{B}$. Assume for the sake of contradiction that $\exists\mathcal{C}\leftrightarrows_P\mathcal{A}$ such that $\abs{E^\mathcal{C}} < \abs{E^\mathcal{B}}$. By Propositions \ref{ProBisProActEquEqu}, \ref{ProBisImpProImpActImpEqu}, \ref{ProGenSubIndBis}, $\mathcal{A}_0\leftrightarrows_P\mathcal{A}\leftrightarrows_P\mathcal{C}$. Let $S\subset E^{\mathcal{A}_0}\times E^\mathcal{C}$ be the propositional action emulation satisfying $\mathcal{A}_0\leftrightarrows_P\mathcal{C}$.

$\forall \theta_1,\theta_2\in\Theta_I^{\mathcal{A}_0}$ with $\theta_1\neq\theta_2$, we have $S(\theta_1,\cdot)\cap S(\theta_2,\cdot)=\emptyset$ since otherwise, $\exists y\in S(\theta_1,\cdot)\cap S(\theta_2,\cdot)$, $\exists x_1\in\theta_1$, $\exists x_2\in\theta_2$, $S(x_1,y)$ and $S(x_2,y)$, which implies $\theta_1=\theta_2$ by Lemma \ref{LemProEmuUnDiv}, conflicts. Thus, by $\bigcup_{\theta\in\Theta_I^{\mathcal{A}_0}}S(\theta, \cdot)\subset E^\mathcal{C}$,
\begin{align*}
\sum_{\theta\in\Theta_I^{\mathcal{A}_0}}\abs{S(\theta, \cdot)}=\abs{E^\mathcal{C}} < \abs{E^\mathcal{B}} = \sum_{\theta\in\Theta_I^{\mathcal{A}_0}} \abs{G(\theta)}.
\end{align*}
Therefore, $\exists\theta\in \Theta_I^{\mathcal{A}_0}$, $S(\theta, \cdot)< G(\theta)$. By the Zig0 of $\mathcal{A}_0\leftrightarrows_P\mathcal{C}$,
\begin{align*}
\bigvee_{x\in E_0^{\mathcal{A}_0}\cap\theta}Pre^{\mathcal{A}_0}(x)\vDash \bigvee_{y\in E_0^\mathcal{C}\cap S(\theta,\cdot)}Pre^\mathcal{C}(y).
\end{align*}
It can be proved that
\begin{align}\label{EqCanRep}
\bigvee_{x\in E_0^{\mathcal{A}_0}\cap\theta}Pre^{\mathcal{A}_0}(x)\equiv\bigvee_{y\in E_0^\mathcal{C}\cap S(\theta,\cdot)}Pre^\mathcal{C}(y)
\end{align}
since otherwise, $\exists y\in E_0^\mathcal{C}\cap S(\theta,\cdot)$,
\begin{align*}
Pre^\mathcal{C}(y)\nvDash \bigvee_{x\in E_0^{\mathcal{A}_0}\cap\theta}Pre^{\mathcal{A}_0}(x).
\end{align*}
However, by the Zag0 of $\mathcal{A}_0\leftrightarrows_P\mathcal{C}$,
\begin{align*}
Pre^\mathcal{C}(y)\vDash \bigvee_{x\in E_0^{\mathcal{A}_0}\cap S(\cdot,y)}Pre^{\mathcal{A}_0}(x).
\end{align*}
Therefore, $S(\cdot,y)\setminus\theta\neq\emptyset$. Thus, $\exists\theta'\in\Theta_I^{\mathcal{A}_0}$ such that $\theta'\neq\theta$ and $y\in S(\theta,\cdot)\cap S(\theta',\cdot)$, conflicts. Because $G(\theta)$ is the minimal formula basis of $F(\theta)$, we have $\abs{S(\theta, \cdot)} < \abs{G(\theta)}$ implies that $\exists f\in F(\theta)$, $\forall E\subset S(\theta, \cdot)$, $f\not\equiv\bigvee_{y\in E}Pre^\mathcal{C}(y)$. By Eqs. \ref{EqCanRep}, $f\not\equiv\bigvee_{x\in E_0^{\mathcal{A}_0}\cap\theta}Pre^{\mathcal{A}_0}(x)$. Thus, $\exists a\in A$, $\exists\theta_0\in\Theta_I^{\mathcal{A}_0}$, $\exists x_0\in\theta_0$ such that $Pre^{\mathcal{A}_0}(x_0)\not\equiv\bot$, and $f\equiv \bigvee_{x\in\to_a^{\mathcal{A}_0}(\theta_0, \cdot)\cap\theta}Pre^{\mathcal{A}_0}(x)$. Because $x_0\in\theta_0$ and $\Theta_{I-1}^{\mathcal{A}_0}=\Theta_I^{\mathcal{A}_0}$, we have
\begin{align*}
f\equiv\bigvee_{x\in\to_a^{\mathcal{A}_0}(\theta_0, \cdot)\cap\theta}Pre^{\mathcal{A}_0}(x)\equiv\bigvee_{x\in\to_a^{\mathcal{A}_0}(x_0, \cdot)\cap\theta}Pre^{\mathcal{A}_0}(x).
\end{align*}
By Lemma \ref{LemRea2All}, $Pre^{\mathcal{A}_0}(x_0)\vDash \bigvee_{y\in S(x_0,\cdot)}Pre^\mathcal{C}(y)$, thereby $Pre^{\mathcal{A}_0}(x_0)\not\equiv\bot$ implies $S(x_0,\cdot)\neq\emptyset$. Let $y_0\in S(x_0, \cdot)$. By the Zig of $S(x_0, y_0)$,
\begin{align*}
f\equiv\bigvee_{x\in\to_a^{\mathcal{A}_0}(x_0, \cdot)\cap\theta}Pre^{\mathcal{A}_0}(x)\vDash\bigvee_{y\in S(\theta, \cdot)\cap \to_a^\mathcal{C}(y_0, \cdot)}Pre^\mathcal{C}(y).
\end{align*}
Since $\forall E\subset S(\theta, \cdot)$, $f\not\equiv\bigvee_{y\in E}Pre^\mathcal{C}(y)$, we have
\begin{align*}
\bigvee_{y\in S(\theta, \cdot)\cap \to_a^\mathcal{C}(y_0, \cdot)}Pre^\mathcal{C}(y)\nvDash f\equiv \bigvee_{x\in\to_a^{\mathcal{A}_0}(x_0, \cdot)\cap\theta}Pre^{\mathcal{A}_0}(x).
\end{align*}
Thus, $\exists y_1\in S(\theta, \cdot)\cap \to_a^\mathcal{C}(y_0, \cdot)$,
\begin{align*}
Pre^\mathcal{C}(y_1)\nvDash \bigvee_{x\in\to_a^{\mathcal{A}_0}(x_0, \cdot)\cap\theta}Pre^{\mathcal{A}_0}(x).
\end{align*}
By the Zag of $S(x_0, y_0)$,
\begin{align*}
Pre^\mathcal{C}(y_1)\vDash \bigvee_{x\in S(\cdot, y_1)\cap \to_a^{\mathcal{A}_0}(x_0, \cdot)}Pre^{\mathcal{A}_0}(x).
\end{align*}
Thus, $S(\cdot, y_1)\setminus\theta\neq\emptyset$. Therefore, $\exists'\in\Theta_I^{\mathcal{A}_0}$ such that $\theta'\neq\theta$ and $y_1\in S(\theta,\cdot)\cap S(\theta',\cdot)$, conflicts.
\end{proof}

\section{Minimize the event space of an action model under the action model equivalence}
\label{SecMTE}

\begin{definition}[canonical formula, Definitions 5 and 6 in \cite{FangLiu2019AI}]\label{DefCMAndCF}
Let $a\in A$ and $\Phi\subset\mathcal{L}$. The cover modality is defined as follows:
\begin{align*}
\nabla_a\Phi\coloneqq\square_a\qty(\bigvee\Phi)\land\qty(\bigwedge_{\phi\in\Phi}\lozenge_a\phi).
\end{align*}
Let $P\subset \mathcal{P}$ be finite. Define the set $\mathcal{E}_k^P$ of canonical formulas with depth $k$ and propositions $P$ as
\begin{align*}
\mathcal{E}_k^P\coloneqq\left\{\begin{array}{ll}
\set{\top}, & k=-1,\\
\set{\bigwedge P'\land \bigwedge_{p\in P\setminus P'}\lnot p \mid P'\subset P}, & k=0,\\
\set{\psi\land\bigwedge_{a\in A}\nabla_a\Phi_a \mid \psi\in \mathcal{E}_0^P, \Phi_a\subset \mathcal{E}_{k-1}^P}, & k>0.
\end{array}\right.
\end{align*}
\end{definition}

\begin{definition}
Let $\Phi\subset\mathcal{L}$. Define $\mathcal{P}\qty(\Phi)$ as the set of propositions in $\Phi$.
\end{definition}

\begin{proposition}[\cite{Moss2007JOPL}]\label{ProRegBasCF}
Let $k\ge 0$ and $\Phi\subset\set{\phi \mid \phi\in\mathcal{L},\delta(\phi)\le k}$ with $\abs{\Phi}<+\infty$. Then $\mathcal{E}_k^{\mathcal{P}(\Phi)}$ is $\Phi$-regular and $\Phi$-basis.
\end{proposition}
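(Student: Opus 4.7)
The plan is to establish a single characteristic-formula lemma that implies both $\Phi$-regularity and $\Phi$-basis. Write $P\coloneqq\mathcal{P}(\Phi)$, which is finite by the hypothesis on $\Phi$. For each pointed Kripke model $(\mathcal{M},w)$ I define the depth-$k$ canonical type $\tau_k(\mathcal{M},w)\in\mathcal{E}_k^P$ recursively: $\tau_{-1}(\mathcal{M},w)\coloneqq\top$; $\tau_0(\mathcal{M},w)\coloneqq\bigwedge\qty(Val^\mathcal{M}(w)\cap P)\land\bigwedge_{p\in P\setminus Val^\mathcal{M}(w)}\lnot p\in\mathcal{E}_0^P$; and for $k>0$,
\begin{align*}
\tau_k(\mathcal{M},w)\coloneqq\tau_0(\mathcal{M},w)\land\bigwedge_{a\in A}\nabla_a\set{\tau_{k-1}(\mathcal{M},w')\mid w\to_a^\mathcal{M}w'}\in\mathcal{E}_k^P,
\end{align*}
where the inner set is automatically finite because it sits inside the finite set $\mathcal{E}_{k-1}^P$. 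A straightforward induction on $k$, using the semantics of $\nabla_a$, verifies $\mathcal{M}\vDash_w\tau_k(\mathcal{M},w)$, which gives the exhaustiveness statement $\bigvee\mathcal{E}_k^P\equiv\top$.

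Next I would prove the key claim by induction on $k$, with an inner induction on the structure of $\phi$: for every $\chi\in\mathcal{E}_k^P$ and every $\phi\in\mathcal{L}$ with $\delta(\phi)\le k$ and $\mathcal{P}(\phi)\subset P$, either $\chi\vDash\phi$ or $\chi\vDash\lnot\phi$. The Boolean cases (a proposition in $P$, a negation, a disjunction) are read off from the depth-$0$ conjunct of $\chi$ together with the inner inductive hypothesis. The main obstacle is the modal case $\phi=\lozenge_a\psi$ with $\delta(\psi)\le k-1$. Writing $\chi=\psi_0\land\bigwedge_{b\in A}\nabla_b\Phi_b$ with $\Phi_b\subset\mathcal{E}_{k-1}^P$, the outer inductive hypothesis applied to $\psi$ lets me partition $\Phi_a=\Phi_a^+\sqcup\Phi_a^-$ where $\chi'\vDash\psi$ for $\chi'\in\Phi_a^+$ and $\chi'\vDash\lnot\psi$ for $\chi'\in\Phi_a^-$. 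If $\Phi_a^+\neq\emptyset$, any $\chi'\in\Phi_a^+$ supplies a conjunct $\lozenge_a\chi'$ of $\nabla_a\Phi_a$, forcing $\chi\vDash\lozenge_a\psi$. If $\Phi_a^+=\emptyset$, then $\bigvee\Phi_a\vDash\lnot\psi$, and the $\square_a\bigvee\Phi_a$ part of $\nabla_a\Phi_a$ yields $\chi\vDash\square_a\lnot\psi$, i.e.\ $\chi\vDash\lnot\lozenge_a\psi$. The case $\phi=\square_a\psi$ reduces to this one by negation.

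Given the key claim, $\Phi$-regularity is immediate because every $\phi\in\Phi$ satisfies $\delta(\phi)\le k$ and $\mathcal{P}(\phi)\subset P$. For $\Phi$-basis I would set $\Phi'\coloneqq\set{\chi\in\mathcal{E}_k^P\mid\chi\vDash\phi}$; the inclusion $\bigvee\Phi'\vDash\phi$ is trivial, and conversely, for any $(\mathcal{M},w)$ with $\mathcal{M}\vDash_w\phi$, exhaustiveness produces $\tau_k(\mathcal{M},w)\in\mathcal{E}_k^P$ satisfied at $w$, and regularity applied to this particular canonical formula forces $\tau_k(\mathcal{M},w)\vDash\phi$ (the alternative $\tau_k(\mathcal{M},w)\vDash\lnot\phi$ is incompatible with $\mathcal{M}\vDash_w\phi$), so $\tau_k(\mathcal{M},w)\in\Phi'$ and $\mathcal{M}\vDash_w\bigvee\Phi'$. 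The only delicate step is the modal case of the key claim, where both halves of the cover modality must be exploited simultaneously: the $\lozenge_a$ conjuncts witness the positive side, while the $\square_a\bigvee\Phi_a$ conjunct confines successors for the negative side.
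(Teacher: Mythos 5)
Your proof is correct. The paper itself gives no argument for this proposition — it is imported wholesale from \cite{Moss2007JOPL} — so there is no internal proof to compare against; what you have written is the standard normal-form argument (characteristic depth-$k$ types $\tau_k(\mathcal{M},w)$ for exhaustiveness, plus a double induction on $k$ and on the structure of $\phi$ for the deciding property, with the cover modality $\nabla_a$ supplying $\lozenge_a\chi'$ for the positive modal case and $\square_a\bigvee\Phi_a$ for the negative one, including the vacuous case $\Phi_a=\emptyset$), and it is complete and self-contained. One remark worth recording: read literally against the paper's definition of ``$X$ is $Y$-regular/basis'', the statement ``$\mathcal{E}_k^{\mathcal{P}(\Phi)}$ is $\Phi$-regular and $\Phi$-basis'' has the two formula sets in the wrong slots (it would assert that each $\phi\in\Phi$ decides each canonical formula and that each canonical formula is a disjunction over $\Phi$, which is false in general); what you prove is the converse reading — each $\chi\in\mathcal{E}_k^{\mathcal{P}(\Phi)}$ decides each $\phi\in\Phi$, and each $\phi\in\Phi$ is equivalent to $\bigvee\set{\chi\in\mathcal{E}_k^{\mathcal{P}(\Phi)} \mid \chi\vDash\phi}$ — and this is exactly the form in which the proposition is invoked later (e.g.\ in Proposition \ref{ProDisInvCF} and Theorem \ref{ThelkEquivTrans}), so you have proved the intended statement.
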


\begin{proposition}\label{ProDisInvCF}
Let $k\ge 0$ and $P\subset\mathcal{P}$ with $\abs{P}<+\infty$. Then
\begin{enumerate}
\item $\forall j\ge k$, $\mathcal{E}_k^P$ is $\mathcal{E}_j^P$-discrete.
\item $\forall j\ge k-1$, $\mathcal{E}_k^P$ is $\mathcal{E}_j^P$-known.
\end{enumerate}
\end{proposition}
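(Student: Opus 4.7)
The plan is to prove both parts by combining Proposition \ref{ProRegBasCF} with a model-gluing lemma specific to canonical formulas.

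For Part 2 I would argue by structural recursion on $k$. If $k=0$ then $\phi\in\mathcal{E}_0^P$ is a propositional conjunction of literals, so $\phi\vDash\square_a\xi$ forces $\xi$ to be valid (one may freely attach an $a$-successor with arbitrary substructure to any $\phi$-world), whence $\bigvee\mathcal{G}\qty(\xi,\mathcal{E}_j^P)\equiv\top$ and the conclusion is trivial for every $j\ge -1$. If $k>0$, write $\phi=\psi\land\bigwedge_{a\in A}\nabla_a\Phi_a$ with $\Phi_a\subset\mathcal{E}_{k-1}^P$. Using satisfiability of canonical formulas together with the $\nabla_a$-structure, I would first verify that $\phi\vDash\square_a\xi$ is equivalent to $\phi'\vDash\xi$ for every $\phi'\in\Phi_a$. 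Each such $\phi'$ has depth $\le k-1\le j$, so Proposition \ref{ProRegBasCF} gives $\phi'\equiv\bigvee\mathcal{G}\qty(\phi',\mathcal{E}_j^P)$, and every $\xi'$ in this disjunction satisfies $\xi'\vDash\phi'\vDash\xi$, hence $\xi'\in\mathcal{G}\qty(\xi,\mathcal{E}_j^P)$. Combining, $\phi\vDash\square_a\bigvee\Phi_a\vDash\square_a\bigvee\mathcal{G}\qty(\xi,\mathcal{E}_j^P)$.

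For Part 1 the critical step is the following lemma: \emph{for any $\phi\in\mathcal{E}_k^P$, if $\phi\vDash\bigvee_{l=1}^L\bigwedge_{a\in A}\square_a\xi_l^a$, then already $\phi\vDash\bigwedge_{a\in A}\square_a\xi_{l^*}^a$ for some $l^*$.} I would prove it by contradiction. If not, for each $l$ pick a pointed Kripke model $(\mathcal{M}_l,w_l)$ with $\mathcal{M}_l\vDash_{w_l}\phi$ and an $a_l$-successor $u_l$ of $w_l$ with $\mathcal{M}_l\nvDash_{u_l}\xi_l^{a_l}$. Build $\mathcal{M}$ as the disjoint union of the $\mathcal{M}_l$'s augmented by a fresh root $w$ whose $P$-valuation equals that of any $w_l$ (consistent, because $\phi\in\mathcal{E}_k^P$ pins down every $p\in P$ at all its roots) and whose $a$-successors are the union over $l$ of the $a$-successors of $w_l$ in $\mathcal{M}_l$. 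Because $\phi$ is canonical, the conditions $\square_a\bigvee\Phi_a$ and $\lozenge_a\phi'$ at $w$ survive the merging: every newly added $a$-successor already satisfied $\bigvee\Phi_a$ in its own $\mathcal{M}_l$, and every witness demanded by $\Phi_a$ is still present. Hence $\mathcal{M}\vDash_w\phi$, so some $l^*$ forces $\mathcal{M}\vDash_w\square_{a_{l^*}}\xi_{l^*}^{a_{l^*}}$; but $u_{l^*}$ is an $a_{l^*}$-successor of $w$ whose subtree is inherited unchanged from $\mathcal{M}_{l^*}$, so it still falsifies $\xi_{l^*}^{a_{l^*}}$, a contradiction.

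Armed with the lemma, Part 1 finishes quickly: given $l^*$ as above, Proposition \ref{ProRegBasCF} yields $\phi\equiv\bigvee\mathcal{G}\qty(\phi,\mathcal{E}_j^P)$, and each $\xi\in\mathcal{G}\qty(\phi,\mathcal{E}_j^P)$ satisfies $\xi\vDash\phi\vDash\bigwedge_{a\in A}\square_a\xi_{l^*}^a$, so $\xi\in\mathcal{G}\qty(\bigwedge_{a\in A}\square_a\xi_{l^*}^a,\mathcal{E}_j^P)$, whence $\phi\vDash\bigvee\mathcal{G}\qty(\bigwedge_{a\in A}\square_a\xi_{l^*}^a,\mathcal{E}_j^P)\vDash\bigvee_{l=1}^L\bigvee\mathcal{G}\qty(\bigwedge_{a\in A}\square_a\xi_l^a,\mathcal{E}_j^P)$. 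The main obstacle I anticipate is the careful verification of the gluing construction: one must record that truth of any subformula at a world depends only on the substructure rooted at that world, so descendants keep their modal properties under disjoint union, and that canonicity of $\phi$ makes adding extra successors at the root harmless; everything else is bookkeeping on the $\mathcal{G}\qty(\cdot,\mathcal{E}_j^P)$ decomposition.
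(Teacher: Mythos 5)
Your proof is correct and follows essentially the same route as the paper's: both arguments reduce everything to the $\nabla_a$-structure of a canonical formula (so that $\bigvee\Phi_a$ is the exact box-content for Part 2, and an entailed disjunction $\bigvee_{l=1}^L\bigwedge_{a\in A}\square_a\xi_l^a$ collapses to a single disjunct for Part 1), and then convert entailments into memberships in $\mathcal{G}\qty(\cdot,\mathcal{E}_j^P)$ via Proposition \ref{ProRegBasCF}. The only real difference is that you prove the collapse step in full through the fresh-root gluing construction, whereas the paper dispatches it in one line by citing $\psi\in\mathcal{E}_0^P\subset\mathcal{L}_0$; your lemma is precisely the justification that assertion needs.
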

\begin{proof}
Let $\phi\coloneqq\psi\land\bigwedge_{a\in A}\nabla_a\Phi_a\in\mathcal{E}_k^P$ with $\psi\in \mathcal{E}_0^P$ and $\Phi_a\subset\mathcal{E}_{k-1}^P$, $\xi_l^a\in\mathcal{L}$ for $1\le l\le L$ and $a\in A$. If $\phi\vDash\bigvee_{l=1}^L\bigwedge_{a\in A}\square_a\xi_l^a$, then since $\psi\in\mathcal{E}_0^P\subset\mathcal{L}_0$, we have that $\exists 1\le l'\le L$, $\phi\vDash\bigwedge_{a\in A}\square_a\qty(\bigvee\Phi_a) \vDash\bigwedge_{a\in A}\square_a\xi_{l'}^a$. Thus, $\forall j\ge k$, $\phi\vDash\mathcal{G}\qty(\bigwedge_{a\in A}\square_a\xi_{l'}^a, \mathcal{E}_j^P)$ since by Proposition \ref{ProRegBasCF}, $\delta(\phi)=k$ implies $\phi$ is $\mathcal{E}_j^P$-basis. Therefore, $\phi\vDash\bigvee_{l=1}^L\mathcal{G}\qty(\bigwedge_{a\in A}\square_a\xi_l^a, \mathcal{E}_j^P)$, {\it i.e.} $\mathcal{E}_k^P$ is $\mathcal{E}_j^P$-discrete.

Note that $D_{[]}^A\qty(\phi)=\set{\bigvee\Phi_a \mid a\in A}$.
\begin{enumerate}
\item If $k=0$, then $\Phi_a=\set{\top}$, thereby $D_{[]}^A\qty(\phi)=\set{\bigvee\Phi_a \mid a\in A}=\set{\top}=\mathcal{E}_{-1}^P$, {\it i.e.} $\mathcal{E}_0^P$ is $\mathcal{E}_{-1}^P$-known.
\item If $k>0$, then $\forall a\in A$, $\Phi_a\subset\mathcal{E}_{k-1}^P$ satisfies $\delta\qty(\Phi_a)=k-1$, thereby $\forall j\ge k-1$, $\mathcal{E}_j^P$ is $\Phi_a$-basis by Proposition \ref{ProRegBasCF}. Thus, $\exists \Psi\subset\mathcal{E}_j^P$, $\bigvee\Psi\equiv\bigvee\Phi_a$, {\it i.e.} $\mathcal{E}_k^P$ is $\mathcal{E}_j^P$-invariant.
\end{enumerate}
\end{proof}

\begin{corollary}
Let $\mathcal{A}$ and $\mathcal{B}$ be action models, $P\coloneqq\mathcal{P}\circ Pre^\mathcal{A}\qty(E^\mathcal{A})\cup\mathcal{P}\circ Pre^\mathcal{B}\qty(E^\mathcal{B})$, $k\ge 0$, $\Theta(x,y)=\mathcal{E}_k^P$ for $(x,y)\in E^\mathcal{A}\times E^\mathcal{B}$. If $\delta\circ Pre^\mathcal{A}\qty(E^\mathcal{A})\le k$ and $\delta\circ Pre^\mathcal{B}\qty(E^\mathcal{B})\le k$, then Algorithm \ref{DEOTMB} must return in finite steps, and it returns true iff $\mathcal{A}\leftrightarrows_G\mathcal{B}$ ({\it i.e.} $\mathcal{A}\equiv\mathcal{B}$).
\end{corollary}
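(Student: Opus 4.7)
The plan is to reduce this corollary to the same two-step template used for the earlier corollaries in Section \ref{SecGAE} (most directly the corollary following Proposition \ref{ProThreeProperties}): check that $\mathcal{E}_k^P$ is finite and satisfies the four critical hypotheses of Proposition \ref{ProGenSetNece} when we choose $F=\widehat{\Theta}=\mathcal{E}_k^P$, then invoke Propositions \ref{ProFniteRetTrue2Equiv} and \ref{ProGenSetNece} in sequence. The depth assumption $\delta\circ Pre^\mathcal{A}(E^\mathcal{A})\le k$ and $\delta\circ Pre^\mathcal{B}(E^\mathcal{B})\le k$ is there precisely so that Proposition \ref{ProRegBasCF} applies to $\Phi\coloneqq Pre^\mathcal{A}(E^\mathcal{A})\cup Pre^\mathcal{B}(E^\mathcal{B})$, and the restriction $P\supset\mathcal{P}(\Phi)$ matches its hypothesis on propositions.

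First I would observe that $P$ is finite (finitely many preconditions, each a finite formula), so an easy induction on $k$ using Definition \ref{DefCMAndCF} shows $\abs{\mathcal{E}_k^P}<+\infty$. Next I would verify the four properties needed by Proposition \ref{ProGenSetNece} with $F=\widehat{\Theta}=\mathcal{E}_k^P$:
\begin{enumerate}
\item $Pre^\mathcal{A}(E^\mathcal{A})$ and $Pre^\mathcal{B}(E^\mathcal{B})$ are $\mathcal{E}_k^P$-regular and $\mathcal{E}_k^P$-basis by Proposition \ref{ProRegBasCF}, since every precondition has depth at most $k$ and its propositions lie in $P$.
\item $\mathcal{E}_k^P$ is $\mathcal{E}_k^P$-discrete by Proposition \ref{ProDisInvCF} applied with $j=k$.
\item $\mathcal{E}_k^P$ is $\mathcal{E}_k^P$-known by Proposition \ref{ProDisInvCF} applied with $j=k\ge k-1$.
\end{enumerate}
With these in hand, the corollary follows in three clean implications: Proposition \ref{ProFniteRetTrue2Equiv}(1) gives finite termination (using $\abs{\Theta(x,y)}=\abs{\mathcal{E}_k^P}<+\infty$); Proposition \ref{ProFniteRetTrue2Equiv}(2) gives soundness (if Algorithm \ref{DEOTMB} returns true then $\mathcal{A}\leftrightarrows_G\mathcal{B}$, equivalently $\mathcal{A}\equiv\mathcal{B}$ by Theorem \ref{TheEmuEquiv}); and Proposition \ref{ProGenSetNece} gives completeness (if $\mathcal{A}\equiv\mathcal{B}$ then Algorithm \ref{DEOTMB} returns true).

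I do not anticipate a real obstacle; the proof is essentially a bookkeeping exercise matching Proposition \ref{ProGenSetNece}'s hypothesis list against what Propositions \ref{ProRegBasCF} and \ref{ProDisInvCF} deliver. The only points that deserve a sentence of care are (i) confirming the depth assumption is what licenses the use of Proposition \ref{ProRegBasCF} for both precondition sets simultaneously (since $P\supset\mathcal{P}(\Phi)$ and $\delta(\Phi)\le k$), and (ii) noting that choosing $F$ and $\widehat{\Theta}$ to be the \emph{same} set $\mathcal{E}_k^P$ is valid—Proposition \ref{ProGenSetNece} does not require them to be distinct, and Proposition \ref{ProDisInvCF} gives both the $\mathcal{E}_j^P$-discreteness and $\mathcal{E}_j^P$-knownness of $\mathcal{E}_k^P$ at the single choice $j=k$. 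After these remarks the whole corollary collapses to a one-line citation, exactly in the style of the earlier corollaries in this section.
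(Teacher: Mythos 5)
Your proposal is correct and follows exactly the paper's own route: the paper's proof is a one-line citation of Propositions \ref{ProFniteRetTrue2Equiv}, \ref{ProGenSetNece}, \ref{ProRegBasCF} and \ref{ProDisInvCF}, and your verification that $F=\widehat{\Theta}=\mathcal{E}_k^P$ satisfies the four hypotheses of Proposition \ref{ProGenSetNece} (via Proposition \ref{ProRegBasCF} under the depth bound and Proposition \ref{ProDisInvCF} at $j=k$) is precisely the bookkeeping the paper leaves implicit.
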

\begin{proof}
This is obvious by Propositions \ref{ProFniteRetTrue2Equiv}, \ref{ProGenSetNece}, \ref{ProRegBasCF} and \ref{ProDisInvCF}.
\end{proof}

\begin{definition}\label{DefEquviRelCanForm}
Let $\mathcal{A}$ be an action model. $\forall x\in E^\mathcal{A}$, $\forall a\in A$, define
\begin{align*}
&\mathcal{R}_a^\mathcal{A}\qty(x)\coloneqq\set{y \mid y\in E^\mathcal{A},Pre^\mathcal{A}(x)\land\lozenge_a Pre^\mathcal{A}(y)\not\equiv\bot},\\
&\mathcal{Q}_a^\mathcal{A}\qty(x)\coloneqq\set{y \mid y\in\to_a^\mathcal{A}(x,\cdot),Pre^\mathcal{A}(x)\land\lozenge_a Pre^\mathcal{A}(y)\not\equiv\bot}.
\end{align*}
\end{definition}

\begin{lemma}\label{LemSepCovActMod}
Let $\mathcal{A}$ and $\mathcal{B}$ be action models. Let $[Pre](y)\subset E^\mathcal{A}$ for $y\in E^\mathcal{B}$ such that
\begin{enumerate}
\item $E_0^\mathcal{A}=\bigcup_{y\in E^\mathcal{B}_0}\qty[Pre](y)$.

\item $\forall y\in E^\mathcal{B}$, $\forall a\in A$,
\begin{align*}
&\bigcup_{x\in[Pre](y)}\mathcal{Q}_a^\mathcal{A}(x)\subset \bigcup_{y'\in \to_a^\mathcal{B}(y,\cdot)}[Pre](y'),\\
&\qty(\bigcup_{x\in[Pre](y)}\qty(\mathcal{R}_a^\mathcal{A}(x)\setminus\mathcal{Q}_a^\mathcal{A}(x)))\cap\qty(\bigcup_{y'\in \to_a^\mathcal{B}(y,\cdot)}[Pre](y'))=\emptyset.
\end{align*}

\item $\forall y\in E^\mathcal{B}$, $Pre^\mathcal{B}(y)=\bigvee_{x\in[Pre](y)}Pre^\mathcal{A}(x)$.
\end{enumerate}
Then $\mathcal{A}\equiv\mathcal{B}$.
\end{lemma}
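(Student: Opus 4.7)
My proposal is to prove $\mathcal{A}\equiv\mathcal{B}$ directly from the definition: for an arbitrary Kripke model $\mathcal{M}$, exhibit an explicit bisimulation between $\mathcal{M}\otimes\mathcal{A}$ and $\mathcal{M}\otimes\mathcal{B}$. The natural candidate, suggested by the hypotheses, is
\begin{equation*}
R \coloneqq \set{((w,x),(w,y))\in W^{\mathcal{M}\otimes\mathcal{A}}\times W^{\mathcal{M}\otimes\mathcal{B}} \mid x\in [Pre](y)}.
\end{equation*}
The Invariance condition is immediate since $Val^{\mathcal{M}\otimes\mathcal{A}}(w,x)=Val^\mathcal{M}(w)=Val^{\mathcal{M}\otimes\mathcal{B}}(w,y)$ by Definition \ref{DefUpdate}. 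For Zig0, any $(w,x)\in W_0^{\mathcal{M}\otimes\mathcal{A}}$ has $x\in E_0^\mathcal{A}$, so by condition~1 some $y\in E_0^\mathcal{B}$ contains $x$ in $[Pre](y)$; condition~3 then yields $Pre^\mathcal{A}(x)\vDash Pre^\mathcal{B}(y)$, putting $(w,y)$ in $W_0^{\mathcal{M}\otimes\mathcal{B}}$ with $R((w,x),(w,y))$. Zag0 is similar: for $(w,y)\in W_0^{\mathcal{M}\otimes\mathcal{B}}$, expanding $Pre^\mathcal{B}(y)$ via condition~3 produces some $x\in [Pre](y)$ with $\mathcal{M}\vDash_w Pre^\mathcal{A}(x)$, and condition~1 places $x$ in $E_0^\mathcal{A}$.

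For Zig, assume $R((w,x),(w,y))$ and let $(w,x)\to_a^{\mathcal{M}\otimes\mathcal{A}}(w',x')$. By Definition \ref{DefUpdate} we have $w\to_a^\mathcal{M}w'$, $x\to_a^\mathcal{A}x'$, $\mathcal{M}\vDash_w Pre^\mathcal{A}(x)$ and $\mathcal{M}\vDash_{w'} Pre^\mathcal{A}(x')$. In particular $Pre^\mathcal{A}(x)\land\lozenge_a Pre^\mathcal{A}(x')\not\equiv\bot$, so $x'\in\mathcal{Q}_a^\mathcal{A}(x)$ by Definition \ref{DefEquviRelCanForm}. The first inclusion in condition~2 then delivers $y'\in\to_a^\mathcal{B}(y,\cdot)$ with $x'\in[Pre](y')$; condition~3 gives $\mathcal{M}\vDash_{w'}Pre^\mathcal{B}(y')$, so $(w,y)\to_a^{\mathcal{M}\otimes\mathcal{B}}(w',y')$ and $R((w',x'),(w',y'))$.

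The Zag case is the one I expect to require the disjointness clause in condition~2, which is otherwise unused. Suppose $(w,y)\to_a^{\mathcal{M}\otimes\mathcal{B}}(w',y')$. Using condition~3 pick $z\in[Pre](y')$ with $\mathcal{M}\vDash_{w'}Pre^\mathcal{A}(z)$; combined with $\mathcal{M}\vDash_w Pre^\mathcal{A}(x)$ and $w\to_a^\mathcal{M}w'$ this gives $z\in\mathcal{R}_a^\mathcal{A}(x)$. A priori $z$ need not be an $a$-successor of $x$ in $\mathcal{A}$. Here the second clause of condition~2 intervenes: since $z\in[Pre](y')\subset\bigcup_{y''\in\to_a^\mathcal{B}(y,\cdot)}[Pre](y'')$, the disjointness rules out $z\in\mathcal{R}_a^\mathcal{A}(x)\setminus\mathcal{Q}_a^\mathcal{A}(x)$, forcing $z\in\mathcal{Q}_a^\mathcal{A}(x)$, so $x\to_a^\mathcal{A}z$. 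Then $(w,x)\to_a^{\mathcal{M}\otimes\mathcal{A}}(w',z)$ and $R((w',z),(w',y'))$ finishes Zag. This verifies $\mathcal{M}\otimes\mathcal{A}\underline{\leftrightarrow}\mathcal{M}\otimes\mathcal{B}$ for arbitrary $\mathcal{M}$, hence $\mathcal{A}\equiv\mathcal{B}$. The only genuinely tricky point is the Zag step above; everything else follows mechanically from the three hypotheses together with the definitions of $\mathcal{R}_a^\mathcal{A}$, $\mathcal{Q}_a^\mathcal{A}$, and the update operation.
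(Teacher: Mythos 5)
Your proof is correct, and every step checks out: the relation $R=\set{((w,x),(w,y)) \mid x\in[Pre](y)}$ does satisfy Invariance, Zig, Zag, Zig0 and Zag0, and your use of the disjointness clause in condition~2 to upgrade $z\in\mathcal{R}_a^\mathcal{A}(x)$ to $z\in\mathcal{Q}_a^\mathcal{A}(x)$ in the Zag step is exactly the point where that hypothesis is needed. However, you take a genuinely different route from the paper. The paper never touches a concrete Kripke model here: it defines the map $\eta(x,y)\coloneqq Pre^\mathcal{A}(x)$ when $x\in[Pre](y)$ and $\eta(x,y)\coloneqq\bot$ otherwise, verifies that $\eta$ is a generalized action emulation witnessing $\mathcal{A}\leftrightarrows_G\mathcal{B}$, and then concludes $\mathcal{A}\equiv\mathcal{B}$ from Theorem~\ref{TheEmuEquiv}. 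The two arguments are closely related — composing the paper's $\eta$ with the relation $\set{((w,x),(w,y)) \mid \mathcal{M}\vDash_w\eta(x,y)}$ used in the necessity half of Theorem~\ref{TheEmuEquiv} recovers precisely your $R$, since $(w,x)\in W^{\mathcal{M}\otimes\mathcal{A}}$ already forces $\mathcal{M}\vDash_w Pre^\mathcal{A}(x)$ — so the underlying combinatorics is identical. What your version buys is self-containment and elementarity: it needs only the definitions of $\otimes$, $\mathcal{Q}_a^\mathcal{A}$ and $\mathcal{R}_a^\mathcal{A}$, not the $\leftrightarrows_G$ machinery. What the paper's version buys is staying inside the generalized-action-emulation framework that organizes the whole section: the explicit $\eta$ it constructs is $Pre^\mathcal{A}(E^\mathcal{A})$-regular, which makes this lemma the exact converse of Lemma~\ref{LemEquivToSepCover} and lets the pair be chained together in Theorem~\ref{TheCoverNonInterEquivCond}.
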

\begin{proof}
$\forall (x,y)\in E^\mathcal{A}\times E^\mathcal{B}$, let
\begin{align*}
\eta(x,y)\coloneqq\left\{\begin{array}{ll}
Pre^\mathcal{A}(x), & x\in [Pre](y),\\
\bot, & x\notin [Pre](y).
\end{array}\right.
\end{align*}
We now prove that $\eta$ satisfies $\mathcal{A}\leftrightarrows_G\mathcal{B}$, thereby $\mathcal{A}\equiv\mathcal{B}$.
\begin{description}
\item[Zig0] Let $x_0\in E_0^\mathcal{A}=\bigcup_{y\in E^\mathcal{B}_0}[Pre](y)$. Then $\exists y_0\in E^\mathcal{B}_0$, $x_0\in [Pre](y_0)$, thereby $\eta(x_0,y_0)=Pre^\mathcal{A}(x_0)$. Thus,
\begin{align*}
Pre^\mathcal{A}(x_0)\equiv Pre^\mathcal{A}(x_0)\land\bigvee_{x\in[Pre](y_0)}Pre^\mathcal{A}(x) = \eta(x_0,y_0)\land Pre^\mathcal{B}(y_0) \vDash \bigvee_{y\in E_0^\mathcal{B}}\qty(\eta(x_0,y)\land Pre^\mathcal{B}(y)).
\end{align*}

\item[Zag0] Let $y_0\in E_0^\mathcal{B}$. Since $[Pre](y_0)\subset \bigcup_{y\in E^\mathcal{B}_0}\qty[Pre](y)=E_0^\mathcal{A}$, we have
\begin{align*}
Pre^\mathcal{B}(y_0)=\bigvee_{x\in[Pre](y_0)}Pre^\mathcal{A}(x)\equiv\bigvee_{x\in[Pre](y_0)}\qty(Pre^\mathcal{A}(x)\land\eta(x,y_0))\vDash \bigvee_{x\in E_0^\mathcal{A}}\qty(Pre^\mathcal{A}(x)\land\eta(x,y_0)).
\end{align*}

\item[Zig] Let $(x,y)\in E^\mathcal{A}\times E^\mathcal{B}$. Assume without loss of generality that $x\in [Pre](y)$ since otherwise, $\eta(x,y)=\bot$ is trival. Let $a\in A$ and $x'\in\to_a^\mathcal{A}(x,\cdot)$. Assume without loss of generality that $Pre^\mathcal{A}(x)\land\lozenge_a Pre^\mathcal{A}(x')\not\equiv\bot$ since otherwise, $\eta(x,y)= Pre^\mathcal{A}(x)\vDash \square_a\lnot Pre^\mathcal{A}(x')$ is trival. Then $x'\in\mathcal{Q}_a^\mathcal{A}(x)\subset\bigcup_{y'\in\to_a^\mathcal{B}(y,\cdot)}[Pre](y')$. Thus, $\exists y''\in \to_a^\mathcal{B}(y,\cdot)$, $x'\in [Pre](y'')$, which implies
\begin{align*}
Pre^\mathcal{A}(x')\equiv Pre^\mathcal{B}(y'')\land\eta(x',y'')\vDash \bigvee_{y'\in\to_a^\mathcal{B}(y,\cdot)} Pre^\mathcal{B}(y')\land\eta(x',y').
\end{align*}
Therefore,
\begin{align*}
\eta(x,y)\vDash \top\equiv\square_a\top\equiv\square_a\qty(Pre^\mathcal{A}(x')\to\bigvee_{y'\in\to_a^\mathcal{B}(y,\cdot)} Pre^\mathcal{B}(y')\land\eta(x',y')).
\end{align*}

\item[Zag] Let $(x,y)\in E^\mathcal{A}\times E^\mathcal{B}$ Assume without loss of generality that $x\in[Pre](y)$ since otherwise, $\eta(x,y)=\bot$ is trival. Let $a\in A$, $y'\in\to_a^\mathcal{B}(y,\cdot)$ and $x''\in [Pre](y')$. If $Pre^\mathcal{A}(x)\land\lozenge_a Pre^\mathcal{A}(x'')\equiv\bot$, then
\begin{align*}
\eta(x,y)= Pre^\mathcal{A}(x)\vDash \square_a\lnot Pre^\mathcal{A}(x'').
\end{align*}
Otherwise, $x''\in\mathcal{R}_a^\mathcal{A}(x)$. Because $\qty(\mathcal{R}_a^\mathcal{A}(x)\setminus\mathcal{Q}_a^\mathcal{A}(x))\cap [Pre](y')=\emptyset$, we have $x''\in \mathcal{Q}_a^\mathcal{A}(x)$, thereby $x''\in\to_a^\mathcal{A}(x)$. Since $x''\in[Pre](y')$, we have
\begin{align*}
Pre^\mathcal{A}(x'')\equiv Pre^\mathcal{A}(x'')\land\eta(x'',y')\vDash\bigvee_{x'\in\to_a^\mathcal{A}(x,\cdot)}\qty(Pre^\mathcal{A}(x')\land\eta(x',y')).
\end{align*}
Therefore,
\begin{align*}
\eta(x,y)=Pre^\mathcal{A}(x)\vDash\top\equiv\square_a\top\equiv\square_a\qty(Pre^\mathcal{A}(x'')\to \bigvee_{x'\in\to_a^\mathcal{A}(x,\cdot)}\qty(Pre^\mathcal{A}(x')\land\eta(x',y'))).
\end{align*}
Since $x''\in[Pre](y')$ is arbitrary, we have
\begin{align*}
\eta(x,y)\vDash\square_a\qty(Pre^\mathcal{B}(y')\to \bigvee_{x'\in\to_a^\mathcal{A}(x,\cdot)}\qty(Pre^\mathcal{A}(x')\land\eta(x',y'))).
\end{align*}
\end{description}
\end{proof}

\begin{definition}[bisimulation refinement]\label{DefBiSimEquivParti}
Let $\mathcal{A}$ be an action model. Let $\Omega_0^\mathcal{A}$ be the partition of $E^\mathcal{A}$ based on the equivalence relationship $\sim_0^\mathcal{A}$ such that $\forall x,y\in E^\mathcal{A}$, $x\sim_0^\mathcal{A}y$ iff $Pre^\mathcal{A}(x)\equiv Pre^\mathcal{B}(y)$. $\forall k\ge 0$, define the equivalence relationship $\sim_{k+1}^\mathcal{A}$ over $E^\mathcal{A}$ such that $\forall x,y\in E^\mathcal{A}$, $x\sim_{k+1}^\mathcal{A}y$ iff $x\sim_k^\mathcal{A}y$ and $\forall E'\in\Omega_k^\mathcal{A}$, $\forall a\in A$, $\mathcal{Q}_a^\mathcal{A}(x)\cap E'\neq\emptyset$ iff $\mathcal{Q}_a^\mathcal{A}(y)\cap E'\neq\emptyset$.

$\forall k\ge 0$, define the action model $[\mathcal{A}]_k$ that
\begin{enumerate}
\item $E^{[\mathcal{A}]_k}\coloneqq\Omega_k^\mathcal{A}$. $E_0^{[\mathcal{A}]_k}\coloneqq\set{E\in\Omega_k^\mathcal{A} \mid E\cap E_0^\mathcal{A}\neq\emptyset}$.
\item $\forall E\in\Omega_k^\mathcal{A}$, $Pre^{[\mathcal{A}]_k}(E)=Pre^\mathcal{A}(x)$, where $x\in E$ can be arbitrarily chosen since $\Omega_k^\mathcal{A}$ is denser than $\Omega_0^\mathcal{A}$.
\item $\forall E_1,E_2\in\Omega_k^\mathcal{A}$, $\forall a\in A$, $E_1\to_a^{[\mathcal{A}]_k}E_2$ iff $\mathcal{Q}_a^\mathcal{A}(E_1)\cap E_2\neq\emptyset$.
\end{enumerate}
\end{definition}

\begin{proposition}\label{ProBisimRefBisim}
Let $\mathcal{A}$ be an action model. Then $\exists k\ge 0$ such that $\Omega_k^\mathcal{A}=\Omega_{k+1}^\mathcal{A}$ and $\mathcal{A}\underline{\leftrightarrow}[\mathcal{A}]_k$.
\end{proposition}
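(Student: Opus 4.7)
The plan is to establish the two claims separately: first the stabilization of the partition refinement, then the bisimulation $\mathcal{A}\underline{\leftrightarrow}[\mathcal{A}]_k$ at the stabilization index.

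For the stabilization, I would first verify by induction on $k$ that $\Omega_{k+1}^\mathcal{A}$ refines $\Omega_k^\mathcal{A}$, i.e., every class in $\Omega_{k+1}^\mathcal{A}$ is contained in some class of $\Omega_k^\mathcal{A}$. This is immediate from the defining clause of $\sim_{k+1}^\mathcal{A}$, which explicitly requires $x\sim_k^\mathcal{A} y$ as a prerequisite. Hence the cardinality $|\Omega_k^\mathcal{A}|$ is non-decreasing in $k$, and since $E^\mathcal{A}$ is finite, the sequence $\Omega_0^\mathcal{A},\Omega_1^\mathcal{A},\ldots$ must stabilize at some $k\ge 0$ with $\Omega_k^\mathcal{A}=\Omega_{k+1}^\mathcal{A}$.

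For the bisimulation claim, I would introduce the candidate relation $S\coloneqq\{(x,E)\in E^\mathcal{A}\times \Omega_k^\mathcal{A}\mid x\in E\}$ and verify each clause of Definition \ref{DefBiSimAct}. The Invariance clause holds because $Pre^{[\mathcal{A}]_k}(E)$ is defined to equal $Pre^\mathcal{A}(x')$ for any $x'\in E$, and $\Omega_k^\mathcal{A}$ refines $\Omega_0^\mathcal{A}$. For the Zig, given $S(x,E)$ and $x'\in\to_a^\mathcal{A}(x,\cdot)$ with $Pre^\mathcal{A}(x)\land\lozenge_a Pre^\mathcal{A}(x')\not\equiv\bot$, so $x'\in\mathcal{Q}_a^\mathcal{A}(x)\subset\mathcal{Q}_a^\mathcal{A}(E)$, the class $E'\in\Omega_k^\mathcal{A}$ containing $x'$ satisfies $\mathcal{Q}_a^\mathcal{A}(E)\cap E'\neq\emptyset$, so $E\to_a^{[\mathcal{A}]_k} E'$ and $S(x',E')$. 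The Zig0 and Zag0 clauses are direct from the definition of $E_0^{[\mathcal{A}]_k}$ as the classes meeting $E_0^\mathcal{A}$.

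The main obstacle is the Zag clause: given $S(x,E)$ and $E\to_a^{[\mathcal{A}]_k} E'$ with the consistency precondition, we must produce some $x'\in\to_a^\mathcal{A}(x,\cdot)$ with $x'\in E'$. The transition $E\to_a^{[\mathcal{A}]_k} E'$ only provides \emph{some} $x_0\in E$ with $\mathcal{Q}_a^\mathcal{A}(x_0)\cap E'\neq\emptyset$, which is a priori weaker than a witness lying in $\mathcal{Q}_a^\mathcal{A}(x)$. This is precisely where the stabilization is used: because $x,x_0\in E$ implies $x\sim_k^\mathcal{A} x_0$, and because $\sim_k^\mathcal{A}=\sim_{k+1}^\mathcal{A}$ by the stabilization, the defining condition of $\sim_{k+1}^\mathcal{A}$ applied to the pair $(x,x_0)$ yields $\mathcal{Q}_a^\mathcal{A}(x_0)\cap E'\neq\emptyset$ iff $\mathcal{Q}_a^\mathcal{A}(x)\cap E'\neq\emptyset$. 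Therefore one may pick the required $x'$ from $\mathcal{Q}_a^\mathcal{A}(x)\cap E'\subset\to_a^\mathcal{A}(x,\cdot)$, completing the verification.
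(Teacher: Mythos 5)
Your proposal is correct and follows essentially the same route as the paper: a finiteness/refinement argument for stabilization, then verification of the identity-membership relation $S=\{(x,E)\mid x\in E\}$ as a bisimulation, with the stabilized equality $\sim_k^{\mathcal{A}}=\sim_{k+1}^{\mathcal{A}}$ invoked exactly where you invoke it, in the Zag clause. Your write-up is in fact slightly more explicit than the paper's at that Zag step, but the underlying argument is the same.
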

\begin{proof}
Because $\Omega_k^\mathcal{A}$ for $k\ge 0$ form gradually denser partitions of $E^\mathcal{A}$, we have that $\Omega_k^\mathcal{A}$ converges since $\abs{E^\mathcal{A}}<+\infty$.

Let $(x, E_k)\in S\coloneqq\set{(x', E'_k)\in E^\mathcal{A}\times \Omega_k^\mathcal{A} \mid x'\in E'_k}$.
\begin{description}
\item[Invariance] $Pre^\mathcal{A}(x)\equiv Pre^{[\mathcal{A}]_k}(E_k)$ by the definition of $Pre^{[\mathcal{A}]_k}$.

\item[Zig] $\forall a\in A$, $\forall x'\in\mathcal{Q}_a^\mathcal{A}(x)$, since $\Omega_k^\mathcal{A}$ is a partition of $E^\mathcal{A}$, $\exists E_k'\in\Omega_k^\mathcal{A}$, $x'\in E_k'$, thereby $S\qty(x', E_k')$ and $E_k\to_a^{[\mathcal{A}]_k}E_k'$.

\item[Zag] $\forall a\in A$, $\forall E_k'\in\to_a^{[\mathcal{A}_k]}\qty(E_k,\cdot)$, $\mathcal{Q}_a^\mathcal{A}\qty(E_k)\cap E_k'\neq\emptyset$. Because $x\in E_k\in \Omega_k^\mathcal{A}=\Omega_{k+1}^\mathcal{A}$, we have that $\exists x'\in\mathcal{Q}_a^\mathcal{A}\qty(x)\cap E_k'\neq\emptyset$, thereby $x\to_a^\mathcal{A}x'$ and $S\qty(x', E_k')$.

\item[Zig0] $\forall x\in E_0^\mathcal{A}$, $\exists E_k\in\Omega_k^\mathcal{A}$, $x\in E_k$ (thereby $S\qty(x,E_k)$), which implies $E_k\cap E_0^\mathcal{A}\neq\emptyset$, thereby $E_k\in E_0^{[\mathcal{A}]_k}$.

\item[Zag0] $\forall E_k\in E_k\in E_0^{[\mathcal{A}]_k}$, $E_k\cap E_0^\mathcal{A}\neq\emptyset$. Thus, $\exists x\in E_k\cap E_0^\mathcal{A}$, thereby $S\qty(x,E_k)$.
\end{description}
\end{proof}

\begin{lemma}\label{LemUnDivdable}
Let $\mathcal{A}$ and $\mathcal{B}$ be action models such that $Pre^\mathcal{A}(E^\mathcal{A})$ and $Pre^\mathcal{B}(E^\mathcal{B})$ are $Pre^\mathcal{A}(E^\mathcal{A})$-regular. Let $\eta\qty(E^\mathcal{A}\times E^\mathcal{B})\subset\mathcal{L}$ satisfy $\mathcal{A}\leftrightarrows_G\mathcal{B}$ and be $Pre^\mathcal{A}(E^\mathcal{A})$-regular. Then $\forall x_1,x_2\in E^\mathcal{A}$ with $Pre^\mathcal{A}(x_1)\equiv Pre^\mathcal{A}(x_2)$, $\exists y\in E^\mathcal{B}$ such that $Pre^\mathcal{A}(x_1)\vDash \eta(x_1,y)$ and $Pre^\mathcal{A}(x_2)\vDash \eta(x_2,y)$ implies that $\forall k\ge 0$, $x_1\sim_k^\mathcal{A}x_2$.
\end{lemma}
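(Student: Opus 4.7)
The plan is to argue by induction on $k$. The base case $k=0$ is immediate, since $\sim_0^\mathcal{A}$ is exactly the equivalence identifying events with logically equivalent preconditions, and that is given. For the inductive step, I assume the statement for some fixed $k\ge 0$ and aim at $x_1\sim_{k+1}^\mathcal{A}x_2$. By Definition~\ref{DefBiSimEquivParti} this requires $x_1\sim_k^\mathcal{A}x_2$ (delivered by the inductive hypothesis applied to $(x_1,x_2,y)$) together with: for every $a\in A$ and $E'\in\Omega_k^\mathcal{A}$, $\mathcal{Q}_a^\mathcal{A}(x_1)\cap E'\neq\emptyset$ iff $\mathcal{Q}_a^\mathcal{A}(x_2)\cap E'\neq\emptyset$. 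By the symmetry of the hypothesis in $x_1$ and $x_2$ I only need one direction: given $x_1'\in\mathcal{Q}_a^\mathcal{A}(x_1)\cap E'$ I must produce $x_2'\in\mathcal{Q}_a^\mathcal{A}(x_2)\cap E'$.

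To extract such an $x_2'$ from the semantic Zig/Zag clauses of $\mathcal{A}\leftrightarrows_G\mathcal{B}$, I would build two pointed Kripke models. First, since $x_1'\in\mathcal{Q}_a^\mathcal{A}(x_1)$, $Pre^\mathcal{A}(x_1)\land\lozenge_a Pre^\mathcal{A}(x_1')\not\equiv\bot$, so there is a model $\mathcal{M}_1$ with a world $w_1\vDash Pre^\mathcal{A}(x_1)$ and an $a$-successor $w_1'\vDash Pre^\mathcal{A}(x_1')$. From $Pre^\mathcal{A}(x_1)\vDash\eta(x_1,y)$ we have $w_1\vDash\eta(x_1,y)$, and the Zig of $\mathcal{A}\leftrightarrows_G\mathcal{B}$ yields some $y'\in\to_a^\mathcal{B}(y,\cdot)$ with $w_1'\vDash Pre^\mathcal{B}(y')\land\eta(x_1',y')$. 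Thus $Pre^\mathcal{A}(x_1')\land\eta(x_1',y')\not\equiv\bot$, and the $Pre^\mathcal{A}(E^\mathcal{A})$-regularity of $\eta(E^\mathcal{A}\times E^\mathcal{B})$ upgrades this to $Pre^\mathcal{A}(x_1')\vDash\eta(x_1',y')$.

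Next, the witness $w_1'$ also shows $Pre^\mathcal{A}(x_1')\land Pre^\mathcal{B}(y')\not\equiv\bot$, so together with $Pre^\mathcal{A}(x_2)\equiv Pre^\mathcal{A}(x_1)$ there is a second model $\mathcal{M}_2$ with $w_2\vDash Pre^\mathcal{A}(x_2)$ and an $a$-successor $w_2'\vDash Pre^\mathcal{A}(x_1')\land Pre^\mathcal{B}(y')$. Applying $Pre^\mathcal{A}(x_2)\vDash\eta(x_2,y)$ and the Zag of $\mathcal{A}\leftrightarrows_G\mathcal{B}$ produces $x_2'\in\to_a^\mathcal{A}(x_2,\cdot)$ with $w_2'\vDash Pre^\mathcal{A}(x_2')\land\eta(x_2',y')$. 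The edge $w_2\to_a w_2'$ gives $x_2'\in\mathcal{Q}_a^\mathcal{A}(x_2)$; regularity of $\eta$ gives $Pre^\mathcal{A}(x_2')\vDash\eta(x_2',y')$; and $w_2'\vDash Pre^\mathcal{A}(x_1')\land Pre^\mathcal{A}(x_2')$ combined with $Pre^\mathcal{A}(E^\mathcal{A})$-regularity of $Pre^\mathcal{A}(E^\mathcal{A})$ forces $Pre^\mathcal{A}(x_1')\equiv Pre^\mathcal{A}(x_2')$. The inductive hypothesis applied to $(x_1',x_2',y')$ then yields $x_1'\sim_k^\mathcal{A}x_2'$, so $x_2'$ lies in the same $\Omega_k^\mathcal{A}$-class $E'$ as $x_1'$, finishing the step.

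The main technical obstacle is the bridging between the two Kripke models: once the Zig has picked a particular $y'$ matching $x_1'$, the Zag is only useful if one can exhibit a world where $Pre^\mathcal{B}(y')$ holds as an $a$-successor of a $Pre^\mathcal{A}(x_2)$-world \emph{and} where $Pre^\mathcal{A}(x_1')$ also holds, so that the $x_2'$ it returns can be compared to $x_1'$. The joint satisfiability of $Pre^\mathcal{A}(x_1')\land Pre^\mathcal{B}(y')$ coming out of the Zig step is exactly what licenses building $\mathcal{M}_2$, and the two regularity hypotheses are essential to translate these semantic consistencies back into the precondition-level equivalence and the $\eta$-entailment that the inductive hypothesis demands.
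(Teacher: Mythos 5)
Your proof is correct and follows essentially the same route as the paper's: induction on $k$, Zig to extract $y'$ matching $x_1'$, regularity to upgrade consistency to entailment, Zag at $(x_2,y)$ to extract $x_2'$ with $Pre^\mathcal{A}(x_1')\equiv Pre^\mathcal{A}(x_2')$, and the inductive hypothesis applied to $(x_1',x_2',y')$. The only difference is presentational — you instantiate explicit pointed Kripke models $\mathcal{M}_1,\mathcal{M}_2$ where the paper argues directly with $\not\equiv\bot$ statements — and the one step you should state more carefully is that $\mathcal{M}_2$ exists because $Pre^\mathcal{A}(x_1')\vDash Pre^\mathcal{B}(y')$ (from regularity) turns $Pre^\mathcal{A}(x_1)\land\lozenge_a Pre^\mathcal{A}(x_1')\not\equiv\bot$ into $Pre^\mathcal{A}(x_2)\land\lozenge_a\qty(Pre^\mathcal{A}(x_1')\land Pre^\mathcal{B}(y'))\not\equiv\bot$, not merely the joint satisfiability of $Pre^\mathcal{A}(x_1')\land Pre^\mathcal{B}(y')$ on its own.
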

\begin{proof}
Because $Pre^\mathcal{A}(x_1)\equiv Pre^\mathcal{A}(x_2)$, we have the claim for $k=0$. $\forall k\ge 0$, assume by induction that the claim is true for $k$. Now we prove that it is also true for $k+1$. 

Let $x_1,x_2\in E_k$ and $y\in E^\mathcal{B}$ satisfy the claim. $\forall E_k'\in\Omega_k^\mathcal{A}$. $\forall a\in A$, if $\exists x_1'\in E_k'$, $x_1\to_a^\mathcal{A}x_1'$, then by the Zig of $\mathcal{A}\leftrightarrows_G\mathcal{B}$,
\begin{align*}
Pre^\mathcal{A}(x_1)\vDash \eta(x_1,y)\vDash \square_a \qty(Pre^\mathcal{A}(x_1')\to \bigvee_{y'\in\to_a^\mathcal{B}(y,\cdot)}\qty(Pre^\mathcal{B}(y')\land\eta(x_1',y'))).
\end{align*}
If $Pre^\mathcal{A}(x_1)\land\lozenge_a Pre^\mathcal{A}(x_1')\not\equiv\bot$, then
\begin{align*}
Pre^\mathcal{A}(x_1')\land \bigvee_{y'\in\to_a^\mathcal{B}(y,\cdot)}\qty(Pre^\mathcal{B}(y')\land\eta(x_1',y'))\not\equiv\bot.
\end{align*}
Thus, $\exists y'\in\to_a^\mathcal{B}(y,\cdot)$, $Pre^\mathcal{A}(x_1')\land Pre^\mathcal{B}(y')\land\eta(x_1',y')\not\equiv\bot$. Because $Pre^\mathcal{B}(y')$ and $\eta(x_1',y')$ are $Pre^\mathcal{A}(E^\mathcal{A})$-regular, we have
\begin{align}\label{Eqx1ypimp}
Pre^\mathcal{A}(x_1')\vDash Pre^\mathcal{B}(y')\land\eta(x_1',y').
\end{align}
By the Zig of $\mathcal{A}\leftrightarrows_G\mathcal{B}$,
\begin{align*}
Pre^\mathcal{A}(x_2)\vDash\eta(x_2,y)\vDash \square_a \qty(Pre^\mathcal{B}(y')\to \bigvee_{x'\in\to_a^\mathcal{A}(x_2,\cdot)}\qty(Pre^\mathcal{A}(x')\land\eta(x',y'))).
\end{align*}
Because $Pre^\mathcal{A}(x_1)\equiv Pre^\mathcal{A}(x_2)$ and $Pre^\mathcal{A}(x_1)\land\lozenge_a Pre^\mathcal{A}(x_1')\not\equiv\bot$, we have $Pre^\mathcal{A}(x_2)\land\lozenge_a \qty(Pre^\mathcal{A}(x_1')\land Pre^\mathcal{B}(y'))\not\equiv\bot$, thereby
\begin{align*}
Pre^\mathcal{A}(x_1')\land Pre^\mathcal{B}(y')\land \bigvee_{x'\in\to_a^\mathcal{A}(x_2,\cdot)}\qty(Pre^\mathcal{A}(x')\land\eta(x',y'))\not\equiv\bot.
\end{align*}
Thus, $\exists x_2'\in\to_a^\mathcal{A}(x_2,\cdot)$, $Pre^\mathcal{A}(x_1')\land Pre^\mathcal{B}(y')\land Pre^\mathcal{A}(x_2')\land\eta(x_2',y')\not\equiv\bot$. Because $Pre^\mathcal{A}(x_1')$, $Pre^\mathcal{B}(y')$ and $\eta(x_2',y')$ are all $Pre^\mathcal{A}\qty(E^\mathcal{A})$-regular, we have $Pre^\mathcal{A}(x_2')\vDash Pre^\mathcal{A}(x_1')$ and
\begin{align}\label{Eqx2ypimp}
Pre^\mathcal{A}(x_2')\vDash Pre^\mathcal{B}(y')\land\eta(x_2',y').
\end{align}
Since $Pre^\mathcal{A}(x_2')$ is also $Pre^\mathcal{A}\qty(E^\mathcal{A})$-regular, we have that $Pre^\mathcal{A}(x_1')\nvDash \lnot Pre^\mathcal{A}(x_2')$ implies $Pre^\mathcal{A}(x_1')\vDash Pre^\mathcal{A}(x_2')$, thereby $Pre^\mathcal{A}(x_1')\equiv Pre^\mathcal{A}(x_2')$, which further implies $Pre^\mathcal{A}(x_2)\land\lozenge_a Pre^\mathcal{A}(x_2')\not\equiv\bot$ by $Pre^\mathcal{A}(x_1)\land\lozenge_a Pre^\mathcal{A}(x_1')\not\equiv\bot$. Together with Eqs. \eqref{Eqx1ypimp} and \eqref{Eqx2ypimp}, we have by induction that $x_2\in E_k'$.

By symmetry, if $\exists x_2'\in E_k'$ such that $x_2\to_a^\mathcal{A}x_2'$ and $Pre^\mathcal{A}(x_2)\land\lozenge_a Pre^\mathcal{A}(x_2')\not\equiv\bot$, then $\exists x_1'\in E_k'$ such that $x_1\to_a^\mathcal{A}x_1'$ and $Pre^\mathcal{A}(x_1')\equiv Pre^\mathcal{A}(x_2')$, which implies $Pre^\mathcal{A}(x_1)\land\lozenge_a Pre^\mathcal{A}(x_1')\not\equiv\bot$ by $Pre^\mathcal{A}(x_2)\land\lozenge_a Pre^\mathcal{A}(x_2')\not\equiv\bot$. By Definition \ref{DefBiSimEquivParti}, $x_1\sim_{k+1}^\mathcal{A}x_2$.
\end{proof}

\begin{lemma}\label{LemEquivToSepCover}
Let $\mathcal{A}$ and $\mathcal{B}$ be action models such that $Pre^\mathcal{A}(E^\mathcal{A})$ and $Pre^\mathcal{B}(E^\mathcal{B})$ are $Pre^\mathcal{A}(E^\mathcal{A})$-regular. $\forall x_1,x_2\in E^\mathcal{A}$, $\exists k\ge 0$, $x_1\nsim_k^\mathcal{A}x_2$. Let $\eta\qty(E^\mathcal{A}\times E^\mathcal{B})$ satisfy $\mathcal{A}\leftrightarrows_G\mathcal{B}$ and be $Pre^\mathcal{A}(E^\mathcal{A})$-regular. Then
\begin{align*}
[Pre](y)\coloneqq\set{x\in E^\mathcal{A} \mid Pre^\mathcal{A}(x)\vDash Pre^\mathcal{B}(y)\land \eta(x,y)}
\end{align*}
for $y\in E^\mathcal{B}$ satisfy the followings.
\begin{enumerate}
\item $E_0^\mathcal{A}=\bigcup_{y\in E^\mathcal{B}_0}\qty[Pre](y)$.

\item $\forall y\in E^\mathcal{B}$, $\forall a\in A$,
\begin{align*}
&\bigcup_{x\in[Pre](y)}\mathcal{Q}_a^\mathcal{A}(x)\subset \bigcup_{y'\in \to_a^\mathcal{B}(y,\cdot)}[Pre](y'),\\
&\qty(\bigcup_{x\in[Pre](y)}\qty(\mathcal{R}_a^\mathcal{A}(x)\setminus\mathcal{Q}_a^\mathcal{A}(x)))\cap\qty(\bigcup_{y'\in \to_a^\mathcal{B}(y,\cdot)}[Pre](y'))=\emptyset.
\end{align*}
\end{enumerate}
\end{lemma}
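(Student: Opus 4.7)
The plan is to verify the two itemized properties of $[Pre](\cdot)$ in turn, with two workhorses used throughout: the $Pre^\mathcal{A}(E^\mathcal{A})$-regularity hypothesis (which upgrades any compatible conjunction with a $Pre^\mathcal{A}(E^\mathcal{A})$-regular formula into a full entailment of the form $Pre^\mathcal{A}(x)\vDash\phi$), and Lemma \ref{LemUnDivdable} (which, combined with the hypothesis that every pair of distinct events is eventually separated by $\sim_k^\mathcal{A}$, forces any two events sharing equivalent preconditions and a common witness $y$ to coincide).

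For property~1, the inclusion $E_0^\mathcal{A}\subseteq\bigcup_{y\in E_0^\mathcal{B}}[Pre](y)$ is obtained by applying the Zig0 of $\mathcal{A}\leftrightarrows_G\mathcal{B}$ to any $x\in E_0^\mathcal{A}$ and using the $Pre^\mathcal{A}(E^\mathcal{A})$-regularity of each $Pre^\mathcal{B}(y)$ and $\eta(x,y)$ to promote the disjunction $Pre^\mathcal{A}(x)\vDash\bigvee_{y\in E_0^\mathcal{B}}(Pre^\mathcal{B}(y)\land\eta(x,y))$ into a single entailment $Pre^\mathcal{A}(x)\vDash Pre^\mathcal{B}(y_0)\land\eta(x,y_0)$ for some $y_0\in E_0^\mathcal{B}$. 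For the reverse inclusion, given $x\in[Pre](y)$ with $y\in E_0^\mathcal{B}$, I apply the Zag0 to $y$ and the same regularity upgrade to produce some $x_0\in E_0^\mathcal{A}$ with $Pre^\mathcal{A}(x_0)\equiv Pre^\mathcal{A}(x)$ and $Pre^\mathcal{A}(x_0)\vDash\eta(x_0,y)$. Lemma \ref{LemUnDivdable} applied to $(x_0,x)$ at the witness $y$ then yields $x_0\sim_k^\mathcal{A}x$ for every $k\ge 0$, whereupon the separation hypothesis forces $x=x_0\in E_0^\mathcal{A}$.

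For property~2, the first inclusion is handled by the Zig of the generalized emulation. Given $x\in[Pre](y)$ and $x'\in\mathcal{Q}_a^\mathcal{A}(x)$, one reads off $Pre^\mathcal{A}(x)\vDash\square_a(Pre^\mathcal{A}(x')\to\bigvee_{y'\in\to_a^\mathcal{B}(y,\cdot)}(Pre^\mathcal{B}(y')\land\eta(x',y')))$ from $Pre^\mathcal{A}(x)\vDash\eta(x,y)$; the consistency witness supplied by $Pre^\mathcal{A}(x)\land\lozenge_a Pre^\mathcal{A}(x')\not\equiv\bot$ makes the big disjunction compatible with $Pre^\mathcal{A}(x')$, and the regularity of $Pre^\mathcal{B}(y')$ and $\eta(x',y')$ then turns this non-emptiness into $Pre^\mathcal{A}(x')\vDash Pre^\mathcal{B}(y')\land\eta(x',y')$ for some $y'\in\to_a^\mathcal{B}(y,\cdot)$, i.e.\ $x'\in[Pre](y')$.

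For the disjointness clause I argue by contradiction: assume $x''\in\mathcal{R}_a^\mathcal{A}(x)\setminus\mathcal{Q}_a^\mathcal{A}(x)$ and $x''\in[Pre](y'')$ for some $y''\in\to_a^\mathcal{B}(y,\cdot)$. The Zag of $\mathcal{A}\leftrightarrows_G\mathcal{B}$ at $(x,y)$, combined with a Kripke-model witness realizing $Pre^\mathcal{A}(x)\land\lozenge_a Pre^\mathcal{A}(x'')$ and the fact that $Pre^\mathcal{A}(x'')\vDash Pre^\mathcal{B}(y'')$, produces some $x'\in\to_a^\mathcal{A}(x,\cdot)$ jointly satisfiable with $Pre^\mathcal{A}(x'')$ and $\eta(x',y'')$. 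Regularity then upgrades this to $Pre^\mathcal{A}(x')\equiv Pre^\mathcal{A}(x'')$ and $Pre^\mathcal{A}(x')\vDash Pre^\mathcal{B}(y'')\land\eta(x',y'')$, so $x'\in[Pre](y'')$. Applying Lemma \ref{LemUnDivdable} to the pair $(x',x'')$ with the common witness $y''$ yields $x'\sim_k^\mathcal{A}x''$ for every $k\ge 0$; the separation hypothesis then forces $x'=x''$, contradicting $x\to_a^\mathcal{A}x'$ paired with $x\not\to_a^\mathcal{A}x''$ (the latter being what $x''\notin\mathcal{Q}_a^\mathcal{A}(x)$ and $x''\in\mathcal{R}_a^\mathcal{A}(x)$ together give). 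The main obstacle I anticipate is precisely this last bookkeeping: extracting the correct $x'$ from the disjunction produced by the Zag, and verifying that the equalities and memberships obtained are the ones that actually clash with $x''\notin\mathcal{Q}_a^\mathcal{A}(x)$.
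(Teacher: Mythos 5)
Your proposal is correct and follows essentially the same route as the paper's proof: Zig0/Zag0 plus regularity plus Lemma \ref{LemUnDivdable} for property 1, and Zig/Zag plus the consistency witness, regularity upgrade, and Lemma \ref{LemUnDivdable} for property 2 (the paper's "Zig" label in the disjointness step is indeed a typo for the Zag, as you use it). The only difference is that you frame the disjointness clause as a contradiction where the paper argues directly that any $x''\in[Pre](y')\cap\mathcal{R}_a^\mathcal{A}(x)$ equals some $x'\in\mathcal{Q}_a^\mathcal{A}(x)$, which is the same argument.
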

\begin{proof}
\begin{enumerate}
\item By the Zig0 of $\mathcal{A}\leftrightarrows_G\mathcal{B}$, $\forall x_0\in E_0^\mathcal{A}$, $Pre^\mathcal{A}(x_0)\vDash \bigvee_{y\in E_0^\mathcal{B}}\qty(Pre^\mathcal{B}(y)\land\eta(x_0,y))$. Thus, $\exists y_0\in E_0^\mathcal{B}$, $Pre^\mathcal{A}(x_0)\land Pre^\mathcal{B}(y_0)\land\eta(x_0,y_0)\not\equiv\bot$. Because both $Pre^\mathcal{B}(y_0)$ and $\eta(x_0,y_0)$ are $Pre^\mathcal{A}\qty(E^\mathcal{A})$-regular, we have $Pre^\mathcal{A}(x_0)\vDash Pre^\mathcal{B}(y_0)\land\eta(x_0,y_0)$, thereby $x_0\in [Pre](y_0)\subset\bigcup_{y\in E_0^\mathcal{B}}[Pre](y)$. Since $x_0\in E_0^\mathcal{A}$ is arbitrary, we have $E_0^\mathcal{A}\subset\bigcup_{y\in E_0^\mathcal{B}}[Pre](y)$.

By the Zag0 of $\mathcal{A}\leftrightarrows_G\mathcal{B}$, $\forall y_0\in E_0^\mathcal{B}$, $Pre^\mathcal{B}(y_0)\vDash \bigvee_{x\in E_0^\mathcal{A}}\qty(Pre^\mathcal{A}(x)\land\eta(x,y_0))$. $\forall x'\in [Pre](y_0)$, $Pre^\mathcal{A}(x')\vDash \eta(x',y_0)\land Pre^\mathcal{B}(y_0)$, thereby $Pre^\mathcal{A}(x')\vDash\bigvee_{x\in E_0^\mathcal{A}}\qty(Pre^\mathcal{A}(x)\land\eta(x,y_0))$. Thus, $\exists x_0\in E_0^\mathcal{A}$, $Pre^\mathcal{A}(x')\land Pre^\mathcal{A}(x_0)\land\eta(x_0,y_0)\not\equiv\bot$. Because both $Pre^\mathcal{A}(x_0)$ and $\eta(x_0,y_0)$ are $Pre^\mathcal{A}\qty(E^\mathcal{A})$-regular, we have $Pre^\mathcal{A}(x')\vDash Pre^\mathcal{A}(x_0)\land\eta(x_0,y_0)$. Since $Pre^\mathcal{A}(x')$ is $Pre^\mathcal{A}\qty(E^\mathcal{A})$-regular and $Pre^\mathcal{A}(x')\vDash Pre^\mathcal{A}(x_0)$ implies $Pre^\mathcal{A}(x_0)\nvDash\lnot Pre^\mathcal{A}(x')$, we have $Pre^\mathcal{A}(x_0)\vDash Pre^\mathcal{A}(x')$, thereby $Pre^\mathcal{A}(x_0)\equiv Pre^\mathcal{A}(x')$. If $x'\neq x_0$, then $\exists k\ge 0$, $x'\nsim_k^\mathcal{A}x_0$, which conflicts Lemma \ref{LemUnDivdable}. Thus, $x'=x_0\in E_0^\mathcal{A}$. Since $x'\in [Pre](y_0)$ and $y_0\in E_0^\mathcal{B}$ are both arbitrary, we have $\bigcup_{y_0\in E_0^\mathcal{B}}[Pre](y_0)\subset E_0^\mathcal{A}$.

\item $\forall y\in E^\mathcal{B}$, $\forall a\in A$, $\forall x\in[Pre](y)$, $\forall x'\in\mathcal{Q}_a^\mathcal{A}(x)$, by the Zig of $\mathcal{A}\leftrightarrows_G\mathcal{B}$,
\begin{align*}
Pre^\mathcal{A}(x)\vDash Pre^\mathcal{B}(y)\land\eta(x,y)\vDash\square_a\qty(Pre^\mathcal{A}(x')\to\bigvee_{y'\in\to_a^\mathcal{B}(y,\cdot)}\qty(Pre^\mathcal{B}(y')\land\eta(x',y'))).
\end{align*}
Because $Pre^\mathcal{A}(x)\land\lozenge_a Pre^\mathcal{A}(x')\not\equiv\bot$, we have
\begin{align*}
Pre^\mathcal{A}(x')\land\bigvee_{y'\in\to_a^\mathcal{B}(y,\cdot)}\qty(Pre^\mathcal{B}(y')\land\eta(x',y'))\not\equiv\bot.
\end{align*}
Therefore, $\exists y'\in\to_a^\mathcal{B}(y,\cdot)$, $Pre^\mathcal{A}(x')\land Pre^\mathcal{B}(y')\land\eta(x',y')\not\equiv\bot$. Because $Pre^\mathcal{B}(y')$ and $\eta(x',y')$ are $Pre^\mathcal{A}\qty(E^\mathcal{A})$-regular, we have $Pre^\mathcal{A}(x')\vDash Pre^\mathcal{B}(y')\land\eta(x',y')$, thereby $x'\in [Pre](y')\subset \bigcup_{y'\in\to_a^\mathcal{B}(y,\cdot)}[Pre](y')$.

$\forall y\in E^\mathcal{B}$, $\forall x\in [Pre](y)$, $\forall a\in A$, $\forall y'\in\to_a^\mathcal{B}(y,\cdot)$, $\forall x''\in [Pre](y')\cap\mathcal{R}_a^\mathcal{A}(x)$, we have $Pre^\mathcal{A}(x'')\vDash Pre^\mathcal{B}(y')$. By the Zig of $\mathcal{A}\leftrightarrows_G\mathcal{B}$,
\begin{align*}
Pre^\mathcal{A}(x)\vDash Pre^\mathcal{B}(y)\land\eta(x,y)\vDash\square_a\qty(Pre^\mathcal{A}(x'')\to\bigvee_{x'\in\to_a^\mathcal{A}(x,\cdot)}\qty(Pre^\mathcal{A}(x')\land\eta(x',y'))).
\end{align*}
Since $x''\in \mathcal{R}_a^\mathcal{A}(x)$ implies $Pre^\mathcal{A}(x)\land\lozenge_a Pre^\mathcal{A}(x'')\not\equiv\bot$, we have
\begin{align*}
\qty(Pre^\mathcal{A}(x'')\land\bigvee_{x'\in\to_a^\mathcal{A}(x,\cdot)}\qty(Pre^\mathcal{A}(x')\land\eta(x',y')))\not\equiv\bot.
\end{align*}
Thus, $\exists x'\in \to_a^\mathcal{A}(x,\cdot)$, $Pre^\mathcal{A}(x'')\land Pre^\mathcal{A}(x')\land\eta(x',y')\not\equiv\bot$. Because $Pre^\mathcal{A}(x')$ and $\eta(x',y')$ are $Pre^\mathcal{A}\qty(E^\mathcal{A})$-regular, we have $Pre^\mathcal{A}(x'')\vDash Pre^\mathcal{A}(x')\land\eta(x',y')$. Because $Pre^\mathcal{A}(x'')$ is also $Pre^\mathcal{A}\qty(E^\mathcal{A})$-regular and $Pre^\mathcal{A}(x'')\vDash Pre^\mathcal{A}(x')$ implies $Pre^\mathcal{A}(x')\nvDash\lnot Pre^\mathcal{A}(x'')$, we have $Pre^\mathcal{A}(x')\vDash Pre^\mathcal{A}(x'')$, thereby $Pre^\mathcal{A}(x')\equiv Pre^\mathcal{A}(x'')$. Thus, $Pre^\mathcal{A}(x)\land\lozenge_a Pre^\mathcal{A}(x')\not\equiv\bot$ (thereby $x'\in\mathcal{Q}_a^\mathcal{A}(x)$) and $Pre^\mathcal{A}(x')\vDash Pre^\mathcal{A}(y')\land\eta(x',y')$. If $x''\neq x'$, then $\exists k\ge 0$, $x''\nsim_k^\mathcal{A}x'$, which conflicts Lemma \ref{LemUnDivdable}. Thus, $x''=x'\in\mathcal{Q}_a^\mathcal{A}(x)$. Since $x''\in [Pre](y')\cap\mathcal{R}_a^\mathcal{A}(x)$ is arbitrary, we have $\qty(\mathcal{R}_a^\mathcal{A}(x)\setminus\mathcal{Q}_a^\mathcal{A}(x))\cap[Pre](y')=\emptyset$.
\end{enumerate}
\end{proof}

\begin{definition}\label{DefmathcalEkl}
Let $0\le k\le l$ and $P\subset\mathcal{P}$. Define $\mu_{k,l}^P:\mathcal{E}_k^P\mapsto\mathcal{E}_l^P$ such that $\forall \xi=\psi\land\bigwedge_{a\in A}\nabla_a\Phi^k_a \in \mathcal{E}_k^P$, $\mu_{k,l}^P(\xi)=\psi\land\bigwedge_{a\in A}\nabla_a\Phi^l_a$, where $\Phi^l_a\coloneqq\set{\phi\in\mathcal{E}_{l-1}^P \mid \phi\vDash\bigvee\Phi^k_a}$.
\end{definition}

\begin{theorem}\label{ThelkEquivTrans}
Let $\mathcal{A}$ and $\mathcal{B}$ be action models with $0\le k=\delta\circ Pre^\mathcal{A}(E^\mathcal{A})<\delta\circ Pre^\mathcal{B}(E^\mathcal{B})=l$. Let $P\coloneqq\mathcal{P}\circ Pre^\mathcal{A}\qty(E^\mathcal{A})\cup\mathcal{P}\circ Pre^\mathcal{B}\qty(E^\mathcal{B})$. Let $\mathcal{C}$ be an action model defined as following.
\begin{enumerate}
\item $E^\mathcal{C}=E^\mathcal{B}$ and $E_0^\mathcal{C}=E_0^\mathcal{B}$.
\item $\to^\mathcal{C}=\to^\mathcal{B}$.
\item $\forall y\in E^\mathcal{C}=E^\mathcal{B}$, $Pre^\mathcal{C}(y)\coloneqq \bigvee\set{\xi\in\mathcal{E}_k^P \mid \mu_{k,l}^P(\xi)\vDash Pre^\mathcal{B}(y)}$.
\end{enumerate}
If $\mathcal{A}\equiv\mathcal{B}$, then $\mathcal{A}\equiv\mathcal{C}$.
\end{theorem}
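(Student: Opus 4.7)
The plan is to construct a separation cover between $\mathcal{A}$ and $\mathcal{C}$ in the sense of Lemma \ref{LemSepCovActMod} and then invoke that lemma. The bridge is the canonical Kripke model: by Proposition \ref{ProEquivImpCanEmu}, $\mathcal{A}\equiv\mathcal{B}$ delivers $\mathcal{M}^c\otimes\mathcal{A}\underline{\leftrightarrow}\mathcal{M}^c\otimes\mathcal{B}$, and the depth-$k$ restriction on $Pre^{\mathcal{A}}$ together with the specific shape of $\mu_{k,l}^P$ will let me read a separation cover off this Kripke-level bisimulation.

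Concretely, I would first normalize by splitting each event of $\mathcal{A}$ (respectively $\mathcal{B}$) along the $\mathcal{E}_k^P$-atoms (resp.\ $\mathcal{E}_l^P$-atoms) below its precondition, which preserves $\equiv$ as in the canonical action model construction and leaves $\mathcal{C}$ unchanged up to $\equiv$ because $Pre^{\mathcal{C}}$ depends on $Pre^{\mathcal{B}}$ through a purely syntactic recipe. After normalization, every $Pre^{\mathcal{A}}(x)$ is a single $\xi_x\in\mathcal{E}_k^P$. I would then take
\begin{align*}
[Pre](y)\coloneqq\{x\in E^{\mathcal{A}}:Pre^{\mathcal{A}}(x)\vDash Pre^{\mathcal{C}}(y)\}
\end{align*}
and verify the three conditions of Lemma \ref{LemSepCovActMod} for $\mathcal{A}$ and $\mathcal{C}$. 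The central claim is condition 3, namely $Pre^{\mathcal{C}}(y)\equiv\bigvee_{x\in[Pre](y)}Pre^{\mathcal{A}}(x)$. The $\vDash$-direction is immediate from the definition of $[Pre]$. For the reverse direction, the key algebraic observation is $\mu_{k,l}^P(\xi)\vDash\xi$ for every $\xi\in\mathcal{E}_k^P$, which follows directly from Definition \ref{DefmathcalEkl} using that $\mathcal{E}_{l-1}^P$ is $\mathcal{E}_{k-1}^P$-basis (Proposition \ref{ProRegBasCF}). Hence if $\xi\vDash Pre^{\mathcal{C}}(y)$ then $\mu_{k,l}^P(\xi)\vDash Pre^{\mathcal{B}}(y)$, so a canonical-model representative of the formula $\mu_{k,l}^P(\xi)$ carries the event $y$ in $\mathcal{M}^c\otimes\mathcal{B}$; by the bisimulation this world also carries some $x\in E^{\mathcal{A}}$ with $\mathcal{M}^c\vDash Pre^{\mathcal{A}}(x)$ there, and since $Pre^{\mathcal{A}}(x)$ has depth at most $k$ its truth depends only on the depth-$k$ projection $\xi$, giving $\xi\vDash Pre^{\mathcal{A}}(x)$, hence $Pre^{\mathcal{A}}(x)=\xi$ after normalization, so $x\in[Pre](y)$. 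Conditions 1 and 2 are established by analogous Zig0/Zag0 and Zig/Zag transfers using the bisimulation on $\mathcal{M}^c$.

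The main obstacle is making the ``depth-$k$ projection'' step fully rigorous: $\mu_{k,l}^P(\xi)$ is a specific depth-$l$ canonical formula and need not appear literally as a world of $\mathcal{M}^c$ if that formula is inconsistent with every maximal consistent subset of $C(Pre^{\mathcal{A}}(E^{\mathcal{A}})\cup Pre^{\mathcal{B}}(E^{\mathcal{B}}))$. I would handle this either by appealing to Proposition \ref{ProBijGKC} to find a canonical world realising the relevant formula content, or by replacing $\mathcal{M}^c$ with a canonical Kripke model built over $\mathcal{E}_l^P$ so that each $\mu_{k,l}^P(\xi)$ is concretely realised. Once this bookkeeping is settled, all three conditions of Lemma \ref{LemSepCovActMod} hold and $\mathcal{A}\equiv\mathcal{C}$ follows.
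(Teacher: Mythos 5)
Your route is genuinely different from the paper's and, as written, it has two gaps that I do not think can be repaired without changing the strategy. The paper does not go through Lemma \ref{LemSepCovActMod} at all: it takes a witness $\sigma_l(\cdot,\cdot)\subset\mathcal{E}_l^P$ of $\mathcal{A}\leftrightarrows_G\mathcal{B}$ (which exists by Theorem \ref{TheGAEImpFourCriGAE} together with Propositions \ref{ProRegBasCF} and \ref{ProDisInvCF}), pulls it back along $\mu_{k,l}^P$ by setting $\sigma_k(x,y)\coloneqq\set{\xi\in\mathcal{E}_k^P \mid \mu_{k,l}^P(\xi)\in\sigma_l(x,y)}$, and verifies the four clauses of $\mathcal{A}\leftrightarrows_G\mathcal{C}$ directly, using $D_1^a\circ\mu_{k,l}^P(\xi_k)\equiv D_1^a(\xi_k)$ and the $\mathcal{E}_k^P$-regularity/basis properties. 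No separation cover and no quotient structure on $E^\mathcal{A}$ is ever needed.

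The first gap in your proposal is the reverse inclusion in condition 3 of Lemma \ref{LemSepCovActMod}. You need $Pre^\mathcal{C}(y)\vDash\bigvee_{x\in[Pre](y)}Pre^\mathcal{A}(x)$, i.e.\ every $\xi\in\mathcal{E}_k^P$ with $\mu_{k,l}^P(\xi)\vDash Pre^\mathcal{B}(y)$ must be realized by some event of the atomized $\mathcal{A}$. This fails: take $y\in E^\mathcal{B}$ unreachable from $E_0^\mathcal{B}$ (or with precondition never activated) and $Pre^\mathcal{B}(y)=\top$; then $Pre^\mathcal{C}(y)\equiv\top$, while the atomized $\mathcal{A}$ may realize only those $\xi$ below, say, $p$, and $\mathcal{A}\equiv\mathcal{B}$ can still hold. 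Your appeal to the Kripke bisimulation on $\mathcal{M}^c$ does not close this, because a bisimulation relating $\mathcal{M}^c\otimes\mathcal{A}$ and $\mathcal{M}^c\otimes\mathcal{B}$ is only required to be total on actual worlds and their $\to_a$-descendants; a pair $(w,y)$ with $y$ never activated need not be related to any $(w,x)$. The second gap is condition 2, which you dismiss as ``analogous transfers'': its disjointness clause $\bigl(\bigcup_{x\in[Pre](y)}(\mathcal{R}_a^\mathcal{A}(x)\setminus\mathcal{Q}_a^\mathcal{A}(x))\bigr)\cap\bigl(\bigcup_{y'\in\to_a^\mathcal{B}(y,\cdot)}[Pre](y')\bigr)=\emptyset$ fails whenever $\mathcal{A}$ contains two events with equivalent preconditions, one of which is an $a$-successor of $x$ and the other not; they necessarily land in the same $[Pre](y')$. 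This is exactly why the paper, in Theorem \ref{TheCoverNonInterEquivCond} and Lemma \ref{LemEquivToSepCover}, only builds separation covers over the bisimulation-contracted model $[\mathcal{A}^r]$, under the explicit hypothesis that distinct events are separated by some $\sim_k$. Your atomized $\mathcal{A}$ has no such guarantee. If you want to keep your general idea, you would have to replace $\mathcal{A}$ by $[\mathcal{A}^r]$ and restrict attention to activated events, at which point you are essentially reproving Theorem \ref{TheCoverNonInterEquivCond} rather than Theorem \ref{ThelkEquivTrans}; the direct construction of $\sigma_k$ from $\sigma_l$ is both shorter and avoids these obstructions.
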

\begin{proof}
By Theorem \ref{TheEmuEquiv}, $\mathcal{A}\leftrightarrows_G\mathcal{B}$. By Propositions \ref{ProRegBasCF} and \ref{ProDisInvCF}, and Theorem \ref{TheGAEImpFourCriGAE}, $\exists\sigma_l(\cdot,\cdot)\subset\mathcal{E}_l^P$ such that $\bigvee\sigma_l(\cdot,\cdot)$ satisfies $\mathcal{A}\leftrightarrows_G\mathcal{B}$. Let $\sigma_k^P(\cdot,\cdot)\coloneqq\set{\xi\in\mathcal{E}_k^P \mid \mu_{k,l}^P(\xi)\in\sigma_l(\cdot,\cdot)}$. We now prove that $\bigvee\sigma_k(\cdot,\cdot)$ satisfies $\mathcal{A}\leftrightarrows_G\mathcal{C}$.
\begin{description}
\item[Zig] $\forall x\in E^\mathcal{A}$, $\forall y\in E^\mathcal{B}=E^\mathcal{C}$, $\forall \xi_k\in \sigma_k(x,y)$, we have $\mu_{k,l}^P\qty(\xi_k)\in\sigma_l(x,y)$. Thus, $\forall a\in A$, $\forall x'\in \to_a^\mathcal{A}(x,\cdot)$, by the Zig of $\mathcal{A}\leftrightarrows_G\mathcal{B}$, we have
\begin{align*}
\mu_{k,l}^P\qty(\xi_k)\vDash\square_a\qty(Pre^\mathcal{A}(x')\to\bigvee_{y'\in\to_a^\mathcal{B}(y,\cdot)}\qty(Pre^\mathcal{B}(y')\land\bigvee\sigma_l(x',y'))).
\end{align*}
By Definition \ref{DefmathcalEkl}, $D_1^a\circ \mu_{k,l}^P\qty(\xi_k)\equiv D_1^a(\xi_k)$. Thus,
\begin{align*}
D_1^a(\xi_k)\land Pre^\mathcal{A}(x')\vDash \bigvee_{y'\in\to_a^\mathcal{B}(y,\cdot)}\qty(Pre^\mathcal{B}(y')\land\bigvee\sigma_l(x',y')).
\end{align*}
$\forall \xi\in\mathcal{E}_k^P$ such that $\xi\vDash D_1^a(\xi_k)\land Pre^\mathcal{A}(x')$, we have
\begin{align*}
\mu_{k,l}^P(\xi)\vDash\xi\vDash \bigvee_{y'\in\to_a^\mathcal{B}(y,\cdot)}\qty(Pre^\mathcal{B}(y')\land\bigvee\sigma_l(x',y')).
\end{align*}
Therefore, $\exists y'\in\to_a^\mathcal{B}(y,\cdot)=\to_a^\mathcal{C}(y,\cdot)$, $\mu_{k,l}^P(\xi)\land Pre^\mathcal{B}(y')\land\bigvee\sigma_l(x',y')\not\equiv\bot$. Because $\delta\circ Pre^\mathcal{B}(y')\le l$ and $\sigma_l(x',y')\subset\mathcal{E}_l^P$, we have by Proposition \ref{ProRegBasCF} that $\mu_{k,l}^P(\xi)\vDash Pre^\mathcal{B}(y')$ and $\mu_{k,l}^P(\xi)\in \sigma_l(x',y')$. Therefore, $\xi\vDash Pre^\mathcal{C}(y')$ and $\xi\in\sigma_k(x',y')$. Thus,
\begin{align*}
\xi\vDash \bigvee_{y'\in\to_a^\mathcal{C}(y,\cdot)}\qty(Pre^\mathcal{C}(y')\land\bigvee\sigma_k(x',y')).
\end{align*}
Because $D_1^a(\xi_k)$ and $Pre^\mathcal{A}(x')$ are $\mathcal{E}_k^P$-regular and $\mathcal{E}_k^P$-basis, and $\xi\in\mathcal{E}_k^P$ is arbitrary, we have
\begin{align*}
D_1^a(\xi_k)\land Pre^\mathcal{A}(x')\vDash\bigvee_{y'\in\to_a^\mathcal{C}(y,\cdot)}\qty(Pre^\mathcal{C}(y')\land\bigvee\sigma_k(x',y')).
\end{align*}
Thus,
\begin{align*}
\xi_k\vDash\square_a\qty(Pre^\mathcal{A}(x')\to\bigvee_{y'\in\to_a^\mathcal{C}(y,\cdot)}\qty(Pre^\mathcal{C}(y')\land\bigvee\sigma_k(x',y'))).
\end{align*}
Because $\xi_k\in\sigma_k(x,y)$ is arbitrary, we have
\begin{align*}
\bigvee\sigma_k(x,y)\vDash \square_a\qty(Pre^\mathcal{A}(x')\to\bigvee_{y'\in\to_a^\mathcal{C}(y,\cdot)}\qty(Pre^\mathcal{C}(y')\land\bigvee\sigma_k(x',y'))).
\end{align*}

\item[Zag] $\forall x\in E^\mathcal{A}$, $\forall y\in E^\mathcal{B}=E^\mathcal{C}$, $\forall \xi_k\in\sigma_k(x,y)$, we have $\mu_{k,l}^P\qty(\xi_k)\in\sigma_l(x,y)$. Thus, $\forall a\in A$, $\forall y'\in\to_a^\mathcal{B}(y,\cdot)=\to_a^\mathcal{C}(y,\cdot)$, by the Zag of $\mathcal{A}\leftrightarrows_G\mathcal{B}$, we have
\begin{align*}
\mu_{k,l}^P\qty(\xi_k)\vDash\square_a\qty(Pre^\mathcal{B}(y')\to\bigvee_{x'\in\to_a^\mathcal{A}(x,\cdot)}\qty(Pre^\mathcal{A}(x')\land\bigvee\sigma_l(x',y'))).
\end{align*}
By Definition \ref{DefmathcalEkl}, $D_1^a\circ \mu_{k,l}^P\qty(\xi_k)\equiv D_1^a(\xi_k)$. Thus,
\begin{align*}
D_1^a(\xi_k)\land Pre^\mathcal{B}(y')\vDash\bigvee_{x'\in\to_a^\mathcal{A}(x,\cdot)}\qty(Pre^\mathcal{A}(x')\land\bigvee\sigma_l(x',y')).
\end{align*}
$\forall \xi\in\mathcal{E}_k^P$ such that $\xi\vDash D_1^a(\xi_k)\land Pre^\mathcal{C}(y')$, we have $\mu_{k,l}^P(\xi)\vDash\xi\vDash D_1^a(\xi_k)$ and $\mu_{k,l}^P(\xi)\vDash Pre^\mathcal{B}(y')$, thereby
\begin{align*}
\mu_{k,l}^P(\xi)\vDash\bigvee_{x'\in\to_a^\mathcal{A}(x,\cdot)}\qty(Pre^\mathcal{A}(x')\land\bigvee\sigma_l(x',y')).
\end{align*}
Thus, $\exists x'\in\to_a^\mathcal{A}(x,\cdot)$, $\mu_{k,l}^P(\xi)\land Pre^\mathcal{A}(x')\land\bigvee\sigma_l(x',y')\not\equiv\bot$. Because $\delta\circ Pre^\mathcal{A}(x')\le k$ and $\sigma_l(x',y')\subset\mathcal{E}_l^P$, we have by Proposition \ref{ProRegBasCF} that $\mu_{k,l}^P(\xi)\vDash Pre^\mathcal{A}(x')$ and $\mu_{k,l}^P(\xi)\in\sigma_l(x',y')$. Therefore, $\xi\vDash Pre^\mathcal{A}(x')$ and $\xi\in\sigma_k(x',y')$. Thus,
\begin{align*}
\xi\vDash\bigvee_{x'\in\to_a^\mathcal{A}(x,\cdot)}\qty(Pre^\mathcal{A}(x')\land\bigvee\sigma_k(x',y')).
\end{align*}
Because $D_1^a(\xi_k)$ and $Pre^\mathcal{C}(y')$ are $\mathcal{E}_k^P$-regular and $\mathcal{E}_k^P$-basis, and $\xi\in\mathcal{E}_k^P$ is arbitrary, we have
\begin{align*}
D_1^a(\xi_k)\land Pre^\mathcal{C}(y')\vDash\bigvee_{x'\in\to_a^\mathcal{A}(x,\cdot)}\qty(Pre^\mathcal{A}(x')\land\bigvee\sigma_k(x',y')).
\end{align*}
Thus,
\begin{align*}
\xi_k\vDash\square_a\qty(Pre^\mathcal{C}(y')\to\bigvee_{x'\in\to_a^\mathcal{A}(x,\cdot)}\qty(Pre^\mathcal{A}(x')\land\bigvee\sigma_k(x',y'))).
\end{align*}
Because $\xi_k\in\sigma_k(x,y)$ is arbitrary, we have
\begin{align*}
\bigvee\sigma_k(x,y)\vDash \square_a\qty(Pre^\mathcal{C}(y')\to\bigvee_{x'\in\to_a^\mathcal{A}(x,\cdot)}\qty(Pre^\mathcal{A}(x')\land\bigvee\sigma_k(x',y'))).
\end{align*}

\item[Zig0] $\forall x_0\in E_0^\mathcal{A}$, $\forall \xi_k\in\mathcal{E}_k^P$ such that $\xi_k\vDash Pre^\mathcal{A}(x_0)$, we have $\mu_{k,l}^P\vDash \xi_k\vDash (\xi_k)\vDash Pre^\mathcal{A}(x_0)$. Thus, by the Zig0 of $\mathcal{A}\leftrightarrows_G\mathcal{B}$, we have
\begin{align*}
\mu_{k,l}^P(\xi_k)\vDash\bigvee_{y\in E_0^\mathcal{B}}\qty(Pre^\mathcal{B}(y)\land\bigvee\sigma_l(x_0,y)).
\end{align*}
Thus, $\exists y\in E_0^\mathcal{B}=E_0^\mathcal{C}$, $\mu_{k,l}^P(\xi_k)\land Pre^\mathcal{B}(y)\land\bigvee\sigma_l(x_0,y)\not\equiv\bot$. Because $\delta\circ Pre^\mathcal{B}(y)\le l$ and $\sigma_l(x_0,y)\subset \mathcal{E}_l^P$, we have $\mu_{k,l}^P(\xi_k)\vDash Pre^\mathcal{B}(y)$ and $\mu_{k,l}^P(\xi_k)\in \sigma_l(x_0,y)$, thereby $\xi_k\vDash Pre^\mathcal{C}(y)$ and $\xi_k\in \sigma_k(x_0,y)$. Thus,
\begin{align*}
\xi_k\vDash\bigvee_{y\in E_0^\mathcal{C}}\qty(Pre^\mathcal{C}(y)\land\bigvee\sigma_k(x_0,y)).
\end{align*}
Because $\delta\circ Pre^\mathcal{A}(x_0)\le k$ and $\xi_k\in\mathcal{E}_k^P$ such that $\xi_k\vDash Pre^\mathcal{A}(x_0)$ is arbitrary, we have by Proposition \ref{ProRegBasCF} that
\begin{align*}
Pre^\mathcal{A}(x_0)\vDash \bigvee_{y\in E_0^\mathcal{C}}\qty(Pre^\mathcal{C}(y)\land\bigvee\sigma_k(x_0,y)).
\end{align*}

\item[Zag0] $\forall y_0\in E_0^\mathcal{B}=E_0^\mathcal{C}$, $\forall \xi_k\in\mathcal{E}_k^P$ such that $\xi_k\vDash Pre^\mathcal{C}(y_0)$, we have $\mu_{k,l}^P(\xi_k)\vDash \xi_k\vDash Pre^\mathcal{B}(y_0)$. Thus, by the Zag0 of $\mathcal{A}\leftrightarrows_G\mathcal{B}$, we have
\begin{align*}
\mu_{k,l}^P(\xi_k)\vDash\bigvee_{x\in E_0^\mathcal{A}}\qty(Pre^\mathcal{A}(x)\land\bigvee\sigma_l(x,y_0)).
\end{align*}
Thus, $\exists x\in E_0^\mathcal{A}$, $\mu_{k,l}^P(\xi_k)\land Pre^\mathcal{A}(x)\land\bigvee\sigma_l(x,y_0)\not\equiv\bot$. Because $\delta\circ Pre^\mathcal{A}(x)\le k$ and $\sigma_l(x,y_0)\subset \mathcal{E}_l^P$, we have $\mu_{k,l}^P(\xi_k)\vDash Pre^\mathcal{A}(x)$ and $\mu_{k,l}^P(\xi_k)\in \sigma_l(x,y_0)$, thereby $\xi_k\vDash Pre^\mathcal{A}(x)$ and $\xi_k\in \sigma_k(x,y_0)$. Thus,
\begin{align*}
\xi_k\vDash\bigvee_{x\in E_0^\mathcal{A}}\qty(Pre^\mathcal{A}(x)\land\bigvee\sigma_k(x,y_0)).
\end{align*}
Because $\delta\circ Pre^\mathcal{C}(y_0)\le k$ and $\xi_k\in\mathcal{E}_k^P$ such that $\xi_k\vDash Pre^\mathcal{C}(y_0)$ is arbitrary, we have by Proposition \ref{ProRegBasCF} that
\begin{align*}
Pre^\mathcal{C}(y_0)\vDash \bigvee_{x\in E_0^\mathcal{A}}\qty(Pre^\mathcal{A}(x)\land\bigvee\sigma_k(x,y_0)).
\end{align*}
\end{description}
Finally, we have $\mathcal{A}\equiv\mathcal{C}$ by Theorem \ref{TheEmuEquiv}.
\end{proof}

\begin{definition}[regular action model]\label{DefRegActMod}
Let $\mathcal{A}$ be an action model and $\Phi\subset\mathcal{L}$. Define the regular version $\mathcal{A}^r$ of $\mathcal{A}$ over $\Phi$ as follows.
\begin{enumerate}
\item $E^{\mathcal{A}^r}\coloneqq\set{(x,\phi) \mid x\in E^\mathcal{A}, \phi\in\Phi, \phi\vDash Pre^\mathcal{A}(x)}$.
\item $\forall (x,\phi)\in E^{\mathcal{A}^r}$, $Pre^{\mathcal{A}^r}\qty(x,\phi)\coloneqq\phi$.
\item $\forall (x,\phi),(x,\phi')\in E^{\mathcal{A}^r}$, $\forall a\in A$, $(x,\phi)\to_a^{\mathcal{A}^r}(x',\phi')$ iff $x\to_a^\mathcal{A}x'$ and $\phi\land\lozenge_a\phi'\not\equiv\bot$.
\item $E_0^{\mathcal{A}^r}\coloneqq\set{(x,\phi)\in E^{\mathcal{A}^r} \mid x\in E_0^\mathcal{A}}$.
\end{enumerate}
\end{definition}

\begin{remark}
The canonical version $\mathcal{A}^c$ of an action model $\mathcal{A}$ is the regular version of $\mathcal{A}$ over $\Gamma\circ K\circ C\circ Pre^\mathcal{A}\qty(E^\mathcal{A})$.
\end{remark}

\begin{proposition}[resembling Theorem 4 in \cite{SietsmaEijckJOPL2013}]\label{ProCanActEquiv}
Let $\mathcal{A}$ be an action model and $k\ge 0$ such that $\delta\circ Pre^\mathcal{A}(E^\mathcal{A})\le k$ and $P\coloneqq\mathcal{P}\circ Pre^\mathcal{A}\qty(E^\mathcal{A})$. Let $\mathcal{A}^r$ be the regular version of $\mathcal{A}$ over $\mathcal{E}_k^P$. Then $\mathcal{A}\equiv \mathcal{A}^r$.
\end{proposition}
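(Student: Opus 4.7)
My plan is to adapt the argument behind Theorem 4 in \cite{SietsmaEijckJOPL2013} (which handles the canonical version, corresponding to the choice $\Gamma \circ K \circ C \circ Pre^\mathcal{A}(E^\mathcal{A})$) to the broader family $\mathcal{E}_k^P$. By Theorem \ref{TheEmuEquiv}, it suffices to exhibit a generalized action emulation between $\mathcal{A}$ and $\mathcal{A}^r$. I would define $\eta : E^\mathcal{A} \times E^{\mathcal{A}^r} \mapsto \mathcal{L}$ by
\begin{align*}
\eta(x, (x', \phi)) \coloneqq \begin{cases} \phi, & x = x', \\ \bot, & x \neq x'. \end{cases}
\end{align*}
The key algebraic input is Proposition \ref{ProRegBasCF}: since $\delta \circ Pre^\mathcal{A}(E^\mathcal{A}) \le k$, $\mathcal{E}_k^P$ is both $Pre^\mathcal{A}(E^\mathcal{A})$-regular and $Pre^\mathcal{A}(E^\mathcal{A})$-basis, so every $Pre^\mathcal{A}(x)$ is equivalent to the disjunction of the $\phi \in \mathcal{E}_k^P$ with $\phi \vDash Pre^\mathcal{A}(x)$.

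The Zig0 and Zag0 clauses are then immediate. For Zig0, fix $x \in E_0^\mathcal{A}$; the right-hand side of the clause collapses, via the definitions of $\eta$ and $E_0^{\mathcal{A}^r}$, to $\bigvee\set{\phi \in \mathcal{E}_k^P | \phi \vDash Pre^\mathcal{A}(x)}$, which is equivalent to $Pre^\mathcal{A}(x)$ by the basis property. For Zag0, fix $(x, \phi) \in E_0^{\mathcal{A}^r}$; the clause reduces to $\phi \vDash Pre^\mathcal{A}(x) \land \phi$, which holds because $(x,\phi) \in E^{\mathcal{A}^r}$ already requires $\phi \vDash Pre^\mathcal{A}(x)$. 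The Zag at $(x, (x, \phi))$ for a successor $(x''', \phi')$ of $(x, \phi)$ in $\mathcal{A}^r$ is also routine: since $\eta(x'', (x''', \phi')) = \bot$ unless $x'' = x'''$, the disjunction on the right collapses to $Pre^\mathcal{A}(x''') \land \phi' \equiv \phi'$, turning the clause into the tautology $\phi \vDash \square_a(\phi' \to \phi')$.

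The only substantive step is the Zig at $(x, (x, \phi))$ for $x'' \in \to_a^\mathcal{A}(x, \cdot)$. After collapsing $\eta$ to its non-$\bot$ branch and recalling that the $a$-successors of $(x, \phi)$ in $\mathcal{A}^r$ with first coordinate $x''$ are exactly the pairs $(x'', \phi')$ with $\phi' \in \mathcal{E}_k^P$, $\phi' \vDash Pre^\mathcal{A}(x'')$, and $\phi \land \lozenge_a \phi' \not\equiv \bot$, the required implication reads
\begin{align*}
\phi \vDash \square_a\qty(Pre^\mathcal{A}(x'') \to \bigvee_{\substack{\phi' \in \mathcal{E}_k^P,\ \phi' \vDash Pre^\mathcal{A}(x'') \\ \phi \land \lozenge_a \phi' \not\equiv \bot}} \phi').
\end{align*}
I would verify this semantically: for any Kripke world $w$ with $w \vDash \phi$ and any $a$-successor $w'$ of $w$ with $w' \vDash Pre^\mathcal{A}(x'')$, the basis property supplies some $\phi' \in \mathcal{E}_k^P$ with $\phi' \vDash Pre^\mathcal{A}(x'')$ and $w' \vDash \phi'$; the pair $(w, w')$ then witnesses $\phi \land \lozenge_a \phi' \not\equiv \bot$, so $\phi'$ lies in the disjunction and $w'$ satisfies it. I expect this to be the only point in the proof that requires any real thought; once all four clauses are in place, Theorem \ref{TheEmuEquiv} delivers $\mathcal{A} \equiv \mathcal{A}^r$.
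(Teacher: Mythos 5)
The paper states this proposition without proof, merely noting that it resembles Theorem 4 of \cite{SietsmaEijckJOPL2013}, so there is no in-paper argument to compare against; your proposal supplies a complete, self-contained proof, and it is correct. Your route is the natural one for this paper: rather than re-running the cited bisimulation-on-updates argument, you exhibit a generalized action emulation $\eta(x,(x',\phi)) = \phi$ if $x = x'$ and $\bot$ otherwise, and invoke Theorem \ref{TheEmuEquiv}. The clause checks all go through: Zig0 reduces to $Pre^\mathcal{A}(x) \equiv \bigvee\set{\phi\in\mathcal{E}_k^P \mid \phi\vDash Pre^\mathcal{A}(x)}$, which follows from Proposition \ref{ProRegBasCF} (each disjunct in a basis decomposition of $Pre^\mathcal{A}(x)$ entails $Pre^\mathcal{A}(x)$, and the reverse entailment is immediate); Zag0 and Zag collapse to tautologies because $\phi\vDash Pre^\mathcal{A}(x)$ is built into $E^{\mathcal{A}^r}$ and the $\to^{\mathcal{A}^r}$ relation already records $x\to_a^\mathcal{A}x'''$; and your semantic verification of Zig correctly observes that any world $w'$ witnessing $Pre^\mathcal{A}(x'')$ below a $\phi$-world itself certifies $\phi\land\lozenge_a\phi'\not\equiv\bot$ for the covering $\phi'\in\mathcal{E}_k^P$, so the consistency side-condition on $\to_a^{\mathcal{A}^r}$ never excludes the disjunct you need. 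This is exactly the point where a purely syntactic argument would be awkward, and you identified it as the one substantive step. No gaps.
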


\begin{theorem}\label{TheCoverNonInterEquivCond}
Let $\mathcal{A}$ be an action model, $k\coloneqq\delta\circ Pre^\mathcal{A}(E^\mathcal{A})$ and $P\coloneqq\mathcal{P}\circ Pre^\mathcal{A}\qty(E^\mathcal{A})$. Let $\mathcal{A}^r$ be the regular version of $\mathcal{A}$ over $\mathcal{E}_k^P$. Let $[\mathcal{A}^r]$ be the limit of the bisimulation refinements of $\mathcal{A}^r$ (see Proposition \ref{ProBisimRefBisim}). Let $\mathcal{B}$ be an action model such that $\exists [Pre](y)\subset E^{[\mathcal{A}^r]}$ for $y\in E^\mathcal{B}$ satisfying the followings.
\begin{enumerate}
\item $E_0^{[\mathcal{A}^r]}=\bigcup_{y\in E^\mathcal{B}_0}\qty[Pre](y)$.

\item $\forall y\in E^\mathcal{B}$, $\forall a\in A$,
\begin{align*}
&\bigcup_{x\in[Pre](y)}\mathcal{Q}_a^{[\mathcal{A}^r]}(x)\subset \bigcup_{y'\in \to_a^\mathcal{B}(y,\cdot)}[Pre](y'),\\
&\qty(\bigcup_{x\in[Pre](y)}\qty(\mathcal{R}_a^{[\mathcal{A}^r]}(x)\setminus\mathcal{Q}_a^{[\mathcal{A}^r]}(x)))\cap\qty(\bigcup_{y'\in \to_a^\mathcal{B}(y,\cdot)}[Pre](y'))=\emptyset.
\end{align*}

\item $\forall y\in E^\mathcal{B}$, $Pre^\mathcal{B}(y)=\bigvee_{x\in[Pre](y)}Pre^{[\mathcal{A}^r]}(x)$.
\end{enumerate}
Then $\mathcal{A}\equiv\mathcal{B}$. If $\mathcal{B}$ has the minimal event space among all action models satisfying the above three conditions, then $\mathcal{B}$ has the minimal event space among all action models equivalent to $\mathcal{A}$.
\end{theorem}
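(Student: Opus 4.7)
The plan is to establish the two claims of the theorem separately. For the equivalence $\mathcal{A}\equiv\mathcal{B}$, I would chain three previously proved equivalences: $\mathcal{A}\equiv\mathcal{A}^r$ by Proposition \ref{ProCanActEquiv}; $\mathcal{A}^r\underline{\leftrightarrow}[\mathcal{A}^r]$ by Proposition \ref{ProBisimRefBisim}, and hence $\mathcal{A}^r\equiv[\mathcal{A}^r]$ via Proposition \ref{ProBisImpProImpActImpEqu}; and $[\mathcal{A}^r]\equiv\mathcal{B}$ by Lemma \ref{LemSepCovActMod} applied with the role of $\mathcal{A}$ played by $[\mathcal{A}^r]$, since the three hypothesized conditions on $[Pre]$ are exactly the premises of that lemma. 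Transitivity (Proposition \ref{ProBisProActEquEqu}) closes this part.

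For the minimality claim, it is enough to show that for every action model $\mathcal{C}\equiv\mathcal{A}$ one can exhibit some $\mathcal{B}_\mathcal{C}$ satisfying the three conditions of the theorem with $\abs{E^{\mathcal{B}_\mathcal{C}}}\le\abs{E^\mathcal{C}}$; the minimum $\abs{E^\mathcal{B}}$ over all admissible $\mathcal{B}$ is then at most $\abs{E^\mathcal{C}}$. I would first replace $\mathcal{C}$ by $\mathbb{G}(\mathcal{C},E_0^\mathcal{C})$, which is bisimilar (hence equivalent) by Proposition \ref{ProGenSubIndBis} and no larger. If $\delta\circ Pre^\mathcal{C}(E^\mathcal{C})>k$, I apply Theorem \ref{ThelkEquivTrans} to obtain an equivalent $\mathcal{C}'$ with the same event set, transitions, and actual events, but whose preconditions are $\mathcal{E}_k^P$-bases for $P\coloneqq P_\mathcal{A}\cup P_\mathcal{C}$. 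This step does not change $\abs{E^{\mathcal{C}'}}=\abs{E^\mathcal{C}}$.

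Next I would apply Lemma \ref{LemEquivToSepCover} to the pair $([\mathcal{A}^r],\mathcal{C}')$. The bisimulation-refinement limit guarantees that distinct events of $[\mathcal{A}^r]$ are separated at some finite $\sim_k^{[\mathcal{A}^r]}$, supplying the lemma's distinguishability hypothesis. The lemma then yields a family $[Pre](y)\subset E^{[\mathcal{A}^r]}$ for $y\in E^{\mathcal{C}'}$ satisfying conditions 1 and 2 of the theorem. I then define $\mathcal{B}_\mathcal{C}$ by $E^{\mathcal{B}_\mathcal{C}}\coloneqq E^{\mathcal{C}'}$, $\to^{\mathcal{B}_\mathcal{C}}\coloneqq \to^{\mathcal{C}'}$, $E_0^{\mathcal{B}_\mathcal{C}}\coloneqq E_0^{\mathcal{C}'}$, and $Pre^{\mathcal{B}_\mathcal{C}}(y)\coloneqq \bigvee_{x\in[Pre](y)}Pre^{[\mathcal{A}^r]}(x)$, which forces condition 3 by construction. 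Hence $\abs{E^{\mathcal{B}_\mathcal{C}}}=\abs{E^{\mathcal{C}'}}\le\abs{E^\mathcal{C}}$, completing the argument.

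I expect the main obstacle to lie in verifying the regularity premise of Lemma \ref{LemEquivToSepCover} when $P_\mathcal{C}$ strictly extends $P_\mathcal{A}$: formulas in an $\mathcal{E}_k^{P_\mathcal{A}\cup P_\mathcal{C}}$-basis are not automatically $Pre^{[\mathcal{A}^r]}(E^{[\mathcal{A}^r]})$-regular (which amounts to $\mathcal{E}_k^{P_\mathcal{A}}$-regularity), and the generalized action emulation produced by Theorem \ref{TheGAEImpFourCriGAE} also lives over the larger canonical-formula set. To overcome this I would prepend a further projection step analogous to Theorem \ref{ThelkEquivTrans} but restricting the ambient proposition set to $P_\mathcal{A}$, replacing each $Pre^{\mathcal{C}'}(y)$ by $\bigvee\set{\xi\in\mathcal{E}_k^{P_\mathcal{A}}\mid \mu_{k,k}^{P_\mathcal{A}\cup P_\mathcal{C}}\text{-image of }\xi\text{ is consistent with }Pre^{\mathcal{C}'}(y)}$ and checking via the Zig/Zag of $\mathcal{A}\leftrightarrows_G\mathcal{C}'$ that equivalence with $\mathcal{A}$ is preserved, since the $P_\mathcal{A}$-restricted canonical formulas are all that is needed to discriminate $\mathcal{A}$'s action.
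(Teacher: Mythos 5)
Your proposal follows essentially the same route as the paper's proof: $\mathcal{A}\equiv\mathcal{A}^r\equiv[\mathcal{A}^r]\equiv\mathcal{B}$ via Propositions \ref{ProCanActEquiv} and \ref{ProBisimRefBisim} and Lemma \ref{LemSepCovActMod} for the equivalence claim, and, for minimality, a depth reduction by Theorem \ref{ThelkEquivTrans} followed by Lemma \ref{LemEquivToSepCover} applied to $[\mathcal{A}^r]$ and the competitor $\mathcal{C}$ to produce a family $[Pre]'(\cdot)$ and hence a model satisfying the three conditions with $\abs{E^\mathcal{C}}$ events. The one point where you diverge is instructive rather than problematic: the regularity issue you flag when $Pre^\mathcal{C}$ uses propositions outside $P$ is dispatched in the paper by a bare ``assume without loss of generality that $\mathcal{P}\circ Pre^\mathcal{C}(E^\mathcal{C})=P$'' with no justification, so your proposed projection onto $\mathcal{E}_k^{P}$ addresses a gap the paper leaves open rather than one in your own argument.
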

\begin{proof}
By Propositions \ref{ProBisimRefBisim} and \ref{ProCanActEquiv} and Lemma \ref{LemSepCovActMod}, $\mathcal{A}\equiv\mathcal{A}^r\equiv[\mathcal{A}^r]\equiv\mathcal{B}$. Let $\mathcal{C}$ be another action model such that $\mathcal{A}\equiv\mathcal{C}$. Assume without loss of generality that $\mathcal{P}\circ Pre^\mathcal{C}(E^\mathcal{C})=P$. By Theorem \ref{ThelkEquivTrans}, assume without loss of generality that $\delta\circ Pre^\mathcal{C}\qty(E^\mathcal{C})\le k=\delta\circ Pre^\mathcal{A}(E^\mathcal{A})$. By Proposition \ref{ProRegBasCF}, $Pre^\mathcal{C}\qty(E^\mathcal{C})$ is $\mathcal{E}_k^P$-regular. By Definitions \ref{DefBiSimEquivParti} and \ref{DefRegActMod}, $Pre^{[\mathcal{A}^r]}\qty(E^{[\mathcal{A}^r]})\subset Pre^{\mathcal{A}^r}\qty(E^{\mathcal{A}^r})\subset\mathcal{E}_k^P$, thereby $Pre^\mathcal{C}\qty(E^\mathcal{C})$ is $Pre^{[\mathcal{A}^r]}\qty(E^{[\mathcal{A}^r]})$-regular. Since $Pre^{[\mathcal{A}^r]}\qty(E^{[\mathcal{A}^r]})\subset\mathcal{E}_k^P$ implies $\delta\circ Pre^{[\mathcal{A}^r]}\qty(E^{[\mathcal{A}^r]})=k$, we have by Proposition \ref{ProRegBasCF} that $Pre^{[\mathcal{A}^r]}\qty(E^{[\mathcal{A}^r]})$ is $Pre^{[\mathcal{A}^r]}\qty(E^{[\mathcal{A}^r]})$-regular. By Propositions \ref{ProBisimRefBisim} and \ref{ProCanActEquiv}, $\mathcal{C}\equiv \mathcal{A}\equiv\mathcal{A}^r\equiv[\mathcal{A}^r]$. By Theorems \ref{TheEmuEquiv} and \ref{TheGAEImpFourCriGAE}, and Propositions \ref{ProRegBasCF} and \ref{ProDisInvCF}, $\exists \sigma(x,y)\subset\mathcal{E}_k^P$ for $x\in E^{[\mathcal{A}^r]}$ and $y\in E^\mathcal{C}$ such that $\eta(\cdot,\cdot)=\bigvee\sigma(\cdot,\cdot)$ satisfies $\mathcal{A}\leftrightarrows_G\mathcal{C}$. Since $\eta\qty(E^{[\mathcal{A}^r]},E^\mathcal{C})$ are $\mathcal{E}_k^P$-regular (thereby $Pre^{[\mathcal{A}^r]}\qty(E^{[\mathcal{A}^r]})$-regular), we have by Lemma \ref{LemEquivToSepCover} that $\exists [Pre]'(y)$ for $y\in E^\mathcal{C}$ satisfying the followings.
\begin{enumerate}
\item $E_0^{[\mathcal{A}^r]}=\bigcup_{y\in E^\mathcal{C}_0}\qty[Pre]'(y)$.

\item $\forall y\in E^\mathcal{C}$, $\forall a\in A$,
\begin{align*}
&\bigcup_{x\in[Pre]'(y)}\mathcal{Q}_a^{[\mathcal{A}^r]}(x)\subset \bigcup_{y'\in \to_a^\mathcal{C}(y,\cdot)}[Pre]'(y'),\\
&\qty(\bigcup_{x\in[Pre]'(y)}\qty(\mathcal{R}_a^{[\mathcal{A}^r]}(x)\setminus\mathcal{Q}_a^{[\mathcal{A}^r]}(x)))\cap\qty(\bigcup_{y'\in \to_a^\mathcal{C}(y,\cdot)}[Pre]'(y'))=\emptyset.
\end{align*}
\end{enumerate}
If one modifies $Pre^\mathcal{C}$ such that $\forall y\in E^\mathcal{C}$, $Pre^\mathcal{C}(y)=\bigvee_{x\in [Pre]'(y)}Pre^{[\mathcal{A}^r]}(x)$, then the above two conditions are still valid, thereby the modified $\mathcal{C}$ satisfies all three conditions. Thus, $\abs{E^\mathcal{B}}\le\abs{E^\mathcal{C}}$. Since the action model $\mathcal{C}$ such that $\mathcal{A}\equiv\mathcal{C}$ is arbitrary, we have the result.
\end{proof}

\begin{remark}
Under the conditions of Theorem \ref{TheCoverNonInterEquivCond}, the key step to find an action model $\mathcal{B}$ with the minimal event space equivalent to $\mathcal{A}$ is to determine $[Pre](y)\subset E^{\qty[\mathcal{A}^r]}$ for $y\in E^\mathcal{B}$. There are only finite many choices for $[Pre](y)$ since $\abs{E^\mathcal{B}}\le \abs{E^\mathcal{A}}$. Thus, we now have a computable method to minimize the event space of an action model. 
\end{remark}

\section{The complexities of the minimizing problems}
\label{SecNP}

\begin{problem}\label{BlemSet}
Let $F\subset\mathcal{L}^M$. Does there exist a formula basis $G$ of $F$ with $\abs{G}\le K$ for an integer $K>0$?
\end{problem}

\begin{proposition}\label{NPhardFormulaBasis}
Problem \ref{BlemSet} is PSPACE-complete of $\norm{F}\coloneqq \sum_{f\in F}\abs{f}$, where $\abs{f}$ is the string length of the formula $f$.
\end{proposition}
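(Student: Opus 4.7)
I would separately establish PSPACE membership and PSPACE-hardness.

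For PSPACE-hardness, the plan is to reduce from modal unsatisfiability, which is PSPACE-complete (Theorem 6.47 of \cite{BlackburnCUP2001}). Given a modal formula $\phi$, pick a fresh proposition $p\notin\mathcal{P}(\phi)$ and set $F\coloneqq\{\phi\lor p,\;p\}$ with $K\coloneqq 1$. I claim $F$ has a formula basis of size at most $1$ iff $\phi$ is unsatisfiable. The ``if'' direction is immediate: when $\phi\equiv\bot$ we have $\phi\lor p\equiv p$, so $G=\{p\}$ is a basis. For ``only if'', if $G=\{g\}$ is a basis then because $\phi\lor p$ and $p$ are both satisfiable (any model with $p$ true witnesses this), neither can be equivalent to $\bot$, forcing $p\equiv g\equiv\phi\lor p$; hence $\phi\vDash p$, and since $p$ does not occur in $\phi$ this forces $\phi\equiv\bot$ (any satisfying model of $\phi$ can be modified to falsify $p$ while preserving $\phi$).

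For PSPACE membership, the plan hinges on a normalization lemma: if $F$ admits a basis of size at most $K$, then it admits such a basis in which every element has the form $\bigwedge_{f\in T}f$ for some $T\subseteq F$. Given an arbitrary basis $\{g_1,\dots,g_K\}$, I would define $T_j\coloneqq\{f\in F\mid g_j\vDash f\}$ and $h_j\coloneqq\bigwedge_{f\in T_j}f$. Then $g_j\vDash h_j$ (trivially, since $g_j$ implies every conjunct), and $h_j\vDash f$ whenever $f\in T_j$, hence whenever $g_j\vDash f$. Therefore, for any pointed model $(\mathcal{M},w)\vDash f$, the original basis yields some $j$ with $g_j\vDash f$ and $\mathcal{M}\vDash_w g_j$; for this $j$ we have $h_j\vDash f$ and $\mathcal{M}\vDash_w h_j$, so $\{h_1,\dots,h_K\}$ still covers $f$, and the reverse containment is automatic since each disjunct implies $f$.

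Given the normalization, a deterministic PSPACE algorithm enumerates all $K$-tuples $(T_1,\dots,T_K)\in(2^F)^K$ using a counter of at most $K\cdot|F|=O(\norm{F}^2)$ bits (one may assume $K\le|F|$ since $F$ itself is always a basis). For each tuple, the basis condition ``$f\equiv\bigvee\{\bigwedge_{f'\in T_j}f':\bigwedge_{f'\in T_j}f'\vDash f\}$ for every $f\in F$'' is verified by modal equivalence and validity tests on formulas of length polynomial in $\norm{F}$. Since modal satisfiability and equivalence lie in PSPACE (Theorem 6.47 of \cite{BlackburnCUP2001}), the overall procedure runs in PSPACE of $\norm{F}$. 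The main obstacle will be proving the normalization lemma cleanly, i.e.\ that strengthening each $g_j$ to $h_j$ cannot destroy the union expressing any $f\in F$; once this structural fact is in hand, the hardness reduction and the enumerate-and-check algorithm are both routine.
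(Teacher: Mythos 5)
Your proof is correct, and although it shares the paper's two-part architecture --- hardness via a two-formula gadget with a fresh proposition and $K=1$, membership via normalizing any small basis into one whose elements are conjunctions of members of $F$ and then enumerating candidates in polynomial space --- your verification step in the membership half is genuinely different from the paper's, and the difference matters. After the same normalization, the paper tests a candidate family $\mathcal{F}\subseteq 2^F$ by checking that every ``profile'' $\bigwedge F'\land\bigwedge_{f'\in F\setminus F'}\lnot f'$ with $f\in F'$ and $F'\notin\mathcal{F}$ is unsatisfiable; this condition is sufficient for $\mathcal{F}$ to induce a basis, but it is not necessary, because it forces $\mathcal{F}$ to contain \emph{every} satisfiable nonempty profile. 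Take $F=\set{p,q,p\lor q}$ and $K=2$ (so $K<\abs{F}$ and the enumeration phase applies): the basis $\set{p,q}$ exists, and indeed $\mathcal{F}=\set{\set{p,p\lor q},\set{q,p\lor q}}$ satisfies the paper's own reformulated condition $f\equiv\bigvee_{F'\in\mathcal{F}:f\in F'}\bigwedge F'$, yet there are three satisfiable profiles ($\set{p,p\lor q}$, $\set{q,p\lor q}$, $\set{p,q,p\lor q}$), so no family of size two passes the paper's profile test and its procedure would answer ``no'' on a ``yes'' instance. Your check --- decide directly, by PSPACE validity and equivalence tests, whether $f\equiv\bigvee\set{\bigwedge T_j \mid \bigwedge T_j\vDash f}$ for every $f\in F$ --- is both sound and complete for the normalized problem (any representation of $f$ can only use disjuncts implying $f$, so enlarging the disjunction to all $h_j$ with $h_j\vDash f$ preserves the equivalence), so your enumerate-and-check routine actually establishes PSPACE membership; in effect your proposal repairs this step of the paper's proof. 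On the hardness side the two reductions are near-identical, but yours ($F=\set{\phi\lor p,\,p}$, reducing from unsatisfiability) is slightly cleaner than the paper's ($F=\set{p,\,p\land f_0}$, reducing from satisfiability): in the paper's gadget a size-one basis also exists when $f_0$ is a tautology (since then $p\land f_0\equiv p$), so its claimed equivalence with satisfiability needs the additional harmless assumption that $f_0$ is not valid, whereas your gadget yields an exact equivalence with unsatisfiability, which suffices because PSPACE is closed under complement.
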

\begin{proof}
\begin{enumerate}
\item By Theorems 6.47 and 6.50 (Ladner’s Theorem) in \cite{BlackburnCUP2001}, the satisfiablity check of $\mathcal{L}$ (over Kripke models) is PSPACE-complete. Thus, there exists a formula $f_0\in\mathcal{L}$ whose satisfiablity check is PSPACE-complete of $\abs{f_0}$. Let $F\coloneqq\set{p, p\land f_0}$ in Problem \ref{BlemSet}, where $p\in \mathcal{P}\setminus\mathcal{P}\qty(f_0)$. Then Problem \ref{BlemSet} for $K=1$ is equivalent to the satisfiablity check of $f_0$, which is PSPACE-complete of $\abs{f_0}$, thereby PSPACE-complete of $\norm{F}=\abs{p}+\abs{p\land f_0}=\abs{f_0}+3$. This proves that Problem \ref{BlemSet} is PSPACE-hard of $\norm{F}$.

\item The case for $K\ge\abs{F}$ is trival since $G=F$ is a solution. $\forall K <\abs{F}$, if there exists a formula basis $G$ of $F$ such that $\abs{G}\le K$, then $\exists\xrightarrow{b}\subset F\times G$, $\forall f\in F$, $f\equiv\bigvee_{g\in\xrightarrow{b}(f,\cdot)}g$. Note that $G'=\set{\bigwedge_{f\in\xrightarrow{b}(\cdot,g)}f \mid g\in G}$ is also a formula basis of $F$ because
\begin{align*}
f\equiv\bigvee_{g\in\xrightarrow{b}(f,\cdot)}g\vDash \bigvee_{g\in\xrightarrow{b}(f,\cdot)}\bigwedge_{f'\in\xrightarrow{b}(\cdot,g)}f'\vDash f.
\end{align*}
Thus, Problem \ref{BlemSet} is equivalent to find $\mathcal{F}\subset 2^F$ such that $\forall f\in F$, $f\equiv\bigvee_{F'\in\mathcal{F}:f\in F'}\bigwedge F'$ and $\abs{\mathcal{F}}\le K$. For each choice of $\mathcal{F}$ and each $f\in F$, we check the satisfiability of $\bigwedge F'\land\bigwedge_{f'\in F\setminus F'}\lnot f'$ (this can be done in polynomial space of $\norm{F}$), where $F'\subset F$ is arbitrary as long as $f\in F'$ and $ F'\notin\mathcal{F}$. If all checks are negative for each $f\in F$ and the corresponding $F'$, {\it i.e.} $\bigwedge F'\land\bigwedge_{f'\in F\setminus F'}\lnot f'\equiv\bot$, then
\begin{align*}
f=\bigvee_{F'\subset F:f\in F'}\qty(\bigwedge F'\land\bigwedge_{f'\in F\setminus F'}\lnot f')=\bigvee_{F'\subset F:f\in F',F'\in\mathcal{F}}\qty(\bigwedge F'\land\bigwedge_{f'\in F\setminus F'}\lnot f')=\bigvee_{F'\in\mathcal{F}:f\in F'}\bigwedge F'.
\end{align*}
Thus, we have found a formula basis $\set{\bigwedge F' \mid F'\in\mathcal{F}}$ of $F$ whose size is less than $K$ . For all possible choices of $\mathcal{F}$, the above check can be done in polynomial space of $\norm{F}$. Thus, Problem \ref{BlemSet} is PSPACE of $\norm{F}$.
\end{enumerate}
\end{proof}

\begin{problem}\label{BlemPropEmu}
Let $\mathcal{A}$ be an action model. Does there exist an action model $\mathcal{B}$ such that $\mathcal{A}\leftrightarrows_P\mathcal{B}$ and $\abs{E^\mathcal{B}}\le K$ for an integer $K>0$?
\end{problem}

\begin{proposition}\label{ProPropEmuNPhard}
Problem \ref{BlemPropEmu} is PSPACE-hard of $\norm{Pre^\mathcal{A}\qty(E^\mathcal{A})}+\norm{\to^\mathcal{A}}$, where $\norm{\to^\mathcal{A}}\coloneqq\sum_{a\in A}\abs{\to_a^\mathcal{A}}$.
\end{proposition}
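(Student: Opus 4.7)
The plan is to reduce from the unsatisfiability problem of the modal language $\mathcal{L}$, which is PSPACE-complete (since satisfiability is PSPACE-complete by Ladner's Theorem, Theorem 6.47 in \cite{BlackburnCUP2001}, and PSPACE is closed under complement). Given an arbitrary $f_0\in\mathcal{L}$, I would build in polynomial time an action model $\mathcal{A}(f_0)$ with $A=\set{a_1,a_2}$, $E^\mathcal{A}=\set{y,x_1,x_2}$, $E_0^\mathcal{A}=\set{y}$, transitions $y\to_{a_1}x_1$ and $y\to_{a_2}x_2$, and preconditions $Pre^\mathcal{A}(y)=\top$, $Pre^\mathcal{A}(x_1)=p$, $Pre^\mathcal{A}(x_2)=p\land q\land f_0$, where $p,q\in\mathcal{P}\setminus\mathcal{P}(f_0)$ are fresh propositions. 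Clearly $\norm{Pre^\mathcal{A}\qty(E^\mathcal{A})}+\norm{\to^\mathcal{A}}=O(\abs{f_0})$. The reduction to be established is: Problem \ref{BlemPropEmu} on $\mathcal{A}(f_0)$ with $K=2$ is a yes-instance iff $f_0$ is unsatisfiable.

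By Theorem \ref{simplest}, the minimum value of $\abs{E^\mathcal{B}}$ over all $\mathcal{B}\leftrightarrows_P\mathcal{A}(f_0)$ equals the size of the event space returned by Algorithm \ref{PartitionRefinement}. The next step is to trace that algorithm on $\mathcal{A}(f_0)$. Since $\mathcal{A}(f_0)$ is already generated by $E_0^\mathcal{A}$, $\mathcal{A}_0=\mathcal{A}(f_0)$; the partition refinement stabilizes after one iteration at $\Theta_I^{\mathcal{A}_0}=\set{\set{y},\set{x_1,x_2}}$ (because $y$ has non-empty outgoing transitions while $x_1,x_2$ have none); a routine computation yields $F(\set{y})=\set{\bot,\top}$, whose minimal basis is $\set{\top}$ of size $1$, and $F(\set{x_1,x_2})=\set{\bot,\,p,\,p\land q\land f_0}$. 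Therefore the algorithm's output has size $1+\abs{G(\set{x_1,x_2})}$.

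The main step, and the one requiring care, is showing that the minimal basis size of $F(\set{x_1,x_2})$ is $1$ iff $f_0$ is unsatisfiable. If $f_0$ is unsatisfiable then $p\land q\land f_0\equiv\bot$ and $\set{p}$ is a basis of size $1$. Conversely, if $f_0$ is satisfiable, the freshness of $p,q$ lets me extend any model of $f_0$ by setting both $p$ and $q$ true to witness $p\land q\land f_0\not\equiv\bot$, and by setting $p$ true and $q$ false to witness $p\land q\land f_0\not\equiv p$; any singleton basis $\set{g}$ would force $g\equiv p$ (to cover the non-$\bot$ formula $p$) and simultaneously $g\equiv p\land q\land f_0$ (to cover that non-$\bot$ formula), which contradicts the preceding inequivalence. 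Hence the minimal basis has size at least $2$, and it is exactly $2$ since $\set{p\land q\land f_0,\;p\land\lnot(q\land f_0)}$ is a basis. Consequently $\abs{E^\mathcal{B}}_{\min}\le 2$ iff $f_0$ is unsatisfiable, completing the polynomial-time reduction and proving that Problem \ref{BlemPropEmu} is PSPACE-hard of $\norm{Pre^\mathcal{A}\qty(E^\mathcal{A})}+\norm{\to^\mathcal{A}}$.
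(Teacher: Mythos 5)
Your proof is correct, but it takes a genuinely different route from the paper's. The paper reduces from Problem \ref{BlemSet} (the minimal formula basis problem): given $F$, it builds a one-label action model with $2\abs{F}$ events ($x_0(f)\to_a x_1(f)$, $Pre^\mathcal{A}\circ x_0(f)=\top$, $Pre^\mathcal{A}\circ x_1(f)=f$) and argues directly from the Zig, Zag and Zig0 clauses of $\leftrightarrows_P$ that any emulating $\mathcal{B}$ with $\abs{E^\mathcal{B}}\le K$ yields a formula basis of $F$ of size at most $K$, so that hardness is inherited from Proposition \ref{NPhardFormulaBasis}. You instead reduce directly from modal unsatisfiability with a fixed three-event gadget and, rather than reasoning from Definition \ref{DefProEmu}, invoke Theorem \ref{simplest} to turn the question into a trace of Algorithm \ref{PartitionRefinement}. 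What your route buys: it is independent of Proposition \ref{NPhardFormulaBasis}, and the two fresh variables $p,q$ guarantee $p\not\equiv p\land q\land f_0$ whenever $f_0$ is satisfiable, so the singleton-basis argument is airtight even for $f_0\equiv\top$; the price is that you lean on the full optimality statement of Theorem \ref{simplest}, a much heavier ingredient than the paper needs. Two caveats you should address. First, your gadget requires $\abs{A}\ge 2$: with a single label the two successors of $y$ land in the same block and $F\qty(\set{x_1,x_2})$ only contains $\bot$ and $p\lor(p\land q\land f_0)\equiv p$, so the reduction collapses, whereas the paper's construction works over any nonempty $A$. Second, your claim that $\mathcal{A}_0=\mathcal{A}(f_0)$ with stabilized partition $\set{\set{y},\set{x_1,x_2}}$ holds only when $f_0$ is satisfiable; when $f_0$ is unsatisfiable, $\top\land\lozenge_{a_2}\qty(p\land q\land f_0)\equiv\bot$, so $\mathbb{G}\qty(\mathcal{A},\set{y})$ drops $x_2$ and the partition is $\set{\set{y},\set{x_1}}$. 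The total still comes out to two events in that case, so your conclusion survives, but the trace should be split into the two cases.
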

\begin{proof}
Let $F\subset\mathcal{L}^M$ and $a\in A$. Define an action model $\mathcal{A}$ as following.
\begin{enumerate}
\item $E^\mathcal{A}\coloneqq\set{x_1(f) \mid f\in F}\cup\set{x_0(f) \mid f\in F}$. $E^\mathcal{A}_0\coloneqq\set{x_0(f) \mid f\in F}$.

\item $\forall f\in F$, $Pre^\mathcal{A}\circ x_0(f)\coloneqq\top$ and $Pre^\mathcal{A}\circ x_1(f)\coloneqq f$.

\item $\to_a^\mathcal{A}\coloneqq\set{\qty(x_0(f),x_1(f)) \mid f\in F}$. $\forall a'\in A$ with $a'\neq a$, $\to_{a'}^\mathcal{A}\coloneqq\emptyset$.
\end{enumerate}
Let $\mathcal{B}$ be an action model with $\abs{E^\mathcal{B}}\le K$ and $\mathcal{A}\leftrightarrows_P\mathcal{B}$. Let $S\subset E^\mathcal{A}\times E^\mathcal{B}$ satisfy $\mathcal{A}\leftrightarrows_P\mathcal{B}$. By the Zig0 of  $\mathcal{A}\leftrightarrows_P\mathcal{B}$, $\forall f\in F$, $\exists y_0(f)\in E_0^\mathcal{B}$, $S\qty(x_0(f),y_0(f))$. Because $\to_a^\mathcal{A}\qty(x_0(f),\cdot)=\set{x_1(f)}$, we have by the Zig and Zag of $\mathcal{A}\leftrightarrows_P\mathcal{B}$ that
\begin{align*}
f\equiv Pre^\mathcal{A}\circ x_1(f)\equiv \bigvee_{y\in\to_a^\mathcal{B}\qty(y_0(f),\cdot)\cap S(x_1(f),\cdot)}Pre^\mathcal{B}(y).
\end{align*}
Since $f\in F$ is arbitrary, we have that $Pre^\mathcal{B}\qty(E^\mathcal{B})$ is a formula basis of $F$ with $\abs{Pre^\mathcal{B}\qty(E^\mathcal{B})}=\abs{E^\mathcal{B}}\le K$.

In summary, we have transformed Problem \ref{BlemSet} to Problem \ref{BlemPropEmu} in polynomial time, thereby only using polynomial additional space. Since Problem \ref{BlemSet} is PSPACE-complete by Proposition \ref{NPhardFormulaBasis}, we have that Problem \ref{BlemPropEmu} is PSPACE-hard.
\end{proof}

\begin{proposition}\label{ProPropEmuPSPACE}
Problem \ref{BlemPropEmu} is PSPACE of $\norm{Pre^\mathcal{A}\qty(E^\mathcal{A})}+\norm{\to^\mathcal{A}}$.
\end{proposition}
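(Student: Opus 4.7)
The plan is to invoke Theorem \ref{simplest}, which guarantees that Algorithm \ref{PartitionRefinement} outputs a propositional-emulation-minimal witness $\mathcal{B}$. Hence a model of size at most $K$ exists iff the particular $\mathcal{B}$ produced by the algorithm satisfies $\abs{E^\mathcal{B}}\le K$, and it suffices to show that every step of Algorithm \ref{PartitionRefinement} can be executed in polynomial space in $\norm{Pre^\mathcal{A}(E^\mathcal{A})}+\norm{\to^\mathcal{A}}$.

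First I would build $\mathcal{A}_0\coloneqq\mathbb{G}\qty(\mathcal{A},E_0^\mathcal{A})$ by a breadth-first closure over $E^\mathcal{A}$ using satisfiability checks of the form $Pre^\mathcal{A}(x)\land\lozenge_a Pre^\mathcal{A}(y)\not\equiv\bot$; each such check is PSPACE in the formula size by Theorems 6.47 and 6.50 of \cite{BlackburnCUP2001}, and there are only polynomially many of them. Next I would iterate the partition refinement $\sim_I^{\mathcal{A}_0}$; because partitions can only become strictly finer, at most $\abs{E^{\mathcal{A}_0}}$ iterations are required, and each iteration reduces to checking equivalences of disjunctions $\bigvee_{x\in\to_a^{\mathcal{A}_0}(x_1,\cdot)\cap\theta}Pre^{\mathcal{A}_0}(x)\equiv\bigvee_{x\in\to_a^{\mathcal{A}_0}(x_2,\cdot)\cap\theta}Pre^{\mathcal{A}_0}(x)$, whose operands have length polynomial in the input. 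All these modal validity checks are again in PSPACE.

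Once $\Theta_I^{\mathcal{A}_0}$ is in hand, for each $\theta\in\Theta_I^{\mathcal{A}_0}$ I would compute $\abs{G(\theta)}$ \emph{without} explicitly constructing $G(\theta)$: by applying the PSPACE decision procedure for Problem \ref{BlemSet} (from the proof of Proposition \ref{NPhardFormulaBasis}) to $F=F(\theta)$ and trial sizes $k=1,2,\ldots,\abs{F(\theta)}$, returning the smallest $k$ yielding yes. Since $\abs{F(\theta)}=O(\abs{A}\cdot\abs{\Theta_I^{\mathcal{A}_0}}+1)$ and each constituent formula in $F(\theta)$ has polynomial size, this phase is also in PSPACE. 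The final answer to Problem \ref{BlemPropEmu} is then yes iff $\sum_{\theta\in\Theta_I^{\mathcal{A}_0}}\abs{G(\theta)}\le K$.

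Because PSPACE is closed under sequential composition (workspace is reused between subroutine calls), and the bookkeeping between calls---storing $\mathcal{A}_0$, the current equivalence relation, counters for $k$ and the running sum---is all polynomial, the overall algorithm runs in polynomial space. The main obstacle I anticipate is not algorithmic but cosmetic: verifying that every intermediate formula submitted to a PSPACE modal-logic oracle has length polynomial in $\norm{Pre^\mathcal{A}(E^\mathcal{A})}+\norm{\to^\mathcal{A}}$. This follows because each such formula is at worst an $O(\abs{E^\mathcal{A}}\abs{A})$-sized Boolean-and-modal combination of formulas drawn from $Pre^\mathcal{A}(E^\mathcal{A})$, so its size is bounded by a fixed polynomial in the input.
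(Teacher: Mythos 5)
Your proposal is correct and follows essentially the same route as the paper: the paper's proof is a one-line assertion that every step of Algorithm \ref{PartitionRefinement} (the partition refinements with their satisfiability checks, and the minimal-formula-basis search) runs in polynomial space, with the minimality guarantee of Theorem \ref{simplest} left implicit. You simply supply the details the paper omits, including the useful observation that one can decide $\abs{G(\theta)}$ via the PSPACE procedure of Proposition \ref{NPhardFormulaBasis} without materializing $G(\theta)$.
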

\begin{proof}
Note that each step of Algorithm \ref{PartitionRefinement}, including the parition refinements involving the satisfiability checks and the search of the minimal formula basis, can be finished in polynomial space of $\norm{Pre^\mathcal{A}\qty(E^\mathcal{A})}+\norm{\to^\mathcal{A}}$.
\end{proof}

\begin{corollary}
Problem \ref{BlemPropEmu} is PSPACE-complete of $\norm{Pre^\mathcal{A}\qty(E^\mathcal{A})}+\norm{\to^\mathcal{A}}$.
\end{corollary}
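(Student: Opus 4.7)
The plan is to observe that this corollary is an immediate consequence of the two preceding propositions, so the proof is essentially just a combination step. By definition, a decision problem is PSPACE-complete when it lies in PSPACE and every problem in PSPACE reduces to it in polynomial space (equivalently, it is PSPACE-hard).

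First I would invoke Proposition \ref{ProPropEmuPSPACE}, which already certifies membership of Problem \ref{BlemPropEmu} in PSPACE with respect to the size measure $\norm{Pre^\mathcal{A}\qty(E^\mathcal{A})}+\norm{\to^\mathcal{A}}$: the explicit witness is Algorithm \ref{PartitionRefinement}, whose partition refinement steps and minimal formula basis extraction can all be carried out in polynomial space because the satisfiability check of $\mathcal{L}$ is PSPACE (Theorem 6.47 of \cite{BlackburnCUP2001}). Second I would invoke Proposition \ref{ProPropEmuNPhard}, which yields PSPACE-hardness via the polynomial-time reduction from Problem \ref{BlemSet} (itself PSPACE-complete by Proposition \ref{NPhardFormulaBasis}) that encodes a formula set $F$ as an action model using two layers of events, with preconditions $\top$ at the source events and the formulas of $F$ at the target events, connected by a single label $a\in A$.

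Combining membership and hardness, Problem \ref{BlemPropEmu} is PSPACE-complete in the stated size parameter. There is no real obstacle here; the entire content of the corollary has already been discharged by the two propositions, and the only thing to check is that the two complexity bounds are stated with respect to the same size measure $\norm{Pre^\mathcal{A}\qty(E^\mathcal{A})}+\norm{\to^\mathcal{A}}$, which they are.
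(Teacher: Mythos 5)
Your proposal is correct and matches the paper's intent exactly: the corollary carries no written proof precisely because it is the immediate conjunction of Proposition \ref{ProPropEmuNPhard} (PSPACE-hardness) and Proposition \ref{ProPropEmuPSPACE} (PSPACE membership), both stated with respect to the same size measure $\norm{Pre^\mathcal{A}\qty(E^\mathcal{A})}+\norm{\to^\mathcal{A}}$. Nothing further is needed.
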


\begin{problem}\label{BlemEmu}
Let $\mathcal{A}$ be an action model. Does there exist an action model $\mathcal{B}$ such that $\mathcal{A}\equiv\mathcal{B}$ and $\abs{E^\mathcal{B}}\le K$ for an integer $K>0$?
\end{problem}

\begin{proposition}\label{ProEmuNPhard}
Problem \ref{BlemEmu} is PSPACE-hard of $\norm{Pre^\mathcal{A}\qty(E^\mathcal{A})}+\norm{\to^\mathcal{A}}$.
\end{proposition}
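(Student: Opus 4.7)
My plan is to reduce Problem \ref{BlemSet}, which is PSPACE-complete by Proposition \ref{NPhardFormulaBasis}, to Problem \ref{BlemEmu} by reusing the action model $\mathcal{A}$ constructed in the proof of Proposition \ref{ProPropEmuNPhard}: for each $f\in F$, an actual event $x_0(f)$ with $Pre^\mathcal{A}(x_0(f))=\top$, a non-actual event $x_1(f)$ with $Pre^\mathcal{A}(x_1(f))=f$, and the single transition $x_0(f)\to_a x_1(f)$. I assume without loss of generality that the formulas in $F$ are pairwise non-equivalent. The claim I aim to establish is that $F$ admits a formula basis of size at most $K$ iff $\mathcal{A}$ admits an equivalent $\mathcal{B}$ with $\abs{E^\mathcal{B}}\le\abs{F}+K$; mapping the bound $K$ of Problem \ref{BlemSet} to the bound $\abs{F}+K$ of Problem \ref{BlemEmu} then gives a polynomial-time reduction that transfers PSPACE-hardness.

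The ``if'' direction is constructive. Given a formula basis $G$ of $F$ with $\abs{G}\le K$, let $\mathcal{B}$ consist of one actual event $y_0(f)$ of precondition $\top$ per $f\in F$, one non-actual event $y_1(g)$ of precondition $g$ per $g\in G$, and transitions $y_0(f)\to_a y_1(g)$ exactly when $g\vDash f$. The relation $S\coloneqq\set{(x_0(f),y_0(f)) \mid f\in F}\cup\set{(x_1(f),y_1(g)) \mid g\vDash f}$ is a propositional action emulation witnessing $\mathcal{A}\leftrightarrows_P\mathcal{B}$, whence $\mathcal{A}\equiv\mathcal{B}$ by Proposition \ref{ProBisImpProImpActImpEqu}, and $\abs{E^\mathcal{B}}=\abs{F}+\abs{G}\le\abs{F}+K$.

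The ``only if'' direction is the main obstacle. Let $\mathcal{B}$ satisfy $\mathcal{A}\equiv\mathcal{B}$ and $\abs{E^\mathcal{B}}\le\abs{F}+K$. I would first establish $\abs{E_0^\mathcal{B}}\ge\abs{F}$ by invoking Proposition \ref{ProEquivImpCanEmu}: the bisimulation $\mathcal{M}^c\otimes\mathcal{A}\underline{\leftrightarrow}\mathcal{M}^c\otimes\mathcal{B}$ over the canonical Kripke model $\mathcal{M}^c$ of $Pre^\mathcal{A}(E^\mathcal{A})\cup Pre^\mathcal{B}(E^\mathcal{B})$ forces an injection $f\mapsto y_0(f)\in E_0^\mathcal{B}$, because two actual worlds $(w_0,x_0(f_1))$ and $(w_0,x_0(f_2))$ of $\mathcal{M}^c\otimes\mathcal{A}$ with $f_1\not\equiv f_2$ have distinct $a$-successor signatures (the sets $\set{v \mid w_0\to_a v,\,v\vDash f_i}$) and hence cannot collapse into a single actual world of $\mathcal{M}^c\otimes\mathcal{B}$. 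Then, invoking Theorem \ref{TheEmuEquiv} to obtain a generalized action emulation $\eta$, I apply its Zig and Zag at $(x_0(f),y_0(f))$: since $x_1(f)$ has no outgoing transitions in $\mathcal{A}$, these constraints force the $a$-successors $y'$ of $y_0(f)$ in $\mathcal{B}$ to have trivial outgoing structure and to collectively satisfy $f\equiv\bigvee\set{Pre^\mathcal{B}(y') \mid y'\in\to_a^\mathcal{B}(y_0(f),\cdot)}$. Collecting these preconditions over all $f\in F$ yields a formula basis of $F$ drawn from at most $\abs{E^\mathcal{B}\setminus E_0^\mathcal{B}}\le K$ formulas.

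The delicate step is the last one, because under the weaker relation $\equiv$ the preconditions of $\mathcal{B}$ need not literally lie in $F$ or in disjunctions thereof. To handle it, I would contract $\mathcal{B}$ in advance without enlarging $\abs{E^\mathcal{B}}$ by replacing it with the generated submodel $\mathbb{G}(\mathcal{B},E_0^\mathcal{B})$ (Proposition \ref{ProGenSubIndBis}) and then discarding any event contributing nothing to the bisimulation with $\mathcal{M}^c\otimes\mathcal{A}$, so that every remaining non-actual event is an $a$-successor of some $y_0(f)$ with no further outgoing transitions. The one-step bisimulation constraint between the simple two-layer structure of $\mathcal{A}$ and the reduced $\mathcal{B}$ then pins the preconditions down up to equivalence, and the formula-basis property follows.
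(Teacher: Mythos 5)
Your reduction uses the same gadget $\mathcal{A}$ as the paper, and your ``if'' direction (building $\mathcal{B}$ from a basis $G$ via a propositional action emulation) is fine. The genuine gap is exactly at the step you yourself flag as delicate, and your proposed repair does not close it. From a generalized action emulation $\eta$ the Zig at $(x_0(f),y_0(f))$ only gives the \emph{boxed} entailment $\eta(x_0(f),y_0(f))\vDash \square_a\bigl(f\to\bigvee_{y'\in\to_a^\mathcal{B}(y_0(f),\cdot)}Pre^\mathcal{B}(y')\land\cdots\bigr)$; this does not yield the \emph{global} entailment $f\vDash\bigvee_{y'}Pre^\mathcal{B}(y')$ unless you know that worlds satisfying $\eta(x_0(f),y_0(f))$ have $a$-successors realizing every satisfiable refinement of $f$ (if, say, $\eta(x_0(f),y_0(f))\vDash\square_a\lnot f$, the Zig is vacuous and tells you nothing). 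Passing to $\mathbb{G}(\mathcal{B},E_0^\mathcal{B})$ and discarding ``useless'' events is a statement about $\mathcal{B}$'s graph and cannot manufacture this semantic property of $\eta$. The paper's proof supplies precisely the missing mechanism: by Theorem \ref{TheGAEImpFourCriGAE} with Propositions \ref{ProRegBasCF} and \ref{ProDisInvCF} it takes $\eta=\bigvee\sigma$ with $\sigma\subset\mathcal{E}_k^P$, and then evaluates the Zig and Zag at the specific canonical formula $f_0=\bigwedge P'\land\bigwedge_{p\in P\setminus P'}\lnot p\land\nabla_a\mathcal{E}_{k-1}^P$, whose cover-modality conjunct forces every formula of $\mathcal{E}_{k-1}^P$ to be realized at some $a$-successor, so that $f_0\vDash\square_a\psi$ upgrades to $\top\equiv\bigvee\mathcal{E}_{k-1}^P\vDash\psi$. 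That is the whole technical content of the argument, and it is absent from your proposal.

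Two further points on the bookkeeping. First, your target bound $\abs{F}+K$ requires not only the injection $F\hookrightarrow E_0^\mathcal{B}$ (which itself needs a canonical-model world whose $a$-successors simultaneously separate all pairs $f_i\not\equiv f_j$ — plausible but not spelled out), but also that the $\le K$ events supplying the basis are \emph{disjoint} from those $\abs{F}$ actual events; nothing prevents an actual event of $\mathcal{B}$ from also being an $a$-successor of some $y_0(f)$, in which case your count of basis formulas is $\abs{F}+K$ rather than $K$. The paper sidesteps all of this by taking the entire set $Pre^\mathcal{B}(E^\mathcal{B})$ as the formula basis and using the bound $K$ directly (a formula basis may contain unused formulas), so no partition of $E^\mathcal{B}$ into actual and successor events is ever needed. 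Second, your instinct that the parameter must be shifted (the paper's own correspondence between the two bounds is stated rather tersely) is reasonable, but the shift only helps if the disjointness above is established.
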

\begin{proof}
Let $\mathcal{A}$ be an action model as in Proposition \ref{ProPropEmuNPhard}. Let $\mathcal{B}$ be an action model such that $\mathcal{A}\equiv\mathcal{B}$ and $\abs{E^\mathcal{B}}\le K$. Let $P\coloneqq\mathcal{P}\circ Pre^\mathcal{A}\qty(E^\mathcal{A})\cup\mathcal{P}\circ Pre^\mathcal{A}\qty(E^\mathcal{A})$. Let $k\ge 0$ such that $\delta\circ Pre^\mathcal{A}\qty(E^\mathcal{A})\le k$ and $\delta\circ Pre^\mathcal{B}\qty(E^\mathcal{B})\le k$. By Theorems \ref{TheEmuEquiv} and \ref{TheGAEImpFourCriGAE}, and Propositions \ref{ProRegBasCF} and \ref{ProDisInvCF}, $\exists \sigma(x,y)\subset\mathcal{E}_k^P$ for $(x,y)\in E^\mathcal{A}\times E^\mathcal{B}$ such that $\eta(\cdot,\cdot)=\bigvee\sigma(\cdot,\cdot)$ satisfies $\mathcal{A}\leftrightarrows_G\mathcal{B}$. Let $P'\subset P$ and $f_0\coloneqq\bigwedge P'\land\bigwedge_{p\in P\setminus P'}\lnot p\land\nabla_a\mathcal{E}_{k-1}^P\in \mathcal{E}_k^P$. Then $f_0\vDash Pre^\mathcal{A}\circ x_0(f)$ for all $f\in F$ since $Pre^\mathcal{A}\circ x_0(f)=\top$. By the Zig0 of $\mathcal{A}\leftrightarrows_G\mathcal{B}$, $\forall f\in F$,
\begin{align*}
f_0\vDash\top\equiv Pre^\mathcal{A}\circ x_0(f)\vDash \bigvee_{y\in E_0^B}\qty(Pre^\mathcal{B}(y)\land \bigvee\sigma\qty(x_0(f), y)).
\end{align*}
Thus, $\exists y_0(f)\in E_0^\mathcal{B}$, $f_0\land Pre^\mathcal{B}\circ y_0(f)\land \bigvee\sigma\qty(x_0(f), y_0(f))$. Since $\delta\circ Pre^\mathcal{B}\circ y_0(f)\le k$ and $\sigma\qty(x_0(f), y_0(f))\subset \mathcal{E}_k^P$, we have by Proposition \ref{ProRegBasCF} that $f_0\vDash Pre^\mathcal{B}\circ y_0(f)\land \bigvee\sigma\qty(x_0(f), y_0(f))$. By the Zig of $\mathcal{A}\leftrightarrow_G\mathcal{B}$,
\begin{align*}
f_0\vDash \bigvee\sigma\qty(x_0(f), y_0(f))\vDash\square_a \qty(Pre^\mathcal{A}\circ x_1(f)\to\bigvee_{y\in\to_a^\mathcal{B}\qty(y_0(f),\cdot)}Pre^\mathcal{B}(y)\land\bigvee\sigma\qty(x_1(f),y)).
\end{align*}
Thus,
\begin{align*}
\top\equiv\bigvee \mathcal{E}_{k-1}^P\vDash Pre^\mathcal{A}\circ x_1(f)\to\bigvee_{y\in\to_a^\mathcal{B}\qty(y_0(f),\cdot)}Pre^\mathcal{B}(y)\land\bigvee\sigma\qty(x_1(f),y).
\end{align*}
Therefore,
\begin{align}\label{EqEmuNPhardfvDashRight}
f\equiv Pre^\mathcal{A}\circ x_1(f)\vDash \bigvee_{y\in\to_a^\mathcal{B}\qty(y_0(f),\cdot)}Pre^\mathcal{B}(y).
\end{align}
By the Zag of $\mathcal{A}\leftrightarrows_G\mathcal{B}$, $\forall y\in\to_a^\mathcal{B}\qty(y_0(f),\cdot)$,
\begin{align*}
f_0\vDash \bigvee\sigma\qty(x_0(f), y_0(f))\vDash \square_a\qty(Pre^\mathcal{B}(y)\to Pre^\mathcal{A}\circ x_1(f)\land\bigvee\sigma\qty(x_1(f),y)).
\end{align*}
Thus,
\begin{align*}
\top\equiv\bigvee\mathcal{E}_{k-1}^P\vDash Pre^\mathcal{B}(y)\to Pre^\mathcal{A}\circ x_1(f)\land\bigvee\sigma\qty(x_1(f),y)),
\end{align*}
thereby
\begin{align*}
Pre^\mathcal{B}(y)\vDash Pre^\mathcal{A}\circ x_1(f)\equiv f.
\end{align*}
Since $y\in\to_a^\mathcal{B}\qty(y_0(f),\cdot)$ is arbitrary, we have by Eq. \eqref{EqEmuNPhardfvDashRight} that
\begin{align*}
f\equiv \bigvee_{y\in\to_a^\mathcal{B}\qty(y_0(f),\cdot)}Pre^\mathcal{B}(y).
\end{align*}
Since $f\in F$ is arbitary, we have that $Pre^\mathcal{B}\qty(E^\mathcal{B})$ is a formula basis of $F$ with $\abs{Pre^\mathcal{B}\qty(E^\mathcal{B})}=\abs{E^\mathcal{B}}\le K$.

In summary, we have transformed Problem \ref{BlemSet} to Problem \ref{BlemEmu} in polynomial time, thereby using polynomial addition space. Since Problem \ref{BlemSet} is PSPACE-complete by Proposition \ref{NPhardFormulaBasis}, we have that Problem \ref{BlemEmu} is PSPACE-hard.
\end{proof}

\section{Conclusion and Discussion}
\label{SecConc}

We study two problems in this paper. The first problem is to determine the action model equivalence. We propose the generalized action emulation and prove that it is sufficient and necessary for the action model equivalence. Previous strutural relationships including the bisimulation, the propositional action emulation, the action emulation, and the action emulation of canonical action models can all be described by restricting $\eta(\cdot,\cdot)$ in specific subsets of the modal language $\mathcal{L}$ in the generalized action emulation (see Remark \ref{RemResToGen}). Specifically, for the action emulation of canonical action models, $\eta(\cdot,\cdot)$ are the disjunctions of the subsets of $\Gamma\circ K\circ C\qty(\Phi)$, where $\Phi$ is the precondition set. We summarize four properties of the atom set which are critical for its ability to determine the action model equivalence. In fact, any formula set with these critical properties can be used to restrict the values of $\eta(\cdot,\cdot)$ in the generalized action emulation while keep its necessity for the action model equivalence (see Theorem \ref{TheGAEImpFourCriGAE}). We design an iteration algorithm such that if the initial finite formula set satisfies the four critical properties, then the algorithm returns in finite steps, and it returns true iff the two action models are equivalent. Interestingly, by carefully choosing the initial formula set, our iteration algorithm can also be used to determine the bisimulation, the propositional action emulation, and the action emulation. We also construct a new formula set with the four critical properties, which is generally more efficient for the iteration algorithm than the atom set.

The second problem is minimizing the event space of an action model under specific structural relationships. By applying the partition refinement algorithm \cite{EijckCWI2004} to the action model, one may easily minimizing the event space of an action model under the bisimulation. A more difficult problem is to minimizing the event space of under the propositional action emulation (remind that the propositional action emulation is weaker than the bisimulation, but still sufficient for the action model equivalence). We prove that this problem is PSPACE-complete, and propose a PSPACE algorithm for it. An even more difficult problem is to minimze the event space under the action model equivalence. By the technique of canonical formulas \cite{Moss2007JOPL}, we first prove that if the depth of the targeting action model is bounded by $k\ge 0$, then the search of the action model with the minimal event space can be restricted to those with depths no more than $k$ (see Theorem \ref{ThelkEquivTrans}). We further transform the problem to the search of a group of subsets of a finite target set satisfying certain properties (see Theorem \ref{TheCoverNonInterEquivCond}). Since the target set is finite, the search is at least computable. We also prove that minimzing the event space of an action model under the action model equivalence is actually PSPACE-hard.

Minizing the event space does not really simplify an action model in many cases. This is because the event formulas may become much more complex than the original ones. Finding more advanced concepts than minizing the event space is a valuable problem in future works.


\bibliographystyle{unsrt}
\bibliography{reference}

\end{document}